\newcommand{\syncrecord}[2]{#2} 
\def\oconv{\circledast}
\def\rank{\text{rank}}
\def\dsmark{{\scriptstyle \#}}
\def\diag{\text{diag}}
\def\tvar#1{\mathbf{#1}} 
\def\tvarhat#1{\widehat{\mathbf{#1}}} 
\def\vsymb#1{\vec{\mathbf{#1}}}
\def\isintinf{\int_{-\infty}^{+\infty}\!\!\!\cdots\!\int_{-\infty}^{+\infty}\!}
\def\intinf{\int_{-\infty}^{+\infty}\!}
\def\iintinf{\int_{-\infty}^{+\infty}\!\!\!\int_{-\infty}^{+\infty}\!}
\def\lcerfl#1{\left\lceil{#1}\right\rfloor}
\def\argmax{\mathop{\text{argmax}}}
\def\drawUnitBox(#1,#2,#3){
  \draw[fill, color=black!70!white] 
  (#1,#2,#3) -- +(1,0,0) -- +(1,0,1) -- +(0,0,1);
  \draw[fill, color=black!50!white] 
  (#1,#2,#3) -- +(0,1,0) -- +(1,1,0) -- +(1,0,0);
  \draw[fill, color=black!30!white] 
  (#1,#2,#3) ++(1,0,0) -- +(0,1,0) -- +(0,1,1) -- +(0,0,1);
}
\title{
  Toward Understanding Convolutional Neural Networks from Volterra Convolution Perspective%
}
\author{%
  \name Tenghui Li  \email tenghui.lee@foxmail.com \\
  \addr School of Automation at Guangdong University of Technology, Guangzhou \\
  \AND
  \name Guoxu Zhou \email gx.zhou@gdut.edu.cn\\
  \addr School of Automation at Guangdong University of Technology, Guangzhou \\
  \AND
  \name Yuning Qiu \email yuning.qiu.gd@gmail.com\\
  \addr School of Automation at Guangdong University of Technology, Guangzhou \\
  \AND
  \name Qibin Zhao \email qibin.zhao@riken.jp \\
  \addr RIKEN Center for Advanced Intelligence Project and \\
  School of Automation at Guangdong University of Technology, Guangzhou \\
}%
\begin{document}

\maketitle

\begin{abstract}%
  We make an attempt to understanding
  convolutional neural network by exploring the relationship between (deep) convolutional neural networks and Volterra convolutions.
  We propose a novel approach to explain and  study the overall characteristics of neural networks without being disturbed by the horribly complex architectures.
  Specifically, we  attempt to convert the basic structures of a convolutional neural network (CNN) and their combinations to the form of Volterra convolutions.
  The results show that most of convolutional neural networks can be approximated in the form of Volterra convolution, where the approximated proxy kernels preserve the characteristics of the original network. Analyzing these proxy kernels may give valuable insight about the original network.
  Base on this setup, we presented methods to approximating the order-zero and order-one proxy kernels, and verified the correctness and effectiveness of our results.
\end{abstract}

\begin{keywords}
  Convolutional neural network,
  Volterra convolution,
  order-n convolution,
  unified neural network,
  proxy kernel
\end{keywords}

\section{Introduction}
\label{sec:introduction}

Deep neural networks (DNNs) can effectively characterize most complex data as long as training data is large enough and the capable models are well-trained.
Nevertheless, a deep network often has a horribly complex structure, the results are hard to interpret in some sense, and the network is likely to be deceptive by adversarial examples.
We are eager to search for methods that allow us to analyze the network.

There is a vast number of excellent researches focusing on theoretically understanding neural networks.
Some take the statistical perspective.
Deep neural networks can be thought of as being discrete dynamical systems \citep{Weinan2017}.
Instead of thinking about features and neurons, one focus on representation of functions, calculus of variation problems, and continuous gradient flow \citep{E2019}.
Besides, others take a geometric perspective (grids, groups, graphs, geodesics, and gauges), which shows that deep neural networks can be understood in a unified manner as methods that respect the structure and symmetries (invariants and equivalents) of the geometric domains \citep{Bronstein2021}.
Moreover, we can study how modern deep neural networks transform topologies of data sets \citep{Naitzat2020}, or draw the phase diagram for the two-layer ReLU neural network at the infinite-width limit \citep{Luo2021}.

Most of these works focus on certain classes of structures, such as two-layer neural network, multilayer fully connected network, ResNet, pure abstract network, and fully connected network with specific activation functions, i.e., ReLU, sigmoid, tanh, and so on.
It seems that it is unlikely to represent and analyze an arbitrarily complex network from a theoretical point of view.

In this paper, we attempt to build a generic and unified model for analyzing most deep convolutional neural networks rather than thinking about features and layers.
The neural network is a universal approximator that is able to approximate any Borel measurable function \citep{Cybenko1989,Hornik1989,Barron1993,Jinshan2021}.
The proposed model is expected to be a universal approximator.
Additionally, it is supposed to ensure that most networks can be represented by this kind of model.
Furthermore, it should be expressed as superposition of submodules, which makes it convenient to analyze.

The Volterra convolution or Volterra series operator \citep{volterra1932theory} owns exactly such features.
Briefly, Volterra convolution has the form of
\begin{equation}
  \tvar{y}
  = \tvar{H}_0 + \tvar{H}_1 * \tvar{x} + \tvar{H}_2 * \tvar{x}^2 + \cdots
  = \sum_{n=0}^{+\infty} \tvar{H}_n * \tvar{x}^n,
  \label{equ:original-definition-infty-Volterra-convolution}
\end{equation}
where \(\tvar{x}\) is the input signal, \(\tvar{H}_0, \tvar{H}_1, \tvar{H}_2, \cdots\) are kernels, \(\tvar{y}\) is the output signal, and \(\tvar{H}_n * \tvar{x}^n\) is the order-\(n\) convolution (All of these will be precisely defined in Section \ref{sec:extension-of-convolutions}).

Firstly, it has been proved that any time-invariant continuous nonlinear operator can be approximated by a Volterra series operator and any time invariant operator with fading memory can be approximated (in a strong sense) by a nonlinear moving-average operator, the finite term Volterra series operator \citep{Boyd1985}.
Secondly, a certain class of artificial neural networks (feed-forward network or multilayer perceptron) are equivalent to a finite memory Volterra series \citep{Wray1994, Fung1996}.
In addition to this certain class of artificial neural networks, we show that neural networks, including convolutional neural networks and their numerous variants, can be approximated in the form of Volterra convolution.
Thirdly, this is an accumulation of multiple submodules \(\tvar{H}_1 * \tvar{x}, \tvar{H}_2 * \tvar{x}^2, \cdots\), which can be analyzed independently and without being disturbed by the complex network architecture.

Suppose the well-trained network is \(f(\tvar{x})\) or \(g(f(\tvar{x}))\), we are looking for a Volterra convolution to approximate the network \(f(\tvar{x})\).
The functions learned by practical convolutional neural networks (i.e., ReLU-based, sigmoid-based) often can only be represented by Volterra convolution with \textit{infinite} series. 
Nevertheless, if small truncation errors are allowed in practice, we can use \textit{finite} term Volterra convolution via truncating the infinite counterpart to approximate the functions,
which is mainly considered in this paper. Formally, for a function $f(\tvar{x})$, we are looking for $N+1$ proxy kernels \(\tvar{H}_0, \tvar{H}_1, \cdots, \tvar{H}_N\) such that
\begin{equation}
  f(\tvar{x}) \approx \sum_{n=0}^{N} \tvar{H}_n * \tvar{x}^n.
\end{equation}

If a network can be approximated in this form, its kernels \(\tvar{H}_0, \tvar{H}_1, \cdots, \tvar{H}_N\) shall preserve characteristics of the original network, and they will probably help us to analyze the stability or robustness or other useful properties of a well-trained network.

The Volterra convolution looks like a polynomial network \citep{Giles1987,Shin1995,Shin2003, Fallahnezhad2011}. They are similar in the sense of ``polynomial''. Nevertheless, they are quite different. A polynomial network learns a function \(f: \mathbb{R}^n \rightarrow \mathbb{R}\), but Volterra convolution is a map \(f : \mathbb{R}^{n} \rightarrow \mathbb{R}^{m}\). Besides, we are not interested in the training this Volterra convolution by raw data. Instead, we will just approximate a well-trained network in the formulation of Volterra convolution.

In summary, the main contribution of this paper include:
\begin{itemize}
  \item We showed that most convolutional neural networks can be represented in the form of Volterra convolutions. This formulation provides a novel perspective on understanding neural networks, and will probably help us to analyze a neural network without being disturbed by its complex architecture.
  \item We studied some important properties of this representation, including the effects of perturbations and the rank of combining two order-\(n\) convolutions.
  \item We showed that a convolutional neural can be well approximated by  finite term Volterra convolutions. All we need in this approximation are the proxy kernels. We proposed two methods to infer the proxy kernels. The first is by direct calculation provided that the original network is a white-box to users, whereas the second one is for the case where the original network is a black-box.
\end{itemize}

The main structure of this article is listed as follows.
In Section \ref{sec:extension-of-convolutions}, we introduce and review the definition and properties of order-\(n\) convolution, outer convolution and Volterra convolution. 
In Section \ref{sec:connect-to-neural-network}, we show that most convolutional neural networks can be approximated in the form of Volterra convolutions, and validate this approximation on simple structures. 
In Section \ref{sec:inferring-the-proxy-kernels}, we proposed a hacking network to approximate the order-zero and order-one proxy kernels.

\textbf{Notations: }
In most situations, vectors are notated in lower case bold face letters \(\tvar{x}, \tvar{y}, \cdots\), while matrices and tensors are notated in upper case bold face letters \(\tvar{H}, \tvar{G}, \cdots\).
We will not deliberately distinguish between them, as this does not affect the generality of our discussion.
Elements are notated by regular letters with brackets \(x(i), X(i,j,k), \cdots\). The \(i\)th-slice of \(\tvar{X}\) is noted by  \(X(:,i,:)\) using the colon notation.

The vector like notation is \(\vsymb{x} = \lcerfl{\tvar{x}_1, \tvar{x}_2, \cdots}\), which is nothing but a list of objects and the addition and subtraction are defined as \(\vsymb{x} \pm l = \lcerfl{x_1 \pm l, x_2 \pm l, \cdots}\) and \(\vsymb{x} \pm \vsymb{y} = \lcerfl{x_1 \pm y_1, x_2 \pm y_2, \cdots}\).

\section{Extension of Convolutions}
\label{sec:extension-of-convolutions}

In this section, four kinds of convolutions will be introduced in both continuous and discrete time, including the well-known convolution (Equation \ref{equ:def-convolution-1d}), order-\(n\) convolution (Definition \ref{def:order-n-convolution}), Volterra convolution (Definition \ref{def:volterra-convolution-1d}), and outer convolution (Definition \ref{def:outer-convolution}).

Without loss of generality, all kernels and signals are bounded by a constant \(M_1 < \infty\) and have Lipschitz constant \(M_2 < \infty\),
\begin{equation*}
  \left\{
  \tvar{x} \in C(\mathbb{R}) : | x(t) | \le M_1, | x(s) - x(t) | \le M_2(s - t), \text{ for } t \le s
  \right\},
\end{equation*}
where \(C(\mathbb{R}): \mathbb{R} \rightarrow \mathbb{R}\) is the space of bounded continuous functions,
and all kernels are absolute integrable \(\intinf | h(t) | dt < \infty\) or \(\sum_t | h(t) | < \infty\).

The well known one-dimensional convolution \citep{Gonzalez2017-la} of kernel \(\tvar{h}\) and signal \(\tvar{x}\) is
\begin{equation}
  \begin{aligned}
    (\tvar{h} * \tvar{x})(t) & = \intinf h(\tau) x(t - \tau) d \tau & \text{(continuous)}, \\
    (\tvar{h} * \tvar{x})(t) & = \sum_{\tau} h(\tau) x(t - \tau)    & \text{(discrete)}.
  \end{aligned}
  \label{equ:def-convolution-1d}
\end{equation}

\begin{remark}
  If kernel size equals to signal size and padding is zero, the discrete one-dimensional order-one convolution is equivalents to vector inner product with flipped kernel at \(t = 0\),
  \begin{equation*}
    \sum_{\tau} h(\tau) x(t - \tau) \Rightarrow \sum_{\tau} h(-\tau) x(\tau).
  \end{equation*}
\end{remark}

In signal processing, discrete convolution is notated in minus type, such as \(\sum_{\tau} h(\tau) x(t - \tau)\), and correlation is defined in plus type, like \(\sum_{\tau} h(\tau) x(t+\tau)\).
While in convolutional networks, we prefer to notate convolution in plus type.
These two operations can be converted from one to the other by flipping kernels and shifting time (see Figure \ref{fig:difference-between-addition-and-subtraction-conv-type}).
Discussing only in minus type does not affect generality of the results.

\begin{figure}[htb]
  \centering
  \begin{tikzpicture}[thick, ih/.style={draw,rectangle,minimum height=0.64cm, minimum width=0.64cm, fill=black!15, rounded corners=0.2cm}]
    \draw (-4.0, 0.4) node[ih] {$x(t)$}
    ++(0.0,0.8) node[ih] {$h(0)$}
    ++(1.1,0) node[ih] {$h(1)$}
    +(0,-1.6) node {$\sum_{\tau} h(\tau) x(t + \tau)$}
    ++(1.1,0) node[ih] {$h(2)$}
    ++(1.1,0) node {$\cdots$};
    \draw[dashed, gray] (-0.4, 0) -- ++(-4.4,0) (-0.4,0.8) -- ++(-4.4,0);

    \draw (0.8, 1.2) node {$\cdots$}
    ++(1.1,0) node[ih] {$h(2)$}
    ++(1.1,0) node[ih] {$h(1)$}
    +(0,-1.6) node {$\sum_{\tau} h(\tau) x(t - \tau)$}
    ++(1.1,0) node[ih] {$h(0)$}
    ++(0, -0.8) node[ih] {$x(t)$};
    \draw[dashed, gray] (0.4, 0) -- ++(4.4,0) (0.4,0.8) -- ++(4.4,0);
  \end{tikzpicture}
  \caption{Differences between addition type and subtraction type.}
  \label{fig:difference-between-addition-and-subtraction-conv-type}
\end{figure}
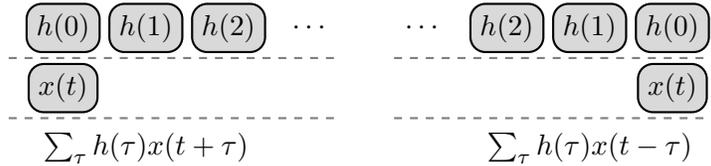

\subsection{Order-n Convolution}
\label{subsec:order-n-convolution}

Nonlinearities of convolutional neural networks come from their nonlinear activation functions. Theoretically, it is possible to embed nonlinearity in convolutional operation by taking order-\(n\) convolution.

For a simple example: a parabola can be described by a two-layer network with activation \(\sigma(\cdot)\), i.e., \(w_2 \sigma(w_1 x + b_1) + b_2\), or it can be expressed in polynomial \(\alpha_0 + \alpha_1 x + \alpha_2 x^2 + \cdots\).
This type of ``polynomial'' can be also applied to convolutional operation, i.e.,
\begin{equation*}
  \begin{aligned}
    \alpha x^2 & \leftrightarrows \intinf H(\tau_1, \tau_2) x(t - \tau_1) x(t-\tau_2) d \tau_1 d \tau_2,                                \\
    \alpha xy  & \leftrightarrows \intinf H(\tau_1, \tau_2) x(t - \tau_1) y(t-\tau_2) d \tau_1 d \tau_2,                                \\
    \alpha x^3 & \leftrightarrows \intinf H(\tau_1, \tau_2, \tau_3) x(t - \tau_1) x(t-\tau_2) x(t - \tau_3) d \tau_1 d \tau_2 d \tau_3. \\
  \end{aligned}
\end{equation*}

With these basic concepts in mind, formal definition of order-\(n\) convolution is presented in Definition \ref{def:order-n-convolution}.

\begin{definition}
  Order-\(n\) convolution \citep{volterra1932theory, rugh1981nonlinear} of kernel \(\tvar{H}\) and \(n\) signals \(\lcerfl{\tvar{x}_1, \tvar{x}_2, \cdots, \tvar{x}_n} \equiv \vsymb{x}\) is
  \begin{equation}
    \tvar{H} * \lcerfl{\tvar{x}_1, \tvar{x}_2, \cdots, \tvar{x}_n} \equiv \tvar{H} * \vsymb{x},
    \label{equ:def-order-n-convolution-1d}
  \end{equation}
  where \(\vsymb{x}\) is vector like notation and all \(\tvar{x}_i\) have the same dimension.
  \label{def:order-n-convolution}
\end{definition}

If \(\tvar{x}_1, \tvar{x}_2, \cdots, \tvar{x}_n\) are all one-dimensional signals and \(\tvar{H}\) is an \(n\)-dimensional signal, the continuous order-\(n\) convolution for one-dimensional signal is
\begin{equation}
  \begin{aligned}
    \left(\tvar{H} * \vsymb{x}\right)(t)
     & = \left(\tvar{H} * \lcerfl{\tvar{x}_1, \tvar{x}_2, \cdots, \tvar{x}_n} \right)(t)                     \\
     & = \isintinf H(\tau_1, \tau_2, \cdots, \tau_n) \prod_{i=1}^{n} \left( x_i(t - \tau_i) d \tau_i \right) \\
     & \equiv \isintinf H(\vsymb{\tau}) \prod_{i=1}^{n}\left( x_i(t - \tau_i) d \tau_i \right).
  \end{aligned}
  \label{equ:def-continuous-order-n-convolution-1d}
\end{equation}

If \(\tvar{x}_1, \tvar{x}_2, \cdots, \tvar{x}_n\) are all \(m\)-dimensional signals and \(\tvar{H}\) is an \(nm\)-dimensional signal, and vector like notations have the form \(\vsymb{\tau}_1 = \lcerfl{\tau_{1,1}, \tau_{1,2}, \cdots, \tau_{1,m}}\); \(\cdots\); \(\vsymb{\tau}_n = \lcerfl{\tau_{n,1}, \tau_{n,2}, \cdots, \tau_{n,m}}\) and \(\vsymb{t} = \lcerfl{t_1, t_2, \cdots, t_m}\), order-\(n\) convolution for \(m\)-dimensional signal is
\begin{equation}
  \begin{aligned}
    \left(\tvar{H} * \vsymb{x} \right)(t_1, \cdots, t_m)
     & = \left(\tvar{H} * \lcerfl{\tvar{x}_1, \tvar{x}_2, \cdots, \tvar{x}_n} \right)(t_1, \cdots, t_m) \\
     & = \isintinf H(\tau_{1,1}, \cdots, \tau_{1,m}; \cdots ; \tau_{n,1}, \cdots, \tau_{n,m})           \\
     & \qquad\quad
    \prod_{i=1}^{n} \left(
    x_i(t_1 - \tau_{i,1}, \cdots, t_m - \tau_{i,m}) d \tau_{i,1} \cdots d \tau_{i,m}
    \right)                                                                                             \\
     & \equiv \isintinf H(\vsymb{\tau}_1, \vsymb{\tau}_2, \cdots, \vsymb{\tau}_n)
    \prod_{i=1}^{n} \left(
    x(\vsymb{t} - \vsymb{\tau}_i) d \vsymb{\tau}_i
    \right).
  \end{aligned}
  \label{equ:def-continuous-order-n-convolution-md}
\end{equation}

With the vector like notation, discrete order-\(n\) convolution for \(m\)-dimensional signal is simplified as
\begin{equation}
  \left(\tvar{H} * \vsymb{x}\right)(\vsymb{t})
  \equiv \sum_{\vsymb{\tau}_1, \cdots, \vsymb{\tau}_n} H(\vsymb{\tau}_1, \vsymb{\tau}_2, \cdots, \vsymb{\tau}_n) \prod_{i=1}^{n} x_i(\vsymb{t} - \vsymb{\tau}_i).
  \label{equ:def-discrete-order-n-convolution-md}
\end{equation}

If \(n = 0\), order-zero convolution \(\tvar{H} * \tvar{x}^0 = \tvar{H} * \delta = \tvar{H}\), where \(\delta\) is the Dirac delta,
\begin{equation*}
  \delta(t) = \left\{\begin{array}{cc}
    \infty, & t = 0   \\
    0,      & t \ne 0
  \end{array}\right.  \text{(continuous),}
  ~~~~
  \delta(t) = \left\{\begin{array}{cc}
    1, & t = 0   \\
    0, & t \ne 0
  \end{array}\right.  \text{(discrete).}
\end{equation*}

If \(n = 1\), this is order-one (first-order) convolution (Equation \ref{equ:def-convolution-1d}), and \(n = 2\), this is order-two (second-order) convolution \(\tvar{H} * \lcerfl{\tvar{x}, \tvar{y}}\).
The order-two convolution does not come from void, as it is an extension of order-one convolution. (Please read Appendix \ref{appendix:convolution-from-order-1-to-2} for more details.)

For better understanding this notation, few examples are expressed as follows.

\begin{figure}[H]
  \centering
  \begin{tikzpicture}
    \node[inner sep=1pt] (A) at (0,0) {$\tvar{H} * \lcerfl{\tvar{x}, \tvar{y}, \tvar{z}, \cdots}$};
    \draw[->] (A.south east) ++(0.6,-0.2) node[below] {The number of signal equals the order.} -- (A.south east);
  \end{tikzpicture}
\end{figure}

Besides, if all signals are equal \(\tvar{x}_1 = \tvar{x}_2 = \cdots = \tvar{x}_n = \tvar{x}\), it can be written as \(\tvar{H} * \tvar{x}^n\) for short.
\begin{figure}[H]
  \centering
  \begin{tikzpicture}
    \node[inner sep=1pt] (A) at (0,0) {$\tvar{H} * \lcerfl{\tvar{x}, \tvar{x}, \cdots, \tvar{x}} = \tvar{H} * \tvar{x}^{n}$};
    \draw[->] (A.north east) ++(0.6,0.2) node[right] {$n$ equals the order.} -- (A.north east);
  \end{tikzpicture}
\end{figure}

Dimension of this convolution is determined by its signals.
If \(\tvar{x}_j, j = 1, 2, \cdots, n\) are all \(m\)-dimensional signals, \(\tvar{H} * \vsymb{x}\) is called order-\(n\) convolution for \(m\)-dimensional signal.
\begin{figure}[H]
  \centering
  \begin{tikzpicture}
    \node[inner sep=1pt] (A) at (0,0) {$\tvar{H} * \lcerfl{\cdots, \left[x_j(i_1, i_2, \cdots, i_m)\right], \cdots}$};
    \draw[->] (A.south) ++(0.5,-0.3) node[left] {$m$ equals the dimension of signal.} -- ++(0.6, 0.2);
  \end{tikzpicture}
\end{figure}

If signals are grouped and \(n_1 + n_2 + \cdots + n_m\) equals the order, it can simplify as
\begin{equation*}
  \tvar{H} * \lcerfl{
    \underbrace{\tvar{x}_1, \cdots, \tvar{x}_1}_{n_1 \text{ terms}},
    \underbrace{\tvar{x}_2, \cdots, \tvar{x}_2}_{n_2 \text{ terms}},
    \cdots,
    \underbrace{\tvar{x}_m, \cdots, \tvar{x}_m}_{n_m \text{ terms}}
  } = \tvar{H} * \lcerfl{
  \tvar{x}_1^{n_1},
  \tvar{x}_2^{n_2},
  \cdots,
  \tvar{x}_m^{n_m}
  }.
\end{equation*}

In the following, we will illustrate two observations of \(\tvar{H} * \tvar{x}^n\), where \(\tvar{x}\) is \textbf{discrete one-dimensional} signal.

The first observation is that each dimension of \(\tvar{H}\) is equal, i.e., \(\tvar{H} \in \mathbb{R}^{m \times m \times \cdots \times m}\).
The second observation is that there exists symmetric \(\tvarhat{H}\) such that \(\tvarhat{H} * \tvar{x}^n = \tvar{H} * \tvar{x}^n\), where symmetry means \(\hat{H}(\cdots, \tau_i, \cdots, \tau_j, \cdots) = \hat{H}(\cdots, \tau_j, \cdots, \tau_i, \cdots)\) for any \(\tau_i, \tau_j\) at any dimension.
The first is obvious, and the second is demonstrated as below.

Expanding \(\tvar{H} * \tvar{x}^n\) as
\begin{equation*}
  \begin{aligned}
    \cdots
     & + H(\cdots, \tau_i, \cdots, \tau_j, \cdots) \cdots x(t - \tau_i) \cdots x(t - \tau_j) \cdots \\
     & + H(\cdots, \tau_j, \cdots, \tau_i, \cdots) \cdots x(t - \tau_j) \cdots x(t - \tau_i) \cdots
    + \cdots,                                                                                       \\
  \end{aligned}
\end{equation*}
and take \(\hat{H}(\cdots, \tau_i, \cdots, \tau_j, \cdots) = \dfrac{1}{2}\left(
H(\cdots, \tau_i, \cdots, \tau_j, \cdots) +
H(\cdots, \tau_i, \cdots, \tau_j, \cdots)
\right)\), we have \(\tvarhat{H} * \tvar{x}^n = \tvar{H} * \tvar{x}^n\). Therefore, without further notice, this kind of kernels are always symmetric.

\subsection{Volterra Convolution}
\label{subsec:voltarra-convolution}

In this subsection, we sum these convolutions from order-zero to order-\(n\) or order-\(\infty\).
If the order is finite, it is called the finite term Volterra convolution or order-\(n\) Volterra convolution, otherwise it is called the infinity term Volterra convolution or Volterra convolution.
For instance, the order-two Volterra convolution is sum of order-zero, order-one and order-two convolutions,
\(\tvar{H}_0 * \tvar{x}^0 + \tvar{H}_1 * \tvar{x}^1 + \tvar{H}_2 * \tvar{x}^2\).

For simplicity and the fact that a neural network takes only one input (multiple inputs are packed to one tensor), input signals of each order are set to be the same \(\tvar{x}_1 = \tvar{x}_2 = \cdots = \tvar{x}_n\). If the input signals are one-dimensional, all kernels are symmetric.

\begin{definition}
  Let \(\tvar{x}\) be signal and \(\tvar{H}_n\) as kernels, Volterra convolution \citep{volterra1932theory, rugh1981nonlinear} is defined as
  \begin{equation}
    \sum_{n=0}^{+\infty} \tvar{H}_n * \tvar{x}^n
    = \sum_{n=0}^{+\infty} \tvar{H}_n * \underbrace{\lcerfl{\tvar{x}, \tvar{x}, \cdots}}_{n \text{ terms}}.
    \label{equ:def-volterra-convolution-1d}
  \end{equation}
  If \(n=0\), \(\tvar{x}^0 = \delta\), i.e., the Dirac delta.
  \label{def:volterra-convolution-1d}
\end{definition}

If \(\tvar{x}\) is a one-dimensional signal and each \(\tvar{H}_n\) is an \(n\)-dimensional signal, continuous Volterra convolution for one-dimensional signal is
\begin{equation}
  \left( \sum_{n=0}^{+\infty} \tvar{H}_n * \tvar{x}^n  \right)(t)
  = \sum_{n=0}^{+\infty} \isintinf H_n(\tau_1, \cdots, \tau_n)
  \prod_{i=1}^{n} \left( x(t - \tau_i) d \tau_i \right).
  \label{equ:def-continuous-volterra-convolution-1d}
\end{equation}
According to previous discussion, here the kernels \(\tvar{H}_n\), $n=1,2,3,\ldots$, are symmetric.

If \(\tvar{x}\) is an \(m\)-dimensional signal and \(\tvar{H}_n\) is an \(nm\)-dimensional signal for each \(n=0, 1, \cdots\), and \(\vsymb{\tau}_1 = \lcerfl{\tau_{1,1}, \tau_{1,2}, \cdots, \tau_{1,m}}\); \(\cdots\); \(\vsymb{\tau}_n = \lcerfl{\tau_{n,1}, \tau_{n,2}, \cdots, \tau_{n,m}}\) and \(\vsymb{t} = \lcerfl{t_1, t_2, \cdots, t_m}\), continuous Volterra convolution for \(m\)-dimensional signal is
\begin{equation}
  \begin{aligned}
    \left(
    \sum_{n=0}^{+\infty} \tvar{H}_n * \tvar{x}^n
    \right)(t_1, \cdots, t_m)
     & = \sum_{n=0}^{+\infty} \isintinf H_n(\tau_{1,1}, \cdots, \tau_{1,m}; \cdots ; \tau_{n,1}, \cdots, \tau_{n,m}) \\
     & \qquad\quad
    \prod_{i=1}^{n} \left(
    x(t_1 - \tau_{i,1}, \cdots, t_m - \tau_{i,m}) d \tau_{i,1} \cdots d \tau_{i,m}
    \right)                                                                                                          \\
     & \equiv \sum_{n=0}^{+\infty} \isintinf
    H_n(\vsymb{\tau}_1, \vsymb{\tau}_2, \cdots, \vsymb{\tau}_n)
    \prod_{i=1}^{n} \left( x(\vsymb{t} - \vsymb{\tau}_i) d \vsymb{\tau}_i \right).
  \end{aligned}
  \label{equ:def-continuous-volterra-convolution-md}
\end{equation}

With the vector like notation, discrete Volterra convolution for \(m\)-dimensional signal is simplified as
\begin{equation}
  \left( \sum_{n=0}^{+\infty} \tvar{H}_n * \tvar{x}^n \right)(\vsymb{t})
  = \sum_{n=0}^{+\infty} \sum_{\vsymb{\tau}_1, \cdots, \vsymb{\tau}_n}
  H_n(\vsymb{\tau}_1, \vsymb{\tau}_2, \cdots, \vsymb{\tau}_n) \prod_{i=1}^{n} x(\vsymb{t} - \vsymb{\tau}_i).
  \label{equ:def-discrete-volterra-convolution-md}
\end{equation}

\subsection{Outer Convolution}
\label{subsec:outer-convolution}

Stacking two order-one one-dimensional convolutions will produce a one-dimensional convolution with a longer kernel. How do we stack order-\(n\) convolutions?
In this subsection, an operation, the outer convolution, is introduced to combine these convolutions. Moreover, the rank properties for outer convolutions are described in Appendix \ref{appendix:convolution-rank}.

Let \(\tvar{G}\) and \(\lcerfl{\tvar{H}_1, \tvar{H}_2, \cdots, \tvar{H}_n} \equiv \vsymb{H}\) are the kernels for convolutions of one-dimensional signal. The outer convolution of $\tvar{G}$ and $\vsymb{H}$, denoted by
\begin{equation}
  \tvar{G} \oconv \lcerfl{\tvar{H}_1, \tvar{H}_2, \cdots, \tvar{H}_n} \equiv \tvar{G} \oconv \vsymb{H},
  \label{equ:def-outer-convolution}
\end{equation}
is defined as follows.

\begin{definition}[Continuous outer convolution of kernels]
  Let \(\tvar{G}\) be an \(n\)-dimensional kernel and each dimension of \(\tvar{H}_i\) no less than one, then the continuous outer convolution yields an $L$-dimensional kernel satisfying
  \begin{equation}
    \begin{aligned}
       & \left(\tvar{G} \oconv \vsymb{H}\right)(t_{1,1}, t_{1,2}, \cdots; t_{2,1}, t_{2,2}, \cdots; \cdots; t_{n,1}, t_{n,2}, \cdots)           \\
       & = \isintinf G(\tau_1, \tau_2, \cdots, \tau_n) \prod_{i=1}^{n} \left( H_i (t_{i,1} - \tau_i, t_{i,2} - \tau_i, \cdots) d \tau_i \right) \\
       & \equiv \isintinf G(\vsymb{\tau}) \prod_{i=1}^{n} \left(
      H_i (\vsymb{t}_i - \tau_i) d \tau_i
      \right),
    \end{aligned}
    \label{equ:def-continuous-outer-convolution-1d}
  \end{equation}
  where $L$ is the sum of the dimensions of $\tvar{H}_i$, \(\vsymb{\tau} = \lcerfl{\tau_1, \tau_2, \cdots, \tau_n}\),  \(\vsymb{t}_1 = \lcerfl{t_{1,1}, t_{1,2}, \cdots}\), \(\vsymb{t}_2 = \lcerfl{t_{2,1}, t_{2,2}, \cdots}\), \(\cdots\), \(\vsymb{t}_n = \lcerfl{t_{n,1}, t_{n,2}, \cdots}\), using the vector like notations.
  \label{def:outer-convolution}
\end{definition}

With Equation \ref{equ:def-continuous-outer-convolution-1d} we can compute the convolution between $\tvar{G} \oconv \vsymb{H}$ and one-dimensional signals $\tvar{x}$. Since most signals in this article are one-dimensional, we prefer to express the outer convolution in the form of Equation \ref{equ:def-continuous-outer-convolution-1d}.

Note that Equation \ref{equ:def-continuous-outer-convolution-1d} allows us to combine multiple convolution layers, and detailed rules will be described in Subsection \ref{subsec:convolution-properties}.
Two of these rules are quick previewed as follows:
\begin{itemize}
  \item \(\tvar{G} * (\tvar{H} * \tvar{x}) = (\tvar{G} \oconv \tvar{H}) * \tvar{x}\) (Property \ref{prop:conv-g-conv-h-x}),
  \item \(\tvar{G} * \lcerfl{\tvar{H}_1 * \tvar{x}_{1}, \tvar{H}_2 * \tvar{x}_2} = (\tvar{G} \oconv \lcerfl{\tvar{H}_1, \tvar{H}_2}) * \lcerfl{\tvar{x}_1, \tvar{x}_2}\) (Property \ref{prop:conv-g-mul-conv-h1-x-conv-h2-y}).
\end{itemize}

More generally, we consider the layers involving the convolution of $m$-dimensional signals $\tvar{x}$. To this end, suppose that \(\tvar{G}\) is an \(nm\)-dimensional kernel, and each dimension of \(\tvar{H}_i\) is no less than $m$, continuous outer convolution of $\tvar{G}$ and $\tvar{H}_i$ can be expressed as
\begin{equation}
  \begin{aligned}
     & \left(\tvar{G} \oconv \vsymb{H}\right)\left(\begin{array}{c}
        t_{1,1,1}, t_{1,1,2}, \cdots t_{1,1,m};
        t_{1,2,1}, \cdots t_{1,2,m}; \cdots; \\
        t_{2,1,1}, t_{2,1,2}, \cdots t_{2,1,m};
        t_{2,2,1}, \cdots t_{2,2,m}; \cdots; \\
        \cdots; \cdots;                      \\
        t_{n,1,1}, t_{n,1,2}, \cdots t_{n,1,m};
        t_{n,2,1}, \cdots t_{n,2,m}; \cdots; \\
      \end{array}\right) \\
     & = \isintinf \!\!G \left(
    \tau_{1,1}, \tau_{1,2}, \cdots, \tau_{1,m};
    \tau_{2,1}, \tau_{2,2}, \cdots, \tau_{2,m};
    \cdots; \cdots;
    \tau_{n,1}, \tau_{n,2}, \cdots, \tau_{n,m};
    \right)                                                                          \\
     & ~~~
    \prod_{i=1}^{n} \left(
    H_i \left(\begin{array}{c}
          t_{i,1,1} - \tau_{i,1}, t_{i,1,2} - \tau_{i,2}, \cdots, t_{i,1,m} - \tau_{i,m}; \\
          t_{i,2,1} - \tau_{i,1}, t_{i,2,2} - \tau_{i,2}, \cdots, t_{i,2,m} - \tau_{i,m}; \\
          \cdots                                                                          \\
        \end{array} \right)
    d \tau_{i,1} d \tau_{i,2} \cdots d \tau_{i,m}
    \right).                                                                         \\
     & \equiv \isintinf G(\vsymb{\tau}_1, \vsymb{\tau}_2, \cdots, \vsymb{\tau}_n)
    \prod_{i=1}^{n} \left(
    H_i (\vsymb{t}_{i,1} - \vsymb{\tau}_i, \vsymb{t}_{i,2} - \vsymb{\tau}_i, \cdots ) d \vsymb{\tau}_i
    \right).
  \end{aligned}
  \label{equ:def-continuous-outer-convolution-md}
\end{equation}
The outer convolution (Equation \ref{equ:def-continuous-outer-convolution-md}) will yield a new kernel whose convolution with $m$-dimentional signals is given by Equation \ref{equ:def-continuous-order-n-convolution-md}.

Using the vector like notation, discrete outer convolution for one-dimensional signal is simplified as
\begin{equation}
  \left( \tvar{G} \oconv \vsymb{H} \right)(\vsymb{t}_1, \vsymb{t}_2, \cdots, \vsymb{t}_n)
  = \sum_{\vsymb{\tau}} G(\vsymb{\tau}) \prod_{i=1}^{n} H_i(\vsymb{t}_i - \tau_i),
  \label{equ:def-descrete-outer-convolution-1d}
\end{equation}
and discrete outer convolution for \(m\)-dimensional signal is simplified as
\begin{equation}
  \begin{aligned}
     & \left( \tvar{G} \oconv \vsymb{H} \right)(\vsymb{t}_{1,1}, \vsymb{t}_{1,2}, \cdots; \vsymb{t}_{2,1}, \cdots; \cdots; \vsymb{t}_{n,1}, \cdots)                                               \\
     & = \sum_{\vsymb{\tau}_1, \cdots} G(\vsymb{\tau}_1, \vsymb{\tau}_2, \cdots, \vsymb{\tau}_n) \prod_{i=1}^{n} H_i(\vsymb{t}_{i,1} - \vsymb{\tau}_i, \vsymb{t}_{i,2} - \vsymb{\tau}_i, \cdots).
  \end{aligned}
  \label{equ:def-descrete-outer-convolution-md}
\end{equation}

The shorthand notations of outer convolution are similar to that of order-\(n\) convolution.
If all signals are equal \(\tvar{H}_1 = \tvar{H}_2 = \cdots = \tvar{H}_n\), we have \(\tvar{G} \oconv \lcerfl{\tvar{H}_1, \tvar{H}_2, \cdots, \tvar{H}_n} = \tvar{G} \oconv \tvar{H}^n\).

If signals are grouped and \(n_1 + n_2 + \cdots + n_m\) equals the order, the notation can be simplified as
\begin{equation*}
  \tvar{G} \oconv \lcerfl{
    \underbrace{\tvar{H}_1, \cdots, \tvar{H}_1}_{n_1 \text{ terms}},
    \underbrace{\tvar{H}_2, \cdots, \tvar{H}_2}_{n_2 \text{ terms}},
    \cdots,
    \underbrace{\tvar{H}_m, \cdots, \tvar{H}_m}_{n_m \text{ terms}}
  } = \tvar{G} * \lcerfl{
  \tvar{H}_1^{n_1},
  \tvar{H}_2^{n_2},
  \cdots,
  \tvar{H}_m^{n_m}
  }.
\end{equation*}

\begin{remark}
  In discrete outer convolution \(\tvar{G} \oconv \vsymb{H}\), kernels \(\tvar{H}_1, \cdots, \tvar{H}_n\) are all zero padded on both heads and tails. The padding size of \(\tvar{H}_i\) is \(s_i - 1\), where \(s_i\) is shape of \(\tvar{G}\) at dimension \(i\). For instance, one-dimensional \(\tvar{H}_i\) is zero padded as
  \begin{equation*}
    \left[ ~ \underbrace{\cdots ~ 0 ~ \cdots}_{s_i - 1} ~~ H_i(0) \cdots H_i(n) ~~ \underbrace{\cdots ~ 0 ~ \cdots}_{s_i-1} ~ \right].
  \end{equation*}
\end{remark}

For example, if \(\tvar{G}\) has shape \((s_1, s_2)\), \(\tvar{H}_1\) has shape \((z_1)\) and \(\tvar{H}_2\) has shape \((z_2, z_3, z_4)\), outer convolution (with zero padded) \(\tvar{G} \oconv \lcerfl{\tvar{H}_1, \tvar{H}_2}\) will have shape
\(( s_1 + z_1 - 1, s_2 + z_2 - 1, s_2 + z_3 - 1, s_2 + z_right)\).

\begin{remark}
  \(\tvar{G} \oconv \tvar{H}\) has the same operation as the ``ConvTranspose'', which is a deep learning operator \footnote{\url{https://pytorch.org/docs/stable/generated/torch.nn.functional.conv_transpose1d.html}}.
\end{remark}

\begin{remark}
  If \(\tvar{h}_1, \tvar{h}_2, \cdots, \tvar{h}_n\) are all one-dimensional vectors, and tensor \(H(t_1, t_2, \cdots, t_n) = h_1(t_1) h_2(t_2) \cdots h_n(t_n)\), outer convolution can be transformed to multidimensional convolution \(\tvar{G} \oconv \lcerfl{\tvar{h}_1, \tvar{h}_2, \cdots, \tvar{h}_n} = \tvar{G} * \tvar{H}\).
\end{remark}

The super diagonal kernel \(\tvar{G}\) has only non-zero elements on the diagonal, \(G(0, 0, \cdots, 0)\), \(G(1, 1, \cdots, 1)\), \(\cdots\), and all other elements are set to zero.
More specifically, we define the \(\tvar{G} = \diag(n, \tvar{g})\), where \(\tvar{g}\) is a vector, and
\begin{equation}
  \label{equ:briefly-diag-operator}
  G(\tau_1, \tau_2, \cdots, \tau_n)
  = \diag(n, \tvar{g})(\tau_1, \tau_2, \cdots, \tau_n)
  = \sum_{k} g(k) \prod_{i=1}^{n} \delta(\tau_i - k).
\end{equation}
For better understanding, this \(\diag(\cdot)\) operator is visualized in Figure \ref{fig:preview-diag-operator}.

\begin{figure}[htb]
  \centering
  \subfloat[one-dimensional \(\tvar{g} \in \mathbb{R}^{5}\)]{
    \begin{tikzpicture}[yscale=-1, scale=0.6]
      \draw[white] (0,-2.5) rectangle (4,3.3);
      \foreach \i in {0,...,4}{
          \draw[fill, color=black!40!white] (\i,0) rectangle (\i+0.8, 0.8);
        }
      \draw[thick, ->] node[left] {$0$} (0,0) -- (5.5,0);
    \end{tikzpicture}
  }
  \subfloat[\(\diag(2, \tvar{g}) \in \mathbb{R}^{5 \times 5}\)]{
    \begin{tikzpicture}[yscale=-1,scale=0.5] 
      \draw[black!20] (0,0) grid (5,5);
      \foreach \i in {0,...,4}{
          \draw[fill, color=black!40!white] (\i,\i) rectangle (\i+1,\i+1);
        }
      \draw[thick,->] node[left] {$0$} (0,0) -- (5.5, 0);
      \draw[thick,->] (0,0) -- (0, 5.5);
    \end{tikzpicture}
  }
  \subfloat[\(\diag(3, \tvar{g}) \in \mathbb{R}^{5 \times 5 \times 5}\)]{
    \tdplotsetmaincoords{-75}{45} 
    \begin{tikzpicture}[scale=0.4,tdplot_main_coords]
      \foreach \i in {1,...,5}{
          \draw[thin, black!20, -] (\i, 0, 5) -- (\i, 5, 5);
          \draw[thin, black!20, -] (\i, 5, 0) -- (\i, 5, 5);
          \draw[thin, black!20, -] (0, \i, 5) -- (5, \i, 5);
          \draw[thin, black!20, -] (0, \i, 0) -- (0, \i, 5);
          \draw[thin, black!20, -] (0, 5, \i) -- (5, 5, \i);
          \draw[thin, black!20, -] (0, 0, \i) -- (0, 5, \i);
        }
      \draw[thin, black!20, -] (0, 5, 0) -- (5, 5, 0);
      \draw[thin, black!20, -] (0, 0, 5) -- (5, 0, 5);

      \drawUnitBox(0,0,0);
      \drawUnitBox(1,1,1);
      \drawUnitBox(2,2,2);
      \drawUnitBox(3,3,3);
      \drawUnitBox(4,4,4);

      \draw[thick,->] node[left] {$0$} (0,0,0) -- (5.5,0,0);
      \draw[thick,->] (0,0,0) -- (0,5.5,0);
      \draw[thick,->] (0,0,0) -- (0,0,5.5);
    \end{tikzpicture}
  }
  \caption{A brief preview of the \(\diag(\cdot)\) operator.}
  \label{fig:preview-diag-operator}
\end{figure}
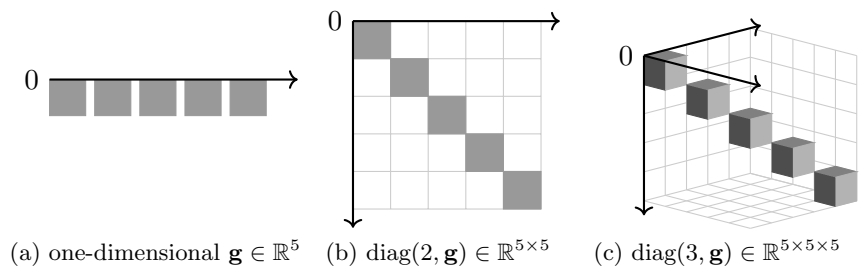

Specially, if \(\tvar{G}\) is a super diagonal tensor kernel, its outer convolution has the form of
\begin{equation}
  \label{equ:oconv-diag-g-h1-h2-hn}
  \begin{aligned}
     & \left(\tvar{G} \oconv \lcerfl{\tvar{H}_1, \tvar{H}_2, \cdots, \tvar{H}_n}\right)(
    \vsymb{t}_1, \vsymb{t}_2, \cdots, \vsymb{t}_n
    )                                                                                    \\
     & = \sum_{\vsymb{\tau}} G(\vsymb{\tau}) \prod_{i=1}^{n} H_i(\vsymb{t}_i - \tau_i)
    = \sum_{k} g(k) \prod_{i=1}^{n} \delta(\tau_i - k) H_i(\vsymb{t}_i - \tau_i)
    = \sum_{k} g(k) \prod_{i=1}^{n} H_i(\vsymb{t}_i - k).                                \\
  \end{aligned}
\end{equation}

\subsection{Convolution With Stride Grater Than One}
\label{subsec:convolution-with-stride-grater-than-one}

Convolution with stride grater than one is commonly used to replace convolution-pooling structure.
If the stride equals one, filters move one point at a time.
If the stride equals two, filters jump two points at a time.
In addition, convolution with stride \(s\) is
\begin{equation*}
  \left(\tvar{h} *_s \tvar{x}\right)(t) = \sum_{\tau} h(\tau) x(s t - \tau),
\end{equation*}
where subscript of asterisk \(*_s\) indicates stride.

This operation can be applied to order-\(n\) convolution and outer convolution.
With vector like notation, let \(s \vsymb{t} = \lcerfl{s t_1, s t_2, \cdots, s t_n}\). Discrete order-\(n\) convolution for \(m\)-dimensional signal with stride \(s\) is
\begin{equation}
  \left(\tvar{H} *_s \vsymb{x}\right)(\vsymb{t})
  = \sum_{\vsymb{\tau}_1, \cdots, \vsymb{\tau}_n} H(\vsymb{\tau}_1, \vsymb{\tau}_2, \cdots, \vsymb{\tau}_n) \prod_{i=1}^{n} x_i(s \vsymb{t} - \vsymb{\tau}_i).
\end{equation}
Similarly, the discrete outer convolution for \(m\)-dimensional signal with stride \(s\) is
\begin{equation}
  \begin{aligned}
     & \left( \tvar{G} \oconv_s \vsymb{H} \right)(\vsymb{t}_{1,1}, \vsymb{t}_{1,2}, \cdots; \vsymb{t}_{2,1}, \cdots; \cdots; \vsymb{t}_{n,1}, \cdots) \\
     & = \sum_{\vsymb{\tau}_1, \cdots}
    G(\vsymb{\tau}_1, \vsymb{\tau}_2, \cdots, \vsymb{\tau}_n)
    \prod_{i=1}^{n} H_i(s \vsymb{t}_{i,1} - \vsymb{\tau}_i, s \vsymb{t}_{i,2} - \vsymb{\tau}_i, \cdots).
  \end{aligned}
\end{equation}

The combination of two convolutions with strides is also equivalent to the outer convolution with strides, \(\tvar{G} *_s (\tvar{H} *_z \tvar{x}) = (\tvar{G} \oconv_z \tvar{H}) *_{sz} \tvar{x}\) (Property \ref{prop:conv-stride-s-g-conv-stride-z-h-x} in Subsection \ref{subsec:convolution-properties}).

\begin{remark}
  If \(\tvar{G}\) has shape \((z_1, z_2, \cdots, z_m)\) and \(\tvar{H}\) has shape \(c_1, c_2, \cdots, c_m\), the shape of \(\tvar{G} \oconv_s \tvar{H}\) is
  \begin{equation*}
    \left(c_1 + (z_1 - 1) s, c_2 + (z_2 - 1) s, \cdots,  c_m + (z_m - 1) s\right).
  \end{equation*}
\end{remark}

\subsection{Visualization of Outer Convolution}
\label{subsec:Visualization-of-outer-convolution}

For the following cases, three examples are provided to help understand the outer convolution.

Let \(\tvar{G} \in \{1\}^{8 \times 8}\) be a \(8 \times 8\) matrix with all elements are one, and \(\tvar{h} \in \{1\}^{8}\) be a vector of length eight with all elements are one. Figure \ref{fig:oconv-2d-1d-1d} demonstrates
\begin{equation*}
  \left(\tvar{G} \oconv \lcerfl{\tvar{h}, \tvar{h}}\right)(t_1, t_2)
  = \sum_{\tau_1 = 0, \tau_2 = 0}^{7,7} G(\tau_1, \tau_2) h(t_1 - \tau_1) h(t_2 - \tau_2).
\end{equation*}

Let \(\tvar{g} \in \{1\}^{8}\) and \(\tvar{H} \in \{1\}^{8 \times 8}\). Figure \ref{fig:oconv-1d-2d} demonstrates
\begin{equation*}
  \left( \tvar{g} \oconv \tvar{H} \right)(t_1, t_2)
  = \sum_{t_1=0, t_2=0}^{7,7} g(\tau) H(t_1 - \tau, t_2 - \tau).
\end{equation*}

Assume \(\tvar{G} \in \{1\}^{8 \times 8}\) and \(\tvar{H} \in \{1\}^{8 \times 8}\), \figurename{} \ref{fig:oconv-2d-2d-2d} demonstrates
\begin{equation*}
  \left( \tvar{G} \oconv \lcerfl{\tvar{H}, \tvar{H}} \right)(t_1, t_2, t_3, t_4)
  = \sum_{\tau_1=0,\tau_2=0}^{7,7}
  G(\tau_1, \tau_2)
  H(t_1 - \tau_1, t_2 - \tau_1)
  H(t_3 - \tau_2, t_4 - \tau_2).
\end{equation*}
It is clear that \(\tvar{G} \oconv \lcerfl{\tvar{H}, \tvar{H}}\) is a four-dimensional tensor with shape \((15, 15, 15, 15)\), and it is flattened to shape \((225, 225)\) and drawn in Figure \ref{fig:oconv-2d-2d-2d}.

\begin{figure}[htb]
  \centering
  \subfloat[\(\tvar{G} \oconv \lcerfl{\tvar{h}, \tvar{h}}\)]{
    \includegraphics[width=0.22\linewidth]{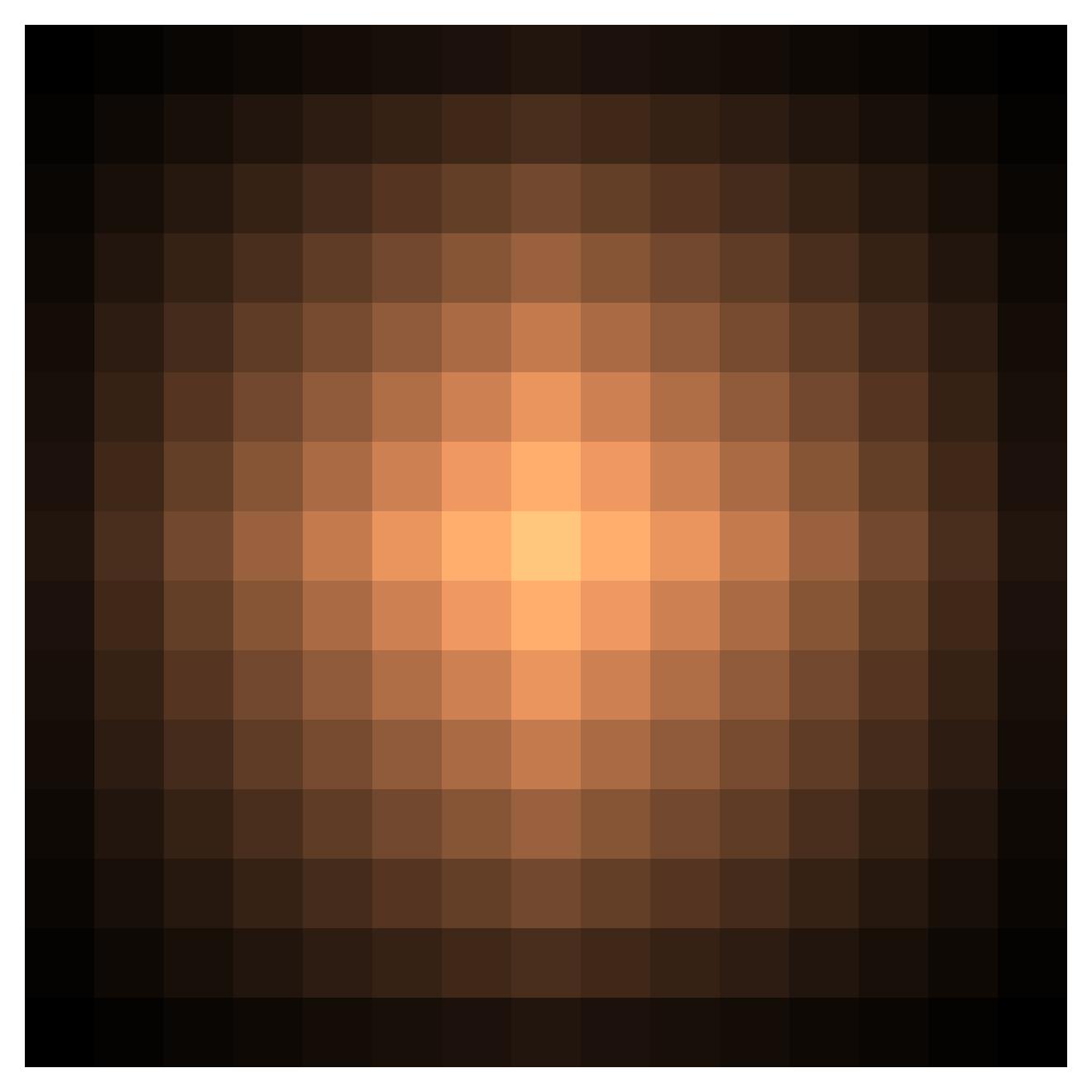} %
    \label{fig:oconv-2d-1d-1d}
  }
  \subfloat[\(\tvar{g} \oconv \tvar{H}\)]{
    \includegraphics[width=0.22\linewidth]{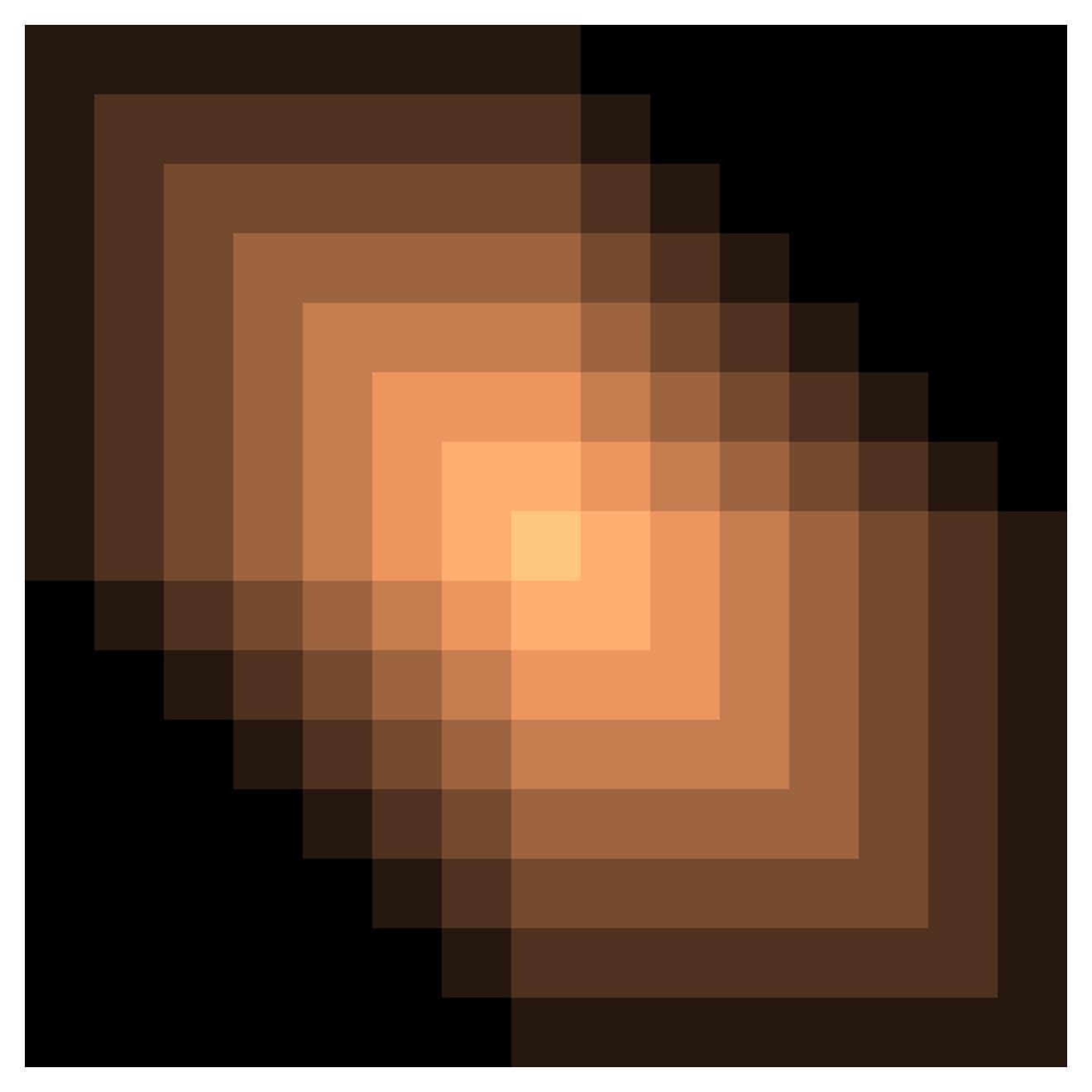} %
    \label{fig:oconv-1d-2d}
  }
  \subfloat[flattened \(\tvar{G} \oconv \lcerfl{\tvar{H}, \tvar{H}}\)]{
    \includegraphics[width=0.22\linewidth]{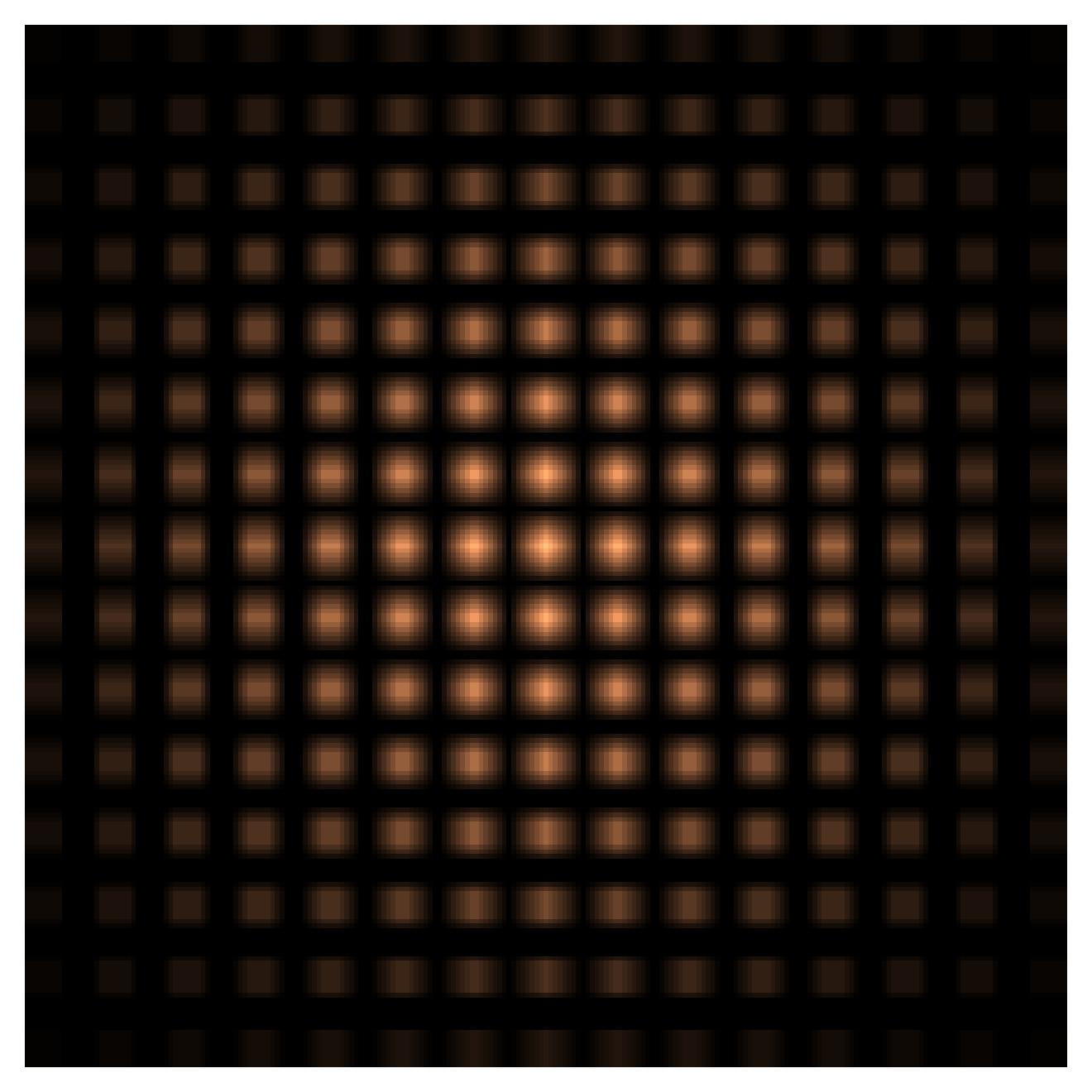}
    \label{fig:oconv-2d-2d-2d}
  }
  \caption{
    Examples of outer Convolution.
  }
\end{figure}

\subsection{Convolution for Multi-dimensional Signals}
\label{subsec:convolution-for-multi-dimensional-signals}

In this subsection, we show that the multidimensional (outer) convolution can be analyzed via one-dimensional (outer) convolution.
With this transformation, only (outer) convolution for one-dimensional signals will be studied hereafter, if not specified.

\begin{definition}[Flatten-operator]
  The flatten-operator is a bijection \(\mathcal{T}(\tvar{x}) = \tvarhat{x}\), such that \(x(t_1, t_2, \cdots) = \hat{x}(t_1 w_1 + t_2 w_2 + \cdots)\) and \(w_1, w_2\) are scalars for locating non-overlap elements. The inverse of \(\mathcal{T}\) is denoted by \(\mathcal{T}^{-1}\) with \(\mathcal{T}^{-1}\left( \mathcal{T}(\tvar{x}) \right) = \tvar{x}\).
  \label{def:flatten-operator}
\end{definition}

\begin{proposition}
  The flatten-operator is homomorphic
  \begin{equation}
    \begin{aligned}
      \mathcal{T}(\tvar{H} * \vsymb{x})
       & = \mathcal{T}(\tvar{H}) * \mathcal{T}(\vsymb{x}),      \\
      \mathcal{T}( \tvar{G} \oconv \vsymb{H} )
       & = \mathcal{T}(\tvar{G}) \oconv \mathcal{T}(\vsymb{H}).
    \end{aligned}
  \end{equation}
  \label{proposition:flatten-operator-is-homorphic}
\end{proposition}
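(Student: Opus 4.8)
The plan is to verify both homomorphism identities by writing out each side using the discrete one‑dimensional definitions (Equations \ref{equ:def-discrete-order-n-convolution-md} and \ref{equ:def-descrete-outer-convolution-1d}), and showing that the flatten‑operator merely relabels the summation indices and the output coordinates without altering any summand. Since $\mathcal{T}$ is a bijection that sends a multi‑index $(t_1,t_2,\dots)$ to the single index $t_1 w_1 + t_2 w_2 + \cdots$ with the $w_i$ chosen so the map is injective on the relevant range, it is enough to exhibit, for each entry of the left‑hand tensor, the matching entry of the right‑hand tensor under this relabeling. I would treat the two‑dimensional case explicitly and then remark that the general $m$‑dimensional case follows by the same bookkeeping (or by induction on the number of dimensions, flattening one axis at a time).

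First I would handle $\mathcal{T}(\tvar{H} * \vsymb{x}) = \mathcal{T}(\tvar{H}) * \mathcal{T}(\vsymb{x})$. Starting from the $m$‑dimensional order‑$n$ convolution
\begin{equation*}
  (\tvar{H} * \vsymb{x})(\vsymb{t})
  = \sum_{\vsymb{\tau}_1, \cdots, \vsymb{\tau}_n} H(\vsymb{\tau}_1, \cdots, \vsymb{\tau}_n) \prod_{i=1}^{n} x_i(\vsymb{t} - \vsymb{\tau}_i),
\end{equation*}
I apply $\mathcal{T}$ to the output, i.e.\ replace the multi‑index $\vsymb{t}$ by the scalar $\hat t = \sum_j t_j w_j$. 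On the right‑hand side, $\mathcal{T}(\tvar{H})$ has entries $\hat H(\hat\tau_1,\dots,\hat\tau_n)$ where each $\hat\tau_i = \sum_j \tau_{i,j} w_j$, and $\mathcal{T}(x_i)$ has entries $\hat x_i(\hat t - \hat\tau_i)$; the key point is that because the $w_j$ encode a non‑overlapping (mixed‑radix) positional system, the shift $\vsymb{t} - \vsymb{\tau}_i$ in each coordinate corresponds exactly to the scalar difference $\hat t - \hat\tau_i$, and summing over all $\vsymb{\tau}_i$ is the same as summing over all $\hat\tau_i$ in the image of $\mathcal{T}$. Hence the summands match term by term and the two sums are equal. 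The identity $\mathcal{T}(\tvar{G} \oconv \vsymb{H}) = \mathcal{T}(\tvar{G}) \oconv \mathcal{T}(\vsymb{H})$ is established the same way, now starting from Equation \ref{equ:def-descrete-outer-convolution-md}: flattening the $t$‑coordinates of each block and the $\tau$‑coordinates of the summation variable, and using that $H_i(\vsymb{t}_{i,1} - \vsymb{\tau}_i, \vsymb{t}_{i,2} - \vsymb{\tau}_i, \cdots)$ becomes $\hat H_i(\hat t_{i,1} - \hat\tau_i, \hat t_{i,2} - \hat\tau_i, \cdots)$ under the same index arithmetic.

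The main obstacle — really the only subtle point — is making the "shift commutes with flattening" claim precise, since in general $\mathcal{T}(\vsymb{t} - \vsymb{\tau})$ need not equal $\mathcal{T}(\vsymb{t}) - \mathcal{T}(\vsymb{\tau})$ for an arbitrary positional encoding once carries or negative partial coordinates are allowed. I would address this by noting that in the convolution sums only finitely many indices contribute (kernels are finitely supported, or one restricts to the supports), so the weights $w_j$ can always be taken large enough — a mixed‑radix base exceeding the combined span of kernel and signal in each axis — that no "carrying" between coordinates occurs; on that range $\mathcal{T}$ is affine‑linear coordinatewise and the shift genuinely commutes. With that choice of $w_j$ fixed once and for all, every step above is a routine re‑indexing, and the homomorphism property follows. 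Finally I would observe that the same argument gives the continuous versions verbatim, replacing sums by integrals and the mixed‑radix encoding by any linear bijection $\mathbb{R}^m \to \mathbb{R}^m$ composed with a measurable flattening, so the proposition holds in full generality.
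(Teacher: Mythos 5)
Your proposal follows essentially the same route as the paper's proof: expand both sides via the discrete/continuous definitions and observe that $\mathcal{T}$ merely relabels the summation indices ($\vsymb{\tau}_i \mapsto \upsilon_i$, $\vsymb{t} \mapsto \iota$) so the summands match term by term. Your extra discussion of why the shift commutes with flattening (choosing the weights $w_j$ large enough relative to the supports so no index collisions occur) is a point the paper leaves implicit in its ``non-overlap'' condition, but it is a refinement of the same argument rather than a different one.
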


\begin{proof}
  A continuous case of this proposition is proved here, and the discrete one can be obtained in similar way.
  Recall order-\(n\) convolution for \(m\)-dimensional signal (Equation \ref{equ:def-discrete-order-n-convolution-md}), and let flattened index \(\vsymb{\tau}_i\) be \(\upsilon_i\), flattened index \(\vsymb{t}\) be \(\iota\). We have
  \begin{equation*}
    \begin{aligned}
      \mathcal{T}(\tvar{H} * \vsymb{x})(\vsymb{t})
       & = \mathcal{T}\left(\sum_{\vsymb{\tau}_i, \cdots, \vsymb{\tau}_n} H(\vsymb{\tau}_1, \vsymb{\tau}_2, \cdots, \vsymb{\tau}_n) \prod_{i=1}^{n} x_i(\vsymb{t} - \vsymb{\tau}_i) \right) \\
       & = \sum_{\upsilon_1, \cdots, \upsilon_n} \hat{H}(\upsilon_1, \upsilon_2, \cdots, \upsilon_n) \prod_{i=1}^{n} \hat{x}_i(\iota - \upsilon_i)                                          \\
       & = \left(\mathcal{T}(\tvar{H}) * \mathcal{T}(\vsymb{x})\right)(\iota).
    \end{aligned}
  \end{equation*}
  The flatten-operator of outer convolution for \(m\)-dimensional signal is also homomorphic.
  Recall outer convolution for \(m\)-dimensional signal (Equation \ref{equ:def-discrete-volterra-convolution-md}), and let flattened index  \(\vsymb{t}_{i,j}\) be \(\iota_{i,j}\), flattened index  \(\vsymb{\tau}_{i}\) be \(\upsilon_{i}\). We have
  \begin{equation*}
    \begin{aligned}
       & \mathcal{T}(\tvar{G} \oconv \vsymb{H})(\vsymb{t}_{1,1}, \vsymb{t}_{1,2}, \cdots;, \vsymb{t}_{2,1}, \cdots; \cdots; \vsymb{t}_{n,1}, \cdots)               \\
       & = \sum_{\vsymb{\tau}_1, \cdots, \vsymb{\tau}_n} G(\vsymb{\tau}_1, \vsymb{\tau}_2, \cdots, \vsymb{\tau}_n)
      \prod_{i=1}^{n}  H_i (\vsymb{t}_{i,1} - \vsymb{\tau}_i, \vsymb{t}_{i,2} - \vsymb{\tau}_i, \cdots)                                                            \\
       & = \sum_{\upsilon_1, \cdots, \upsilon_n} \hat{G}(\upsilon_1, \upsilon_2, \cdots, \upsilon_n)
      \prod_{i=1}^{n} \hat{H}_i (\iota_{i,1} - \upsilon_1, \iota_{i,2} - \upsilon_1, \cdots)                                                                       \\
       & = \left(\mathcal{T}(\tvar{G}) \oconv \mathcal{T}(\vsymb{H}) \right) (\iota_{1,1}, \iota_{1,2}, \cdots; \iota_{2,1}, \cdots; \cdots; \iota_{n,1}, \cdots).
    \end{aligned}
  \end{equation*}
\end{proof}

With Proposition \ref{proposition:flatten-operator-is-homorphic}, it could be easily verified
\begin{equation*}
  \begin{aligned}
    \tvar{H} * \vsymb{x}      & = \mathcal{T}^{-1} \left( \mathcal{T}(\tvar{H}) * \mathcal{T}(\vsymb{x}) \right),      \\
    \tvar{G} \oconv \vsymb{H} & = \mathcal{T}^{-1} \left( \mathcal{T}(\tvar{G}) \oconv \mathcal{T}(\vsymb{H}) \right).
  \end{aligned}
\end{equation*}
To help visualize this process, Figure \ref{fig:flatten-nd-convolution-to-1d-convolution} is an example of flattening a two-dimensional signal to a one-dimensional signal.

\begin{figure}[htb]
  \centering
  \begin{tikzpicture}[scale=0.4]
    \fill[black!15, rounded corners=0.2cm]
    (-5,5) ++(0,-1) -- ++(0,1) -- ++(3,0) [rounded corners=0cm] -- ++(0,-1) ++(-3,0);
    \fill[black!30] (-5,4) rectangle ++(3,-1);
    \fill[black!45, rounded corners=0.2cm]
    (-5,3) -- ++(0,-1) -- ++(3,0) [rounded corners=0cm] -- ++(0,1) ++(0,-3);
    \draw[step=1, gray] (-6.5, 6.5) grid (-0.5, 0.5);
    \draw[rounded corners=0.2cm] (-5,5) rectangle (-2,2);

    \begin{scope}[rounded corners=0.2cm]
      \draw[fill=black!15] (-8,0) rectangle ++(3,-1);
      \draw[black!30] (-7,0) -- (-7,-1) (-6,0) -- (-6,-1);
      \draw[fill=black!30] (-1,0) rectangle ++(3,-1);
      \draw[black!45] (0, 0) -- (0, -1) (1, 0) -- (1, -1);
      \draw[fill=black!45] (6,0) rectangle ++(3,-1);
      \draw[black!65] (7, 0) -- (7, -1) (8, 0) -- (8, -1);
    \end{scope}

    \draw[gray, double distance=3, arrows = {-Stealth[length=12pt,open]}, rounded corners=0.5cm] (0,3.5) -- ++(2, 0) -- ++(0, -3);

    \draw[gray] (-10,0) -- (11,0)
    (-10,-1) -- (11,-1);

    \begin{scope}[gray]
      \draw[rotate around={60:(-9,-0.5)}, double distance=2]
      (-9,-0.5) ++(-1,-0.2) -- ++(1, 0) -- ++(0,0.4) -- ++(1,0);
      \draw[rotate around={60:(-3,-0.5)}, double distance=2]
      (-3,-0.5) ++(-1,-0.2) -- ++(1, 0) -- ++(0,0.4) -- ++(1,0);
      \draw[rotate around={60:(4,-0.5)}, double distance=2]
      (4,-0.5) ++(-1,-0.2) -- ++(1, 0) -- ++(0,0.4) -- ++(1,0);
      \draw[rotate around={60:(10,-0.5)}, double distance=2]
      (10,-0.5) ++(-1,-0.2) -- ++(1, 0) -- ++(0,0.4) -- ++(1,0);
    \end{scope}

  \end{tikzpicture}
  \caption{Flattening a two-dimensional signal to a one-dimensional signal.}
  \label{fig:flatten-nd-convolution-to-1d-convolution}
\end{figure}
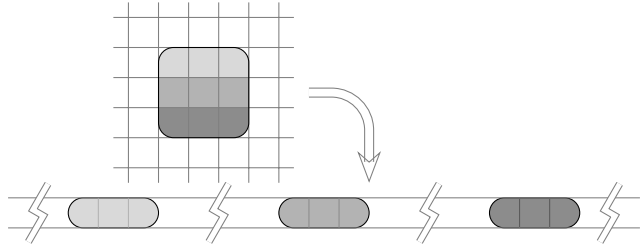

\subsection{Combination Properties}
\label{subsec:convolution-properties}

With all definitions above, some useful properties are concluded in this subsection.
All proofs are presented in Appendix \ref{appendix:proof-for-combination-properties}.

Since the multidimensional signals can be analyzed via one-dimensional signals, all signals here are set to be \textbf{one-dimensional} and represented as \(\tvar{x}\) or \(\tvar{y}\). Kernels are represented as \(\tvar{H}\) and \(\tvar{G}\). \(\alpha\) is a scalar.

The following Property \ref{prop:conv-g-plus-x-y}, \ref{prop:conv-g-add-x-a} and \ref{prop:conv-plus-g-h-x} are about linearity of order-\(n\) convolutions.
In continuous time, notation \(\sum \tvar{G}\) is replaced with \(\isintinf G(\vsymb{t}) d \vsymb{t}\).

\begin{enumerate}
  \item \label{prop:conv-g-plus-x-y} %
        \(\tvar{G} * (\vsymb{x} + \vsymb{y}) = \tvar{G} * \vsymb{x} + \tvar{G} * \vsymb{y}\);
  \item \label{prop:conv-g-add-x-a} %
        \(\tvar{G} * (\vsymb{x} + \alpha) = \tvar{G} * \vsymb{x} + \alpha \sum \tvar{G}\);
  \item \label{prop:conv-plus-g-h-x} %
        \((\tvar{G} + \tvar{H}) * \vsymb{x} = \tvar{G} * \vsymb{x} + \tvar{H} * \vsymb{x}\);
\end{enumerate}

Property \ref{prop:conv-g-square-plus-a-b}, \ref{prop:conv-g-power-n-plus-a-b} and \ref{prop:conv-g-power-n-plus-a-c-b-d} are combination properties of order-\(n\) convolution. Based on previous discussion, the kernel \(\tvar{G}\) here is symmetric.
The multinomial coefficient in Property \ref{prop:conv-g-power-n-plus-a-c-b-d} can be obtained from textbook of combinatorial mathematic \citep{brualdi_2004}.

\begin{enumerate}[resume]
  \item \label{prop:conv-g-square-plus-a-b} %
        \(\tvar{G} * (\tvar{x} + \tvar{y})^2 = \tvar{G} * \tvar{x}^2 + 2 \tvar{G} * \lcerfl{\tvar{x}, \tvar{y}} + \tvar{G} * \tvar{y}^2\);
  \item \label{prop:conv-g-power-n-plus-a-b} %
        \(\tvar{G} * (\tvar{x} + \tvar{y})^n = \sum_{k=0}^{n} \binom{n}{k} \tvar{G} * \lcerfl{\tvar{x}^k, \tvar{y}^{n-k}}\), where \(\binom{n}{k} = \dfrac{n!}{k! (n-k)!}\) is binomial coefficient;
  \item \label{prop:conv-g-power-n-plus-a-c-b-d}%
        \(\tvar{G} * (\tvar{x}_1 + \tvar{x}_2 + \cdots + \tvar{x}_m)^n = \sum \binom{n}{n_1 n_2 \cdots n_m} \tvar{G} * \lcerfl{\tvar{x}_1^{n_1}, \tvar{x}_2^{n_2}, \cdots, \tvar{x}_m^{n_m}} \), where multinomial coefficient \(\binom{n}{n_1 n_2 \cdots n_m} = \dfrac{n!}{n_1! n_2! \cdots n_m!}\), and \(\sum_{i=1}^{m} n_i = m, n_i \ge 0\), for all \(i = 1, 2, \cdots, m\).
\end{enumerate}

Properties below are for combining convolutions.
Symbol ``\(\dsmark\)'' indicates summation along a specific dimension.
For example, \(\sum_{\dsmark \alpha} \left(\tvar{G} \oconv \lcerfl{\tvar{H}, \alpha}\right) = \sum_i (\tvar{G} \oconv \lcerfl{\tvar{H}, \alpha})(:,i)\) and \(\sum_{\dsmark \alpha} \left(\tvar{G} \oconv \lcerfl{\tvar{H}_1, \alpha, \tvar{H}_2}\right) = \sum_i (\tvar{G} \oconv \lcerfl{\tvar{H}_1, \alpha, \tvar{H}_2})(:,i,:)\). In continuous space, they are replaced by \(\intinf \left(\tvar{G} \oconv \lcerfl{\tvar{H}, \alpha}\right)(:,t_{\alpha}) d t_{\alpha}\) and \(\intinf \left(\tvar{G} \oconv \lcerfl{\tvar{H}_1, \alpha, \tvar{H}_2}\right)(:, t_{\alpha}, :) d t_{\alpha}\).

\begin{enumerate}[resume]
  \item \label{prop:conv-g-conv-h-x} %
        \(\tvar{G} * (\tvar{H} * \vsymb{x}) = (\tvar{G} \oconv \tvar{H}) * \vsymb{x}\);
  \item \label{prop:conv-stride-s-g-conv-stride-z-h-x}
        \(\tvar{G} *_s (\tvar{H} *_z \tvar{x}) = (\tvar{G} \oconv_z \tvar{H}) *_{sz} \tvar{x}\);
  \item \label{prop:conv-g-mul-conv-h1-x-conv-h2-y} %
        \(\tvar{G} * \lcerfl{\tvar{H}_1 * \vsymb{x}, \tvar{H}_2 * \vsymb{y}} = (\tvar{G} \oconv \lcerfl{\tvar{H}_1, \tvar{H}_2}) * \lcerfl{\vsymb{x}, \vsymb{y}}\);
  \item \label{prop:conv-g-prod-n-conv-h-x} %
        \(\tvar{G} * \lcerfl{\tvar{H}_1 * \vsymb{x}_1, \tvar{H}_2 * \vsymb{x}_2, \cdots} = (\tvar{G} \oconv \lcerfl{\tvar{H}_1, \tvar{H}_2, \cdots}) * \lcerfl{\vsymb{x}_1, \vsymb{x}_2, \cdots}\);
  \item \label{prop:oconv-g1-oconv-g2-g3} %
        \(\tvar{G}_1 \oconv (\tvar{G}_2 \oconv \tvar{G}_3) = (\tvar{G}_1 \oconv \tvar{G}_2) \oconv \tvar{G}_3\);
  \item \label{prop:conv-g-conv-alpha-conv-h-x} %
        \(\tvar{G} * \lcerfl{\alpha, \tvar{H} * \vsymb{x}} = \left(\sum_{\dsmark \alpha} \left(\tvar{G} \oconv \lcerfl{\alpha, \tvar{H}}\right) \right) * \vsymb{x}\);
  \item \label{prop:conv-g-conv-h-conv-alpha-x}%
        \(\tvar{G} * \lcerfl{\tvar{H} * \vsymb{x}, \alpha} = \left(\sum_{\dsmark \alpha} \left(\tvar{G} \oconv \lcerfl{\tvar{H}, \alpha}\right) \right) * \vsymb{x}\);
  \item \label{prop:conv-g-conv-conv-h1-x-alpha-conv-h2-y} %
        \(\tvar{G} * \lcerfl{\tvar{H}_1 * \vsymb{x}, \alpha, \tvar{H}_2 * \vsymb{y}}
        = \left(\sum_{\dsmark \alpha} \left(\tvar{G} \oconv \lcerfl{\tvar{H}_1, \alpha, \tvar{H}_2}\right) \right) * \lcerfl{\vsymb{x}, \vsymb{y}}\);
\end{enumerate}

Property \ref{prop:conv-h-element-power-n-x} and \ref{prop:conv-g-power-conv-h-x} focus on convolution with signal that is element-wise power \(n\), \(\left( [\tvar{x}]^n \right)(t) = \left(x(t)\right)^n\).

\begin{enumerate}[resume]
  \item \label{prop:conv-h-element-power-n-x}
        \(\tvar{h} * [\tvar{x}]^n = \diag(n, \tvar{h}) * \tvar{x}^n\);
  \item \label{prop:conv-g-power-conv-h-x}
        \(\tvar{g} * [\tvar{h} * \tvar{x}]^n = \left(\diag(n, \tvar{g}) \oconv \tvar{h}^n\right) * \tvar{x}^n\);
\end{enumerate}

In addition to all properties above, some special properties could be obtained via setting special kernels. For example, by setting \(\tvar{G}\) as the identity matrix, we have
\begin{equation*}
  \left(
  \tvar{G} \oconv \lcerfl{\tvar{H}_1, \tvar{H}_2}
  \right)(\vsymb{t}_1, \vsymb{t}_2)
  = \sum_{i_1, i_2} G(i_1, i_2) H_1(\vsymb{t}_1 - i_1) H_2(\vsymb{t}_2 - i_2)
  = H_1(\vsymb{t}_1) H_2(\vsymb{t}_2).
\end{equation*}

\section{Transformation from Neural Networks to Volterra Convolutions}
\label{sec:connect-to-neural-network}

Previous section discussed the definition of Volterra convolution and some useful properties of combining two order-\(n\) convolutions.
In this section, we will go further and try to represent some common convolutional networks in the form of Volterra convolutions.

\begin{theorem}
  Most convolutional neural networks can be represented in the form of Volterra convolutions.
  \label{thm:conv-net-to-vconv}
\end{theorem}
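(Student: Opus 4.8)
The plan is to argue by structural induction on the computational graph of the network, establishing two facts: (i) every primitive building block of a convolutional network — a (strided) convolutional or fully connected layer, an affine layer such as inference-time batch normalisation, average pooling, a pointwise activation, a residual or skip connection, and channel concatenation or branching — can, on the bounded-signal domain fixed in Section~\ref{sec:extension-of-convolutions}, be written (exactly, or to arbitrary accuracy) as a finite-term Volterra convolution; and (ii) the class of finite-term Volterra convolutions is closed under the operations by which these blocks are composed. Granting (i) and (ii), one traverses the graph from input to output, replacing each block by its Volterra form, and obtains a single Volterra convolution approximating $f(\tvar{x})$, which is the claim. By Proposition~\ref{proposition:flatten-operator-is-homorphic} it suffices to treat one-dimensional signals throughout.

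For (i): a convolutional or fully connected layer (with or without stride) is literally an order-one convolution, hence already in Volterra form; a bias contributes only to the order-zero term (Property~\ref{prop:conv-g-add-x-a}), and average pooling and inference-time normalisation are likewise affine. A pointwise activation $\sigma$ is the only genuinely nonlinear ingredient. When $\sigma$ is analytic (sigmoid, tanh, softplus) we expand $\sigma(z)=\sum_{n\ge 0}a_n z^n$; when $\sigma$ is merely continuous (ReLU, hard-sigmoid, and so on) we use that the preactivations lie in a compact interval $[-M_1,M_1]$ and invoke the Weierstrass theorem to replace $\sigma$ by a polynomial of degree $N$ with uniform error $\varepsilon$. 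In either case, applying the (truncated) polynomial pointwise to a signal $\tvar{z}$ produces $\sum_{n=0}^{N}a_n[\tvar{z}]^n$, and by Property~\ref{prop:conv-h-element-power-n-x} each $[\tvar{z}]^n$ is the order-$n$ convolution of a super-diagonal kernel with $\tvar{z}^n$; composing with the preceding linear layer via Property~\ref{prop:conv-g-power-conv-h-x} puts $\sigma(\tvar{W}*\tvar{x}+\tvar{b})$ into Volterra form directly in $\tvar{x}$.

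For (ii): suppose one block outputs $\tvar{y}=\sum_{k=0}^{K}\tvar{H}_k*\tvar{x}^k$ and the next applies $\sum_{j=0}^{J}\tvar{G}_j*\tvar{y}^j = \sum_{j=0}^{J}\tvar{G}_j*\bigl(\tvar{H}_0+\tvar{H}_1*\tvar{x}+\cdots\bigr)^j$. Expanding each power by the multinomial rule (Property~\ref{prop:conv-g-power-n-plus-a-c-b-d}) reduces everything to terms of the shape $\tvar{G}_j*\lcerfl{(\tvar{H}_{k_1}*\tvar{x}^{k_1})^{j_1},\dots}$, which collapse to single order-$(k_1 j_1+\cdots)$ convolutions in $\tvar{x}$ by Properties~\ref{prop:conv-g-conv-h-x} and \ref{prop:conv-g-prod-n-conv-h-x} together with the associativity of $\oconv$ (Property~\ref{prop:oconv-g1-oconv-g2-g3}); summing over the multinomial indices and over $j$ regroups them into a new finite Volterra series of order $KJ$. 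A residual connection $\tvar{x}\mapsto\tvar{x}+f(\tvar{x})$ adds the identity order-one convolution (delta kernel) to the series, and concatenation or multi-branch merges are handled by the multi-signal order-$n$ convolutions and Property~\ref{prop:conv-g-prod-n-conv-h-x} after padding kernels to a common support.

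The hard part is not any single step but controlling the two sources of blow-up introduced by (ii): the Volterra order multiplies across layers, so a depth-$L$ network nominally needs order $\prod_{\ell}N_\ell$, and the uniform approximation errors $\varepsilon_\ell$ of the per-layer polynomial surrogates propagate and are amplified by the Lipschitz constants of the remaining layers. Making the statement quantitative therefore requires (a) truncating the resulting series at a finite order $N$ and bounding the tail using the boundedness of the signals together with the decay of the coefficients in the activation's power-series expansion when $\sigma$ is analytic, and (b) a telescoping estimate showing that the accumulated error stays small provided each $\varepsilon_\ell$ is chosen small enough; and it is precisely the operations that are neither smooth nor local, notably max-pooling, that resist even this, which is why the theorem is stated for \emph{most} convolutional networks rather than all.
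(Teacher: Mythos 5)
Your proposal is correct and follows essentially the same route as the paper: convert each primitive block via polynomial expansion of the activation (Properties \ref{prop:conv-h-element-power-n-x} and \ref{prop:conv-g-power-conv-h-x}), show closure under composition by multinomial expansion and the outer-convolution rules (as in Lemmas \ref{lemma:stacking-on-vconv-om-vconv-approximate-onm-vconv} and \ref{lemma:stacking-low-order-vconv-to-high-order-vconv}), handle residual, inception and multichannel structures the same way, and attribute the word ``most'' to data-dependent operations such as max-pooling. The only cosmetic differences are that you invoke Weierstrass approximation for ReLU where the paper uses the softplus limit, and you sketch a quantitative truncation/error-propagation argument where the paper simply appeals to the fading-memory approximation result of Boyd and Chua.
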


Both convolutional networks and Volterra convolutions have the operation of convolution.
The convolutional neural network extend this operation by stacking layers and the Volterra convolution extend this by increasing the order.
Apart from the convolution, a convolutional neural network is a universal approximator, as it happens, a Volterra convolution is also a universal approximator.
Theoretically, if two approximators can approximate the same function, it is possible to approximate one by the other. In light of this, roughly speaking, most convolutional neural network can be approximated in the form of Volterra convolution, and vice versa.

The proof contains two major parts. The first part is about the small neural network structures, and the second part is about the combination of multiple layers, i.e., the whole network.
Since both the small structures and their combinations can be represented in this form, we conclude that most convolutional neural networks build of these structures can also be represented in the form of Volterra convolution.

\subsection{Conversion of Small Structures}

\subsubsection{Conv-Act-Conv Structure}
\label{subsubsec:combine-layers}

The ``conv | act | conv'' structure means the stacking of a convolutional layer, an activation layer, and a convolutional layer.

\begin{lemma}
  The ``conv | act | conv'' structure can be converted to the form of Volterra convolution. 
  \label{lemma:convolution-with-activation-is-Volterra}
\end{lemma}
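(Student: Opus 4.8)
The plan is to take the "conv $\mid$ act $\mid$ conv" structure, expand the activation function as a power series (Taylor/polynomial expansion), and then repeatedly apply the combination properties from Subsection \ref{subsec:convolution-properties} to fold everything back into a single Volterra convolution. Concretely, write the structure as $\tvar{y} = \tvar{G} * \sigma(\tvar{H} * \tvar{x} + \tvar{b})$, where $\tvar{H}$ is the first convolution kernel, $\sigma$ the (scalar, element-wise) activation, $\tvar{b}$ a bias, and $\tvar{G}$ the second convolution kernel. The first step is to expand $\sigma$ about the bias point: assuming $\sigma$ is analytic (or at least admits a convergent power series on the relevant bounded range, which is guaranteed by the boundedness assumptions $|x(t)| \le M_1$ stated at the start of Section \ref{sec:extension-of-convolutions}), write $\sigma(u + b) = \sum_{n=0}^{\infty} a_n u^n$ with $a_n = \sigma^{(n)}(b)/n!$. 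Substituting $u = \tvar{H} * \tvar{x}$ gives $\sigma(\tvar{H} * \tvar{x} + \tvar{b}) = \sum_{n=0}^{\infty} a_n [\tvar{H} * \tvar{x}]^n$, where the power is element-wise.

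The second step is to convert each element-wise power term into an order-$n$ convolution. Here Property \ref{prop:conv-h-element-power-n-x} and Property \ref{prop:conv-g-power-conv-h-x} do exactly the work: $[\tvar{h} * \tvar{x}]^n = (\diag(n, \text{scalar}) \oconv \tvar{h}^n) * \tvar{x}^n$ up to the scalar $a_n$, so $a_n [\tvar{H} * \tvar{x}]^n = \left(\diag(n, a_n\tvar{e}) \oconv \tvar{H}^n\right) * \tvar{x}^n$ for an appropriate constant vector. Thus the output of the activation layer is already a Volterra convolution applied to $\tvar{x}$: $\sigma(\tvar{H}*\tvar{x} + \tvar{b}) = \sum_{n=0}^{\infty} \tvar{K}_n * \tvar{x}^n$ with $\tvar{K}_n = \diag(n, a_n \tvar{e}) \oconv \tvar{H}^n$. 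The third step is to push the outer convolution $\tvar{G} * (\cdot)$ through the infinite sum using linearity (Property \ref{prop:conv-plus-g-h-x}, in its series form) and then Property \ref{prop:conv-g-conv-h-x} term by term: $\tvar{G} * (\tvar{K}_n * \tvar{x}^n) = (\tvar{G} \oconv \tvar{K}_n) * \tvar{x}^n$. Collecting, $\tvar{y} = \sum_{n=0}^{\infty} (\tvar{G} \oconv \tvar{K}_n) * \tvar{x}^n$, which is precisely a Volterra convolution with proxy kernels $\tvar{H}_n := \tvar{G} \oconv \diag(n, a_n\tvar{e}) \oconv \tvar{H}^n$; associativity of $\oconv$ (Property \ref{prop:oconv-g1-oconv-g2-g3}) lets us write this unambiguously. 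If one wants a finite-term (truncated) representation, one stops the sum at order $N$ and bounds the tail using the Lipschitz/boundedness constants $M_1, M_2$ and the absolute integrability of the kernels.

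The main obstacle I expect is the rigorous handling of convergence and the interchange of the infinite sum with the convolution operator $\tvar{G} * (\cdot)$ — i.e., justifying the term-by-term application of Properties \ref{prop:conv-plus-g-h-x} and \ref{prop:conv-g-conv-h-x} to an infinite series rather than a finite one. This requires knowing that $\sum_n |a_n| (M_1 \|\tvar{H}\|_1)^n$ converges (which holds when $\sigma$ is entire, e.g.\ $\exp$, $\sin$, $\tanh$ on a bounded domain, or a polynomial such as a truncated ReLU surrogate) and then invoking dominated convergence / uniform convergence on the bounded, Lipschitz signal class fixed in Section \ref{sec:extension-of-convolutions}. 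A secondary subtlety is that many activations (ReLU, hard sigmoid) are not analytic, so strictly one either restricts to smooth activations or argues that the activation can itself be approximated uniformly on $[-M_1\|\tvar{H}\|_1, M_1\|\tvar{H}\|_1]$ by a polynomial (Weierstrass), which is why the theorem-level statement says "can be approximated in the form of Volterra convolution" rather than "equals." I would state the lemma's proof for the analytic case cleanly and remark that the non-analytic case follows by a uniform polynomial approximation of $\sigma$, with the approximation error controlled on the bounded signal domain.
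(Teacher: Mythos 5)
Your proposal is correct and follows essentially the same route as the paper's proof: Taylor-expand the activation, convert each element-wise power term via Property \ref{prop:conv-h-element-power-n-x}/\ref{prop:conv-g-power-conv-h-x} (the $\diag(n,\cdot)\oconv\tvar{h}^n$ construction), and collect terms by linearity into a Volterra convolution, with the paper handling a nonzero expansion point by binomial expansion where you absorb it into the bias. The only nitpick is that your ``appropriate constant vector'' in $\diag(n, a_n\tvar{e})$ should be the scaled Dirac impulse $a_n\delta$ rather than a generic constant vector, but this does not affect the argument.
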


\begin{proof}
  Suppose this structure has the form \(\tvar{g} * \sigma(\tvar{h} * \tvar{x})\), where \(\sigma(\cdot)\) is a nonlinear activation function.

  A polynomial approximation, i.e., Taylor expansion, of this activation function \(\sigma(t)\) at \(\alpha\) is
  \begin{equation*}
    \sigma(t) = \sigma(\alpha) + \sigma'(\alpha) (t - \alpha) + \dfrac{\sigma''(\alpha)}{2!} (t - \alpha)^2 + \dfrac{\sigma'''(\alpha)}{3!} (t - \alpha)^3 + \cdots.
  \end{equation*}

  We can assume without loss of generality that \(\alpha = 0\), and \(\tvar{g} * \sigma(\tvar{h} * \tvar{x})\) becomes
  \begin{equation}
    \tvar{g} * \left(
    \sigma(0) + \sigma'(0) [\tvar{h} * \tvar{x}] +
    \dfrac{\sigma''(0)}{2!} [\tvar{h} * \tvar{x}]^2                \\
    + \dfrac{\sigma'''(0)}{3!} [\tvar{h} * \tvar{x}]^3 + \cdots
    \right),
    \label{equ:proof-expand-conv-g-polynomial-conv-h-x}
  \end{equation}
  where square brackets stand for
  \begin{equation*}
    [\tvar{h} * \tvar{x}]^n(t) = \left( \sum_{\tau} h(\tau) x(t - \tau) \right)^n.
  \end{equation*}

  By linearity of convolution, Equation \ref{equ:proof-expand-conv-g-polynomial-conv-h-x} can be separated by terms.
  The first term is \(\sum \tvar{g}\).
  The second term is \(\tvar{g} * [\tvar{h} * \tvar{x}]^1 = \left(\tvar{g} \oconv \tvar{h}\right) * \tvar{x}\).
  For the third term and above, recall Property \ref{prop:conv-g-power-conv-h-x}, we have the \(n\)-th term,
  \begin{equation*}
    \tvar{g} * [\tvar{h} * \tvar{x}]^n
    = \left(\diag(n, \tvar{g}) \oconv \tvar{h}^n \right) * \tvar{x}^n.
  \end{equation*}

  If \(\alpha = 0\), we conclude that
  \begin{equation}
    \begin{aligned}
      \tvar{g} * \sigma(\tvar{h} * \tvar{x})
       & = \sigma(0) \sum \tvar{g}
      + \sigma'(0) (\tvar{g} \oconv \tvar{h}) * \tvar{x} \\
       & \!\!
      + \dfrac{\sigma''(0)}{2!} (\diag(2, \tvar{g}) \oconv \tvar{h}^2) * \tvar{x}^2
      + \dfrac{\sigma'''(0)}{3!} (\diag(3, \tvar{g}) \oconv \tvar{h}^3) * \tvar{x}^3
      + \cdots.
    \end{aligned}
    \label{equ:conv-act-conv-to-vconv-expand}
  \end{equation}

  More generally, if \(\alpha \ne 0\), the \(n\)-th term is
  \begin{equation*}
    \begin{aligned}
       & \tvar{g} * [\tvar{h} * \tvar{x} - \alpha]^n                                                                                 \\
       & = \diag(n, \tvar{g}) * \left(\tvar{h} * \tvar{x} - \alpha\right)^n ~~ (\text{Property \ref{prop:conv-h-element-power-n-x}}) \\
       & = \sum_{k=0}^{n} \binom{n}{k} \diag(n, \tvar{g}) * \lcerfl{(\tvar{h} * \tvar{x})^k,  (- \alpha)^{n-k}}
      ~~ (\text{Property \ref{prop:conv-g-power-n-plus-a-b}})                                                                        \\
       & = \sum_{k=0}^{n} \binom{n}{k}
      \left( \sum_{\dsmark (-\alpha)^{n-k}} \left(\diag(n, \tvar{g}) \oconv \lcerfl{\tvar{h}^k, (-\alpha)^{n-k}} \right)  \right) * \tvar{x}^k
      ~~ (\text{Property \ref{prop:conv-g-conv-h-conv-alpha-x}}),                                                                    \\
    \end{aligned}
  \end{equation*}
  which implies that the sum from the \(0\)-th term to the \(\infty\)-th term is also the form of Volterra convolution.
  This proof is completed.
\end{proof}

The Taylor expansion of a function often has infinite terms. However, if small truncation errors are allowed in applications, we can truncate the infinite term Taylor expansion to a finite term Taylor expansion.
The idea of truncation can also be applied to the Volterra convolution.
According to the universal approximation property of Volterra convolution with fading memory \citep{Boyd1985}, for any given \(\epsilon > 0\), there always exists \(n\) such that
\begin{equation*}
  \left\| f(\tvar{x}) - \sum_{i=0}^{n} \tvar{F}_i * \tvar{x}^i \right\|_2 < \epsilon,
\end{equation*}
where \(f(\cdot)\) is a time invariant operation, and \(\tvar{F}_i, i = 0, 1, \cdots n\) are kernels.
The fading memory theory means that the outputs are close if two inputs are close in the recent past, but not necessarily close in the remote past \citep{Boyd1985}.
If \(\tvar{g} * \sigma(\tvar{h} * \tvar{x})\) is time invariant and small truncation errors are allowed, it is reasonable to approximate \(\tvar{g} * \sigma(\tvar{h} * \tvar{x})\) by a finite term Volterra convolution.

In the following, we will consider the width of a two-layer network.
Suppose a two-layer network has \(M\) hidden neurons, and the neurons of distinct channels are independent and identically distributed,
\begin{equation*}
  y(t) = \sum_{i=0}^{M-1} W_2(i) \sigma\left(\sum_{\tau} W_1(i,\tau) x(t-\tau)\right).
\end{equation*}

Recall Lemma \ref{lemma:convolution-with-activation-is-Volterra}, each channel can be represented in the form of Volterra convolution, and this network can also be approximated by the sum of \(M\) Volterra convolutions, the order-\(n\) term is
\begin{equation*}
  \sum_{n=0} \tvar{F}_{0,n} * \tvar{x}^n +
  \sum_{n=0} \tvar{F}_{1,n} * \tvar{x}^n +
  \cdots +
  \sum_{n=0} \tvar{F}_{M-1,n} * \tvar{x}^n,
\end{equation*}
where \(\tvar{F}_{i,n}, ~ i=0, 1, \cdots, M-1;\) is the proxy kernel of each channel.

Based on the previous independent assumptions, these proxy kernels of different channels are also independent and identically.
Recall Hoeffding's inequality \citep{vershynin_2018}, we have
\begin{equation*}
  \mathbb{P}\left\{ \left| \dfrac{1}{M} \sum_{i=0}^{M-1} \tvar{F}_{i,n} - \mu_{n} \right| \ge \eta \right\}
  \le 2 \exp\left(\dfrac{-2 M \eta^2}{(b-a)^2}\right),
\end{equation*}
where \(a,b\) are the minimum and maximum values of all proxy kernels, \(a \le F_{i,n}(\cdots) \le b\).
We can observe that for any \(\epsilon \in (0, 1)\), \(1/M \sum_{i=0}^{M-1} \tvar{F}_{i,n}\) will converge to \(\mu_{n}\), with probability at least \(1 - \epsilon\) as long as
\begin{equation}
  \label{equ:upperbound-of-proxy-kernels}
  M \ge \dfrac{1}{2 \eta^2} \ln\left(\dfrac{2}{\epsilon}\right) (b-a)^2.
\end{equation}

\subsubsection{Other Structures}
\label{subsubsec:extension-on-other-Structures}

Some commonly used structures can also be represented in the form of Volterra convolution, including some activations, normalize layers, inception modules, residual connection and pooling layers.

\textit{ReLU activation:} The ReLU activation, \(\max(x,0)\) \citep{Nair2010}, is not differentiable at point \(0\). This is quite a panic when taking Taylor expansion at that position. Nonetheless, it can be approximated by
\begin{equation*}
  \text{ReLU}(x)
  = \lim_{\alpha \rightarrow \infty} \dfrac{1}{\alpha} \ln\left( 1 + e^{\alpha x} \right).
\end{equation*}
Other activations of the ReLU family can be approximated in the same way.

\textit{Fully connected layers:} A fully connected layer is a matrix multiplication with bias. It can be thought as discrete convolution with equal kernel length and signal length.
\begin{equation*}
  \sum_{j} W(i,j) x(j) + c = \sum_{j} \overline{W}(i,0-j) x(j) + c,
\end{equation*}
where \(\overline{W}(i, 0-j) = W(i,j)\).

\textit{Normalization:} The normalization layer \citep{Ioffe2015, Ba2016} scales and shifts the input signal
\begin{equation*}
  \tvar{x} \rightarrow a \tvar{x} + b.
\end{equation*}
This is a linear transformation, and this will not change the generality.
Nevertheless, \(a\) and \(b\) are input related, which implies that the corresponded proxy kernels are also input related.
In other words, the Volterra convolution is dynamic and will be updated if input differs.
We will pause here and left this dynamic Volterra convolution to future work.

\textit{Inception:} Main idea of inception module \citep{InceptionNet} is to apply convolution to different sizes of kernels parallelly and then concatenate, which is
\begin{equation*}
  \tvar{g} * \left( \tvar{h}_1 * \tvar{x} + \tvar{h}_2 * \tvar{x} + \cdots + \tvar{h}_n * \tvar{x}\right).
\end{equation*}
If we zero pad those kernels to the same size, recalling Property \ref{prop:conv-g-conv-h-x}, it becomes a convolutional layer,
\begin{equation*}
  \left( \tvar{g} \oconv (\tvar{h}_1 + \tvar{h}_2 + \cdots + \tvar{h}_n) \right) * \tvar{x}.
\end{equation*}

\textit{Residual connection:} A residual connection \citep{He2016} proposes \(f(\tvar{x}) + \tvar{x}\), where \(f(\cdot)\) is a neural network.
If \(f(\cdot)\) can be transformed to Volterra convolution, we have
\begin{equation*}
  \begin{aligned}
    f(\tvar{x}) + \tvar{x}
     & = \sum_{n=0}^{N} \tvar{H}_n * \tvar{x}^n + \tvar{x}                                                 \\
     & = \tvar{H}_0 + (\tvar{H}_1 * \tvar{x} + \delta * \tvar{x}) + \sum_{n=2}^{N} \tvar{H}_n * \tvar{x}^n \\
     & = \tvar{H}_0 + (\tvar{H}_1 + \delta) * \tvar{x} + \sum_{n=2}^{N} \tvar{H}_n * \tvar{x}^n,           \\
  \end{aligned}
\end{equation*}
where \(\delta\) is the Dirac delta and
\(\tvar{H}_1 + \delta
= \left\{\begin{array}{ll}
  H_1(t) + 1, & t = 0   \\
  H_1(t),     & t \ne 0
\end{array} \right. \).

\textit{Pooling:} Another family is the pooling layers. They are down sample or up sample operations.
Average pooling is convolution with kernel filled by one.
Max pooling picks the maximum value in a small region. It is data dependent, the equivalent kernels changes synchronously with input signal, and the proxy kernels are also dynamic.

\subsection{Conversion of Layer Combination}
\label{subsec:conversion-of-layer-combination}

In the previous subsection, we focus on small structures, and in this section, we combine Volterra convolution layers, showing that the combinations also have the same format.

\subsubsection{Order-Two-Order-Two Structure}
\label{subsubsec:transform-o2-o2-vconv}

Before going further, a simple structure is presented in this subsection.
The ``order-\(2\) | order-\(2\)'' structure is stacking two order-two Volterra convolutions.

\begin{lemma}
  The ``order-\(2\) | order-\(2\)'' structure can be converted to the form of order-\(4\) Volterra convolution.
  \label{lemma:stacking-o2-vconv-o2-vconv-approximate-o4-vconv}
\end{lemma}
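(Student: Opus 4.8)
The plan is to mimic the proof of Lemma~\ref{lemma:convolution-with-activation-is-Volterra}, but applied to the composition of two order-two Volterra convolutions rather than a ``conv $\mid$ act $\mid$ conv'' block, and then to collect terms by their resulting order. Write the inner structure as $\tvar{z} = \tvar{G}_0 + \tvar{G}_1 * \tvar{x} + \tvar{G}_2 * \tvar{x}^2$ and the outer one as $\tvar{y} = \tvar{F}_0 + \tvar{F}_1 * \tvar{z} + \tvar{F}_2 * \lcerfl{\tvar{z}, \tvar{z}}$. The first thing I would do is substitute and expand. The term $\tvar{F}_1 * \tvar{z}$ is linear in $\tvar{z}$, so by Property~\ref{prop:conv-plus-g-h-x} (linearity in the signal) it splits into $\tvar{F}_1 * \tvar{G}_0 + \tvar{F}_1 * (\tvar{G}_1 * \tvar{x}) + \tvar{F}_1 * (\tvar{G}_2 * \tvar{x}^2)$; the constant piece becomes $\sum(\cdot)$ of a kernel (an order-zero term), the second piece becomes $(\tvar{F}_1 \oconv \tvar{G}_1) * \tvar{x}$ by Property~\ref{prop:conv-g-conv-h-x} (order one), and the third piece becomes a contribution to order two, again by Property~\ref{prop:conv-g-conv-h-x} after recognizing $\tvar{x}^2 = \tvar{x}^2$.

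The harder piece is $\tvar{F}_2 * \lcerfl{\tvar{z}, \tvar{z}}$. Here I would first use Property~\ref{prop:conv-g-square-plus-a-b} (or its multinomial generalization, Property~\ref{prop:conv-g-power-n-plus-a-c-b-d}) to expand $\lcerfl{\tvar{z},\tvar{z}}$ with $\tvar{z}$ itself being a sum of three pieces $\tvar{G}_0, \tvar{G}_1 * \tvar{x}, \tvar{G}_2 * \tvar{x}^2$. This yields six distinct cross terms (with appropriate multinomial coefficients): $\lcerfl{\tvar{G}_0,\tvar{G}_0}$, $\lcerfl{\tvar{G}_0, \tvar{G}_1 * \tvar{x}}$, $\lcerfl{\tvar{G}_0, \tvar{G}_2 * \tvar{x}^2}$, $\lcerfl{\tvar{G}_1 * \tvar{x}, \tvar{G}_1 * \tvar{x}}$, $\lcerfl{\tvar{G}_1 * \tvar{x}, \tvar{G}_2 * \tvar{x}^2}$, and $\lcerfl{\tvar{G}_2 * \tvar{x}^2, \tvar{G}_2 * \tvar{x}^2}$. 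Each of these I would then reduce to an order-$n$ convolution in $\tvar{x}$ of the appropriate order ($0,1,2,3,4$ respectively) by pulling the inner kernels out through the outer kernel using Property~\ref{prop:conv-g-prod-n-conv-h-x} (for the terms where both slots carry a signal) and Properties~\ref{prop:conv-g-conv-alpha-conv-h-x}/\ref{prop:conv-g-conv-h-conv-alpha-x} (for the terms where one slot is the constant $\tvar{G}_0$, which plays the role of $\alpha$ and must be summed out along that dimension). For instance, $\tvar{F}_2 * \lcerfl{\tvar{G}_2 * \tvar{x}^2, \tvar{G}_2 * \tvar{x}^2} = (\tvar{F}_2 \oconv \lcerfl{\tvar{G}_2, \tvar{G}_2}) * \lcerfl{\tvar{x}^2, \tvar{x}^2} = (\tvar{F}_2 \oconv \tvar{G}_2^2) * \tvar{x}^4$, the sole order-four term.

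Finally I would regroup all contributions by order: the order-zero kernel $\tvar{H}_0$ collects $\tvar{F}_0$, $\sum(\tvar{F}_1 \oconv \tvar{G}_0)$-type scalars, and the $\lcerfl{\tvar{G}_0,\tvar{G}_0}$ term; $\tvar{H}_1$ collects $\tvar{F}_1 \oconv \tvar{G}_1$ and the $\lcerfl{\tvar{G}_0,\tvar{G}_1*\tvar{x}}$ term; $\tvar{H}_2$ collects $\tvar{F}_1 \oconv \tvar{G}_2$, the $\lcerfl{\tvar{G}_1*\tvar{x},\tvar{G}_1*\tvar{x}}$ term and the $\lcerfl{\tvar{G}_0,\tvar{G}_2*\tvar{x}^2}$ term; $\tvar{H}_3$ collects the $\lcerfl{\tvar{G}_1*\tvar{x},\tvar{G}_2*\tvar{x}^2}$ term; and $\tvar{H}_4 = \tvar{F}_2 \oconv \tvar{G}_2^2$. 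Using linearity (Property~\ref{prop:conv-plus-g-h-x}) to merge kernels of the same order, one obtains $\tvar{y} = \sum_{n=0}^{4} \tvar{H}_n * \tvar{x}^n$, which is an order-four Volterra convolution as claimed. The main obstacle is purely bookkeeping: carefully tracking the multinomial coefficients and the correct summation-along-a-dimension ($\dsmark$) operations when the constant term $\tvar{G}_0$ occupies one of the outer kernel's argument slots, and making sure that after zero-padding the kernels have compatible shapes so the $\oconv$ operations and the subsequent kernel additions are well defined. No single step is deep; the risk is an error in combining the various order-two contributions, so I would double-check that case (and perhaps verify the final kernel shapes against the remark on outer-convolution shapes).
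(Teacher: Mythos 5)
Your proposal is correct and follows essentially the same route as the paper's proof: substitute the inner order-two Volterra convolution into the outer one, expand the linear and quadratic terms via the (multi)nomial property, pull the inner kernels through with the outer-convolution combination properties (using the $\dsmark$-summation properties where the constant order-zero kernel occupies a slot), and regroup by order into kernels $\tvar{F}_0,\dots,\tvar{F}_4$. The only cosmetic differences are that the paper keeps the nine ordered cross terms rather than six with coefficients, and your citation of Property~\ref{prop:conv-plus-g-h-x} for splitting the linear term should be Property~\ref{prop:conv-g-plus-x-y} (linearity in the signal) together with Property~\ref{prop:conv-g-add-x-a} for the constant piece.
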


\begin{proof}
  Suppose \(\tvar{H}_i, \tvar{G}_j\) are two groups of kernels, \(\tvar{x}, \tvar{y}, \tvar{z}\) are the signals, these two Volterra convolutions are
  \begin{equation*}
    \tvar{y} = \sum_{i=0}^{2} \tvar{H}_i * \tvar{x}^{i}, ~~~
    \tvar{z} = \sum_{j=0}^{2} \tvar{G}_j * \tvar{y}^{j}.
  \end{equation*}

  Combining these two convolutions, we have
  \begin{equation*}
    \tvar{z} = \sum_{j=0}^{2} \tvar{G}_j * \left( \tvar{H}_0 + \tvar{H}_1 * \tvar{x}^{1} + \tvar{H}_2 * \tvar{x}^{2} \right)^j.
  \end{equation*}

  The first term is \(\tvar{G}_0 * \tvar{y}^0  = \tvar{G}_0\).

  Recall Property \ref{prop:conv-g-plus-x-y} and \ref{prop:conv-g-conv-h-x}, the second term is
  \begin{equation*}
    \begin{aligned}
      \tvar{G}_1 * \tvar{y}^1
       & = \tvar{H}_0 \sum \tvar{G}_1  + \tvar{G}_1 * (\tvar{H}_1 * \tvar{x}) + \tvar{G}_1 * (\tvar{H}_2 * \tvar{x}^2)           \\
       & = \tvar{H}_0 \sum \tvar{G}_1 + (\tvar{G}_1 \oconv \tvar{H}_1) * \tvar{x} + (\tvar{G}_1 \oconv \tvar{H}_2) * \tvar{x}^2.
    \end{aligned}
  \end{equation*}

  Recall Property \ref{prop:conv-g-square-plus-a-b}, the last term is
  \begin{equation*}
    \begin{aligned}
      \tvar{G}_2 * \tvar{y}^2
       & = \tvar{G}_2 * \left(\tvar{H}_0 + \tvar{H}_1 * \tvar{x} + \tvar{H}_2 * \tvar{x}^2 \right)^2 \\
       & = \tvar{G}_2 * \lcerfl{\tvar{H}_0, \tvar{H}_0}
      + \tvar{G}_2 * \lcerfl{\tvar{H}_0, \tvar{H}_1 * \tvar{x}}
      + \tvar{G}_2 * \lcerfl{\tvar{H}_1 * \tvar{x}, \tvar{H}_0}                                      \\
       & ~~
      + \tvar{G}_2 * \lcerfl{\tvar{H}_0, \tvar{H}_2 * \tvar{x}^2}
      + \tvar{G}_2 * \lcerfl{\tvar{H}_2 * \tvar{x}^2, \tvar{H}_0}
      + \tvar{G}_2 * \lcerfl{\tvar{H}_1 * \tvar{x}, \tvar{H}_1 * \tvar{x}}                           \\
       & ~~
      + \tvar{G}_2 * \lcerfl{\tvar{H}_1 * \tvar{x}, \tvar{H}_2 * \tvar{x}^2}
      + \tvar{G}_2 * \lcerfl{\tvar{H}_2 * \tvar{x}^2, \tvar{H}_1 * \tvar{x}}                         \\
       & ~~
      + \tvar{G}_2 * \lcerfl{\tvar{H}_2 * \tvar{x}^2, \tvar{H}_2 * \tvar{x}^2}.                      \\
    \end{aligned}
  \end{equation*}
  And recall Property
  \ref{prop:conv-g-conv-h-x},
  \ref{prop:conv-g-mul-conv-h1-x-conv-h2-y},
  \ref{prop:conv-g-conv-alpha-conv-h-x},
  \ref{prop:conv-g-conv-h-conv-alpha-x},
  \ref{prop:conv-g-conv-conv-h1-x-alpha-conv-h2-y}, it becomes
  \begin{equation*}
    \begin{aligned}
      \tvar{G}_2 * \tvar{y}^2
       & = \tvar{H}^2_0 \sum \tvar{G}_2
      + \left(\sum_{\dsmark \tvar{H}_0} \tvar{G}_2 \oconv \lcerfl{\tvar{H}_0, \tvar{H}_1} \right) * \tvar{x}
      + \left(\sum_{\dsmark \tvar{H}_0} \tvar{G}_2 \oconv \lcerfl{\tvar{H}_1, \tvar{H}_0} \right) * \tvar{x}   \\
       & \!\!\!\!\!\!\!
      + (\tvar{G}_2 \oconv \lcerfl{\tvar{H}_1, \tvar{H}_1}) * \tvar{x}^2
      + \left(\sum_{\dsmark \tvar{H}_0} \tvar{G}_2 \oconv \lcerfl{\tvar{H}_0, \tvar{H}_2} \right) * \tvar{x}^2
      + \left(\sum_{\dsmark \tvar{H}_0} \tvar{G}_2 \oconv \lcerfl{\tvar{H}_2, \tvar{H}_0} \right) * \tvar{x}^2 \\
       & \!\!\!\!\!\!\!
      + (\tvar{G}_2 \oconv \lcerfl{\tvar{H}_2, \tvar{H}_1}) * \tvar{x}^3
      + (\tvar{G}_2 \oconv \lcerfl{\tvar{H}_1, \tvar{H}_2}) * \tvar{x}^3
      + (\tvar{G}_2 \oconv \lcerfl{\tvar{H}_2, \tvar{H}_2}) * \tvar{x}^4.
    \end{aligned}
  \end{equation*}

  In summary, we have
  \begin{equation*}
    \tvar{z} = \sum_{k=0}^{4} \tvar{F}_k * \tvar{x}^k,
  \end{equation*}
  where the new kernels are
  \begin{equation}
    \label{equ:combination-two-volterra-kernels}
    \begin{aligned}
      \tvar{F}_0 & = \tvar{G}_0 + \tvar{H}_0 \sum \tvar{G}_1 + \tvar{H}_0^2 \sum \tvar{G}_2 \\
      \tvar{F}_1 & = \tvar{G}_1 \oconv \tvar{H}_1
      + \sum_{\dsmark \tvar{H}_0} \tvar{G}_2 \oconv \lcerfl{\tvar{H}_0, \tvar{H}_1}
      + \sum_{\dsmark \tvar{H}_0} \tvar{G}_2 \oconv \lcerfl{\tvar{H}_1, \tvar{H}_0}         \\
      \tvar{F}_2 & = \tvar{G}_1 \oconv \tvar{H}_2
      + \tvar{G}_2 \oconv \lcerfl{\tvar{H}_1, \tvar{H}_1}
      + \sum_{\dsmark \tvar{H}_0} \tvar{G}_2 \oconv \lcerfl{\tvar{H}_0, \tvar{H}_2}
      + \sum_{\dsmark \tvar{H}_0} \tvar{G}_2 \oconv \lcerfl{\tvar{H}_2, \tvar{H}_0}         \\
      \tvar{F}_3 & = \tvar{G}_2 \oconv \lcerfl{\tvar{H}_1, \tvar{H}_2}
      + \tvar{G}_2 \oconv \lcerfl{\tvar{H}_2, \tvar{H}_1}                                   \\
      \tvar{F}_4 & = \tvar{G}_2 \oconv \lcerfl{\tvar{H}_2, \tvar{H}_2}.
    \end{aligned}
  \end{equation}
\end{proof}

Notice that it is possible for some \(i\) such that \(\|\tvar{F}_i * \tvar{x}^i \| = 0\), but we will also call it the form of order-four Volterra convolution.

\subsubsection{Order-n-order-m Structure}
\label{subsubsec:transform-on-om-vconv}

In the following, we generalized ``order-2 | order-2'' structure (Lemma \ref{lemma:stacking-o2-vconv-o2-vconv-approximate-o4-vconv}) to ``order-\(n\) | order-\(m\)'' structure. This structure is to stack order-\(n\) Volterra convolution and order-\(m\) Volterra convolution.

\begin{lemma}
  The ``order-\(n\) | order-\(m\)'' structure with \(n,m > 0\) can be converted to the form of order-\(nm\) Volterra convolution.
  \label{lemma:stacking-on-vconv-om-vconv-approximate-onm-vconv}
\end{lemma}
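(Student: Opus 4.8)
The plan is to mimic the proof of Lemma \ref{lemma:stacking-o2-vconv-o2-vconv-approximate-onm-vconv}, but at a level of generality where we only track \emph{orders} of the resulting convolutions rather than writing out all the kernels explicitly. Suppose the two stacked Volterra convolutions are \(\tvar{y} = \sum_{i=0}^{n} \tvar{H}_i * \tvar{x}^i\) and \(\tvar{z} = \sum_{j=0}^{m} \tvar{G}_j * \tvar{y}^j\). Substituting the first into the second gives
\begin{equation*}
  \tvar{z} = \sum_{j=0}^{m} \tvar{G}_j * \left( \sum_{i=0}^{n} \tvar{H}_i * \tvar{x}^i \right)^j .
\end{equation*}
The strategy is to expand each power \(\left(\sum_{i=0}^{n} \tvar{H}_i * \tvar{x}^i\right)^j\) using the multinomial combination property (Property \ref{prop:conv-g-power-n-plus-a-c-b-d}), turning it into a sum of terms of the form \(\tvar{G}_j * \lcerfl{(\tvar{H}_0)^{k_0}, (\tvar{H}_1 * \tvar{x})^{k_1}, \cdots, (\tvar{H}_n * \tvar{x}^n)^{k_n}}\) with \(k_0 + k_1 + \cdots + k_n = j\).

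Next I would show each such term is an order-\(p\) convolution in \(\tvar{x}\) for an appropriate \(p\). The key bookkeeping: a factor \(\tvar{H}_i * \tvar{x}^i\) contributes \(i\) copies of \(\tvar{x}\), so a term with exponents \((k_0, k_1, \cdots, k_n)\) contributes \(p = \sum_{i=1}^{n} i\, k_i\) copies of \(\tvar{x}\) (the \(\tvar{H}_0\) factors contribute constants). To actually realize this term as a single order-\(p\) convolution one pushes the inner kernels \(\tvar{H}_i\) and the constants \(\tvar{H}_0\) through \(\tvar{G}_j\) using the combination properties: Property \ref{prop:conv-g-conv-h-x}, Property \ref{prop:conv-g-mul-conv-h1-x-conv-h2-y}, its multi-signal generalization Property \ref{prop:conv-g-prod-n-conv-h-x}, and the ``absorb a constant slot'' Properties \ref{prop:conv-g-conv-alpha-conv-h-x}--\ref{prop:conv-g-conv-conv-h1-x-alpha-conv-h2-y} — exactly as was done in the order-two case. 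This yields a kernel obtained from \(\tvar{G}_j\) by outer-convolving with the relevant \(\tvar{H}_i\)'s and summing out the dimensions corresponding to the \(\tvar{H}_0\)-slots. Summing the resulting order-\(p\) convolutions over all multinomial terms and all \(j\), and grouping by \(p\), exhibits \(\tvar{z} = \sum_{p} \tvar{F}_p * \tvar{x}^p\) for explicit proxy kernels \(\tvar{F}_p\).

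Finally I would pin down the maximal order. Since \(0 \le k_i \le j \le m\) and \(\sum_i k_i = j\), the exponent \(p = \sum_{i=1}^{n} i\, k_i\) is maximized by taking \(j = m\) and putting all of that mass on \(i = n\), i.e. \(k_n = m\), giving \(p = nm\); this maximum is attained by the term \(\tvar{G}_m * (\tvar{H}_n * \tvar{x}^n)^m = (\tvar{G}_m \oconv \tvar{H}_n^m) * \tvar{x}^{nm}\) (using Property \ref{prop:conv-g-power-conv-h-x} in its order-\(n\) outer form). Hence every term has order at most \(nm\), and \(\tvar{z}\) is an order-\(nm\) Volterra convolution. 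As in the order-two case, some \(\tvar{F}_p\) may vanish, but the structure is still called order-\(nm\).

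The main obstacle is the combinatorial bookkeeping in the middle step: showing cleanly — ideally by induction on \(j\), or by iterating Property \ref{prop:conv-g-prod-n-conv-h-x} together with the constant-absorption properties — that an arbitrary mixed term \(\tvar{G}_j * \lcerfl{(\tvar{H}_0)^{k_0}, (\tvar{H}_1 * \tvar{x})^{k_1}, \cdots, (\tvar{H}_n * \tvar{x}^n)^{k_n}}\) collapses to a genuine single order-\(p\) convolution with a well-defined proxy kernel, handling the \(\tvar{H}_0\) (constant) slots and the repeated-signal slots \((\tvar{H}_i * \tvar{x}^i)^{k_i}\) simultaneously. Once that reduction is established, the order count \(p = \sum_i i\,k_i \le nm\) and the identification of the leading term are routine. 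I would also remark that the hypothesis \(n, m > 0\) is used only to make \(nm\) the genuine order (if either is zero the composition degenerates to a constant or to the other convolution).
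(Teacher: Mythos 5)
Your proposal is correct and follows essentially the same route as the paper's proof: substitute, expand each power with the multinomial combination property (Property \ref{prop:conv-g-power-n-plus-a-c-b-d}), collapse each mixed term into a single order-\(p\) convolution via the combination properties, and count \(p = j_1 + 2 j_2 + \cdots + n j_n \le nm\), allowing some proxy kernels to vanish. The only difference is that the paper sidesteps your ``main obstacle'' (the constant slots) by writing \(\tvar{H}_0 = \tvar{H}_0 * \tvar{x}^0\) with \(\tvar{x}^0 = \delta\), so every slot has the uniform form \((\tvar{H}_i * \tvar{x}^i)^{j_i}\) and Property \ref{prop:conv-g-prod-n-conv-h-x} applies directly, giving \(\left(\tvar{G}_j \oconv \lcerfl{\tvar{H}_0^{j_0}, \tvar{H}_1^{j_1}, \cdots, \tvar{H}_n^{j_n}}\right) * \tvar{x}^{j_1 + 2 j_2 + \cdots + n j_n}\) without invoking the separate constant-absorption properties.
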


\begin{proof}
  Suppose \(\tvar{H}_i, \tvar{G}_j\) are two groups of kernels, \(\tvar{x}, \tvar{y}, \tvar{z}\) are signals, these two Volterra convolutions have the form of
  \begin{equation*}
    \tvar{y} = \sum_{i=0}^{n} \tvar{H}_i * \tvar{x}^i, ~~~~
    \tvar{z} = \sum_{j=0}^{m} \tvar{G}_j * \tvar{y}^j.
  \end{equation*}

  Combining these two Volterra convolutions, we have
  \begin{equation*}
    \tvar{z} = \sum_{j=0}^{m} \tvar{G}_j * \left(\sum_{i=0}^{n} \tvar{H}_i * \tvar{x}^i\right)^j.
  \end{equation*}

  Recall Property \ref{prop:conv-g-power-n-plus-a-c-b-d}, we have
  \begin{equation*}
    \tvar{z}
    = \sum_{j=0}^{m} \sum \binom{j}{j_0 j_1 \cdots j_n} \tvar{G}_j * \lcerfl{
      (\tvar{H}_0 * \tvar{x}^0)^{j_0},
      (\tvar{H}_1 * \tvar{x}^1)^{j_1},
      \cdots,
      (\tvar{H}_n * \tvar{x}^n)^{j_n}
    },
  \end{equation*}
  where \(\binom{j}{j_0 j_1 \cdots j_n} = \dfrac{j!}{j_0! j_1! \cdots j_n!}\) is multinomial coefficient and \(\sum_{k=0}^{n} j_k = j, j_k \ge 0\), for all \(k = 0, 1, \cdots, j\).

  Recall Property \ref{prop:conv-g-prod-n-conv-h-x}, we have
  \begin{equation*}
    \begin{aligned}
       & \tvar{G}_j * \lcerfl{
        (\tvar{H}_0 * \tvar{x}^0)^{j_0},
        (\tvar{H}_1 * \tvar{x}^1)^{j_1},
        \cdots,
        (\tvar{H}_n * \tvar{x}^n)^{j_n}
      }                                                                                                           \\
       & = \left( \tvar{G}_j \oconv \lcerfl{\tvar{H}_0^{j_0}, \tvar{H}_1^{j_1}, \cdots, \tvar{H}_n^{j_n}} \right)
      * \lcerfl{\tvar{x}^0, \tvar{x}^{j_1}, \tvar{x}^{2 j_2} \cdots, \tvar{x}^{n j_n}}                            \\
       & = \left( \tvar{G}_j \oconv \lcerfl{\tvar{H}_0^{j_0}, \tvar{H}_1^{j_1}, \cdots, \tvar{H}_n^{j_n}} \right)
      * \tvar{x}^{j_1 + 2 j_2 + \cdots + n j_n}.                                                                  \\
    \end{aligned}
  \end{equation*}

  In conclusion, the combination is
  \begin{equation}
    \tvar{z}
    = \sum_{j=0}^{m} \left( \sum \binom{j}{j_0 j_1 \cdots j_n}
    \left( \tvar{G}_j \oconv \lcerfl{\tvar{H}_0^{j_0}, \tvar{H}_1^{j_1}, \cdots, \tvar{H}_n^{j_n}} \right)
    * \tvar{x}^{j_1 + 2 j_2 + \cdots + n j_n} \right).
    \label{equ:transform-on-om-structure-combined}
  \end{equation}

  Clearly, \(j_1 + 2 j_2 + \cdots n j_n\) is sequential values from \(0\) to \(nm\), which implies that this structure can also be converted to the form of order-\(nm\) Volterra convolution.
\end{proof}

This conversion does not convince that all terms are non-zero. It is possible that the \(nm\) term is zero, \(\|\tvar{F}_{nm} * \tvar{x}^{nm} \| = 0\), where \(\tvar{F}_{nm}\) is the proxy kernel. To keep the coherent, we would prefer to call it the form of order-\(nm\) Volterra convolution.

Apart from the order, we also care about the number of terms, \(\tvar{G}_j \oconv \lcerfl{\tvar{H}_0^{j_0}, \tvar{H}_1^{j_1}, \cdots, \tvar{H}_n^{j_n}}\), that added to a proxy kernel.
For a given order \(o, 0 \le p \le nm\), how many combinations of \(j_0, j_1, \cdots, j_n, \sum_{k=0}^{n} j_k = 0, 1, \cdots, m\) such that \(j_1 + 2j_2 + \cdots + nj_n = o\)? This number can be obtained by counting the terms in Equation \ref{equ:transform-on-om-structure-combined}. We plot a figure for some combinations of \(n,m\) in Figure \ref{fig:conv-multinomial}.

\begin{figure}[htb]
  \centering
  \includegraphics[width=0.9\linewidth]{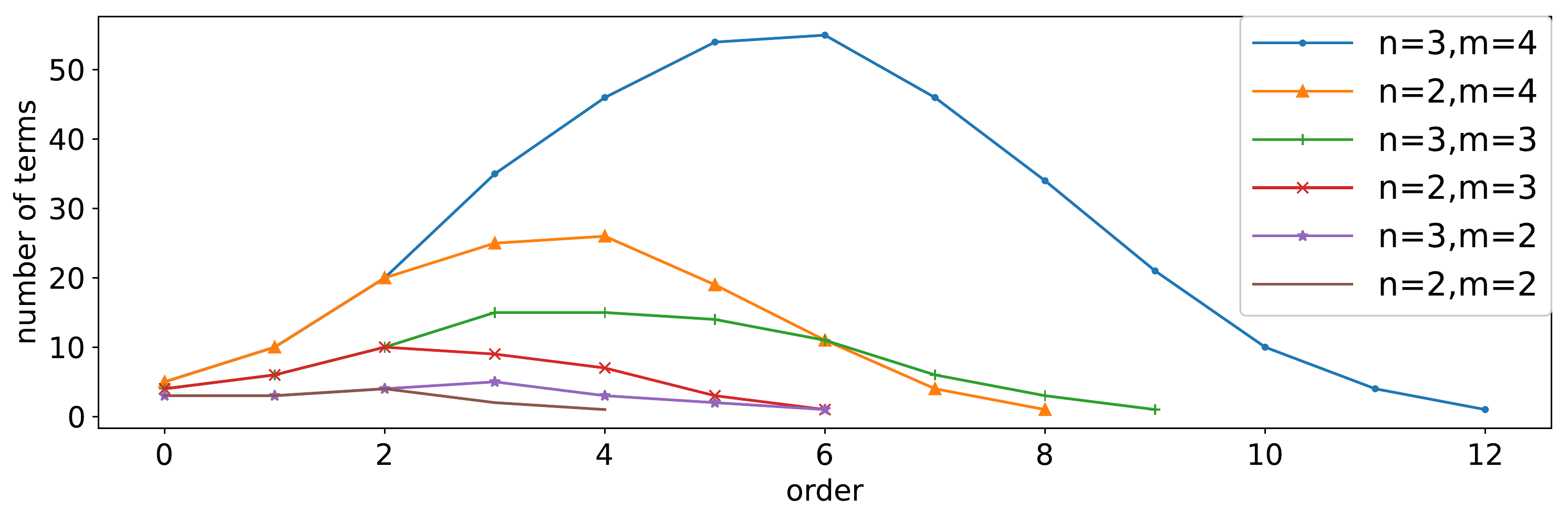}
  \caption{The number of terms that added to proxy kernels. The horizontal axis is the order and the vertical axis is the number of terms.}
  \label{fig:conv-multinomial}
\end{figure}

\subsubsection{Multiple Channels and Layers}
\label{subsubsec:combine-many-layers}

In the following, we will show that multichannel and multi-layer structure can also be represented in the form of Volterra convolution.
Moreover, we also measure the change of kernel size during this conversion. 

\begin{lemma}
  Multichannel convolution can be represented in the form of Volterra convolution.
  \label{lemma:multi-channel-convolution-vconv}
\end{lemma}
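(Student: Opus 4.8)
The plan is to reduce a multichannel convolutional layer to the scalar-channel building blocks already handled, and then invoke linearity of order-$n$ convolution (Properties \ref{prop:conv-g-plus-x-y} and \ref{prop:conv-plus-g-h-x}) together with the combination rules (Properties \ref{prop:conv-g-conv-h-x}, \ref{prop:conv-g-prod-n-conv-h-x}) to assemble the result. First I would fix notation: a multichannel convolutional layer maps an input $\tvar{x}$ with channels $\tvar{x}^{(1)}, \dots, \tvar{x}^{(C_{\text{in}})}$ to an output with channels $\tvar{y}^{(1)}, \dots, \tvar{y}^{(C_{\text{out}})}$ via $\tvar{y}^{(o)} = \sum_{c=1}^{C_{\text{in}}} \tvar{w}^{(o,c)} * \tvar{x}^{(c)} + \tvar{b}^{(o)}$, so each output channel is a sum of ordinary one-channel convolutions plus a bias. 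The key observation is that stacking all input channels into a single flattened signal $\tvarhat{x} = \mathcal{T}(\lcerfl{\tvar{x}^{(1)}, \dots, \tvar{x}^{(C_{\text{in}})}})$ (using the flatten-operator of Definition \ref{def:flatten-operator}) lets us write $\sum_c \tvar{w}^{(o,c)} * \tvar{x}^{(c)}$ as a single order-one convolution $\tvarhat{w}^{(o)} * \tvarhat{x}$, where $\tvarhat{w}^{(o)}$ is obtained by placing the kernels $\tvar{w}^{(o,c)}$ into the appropriate non-overlapping slots; by Proposition \ref{proposition:flatten-operator-is-homorphic} this is exactly the homomorphism property applied to a sum of one-dimensional convolutions.

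Next I would treat the output side symmetrically: the $C_{\text{out}}$ output channels $\tvar{y}^{(o)}$ are themselves stacked into a single flattened signal $\tvarhat{y} = \mathcal{T}(\lcerfl{\tvar{y}^{(1)}, \dots, \tvar{y}^{(C_{\text{out}})}})$, and the map $\tvarhat{x} \mapsto \tvarhat{y}$ is then a single order-one (affine) convolution, $\tvarhat{y} = \tvar{W} * \tvarhat{x} + \tvar{b}$, which is trivially the form of a Volterra convolution truncated at order one (order-zero kernel $\tvar{b}$, order-one kernel $\tvar{W}$, all higher kernels zero). If the multichannel convolution is immediately followed by an activation and another multichannel convolution — the natural ``multichannel conv-act-conv'' analogue — I would then apply Lemma \ref{lemma:convolution-with-activation-is-Volterra} verbatim to the flattened scalar signals, since after flattening the structure is literally $\tvarhat{g} * \sigma(\tvarhat{w} * \tvarhat{x})$; the Taylor-expansion argument there produces the proxy kernels $\diag(n, \tvarhat{g}) \oconv \tvarhat{w}^n$ exactly as in Equation \ref{equ:conv-act-conv-to-vconv-expand}. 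Finally, unflattening via $\mathcal{T}^{-1}$ recovers the multichannel output, and because $\mathcal{T}$ is a bijective homomorphism the Volterra representation is preserved in both directions.

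The main obstacle I anticipate is \emph{bookkeeping of kernel dimensions} rather than any conceptual difficulty: when several channels are concatenated by the flatten-operator, the flattened proxy kernels acquire block structure, and one must check that the zero-padding conventions for outer convolution (the remark after Definition \ref{def:outer-convolution}) are consistent with the non-overlap weights $w_1, w_2, \dots$ chosen in Definition \ref{def:flatten-operator}; in particular the stride of the flattening must dominate the kernel support so that distinct channels do not bleed into one another under convolution. A secondary subtlety is that the lemma as stated also promises a statement about how kernel size changes through the conversion, so I would include a short dimension-count: if each $\tvar{w}^{(o,c)}$ has length $z$ and there are $C_{\text{in}}$ input channels flattened with spacing $w_1$, the flattened order-one kernel $\tvarhat{w}^{(o)}$ has length on the order of $C_{\text{in}} w_1 + z$, and composing with a subsequent layer grows the proxy-kernel support additively in each argument exactly as the outer-convolution shape formula in the remark of Subsection \ref{subsec:convolution-with-stride-grater-than-one} predicts. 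With those two checks in place the lemma follows immediately from Proposition \ref{proposition:flatten-operator-is-homorphic} and Lemma \ref{lemma:convolution-with-activation-is-Volterra}.
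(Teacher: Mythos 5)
Your route is genuinely different from the paper's, and much heavier. The paper's proof is a two-line observation: writing the layer per output channel as \(y(c,t)=\sum_{u,\tau}h(c,u,\tau)x(u,t-\tau)=\sum_u h(c,u)*x(u)\), it notes that each output channel is simply a \emph{sum} of ordinary single-channel convolutions, which is already of the required form; nothing is flattened and no single large kernel is constructed. You instead flatten the input channels with \(\mathcal{T}\) and assemble a block-structured kernel \(\tvarhat{w}^{(o)}\) so that each output channel becomes literally one order-one convolution of the flattened signal. That per-output-channel construction is sound (granted the zero padding you mention so that channel blocks do not interfere), is more constructive than the paper's argument, and fits the paper's remark that multiple inputs are packed into one tensor; the paper's version, by contrast, gets the lemma essentially for free from linearity, without any dimension bookkeeping. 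Your subsequent composition with Lemma \ref{lemma:convolution-with-activation-is-Volterra} is beyond the scope of this lemma (harmless), and the kernel-size accounting you ascribe to this lemma actually belongs to Lemma \ref{lemma:size-of-n-layer-combined-kernel}.

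One step of yours is over-claimed, however: the assertion that stacking the output channels as well yields a \emph{single} convolution \(\tvarhat{y}=\tvar{W}*\tvarhat{x}+\tvar{b}\). On flattened indices a multichannel convolution is a block matrix whose blocks are Toeplitz but whose block pattern in \((o,c)\) is arbitrary, whereas a single one-dimensional convolution is Toeplitz in the flattened index; the two coincide only if the weights depend on \(o-c\). The claim can be rescued by spacing the channel blocks so generously (padding on the order of block length plus kernel support, with offsets \(oM-cL\) pairwise separated by more than the relevant window) that no two \((o,c)\) pairs collide in the support of \(\tvar{W}\) and the outputs are read off at designated offsets — but that is a genuine collision-avoidance argument, not the mere bookkeeping you describe, and it is unnecessary for the lemma: the per-output-channel statement, or the paper's sum decomposition, already suffices.
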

\begin{proof}
  Suppose that a multichannel convolution for signal \(\tvar{x}\) and kernel \(\tvar{h}\) is
  \begin{equation*}
    y(c,t)
    = \sum_{u, \tau} h(c,u,\tau) x(u, t - \tau)
    = \sum_{u} h(c, u) * x(u).
  \end{equation*}
  It can be considered as sum of multiple convolutions, which have exactly the same form.
\end{proof}

\begin{lemma}
  Stacking order \(o_1, o_2, \cdots, o_n\) Volterra convolutions can be converted to the form of order-\(\prod_{i=1}^{n} o_i\) Volterra convolution, where \(o_1, o_2, \cdots, o_n > 0\).
  \label{lemma:stacking-low-order-vconv-to-high-order-vconv}
\end{lemma}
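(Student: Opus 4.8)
The plan is to prove Lemma \ref{lemma:stacking-low-order-vconv-to-high-order-vconv} by induction on the number of layers $n$, using Lemma \ref{lemma:stacking-on-vconv-om-vconv-approximate-onm-vconv} as the base step and the inductive workhorse. The base case $n = 1$ is trivial: a single order-$o_1$ Volterra convolution is already in the form of an order-$o_1 = \prod_{i=1}^{1} o_i$ Volterra convolution. For the inductive step, suppose the claim holds for any stack of $n-1$ Volterra convolutions. Given a stack of $n$ Volterra convolutions of orders $o_1, o_2, \cdots, o_n$, I would first apply the inductive hypothesis to the last $n-1$ layers (those of orders $o_2, \cdots, o_n$), collapsing them into a single Volterra convolution of order $O' = \prod_{i=2}^{n} o_i$. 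This reduces the situation to a two-layer stack: an order-$o_1$ Volterra convolution feeding into an order-$O'$ Volterra convolution.

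Next I would invoke Lemma \ref{lemma:stacking-on-vconv-om-vconv-approximate-onm-vconv} with $n \mapsto o_1$ and $m \mapsto O'$ (both positive by hypothesis), which converts this two-layer stack into a single Volterra convolution of order $o_1 \cdot O' = o_1 \prod_{i=2}^{n} o_i = \prod_{i=1}^{n} o_i$, exactly as claimed. The intermediate kernels are obtained explicitly via outer convolutions of the form $\tvar{G}_j \oconv \lcerfl{\tvar{H}_0^{j_0}, \cdots}$ summed with multinomial weights, following Equation \ref{equ:transform-on-om-structure-combined}; since the claim is only about the \emph{form}, I do not need to track these kernels in detail, only to observe that they are well-defined order-$n$ convolution kernels by the combination properties of Subsection \ref{subsec:convolution-properties}.

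One subtlety worth a remark is the role of the positivity assumptions $o_i > 0$: Lemma \ref{lemma:stacking-on-vconv-om-vconv-approximate-onm-vconv} requires both orders to be strictly positive, so I would note that in the inductive step $O' = \prod_{i=2}^{n} o_i > 0$ automatically holds since each $o_i > 0$, and likewise $o_1 > 0$, so the hypotheses of the lemma are met at every stage. As with the earlier lemmas, I would close by cautioning that the top-order term may vanish, $\|\tvar{F}_{\prod o_i} * \tvar{x}^{\prod o_i}\| = 0$, in which case we still call it the form of order-$\prod_{i=1}^{n} o_i$ Volterra convolution.

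The main obstacle is essentially bookkeeping rather than a genuine mathematical difficulty: one must be careful about the associativity of the layer composition — that collapsing ``the last $n-1$ layers first'' and ``the first $n-1$ layers first'' give compatible results — but this follows because function composition of the layer maps is associative, and Lemma \ref{lemma:stacking-on-vconv-om-vconv-approximate-onm-vconv} shows each pairwise collapse is an exact algebraic rewriting of the same composed map. So the induction goes through cleanly, and the hardest part is simply stating it without obscuring the triviality.
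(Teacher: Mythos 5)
Your proposal is correct and follows essentially the same route as the paper: both arguments are a recursive (inductive) application of Lemma \ref{lemma:stacking-on-vconv-om-vconv-approximate-onm-vconv}, differing only in that the paper collapses layers front-to-back while you collapse the last $n-1$ layers first, which is an immaterial bookkeeping choice. Your added remarks on the positivity hypotheses and the possibly vanishing top-order term are consistent with the paper's own caveats.
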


\begin{proof}
  To prove this, we recursively apply Lemma \ref{lemma:stacking-on-vconv-om-vconv-approximate-onm-vconv}.
  Taking the first two layers into consideration, we have order-\((o_1 o_2)\) Volterra convolution.
  Appending the third layer, we have order-\((o_1 o_2 o_3)\) Volterra convolution.
  Recursively, after appending the \(n\)-th layer, order becomes \(\prod_{i=1}^{k+1}o_i\), which completes this proof.
\end{proof}

\begin{remark}
  With number of stacked layers increasing, the order of converted Volterra convolution increase exponentially.
  \label{remark:number-of-layers-exp-order-of-volterra}
\end{remark}

Previous lemmas show us the change of orders. The following lemma will measure the change of kernel size. This lemma will be helpful if we want to compute the overall sizes, strides, and paddings of proxy kernels.

\begin{lemma}
  Suppose that size of kernels are \(z_1, z_2, \cdots, z_n\), strides are \(s_1, s_2, \cdots, s_n\), and paddings are \(p_1, p_2, \cdots, p_n\), size of combined kernel is \(z_1 + (z_2 - 1) s_1 + (z_3 - 1) s_1 s_2 + \cdots + (z_n - 1) \prod_{k=1}^{n-1} s_k\), combined stride is \(\prod_{k=1}^{n} s_k\) and combined padding is \(p_1 + p_2 s_1 + \cdots + p_n \prod_{k=1}^{n-1} s_k\).
  \label{lemma:size-of-n-layer-combined-kernel}
\end{lemma}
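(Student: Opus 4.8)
I would prove the statement by induction on the number of layers $n$, after isolating the two-layer case as a reusable building block. For $n=1$ there is nothing to combine, and the three formulas return $z_1$, $s_1$, and $p_1$ respectively once every empty product $\prod_{k=1}^{0}s_k$ is read as $1$, so the base case is immediate. For the inductive step I would group the first $n-1$ layers into a single equivalent layer (composition of the layer maps is associative, or equivalently one may invoke Property~\ref{prop:oconv-g1-oconv-g2-g3}) and then attach the $n$-th layer on top, so it suffices to understand how two consecutive layers combine.

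For the two-layer building block, let the inner layer have kernel size $z_a$, stride $s_a$, padding $p_a$ and the outer layer have kernel size $z_b$, stride $s_b$, padding $p_b$. First I would read off the stride and the kernel size from machinery already in the paper: Property~\ref{prop:conv-stride-s-g-conv-stride-z-h-x} gives $\tvar{G}*_{s_b}(\tvar{H}*_{s_a}\tvar{x})=(\tvar{G}\oconv_{s_a}\tvar{H})*_{s_a s_b}\tvar{x}$, so the combined stride is $s_a s_b$, and the shape rule for $\oconv_{s}$ recorded in the remark of Subsection~\ref{subsec:convolution-with-stride-grater-than-one} says that $\tvar{G}\oconv_{s_a}\tvar{H}$, with $\tvar{G}$ of size $z_b$ and $\tvar{H}$ of size $z_a$, has size $z_a+(z_b-1)s_a$. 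Hence, ignoring padding, two stacked layers are equivalent to a single layer of size $z_a+(z_b-1)s_a$ and stride $s_a s_b$.

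The padding is the part not covered by those identities, and I expect it to be the main obstacle, since the $\oconv$ properties bookkeep kernel sizes and strides but not explicit zero-padding. I would handle it by a direct receptive-field argument: model a strided convolution of kernel size $z$, stride $s$, padding $p$ as the map whose output at index $t$ reads the input indices $\{st-p,\dots,st-p+z-1\}$. Composing the two layers, the composite output index $t$ reads, via the outer layer, the inner-layer outputs $\{s_b t-p_b,\dots,s_b t-p_b+z_b-1\}$, and each inner-layer output index $u$ reads original inputs $\{s_a u-p_a,\dots,s_a u-p_a+z_a-1\}$; substituting, the composite output index $t$ reads original inputs from $s_a s_b t-(p_a+s_a p_b)$ to $s_a s_b t-(p_a+s_a p_b)+\bigl(z_a+s_a(z_b-1)\bigr)-1$. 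This is precisely a strided convolution of size $z_a+s_a(z_b-1)$, stride $s_a s_b$, and padding $p_a+s_a p_b$; the factor $s_a$ multiplying $p_b$ is exactly the statement that one padding step on the (already $s_a$-downsampled) input of the outer layer corresponds to $s_a$ padding steps on the original input. One must check that this index bookkeeping is consistent with treating the first $n-1$ layers as a genuine single layer, which is where the per-coordinate version of the shape rule is used when the stacked maps are higher-order Volterra convolutions rather than plain convolutions.

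Finally I would close the induction: assuming the first $n-1$ layers are equivalent to a single layer of size $z_1+\sum_{j=2}^{n-1}(z_j-1)\prod_{k=1}^{j-1}s_k$, stride $\prod_{k=1}^{n-1}s_k$, and padding $\sum_{j=1}^{n-1}p_j\prod_{k=1}^{j-1}s_k$, I apply the two-layer building block with the combined first $n-1$ layers as the inner layer and layer $n$ as the outer layer. This multiplies the stride by $s_n$, adds $(z_n-1)\prod_{k=1}^{n-1}s_k$ to the size, and adds $p_n\prod_{k=1}^{n-1}s_k$ to the padding, which yields exactly $z_1+\sum_{j=2}^{n}(z_j-1)\prod_{k=1}^{j-1}s_k$, $\prod_{k=1}^{n}s_k$, and $\sum_{j=1}^{n}p_j\prod_{k=1}^{j-1}s_k$, completing the proof.
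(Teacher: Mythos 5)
Your proposal is correct, and its overall skeleton (establish the two-layer combination, then recurse over layers) is the same as the paper's; where it differs is in how the two-layer step is justified. The paper simply composes the standard output-size formula, \(\text{out\_size} = (\text{in\_size} + 2p - z)/s + 1\), for two consecutive layers and reads off the equivalent size \(z_1 + (z_2-1)s_1\), stride \(s_1 s_2\), and padding \(p_1 + p_2 s_1\) by matching coefficients, then repeats this recursively; it never invokes Property \ref{prop:conv-stride-s-g-conv-stride-z-h-x} or the shape remark in this proof. You instead obtain the stride and kernel size from the identity \(\tvar{G} *_{s}(\tvar{H}*_{z}\tvar{x})=(\tvar{G}\oconv_{z}\tvar{H})*_{sz}\tvar{x}\) together with the recorded shape of \(\tvar{G}\oconv_{s}\tvar{H}\), and you derive the padding by explicit receptive-field index tracking, closing with a formal induction. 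Your route is somewhat longer but makes the operator-level equivalence explicit (which input indices each output reads), whereas the paper's coefficient-matching argument only equates output lengths and leaves that identification implicit; conversely, the paper's argument handles size, stride, and padding in one arithmetic step without needing the receptive-field bookkeeping. Both recursions are identical, and your index computation \(p_a + s_a p_b\), \(z_a + s_a(z_b-1)\), \(s_a s_b\) agrees exactly with the paper's two-layer result, so the proofs are interchangeable.
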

\begin{proof}
  Size of convolution is \footnote{https://pytorch.org/docs/stable/generated/torch.nn.Conv1d.html}
  \begin{equation*}
    \text{out\_size} = \dfrac{\text{in\_size} + 2 \text{ padding} - \text{kernel\_size}}{\text{stride}} + 1.
  \end{equation*}
  Stacking the first two layers, we have
  \begin{equation*}
    \begin{aligned}
      \text{out\_size}
       & = \dfrac{\left(\text{in\_size} + 2 p_1 - z_1\right)/s_1 + 1 + 2p_2 - z_2}{s_2} + 1  \\
       & = \dfrac{\text{in\_size} + 2 (p_1 + p_2 s_1) - (z_1 + (z_2 - 1) s_1)}{s_1 s_2} + 1.
    \end{aligned}
  \end{equation*}
  Equivalent size is \(z_1 + (z_2 - 1) s_1\), stride is \(s_1 s_2\), and padding is \(p_1 + p_2 s_1\).
  Stacking the first three layers, we have
  \begin{equation*}
    \text{out\_size}
    = \dfrac{\text{in\_size} + 2 (p_1 + p_2 s_1 + p_3 s_1 s_2) - (z_1 + (z_2 - 1) s_1 + (z_3 - 1) s_1 s_2)}{s_1 s_2 s_3} + 1.
  \end{equation*}
  Equivalent size is \(z_1 + (z_2 - 1) s_1 + (z_3 - 1) s_1 s_2\), stride is \(s_1 s_2 s_3\), and padding is \(p_1 + p_2 s_1 + p_3 s_1 s_2\).
  Recursively, this proof is completed.
\end{proof}

\subsection{Validation of Lemma \ref{lemma:convolution-with-activation-is-Volterra}
  and Lemma \ref{lemma:stacking-o2-vconv-o2-vconv-approximate-o4-vconv}}
\label{subsec:validate-gluing-convolutions}

In this subsection, we will validate the approximation of two simple structures (truncated version of Lemma \ref{lemma:convolution-with-activation-is-Volterra} and Lemma \ref{lemma:stacking-o2-vconv-o2-vconv-approximate-o4-vconv}), which are two fundamental structures in our proof and which can show the effectiveness and correctness of our results.

The set \(\mathbb{M}\) is defined for numerical validation in this subsection and below,
\begin{equation}
  \mathbb{M} = \left\{
  \tvar{x} : \tvar{x} = \dfrac{\tvar{y}}{\|\tvar{y}\|_{2}}, y(\cdot) \sim \mathcal{N}(0, 1)
  \right\},
  \label{equ:define-the-set-of-gaussian-l2-norm-set-to-1}
\end{equation}
where \(\mathcal{N}(0,1)\) is standard Gaussian distribution with mean zero and variance one and \(\|\tvar{y}\|_{2}\) is \(L_2\) norm of \(\tvar{y}\). The upper script of these symbols indicates the shape, i.e., \(\tvar{H} \in \mathbb{M}^{9 \times 9}\) is a matrix taken from \(\mathbb{M}\) and the shape is \((9, 9)\).

\textbf{Validate Lemma \ref{lemma:convolution-with-activation-is-Volterra}}: We will check whether the ``conv | act | conv'' structure can be approximated in the form of Volterra convolution.
Take the first four terms into consideration and activation function \(\sigma(t) = 1/(1 + e^{-t})\), we have
\begin{equation}
  \label{equ:validate-conv-g-sigoid-conv-h-x}
  \tvar{g} * \sigma(\tvar{h} * \tvar{x})
  \approx \dfrac{1}{2} \sum \tvar{g}
  + \dfrac{1}{4} (\tvar{g} \oconv \tvar{h}) * \tvar{x}
  - \dfrac{(\diag(3, \tvar{g}) \oconv \tvar{h}^3) * \tvar{x}^3}{48}
  + \dfrac{(\diag(5, \tvar{g}) \oconv \tvar{h}^5) * \tvar{x}^5}{480}.
\end{equation}

This approximation relies on the Taylor expansion, which is valid only in a small neighbor of zero. Therefore, we need to make sure that \(\tvar{h} * \tvar{x}\) is located in such neighbor.
We randomly generate \(\tvar{x} \in \mathbb{M}^{64}\), \(\tvar{h} \in \mathbb{M}^{9}\) and \(\tvar{g} \in \mathbb{M}^{5}\). Convolution with sigmoid activation is plotted in Figure \ref{fig:sub-convolution-taylor-target} and the approximated order-five Volterra convolution is plotted in Figure \ref{fig:sub-convolution-taylor-estimate}. 

\begin{figure}[htb]
  \centering
  \subfloat[\(\tvar{g} * \sigma(\tvar{h} * \tvar{x})\)]{
    \centering
    \includegraphics[width=0.45\linewidth]{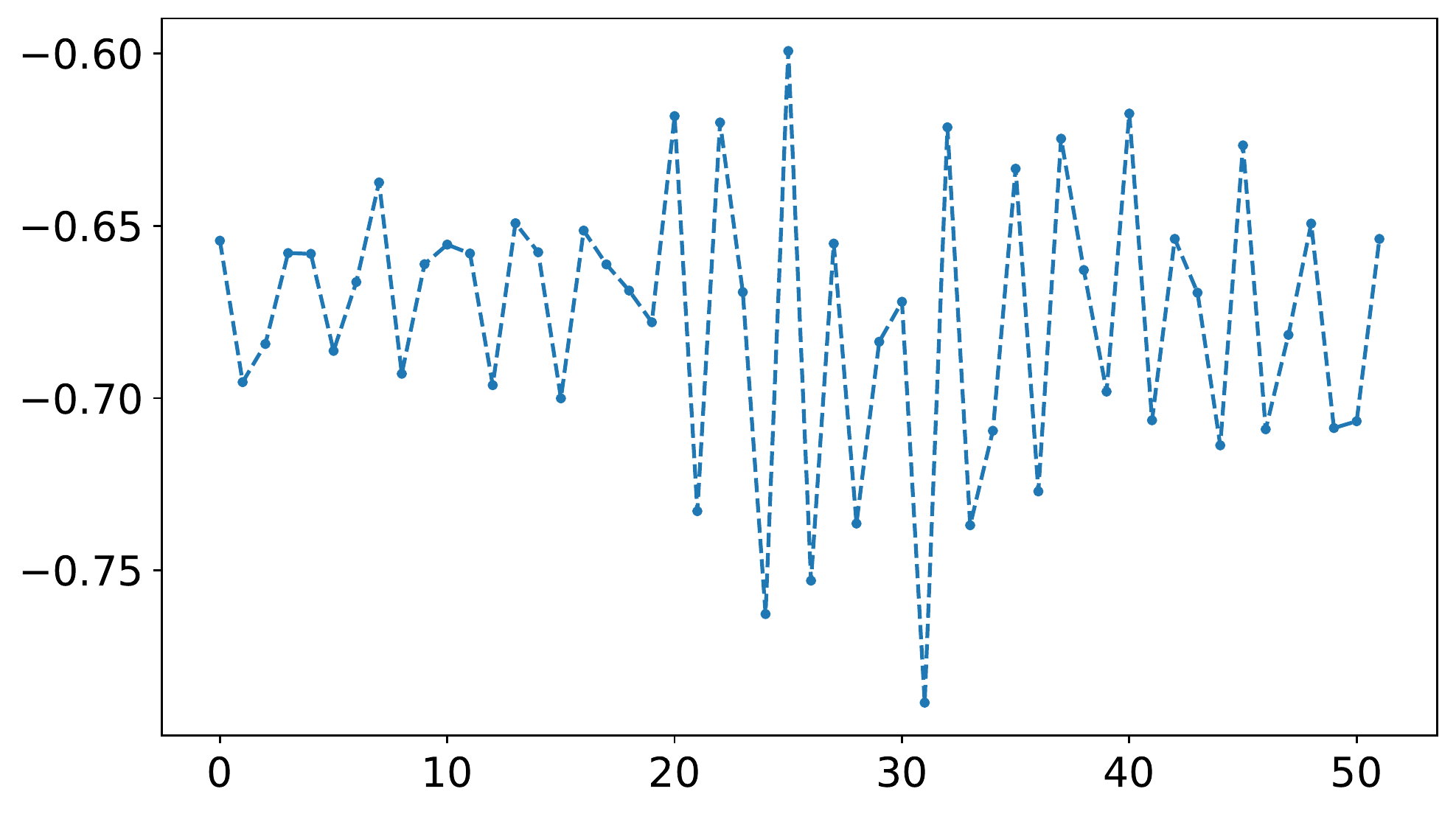}%
    \label{fig:sub-convolution-taylor-target}
  }
  \subfloat[\(\dfrac{1}{2} \sum \tvar{g} + \dfrac{1}{4} (\tvar{g} \oconv \tvar{h}) * \tvar{x} + \cdots\)]{
    \centering
    \includegraphics[width=0.45\linewidth]{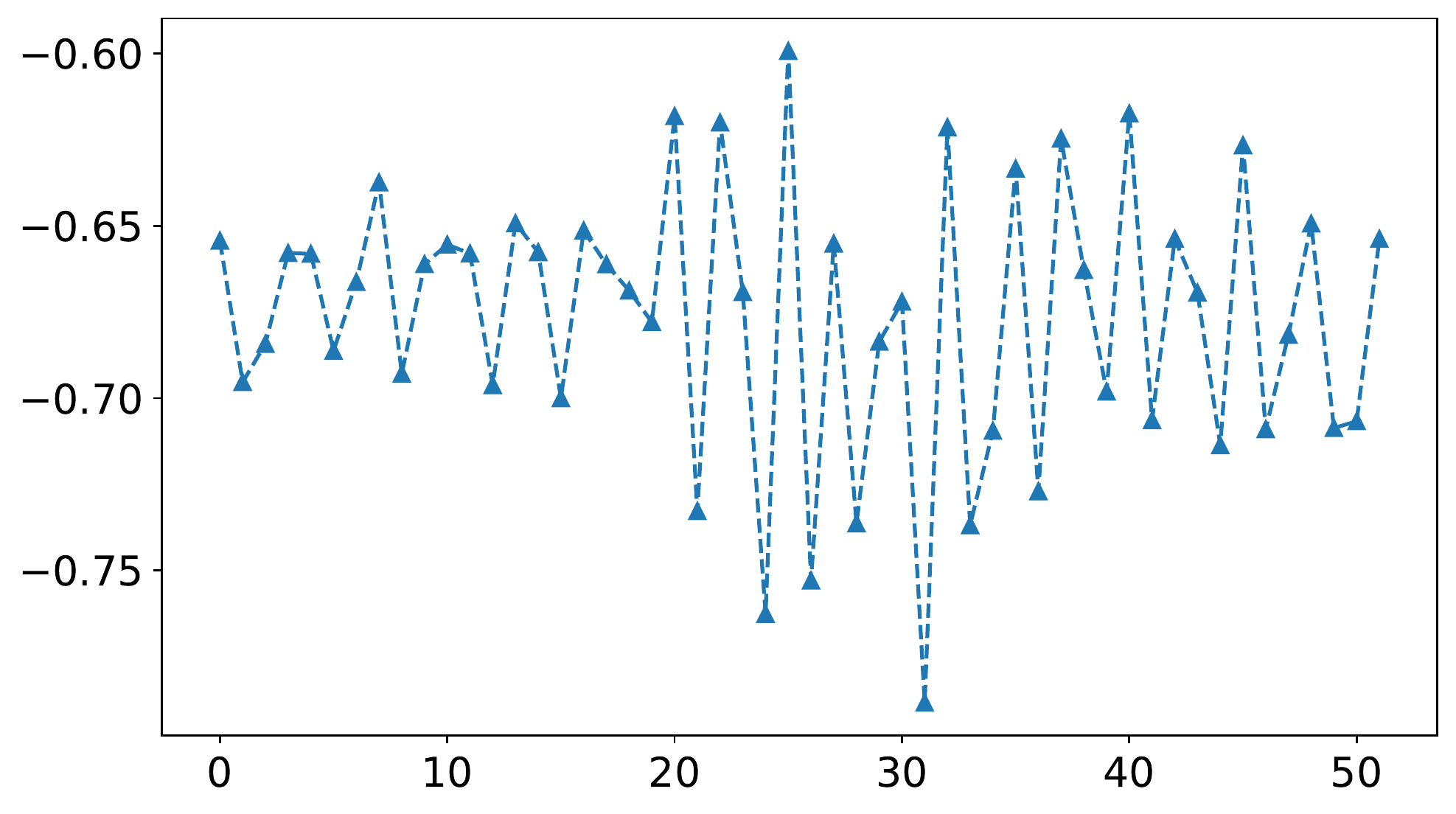}%
    \label{fig:sub-convolution-taylor-estimate}
  }
  \caption{(\ref{fig:sub-convolution-taylor-target}) The output of convolution with activation (left-hand side of Equation \ref{equ:validate-conv-g-sigoid-conv-h-x}); (\ref{fig:sub-convolution-taylor-estimate}) The output of the approximated Volterra convolution (right-hand side of Equation \ref{equ:validate-conv-g-sigoid-conv-h-x}). The Reconstruct error (\(L_2\)-norm) between (\ref{fig:sub-convolution-taylor-target}) and (\ref{fig:sub-convolution-taylor-estimate}) is \syncrecord{conv_taylor_err}{\(5.64877e^{-07}\)}.}
  \label{fig:check-convolution-taylor}
\end{figure}

\textbf{Validate Lemma \ref{lemma:stacking-o2-vconv-o2-vconv-approximate-o4-vconv}}: We will check whether the ``order-2 | order-2'' structure can be converted to the form of order-four Volterra convolution,
\begin{equation}
  \sum_{i=0}^{2} \tvar{G}_i * \left( \sum_{j=0}^{2} \tvar{H}_j * \tvar{x}^j \right)^i
  = \sum_{k=0}^{4} \tvar{F}_k * \tvar{x}^k,
  \label{equ:validate-vconv-g-vocnv-h-x}
\end{equation}
where \(\tvar{F}_k\) are kernels from Equation \ref{equ:combination-two-volterra-kernels}.

We randomly generate \(H_0, G_0 \sim \mathcal{N}(0,1)\), \(\tvar{H}_1, \tvar{G}_1 \in \mathbb{M}^{5}\) and \(\tvar{H}_2, \tvar{G}_2 \in \mathbb{M}^{5 \times 5}\), and \(\tvar{x} \in \mathbb{M}^{64}\).
The left-hand side of Equation \ref{equ:validate-vconv-g-vocnv-h-x} is plotted in Figure \ref{fig:sub-volterra-22-target} and the right-hand side is plotted in Figure \ref{fig:sub-volterra-22-estimate}. It shows that these two are exactly the same.

\begin{figure}[htb]
  \centering
  \subfloat[\(\sum_{i=0}^{2} \tvar{G}_i * \left( \sum_{j=0}^{2} \tvar{H}_j * \tvar{x}^j \right)^i\)]{
    \centering
    \includegraphics[width=0.45\linewidth]{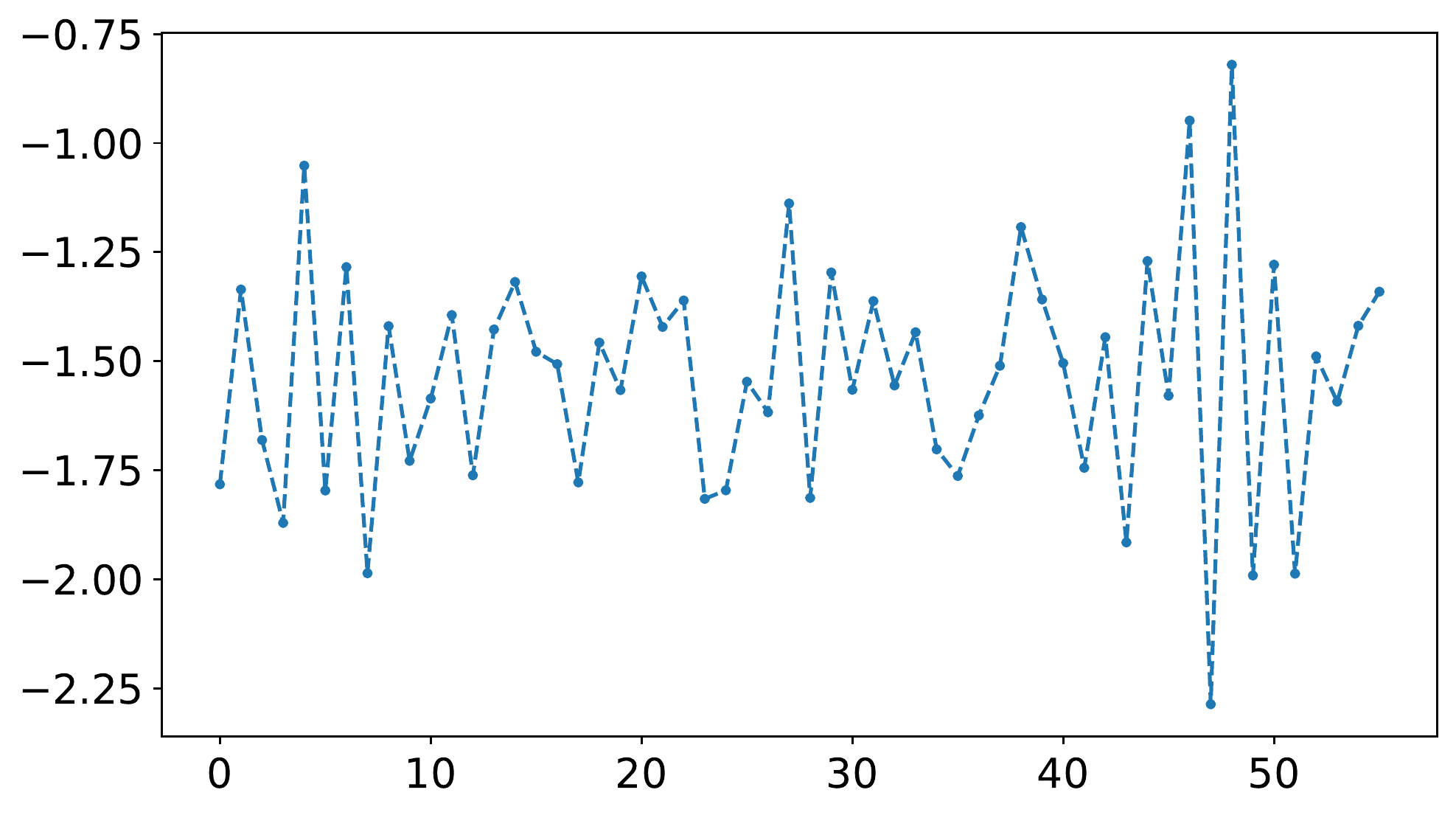}%
    \label{fig:sub-volterra-22-target}
  }
  \subfloat[\(\sum_{k=0}^{4} \tvar{F}_k * \tvar{x}^k\)]{
    \centering
    \includegraphics[width=0.45\linewidth]{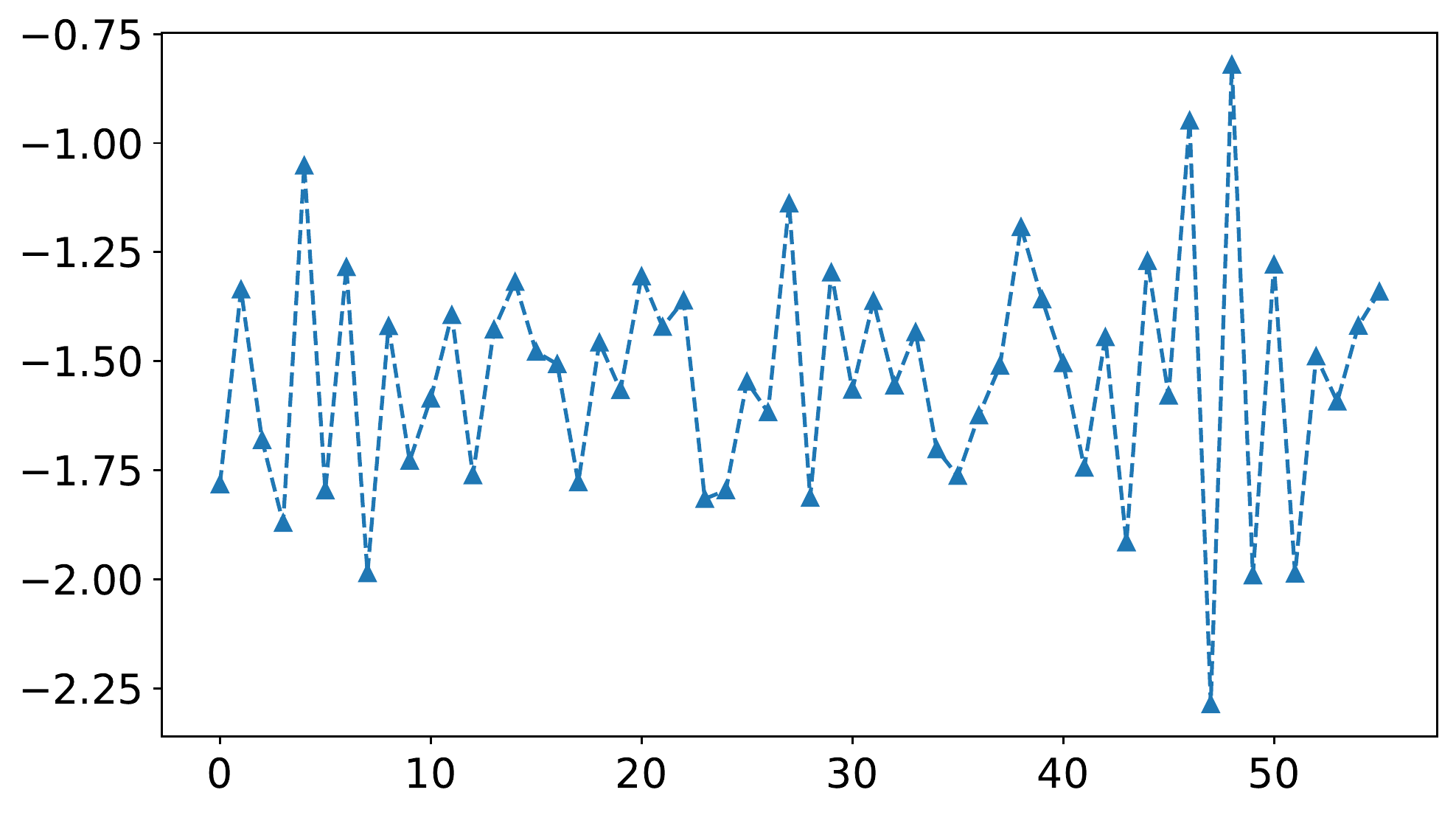}%
    \label{fig:sub-volterra-22-estimate}
  }
  \caption{(\ref{fig:sub-volterra-22-target}) The output of stacking two order-two Volterra convolutions (left-hand side of Equation \ref{equ:validate-vconv-g-vocnv-h-x}); (\ref{fig:sub-volterra-22-estimate}) The output of order-four Volterra convolutions (right-hand side of Equation \ref{equ:validate-vconv-g-vocnv-h-x}). The Reconstruct error (\(L^2\)-norm) between (\ref{fig:sub-volterra-22-target}) and (\ref{fig:sub-volterra-22-estimate}) is \syncrecord{conv_volterra_22_err}{\(1.79705e^{-15}\)}.}
  \label{fig:check-volterra-22}
\end{figure}

\section{Inferring the Proxy Kernels}
\label{sec:inferring-the-proxy-kernels}

All we need to approximate a convolutional neural network to finite term Volterra convolution are the proxy kernels. If the structures and all parameters of a well-trained network is known, i.e., white box, the proxy kernels can be explicitly computed, just as demonstrated in Subsection \ref{subsec:validate-gluing-convolutions}.
However, in some cases, the structure or parameters are not accessible and only the input output pairs are known, i.e., the network is a black box. This may happen when we want to attack a network using black-box mode. In such cases, the proxy kernels can be approximate by training a hacking network, which will be detailed below.

Basically, a hacking network is a network that has the structure of finite term Volterra convolution.
The number of terms of the hacking network controls the order of Volterra convolution and hence the approximation precision.
To train the hacking network, we feed the input-output pairs generated by target network (or its parts) to the hacking network, and minimize the mean square error between the output of the hacking network and that of the target network.

To validate the effectiveness of hacking network, we build a network and compute its order-zero and order-one proxy kernels manually. Then, we train a hacking network to infer these kernels and compare whether they are the same.
To infer the order-zero and order-one terms, we need to build a hacking network with the structure of order-one Volterra convolution, i.e., \(\tvar{w} * \tvar{x} + b\).
This structure can be implemented by a single convolutional layer.
If the hacking network is well-trained, the order-zero proxy kernel is the bias of the convolutional layer, and the order-one proxy kernel is the weight.

\subsection{Two Examples}
\label{subsec:two-examples}

As our first example, we consider a simple two-layer network with sigmoid activation. Suppose that a two-layer network is represented as \(\tvar{g} * \sigma(\tvar{h} * \tvar{x})\).
Without loss of generality, we use the same settings as Equation 28, 
\begin{equation*}
  \tvar{g} * \sigma(\tvar{h} * \tvar{x})
  \approx \dfrac{1}{2} \sum \tvar{g}
  + \dfrac{1}{4} (\tvar{g} \oconv \tvar{h}) * \tvar{x}
  - \dfrac{(\diag(3, \tvar{g}) \oconv \tvar{h}^3) * \tvar{x}^3}{48}
  + \dfrac{(\diag(5, \tvar{g}) \oconv \tvar{h}^5) * \tvar{x}^5}{480}.
\end{equation*}
It is clear that, the order-zero proxy kernel is \(\dfrac{1}{2} \sum \tvar{g}\), and the order-one proxy kernel is \(\dfrac{1}{4} (\tvar{g} \oconv \tvar{h})\).

We randomly pick two kernels
\begin{small}
  \syncrecord{oconv_l2_kernels}{
    \begin{equation*}
      \begin{aligned}
        \tvar{h} & = \begin{bmatrix}0.0381&-0.2047&\phantom{-}0.3097&\phantom{-}0.0693&-0.3405&\phantom{-}0.7618&-0.1190&-0.1089&\phantom{-}0.3657\end{bmatrix}; \\
        \tvar{g} & = \begin{bmatrix}0.5280&-0.3684&-0.2644&-0.3412&-0.2461&-0.1377&-0.3077&\phantom{-}0.4540&-0.1369\end{bmatrix}. \\
      \end{aligned}
    \end{equation*}
  }
\end{small}

The hacking network is a single layer convolutional network.
While training, we randomly generate the input data \(\tvar{x} \in \mathbb{M}^{\text{batch\_size} \times 1 \times \text{length}}\) and minimize mean square error of the output of the hacking network and the output of \(\tvar{g} * \sigma(\tvar{h} * \tvar{x})\).

By straightforward calculation, the order-zero proxy kernel is \syncrecord{oconv_l2_computed_w0}{\(-4.10236e^{-01}\)}, and that obtained from the hacking network is \syncrecord{oconv_l2_estimated_w0}{\(-4.10295e^{-01}\)}, and the inferred order-one proxy kernels respectively obtained by direct calculation and by training the hacking network are compared in Figure \ref{fig:oconv-conv-approximate-two-layer-net-w1-g}.
In summary, the hacking network can approximate the true order-zero and order-one proxy kernels very well.

\begin{figure}[htbp]
  \centering
  \subfloat[\(\dfrac{1}{4} (\tvar{g} \oconv \tvar{h})\)]{
    \includegraphics[width=0.45\linewidth]{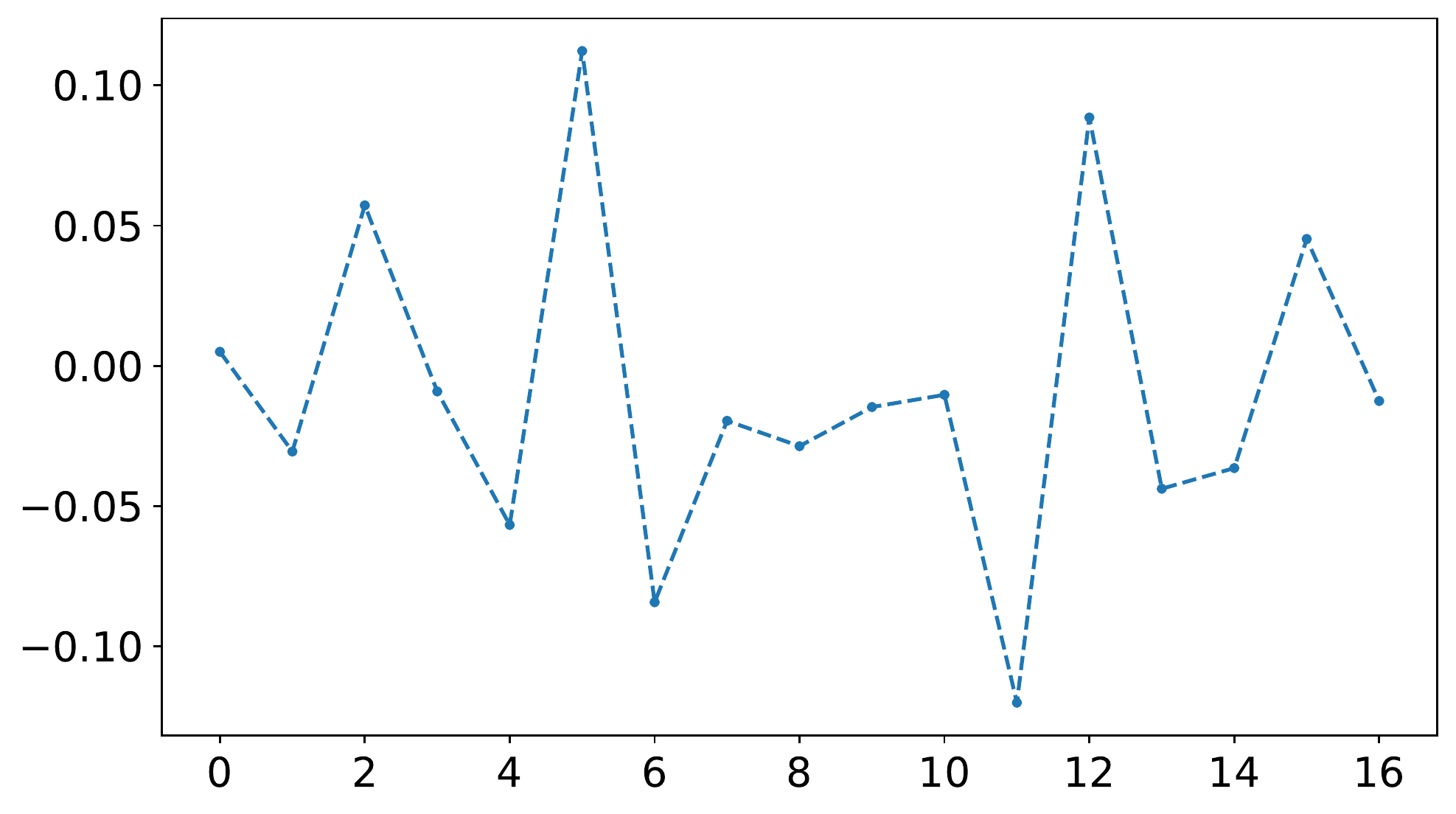}
    \label{fig:oconv-conv-approximate-two-layer-net-w1-computed-g}
  }
  \subfloat[kernel from the trained hacking network]{
    \includegraphics[width=0.45\linewidth]{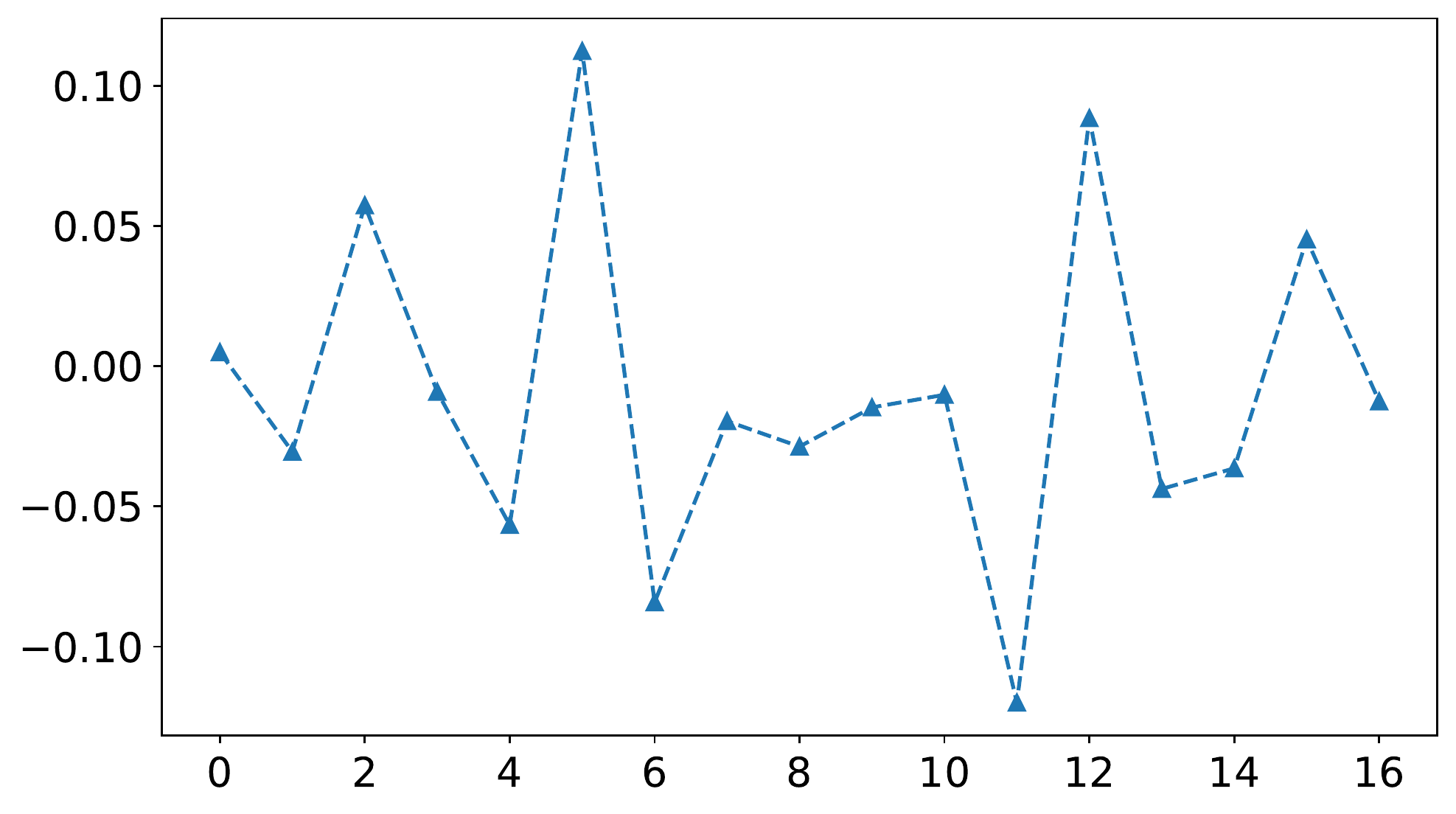}
    \label{fig:oconv-conv-approximate-two-layer-net-w1-estimated-g}
  }
  \caption{The inferred order-one proxy kernels from the two-layer network. The reconstruction error (\(L_2\)-norm) between left and right is \syncrecord{oconv_l2_err_w1}{\(3.04194e^{-04}\)}.}
  \label{fig:oconv-conv-approximate-two-layer-net-w1-g}
\end{figure}

For the second example, we consider a three-layer network with sigmoid activation. Suppose that a three-layer network is represented as \(\tvar{f} * \sigma(\tvar{g} * \sigma(\tvar{h} * \tvar{x}))\).
Using the same settings as previous example and approximating this three-layer network into the form of Volterra convolution, we have
\begin{equation*}
  \begin{aligned}
     & \tvar{f} * \sigma(\tvar{g} * \sigma(\tvar{h} * \tvar{x}))                      \\
     & = \dfrac{1}{2} \sum \tvar{f}
    + \dfrac{1}{4} \tvar{f} * [\tvar{g} * \sigma(\tvar{h} * \tvar{x})]
    - \dfrac{1}{48} \tvar{f} * [\tvar{g} * \sigma(\tvar{h} * \tvar{x})]^{3}
    + \dfrac{1}{480} \tvar{f} * [\tvar{g} * \sigma(\tvar{h} * \tvar{x})]^{5}          \\
     & = \dfrac{1}{2} \sum \tvar{f}
    + \dfrac{1}{4} \tvar{f} * (\tvar{g} * \sigma(\tvar{h} * \tvar{x}))
    - \dfrac{1}{48} \diag(3, \tvar{f}) * (\tvar{g} * \sigma(\tvar{h} * \tvar{x}))^{3} \\
     & ~~~~~
    + \dfrac{1}{480} \diag(5, \tvar{f}) * (\tvar{g} * \sigma(\tvar{h} * \tvar{x}))^{5}.
    ~~~~~~~~ (\text{Property 15})
    \\
  \end{aligned}
\end{equation*}
Substituting Equation 28 
and reordering the terms, we obtain the order-zero proxy kernel as
\begin{equation*}
  \left(\sum \tvar{f} \right)
  \left(
  \dfrac{1}{2}
  + \dfrac{1}{4} \left(\dfrac{1}{2} \sum \tvar{g}\right)
  - \dfrac{1}{48} \left(\dfrac{1}{2} \sum \tvar{g}\right)^3
  + \dfrac{1}{480} \left(\dfrac{1}{2} \sum \tvar{g}\right)^5
  \right),
\end{equation*}
and the order-one proxy kernel as
\begin{equation}
  \label{equ:three-layer-order-one-proxy-kernel}
  \dfrac{1}{4}
  \left(
  \dfrac{1}{4}
  - \dfrac{3}{48} \left(\dfrac{1}{2} \sum \tvar{g}\right)^2
  + \dfrac{5}{480} \left(\dfrac{1}{2} \sum \tvar{g}\right)^4
  \right)
  \left( \tvar{f} \oconv \tvar{g} \oconv \tvar{h} \right).
\end{equation}

We randomly pick three kernels
\begin{small}
  \syncrecord{oconv_l3_kernels}{
    \begin{equation*}
      \begin{aligned}
        \tvar{h} & = \begin{bmatrix}\phantom{-}0.4830&-0.3142&-0.2219&\phantom{-}0.0361&0.2659&\phantom{-}0.4040&\phantom{-}0.3978&\phantom{-}0.3628&0.3061\end{bmatrix}; \\
        \tvar{g} & = \begin{bmatrix}-0.1271&\phantom{-}0.1521&\phantom{-}0.6264&-0.2576&0.3027&-0.1574&\phantom{-}0.1009&\phantom{-}0.3923&0.4705\end{bmatrix}; \\
        \tvar{f} & = \begin{bmatrix}\phantom{-}0.7160&-0.3866&-0.1870&-0.0566&0.3057&-0.0062&-0.4463&-0.0395&0.0735\end{bmatrix}. \\
      \end{aligned}
    \end{equation*}
  }
\end{small}

The training process of the hacking network is the same as the previous example.
The order-zero proxy kernel by direct calculation is \syncrecord{oconv_l3_computed_w0}{\(-1.83091e^{-02}\)}, and that by training the hacking network is \syncrecord{oconv_l3_estimated_w0}{\(-1.82950e^{-02}\)}, and the order-one proxy kernels obtained by these two ways shown in Figure \ref{fig:oconv-conv-approximate-three-layer-net-w1-g2}.
The figure shows again that the proxy kernels obtained training the hacking network is almost the same as that obtained by direct calculation.

\begin{figure}[htbp]
  \centering
  \subfloat[kernel form Equation \ref{equ:three-layer-order-one-proxy-kernel}]{
    \includegraphics[width=0.45\linewidth]{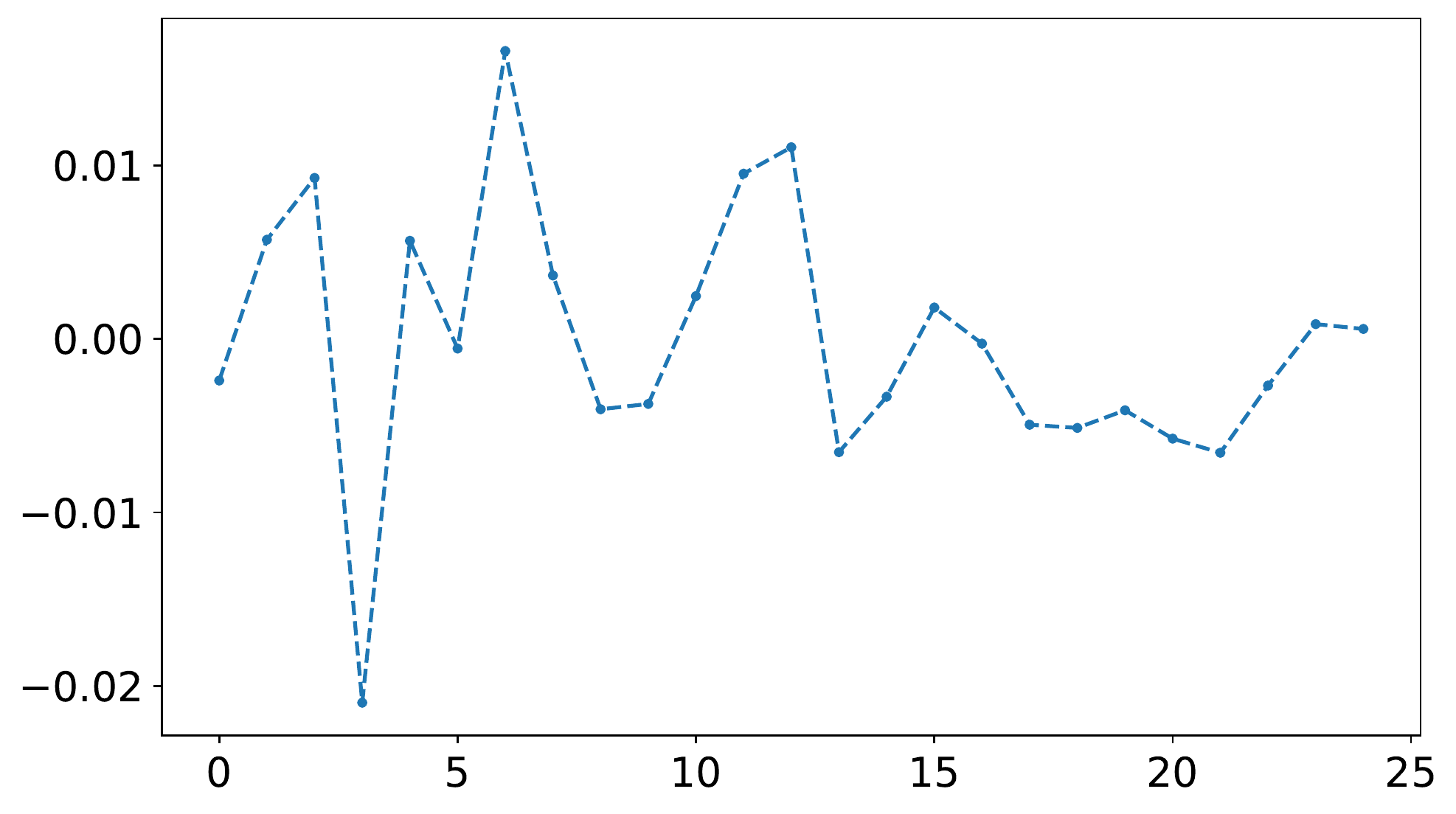}
  }
  \subfloat[kernel from the trained hacking network]{
    \includegraphics[width=0.45\linewidth]{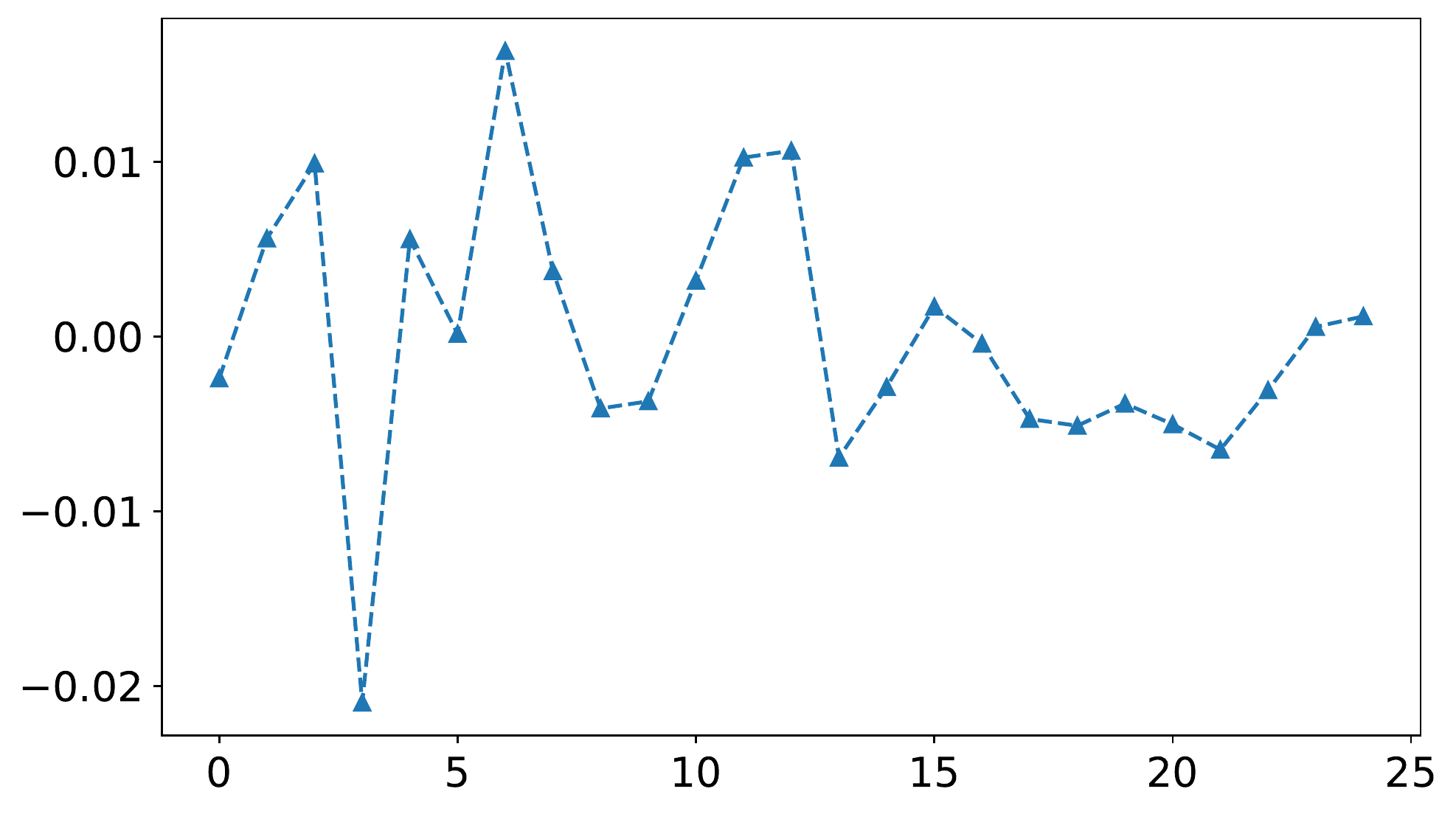}
  }
  \caption{The inferred order-one proxy kernels of the three-layer network. The reconstruction error (\(L_2\)-norm) between left and right is \syncrecord{oconv_l3_err_w1}{\(1.94667e^{-03}\)}.}
  \label{fig:oconv-conv-approximate-three-layer-net-w1-g2}
\end{figure}

These two examples show that the order-zero and order-one proxy kernels inferred by direct calculation and by training the hacking network are very close to each other.
To cover a broader selection of parameters, we analyze the statistics of the reconstruction errors of the proxy kernels inferred from these two methods. The results are illustrated in Figure \ref{fig:oconv-conv-approximate-err-histogram}.
It shows that the choice of parameters has a less effect on the reconstruction error and these two methods are comparable.

\begin{figure}[htbp]
  \centering
  \includegraphics[width=0.6\linewidth]{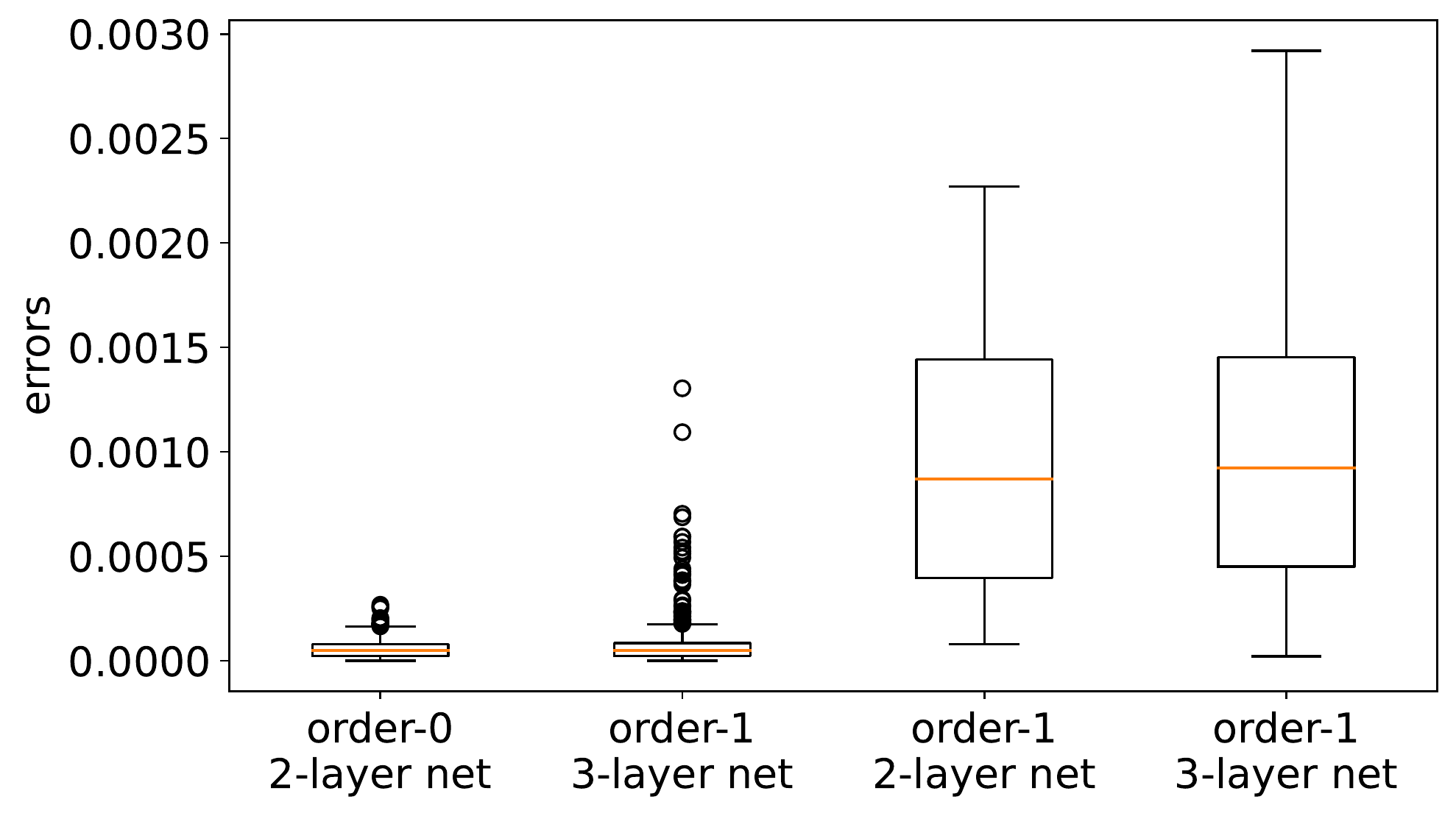}
  \caption{The boxplot of reconstruction errors between two methods.}
  \label{fig:oconv-conv-approximate-err-histogram}
\end{figure}

\subsection{An Application of the Order-one Proxy Kernels}
\label{subsec:hacking-network}

Why do we need to infer the proxy kernels?

We think the proxy kernels shall contain some useful information about the original network.
It is possible to carefully design special inputs according to the inferred proxy kernels to change the behavior of the original network.
Below we provide a toy example to illustrate this interesting application of order-one proxy kernels.
First, we build a hacking network to approximate the order-one proxy kernel of a classifier network trained on the MNIST dataset \citep{LeCun1998}. Then, we add visually imperceptible perturbation to the inputs to cheat the classifier to give wrong labels.

The structure of the classifier network is shown in the left of Table \ref{tab:structure-of-classifier-network}. Training images are all scaled to \([0,1]\). To obtain a network that is robust to noise, uniform noise \(\mathcal{U}(0,0.2)\) and Gaussian noise \(\mathcal{N}(0, 0.2)\) are respectively added to training images with a probability of 5\%.
After 512 training episodes, the network achieved \syncrecord{mnist_classifier_accuracy}{98.280\%} accuracy on test set.

The hacking network is illustrated in the right of Table \ref{tab:structure-of-classifier-network}.
We only approximate the first six layers of the classifier network.
This is because approximating the entire classifier network to a linear network will result in high fitting errors, and these errors will make it harder for hacking network to converge. From Table \ref{tab:structure-of-classifier-network}, the hacking network has ten output channels. Hence, we have ten proxy kernels \(\tvar{h}_{1}, \cdots, \tvar{h}_{10}\), corresponding to the ten output channels.

\begin{table}[htb]
  \caption{Structure of classifier network and hacking network. Parameters of conv2d layers are input channels, output channels, kernel size, stride, and padding.}
  \label{tab:structure-of-classifier-network}
  \centering
  \begin{tabular}{c||c|c}
    \hline
              & classifier network            & hacking network                                \\ \hline
              & input                         & input                                          \\ \hline
    \small{1} & conv2d(1, 10, k=3, s=2, p=1)  & \multirow{6}{*}{conv2d(1, 10, k=15, s=8, p=3)} \\
    \cline{1-2}
    \small{2} & sigmoid                       &                                                \\
    \cline{1-2}
    \small{3} & conv2d(10, 10, k=3, s=2, p=1) &                                                \\
    \cline{1-2}
    \small{4} & sigmoid                       &                                                \\
    \cline{1-2}
    \small{5} & conv2d(10, 10, k=3, s=2, p=0) &                                                \\
    \cline{1-2}
    \small{6} & sigmoid                       &                                                \\
    \cline{1-3}
    \small{7} & conv2d(10, 10, k=3, s=1, p=0) & \multirow{4}{*}{output}                        \\
    \cline{1-2}
    \small{8} & sigmoid                       &                                                \\
    \cline{1-2}
    \small{9} & flatten, linear(10, 10)       &                                                \\
    \cline{1-2}
              & output                        &                                                \\ \hline
  \end{tabular}
\end{table}

The training of the hacking network is similar to the procedure discussed in the previous subsection.
The training images and their outputs of the classifier network form input-label pairs, and the hacking network is trained with these pairs until convergence.
Compared to the outputs of the first six layers of the classifier network, the mean square error of the hacking network is about \syncrecord{mnist_hack_err}{\(3.19016e^{-02}\)}. In other words, these two networks have the similar behavior.

Suppose that some neurons in the network are suppressed when feeding input image \(\tvar{x}\).
To change the behavior of this network, we need to activate some extra neurons.
Specifically, in this application, we try to increase the energy of these extra neurons to activate them.

One important advantage of the hacking network is its simple structure: only convolutions are involved. Hence, to largely change the outputs, for given order-one proxy kernel \(\tvar{h}\) and input signal \(\tvar{x}\), we search for perturbation \(\epsilon\) such that
\begin{equation}
  \argmax_{\epsilon} \| \tvar{h} * (\tvar{x} + \epsilon) \|_2, ~~ \text{s.t. } \|\epsilon \|_2 \le c,
\end{equation}
where \(c\) is a constant.
If \(\tvar{h}\) and \(\tvar{x} + \epsilon\) are close to each other in the frequency domain, the energy \(\| \tvar{h} * (\tvar{x} + \epsilon) \|_2\) should be large.

Recall Parseval's Theorem and Convolution Theorem, convolution in time domain equals multiplication in frequency domain and the energy is preserved, \(\epsilon\) can be obtained via Fourier transform \(\mathcal{F}(\cdot)\),
\begin{equation}
  \mathcal{F}(\epsilon) \propto \mathcal{F}(\tvar{h}) - \mathcal{F}(\tvar{x}).
\end{equation}

Applying the bounded condition, we scale it by \(\alpha\) and take inverse Fourier transform \(\mathcal{F}^{-1}(\cdot)\),
\begin{equation}
  \epsilon = \alpha \mathcal{F}^{-1}\left(\mathcal{F}(\tvar{h}) - \mathcal{F}(\tvar{x})\right).
  \label{equ:resonance-condition}
\end{equation}

Recall Equation \ref{equ:resonance-condition}, we can compute a perturbation as
\begin{equation*}
  \epsilon_{i} = \mathcal{F}^{-1} \left(\mathcal{F}(\tvar{w}_{i}) - \mathcal{F}(\tvar{x})\right).
\end{equation*}
Since each pixel of the input image is in the range of \([0, 1]\), the perturbation should be adjusted to the same range. In this application, to make the perturbation hard to be seen, \(\epsilon_i\) is scaled by a factor \(\alpha\).
The value of \(\alpha\) is \(0.55\) in the experiment below.
In short, the perturbation \(\epsilon_i\) is normalized as
\begin{equation*}
  \epsilon_i \leftarrow \alpha \dfrac{\epsilon_i - \min(\epsilon_i)}{\max(\epsilon_i) - \min(\epsilon_i)}.
\end{equation*}

Besides the ten proxy kernels estimated by the hacking network, we also append two fake kernels to check whether the fake kernels can behave similarly. The fake kernels are \(\tvar{h}_{11} \sim \mathcal{U}(0, 1)\) and \(\tvar{h}_{12} \sim \mathcal{N}(0, 1)\).
We randomly choose twelve images from the test set, \(\tvar{x}_i, i = 1, 2, \cdots, 12\), and feed both image \(\tvar{x}_i\) and \(\tvar{x}_i + \epsilon_i\) into classifier network, where \(\epsilon_1, \epsilon_2, \cdots, \epsilon_{10}\) are computed from the approximated order-one proxy kernels, and \(\epsilon_{11}, \epsilon_{12}\) are computed from two fake kernels.
Results are illustrated in Figure \ref{fig:mnist-digits-perturbed-by-specialized-perturbations}.
The images with patches computed from the approximated order-one proxy kernels are more likely to change the output than the images with patches computed from fake kernels.

\begin{figure}[htb!]
  \centering
  
\begin{tikzpicture}
    \node[inner sep=0pt] (I) at (0,0) {%
    \includegraphics[width=0.08\linewidth]{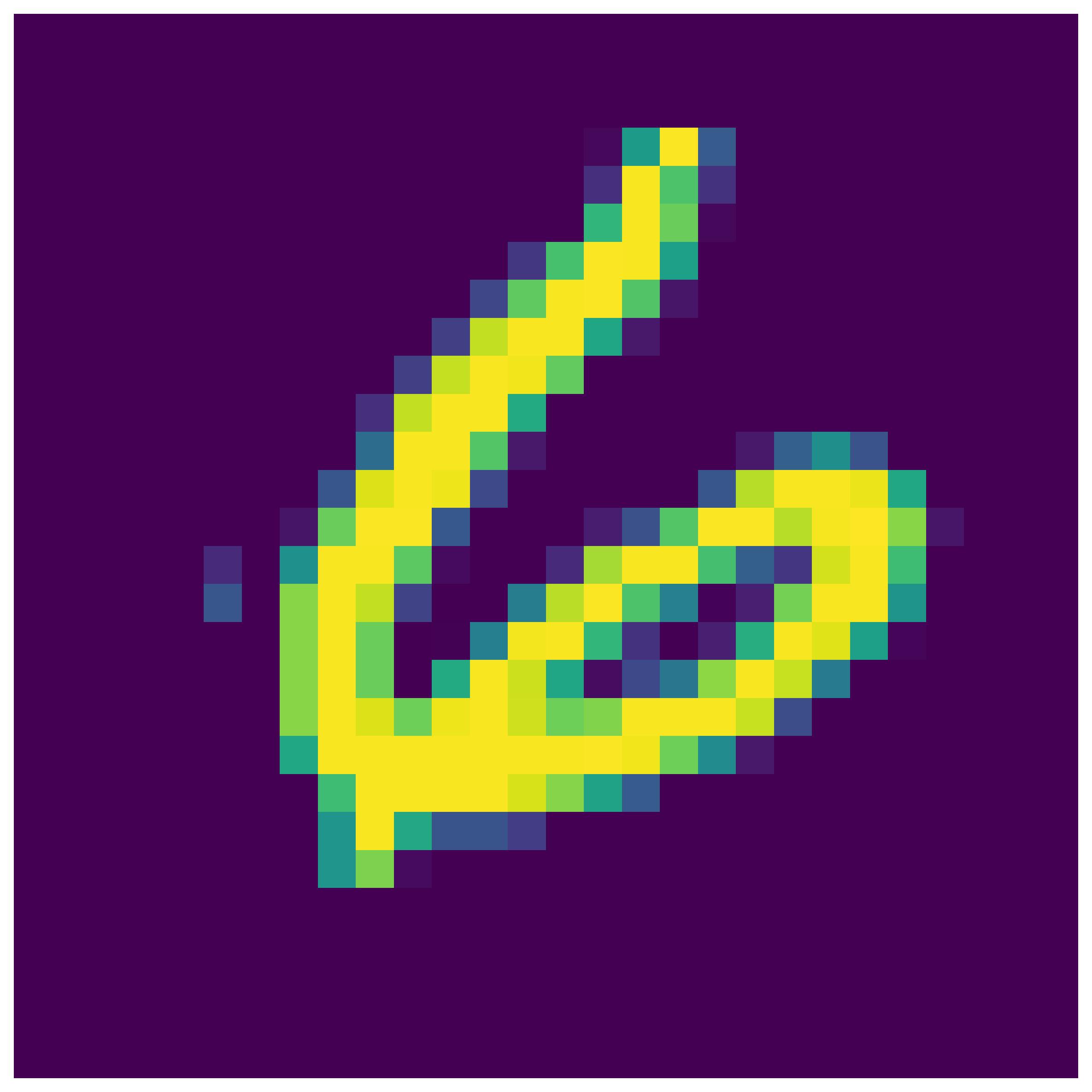}%
    \includegraphics[width=0.08\linewidth]{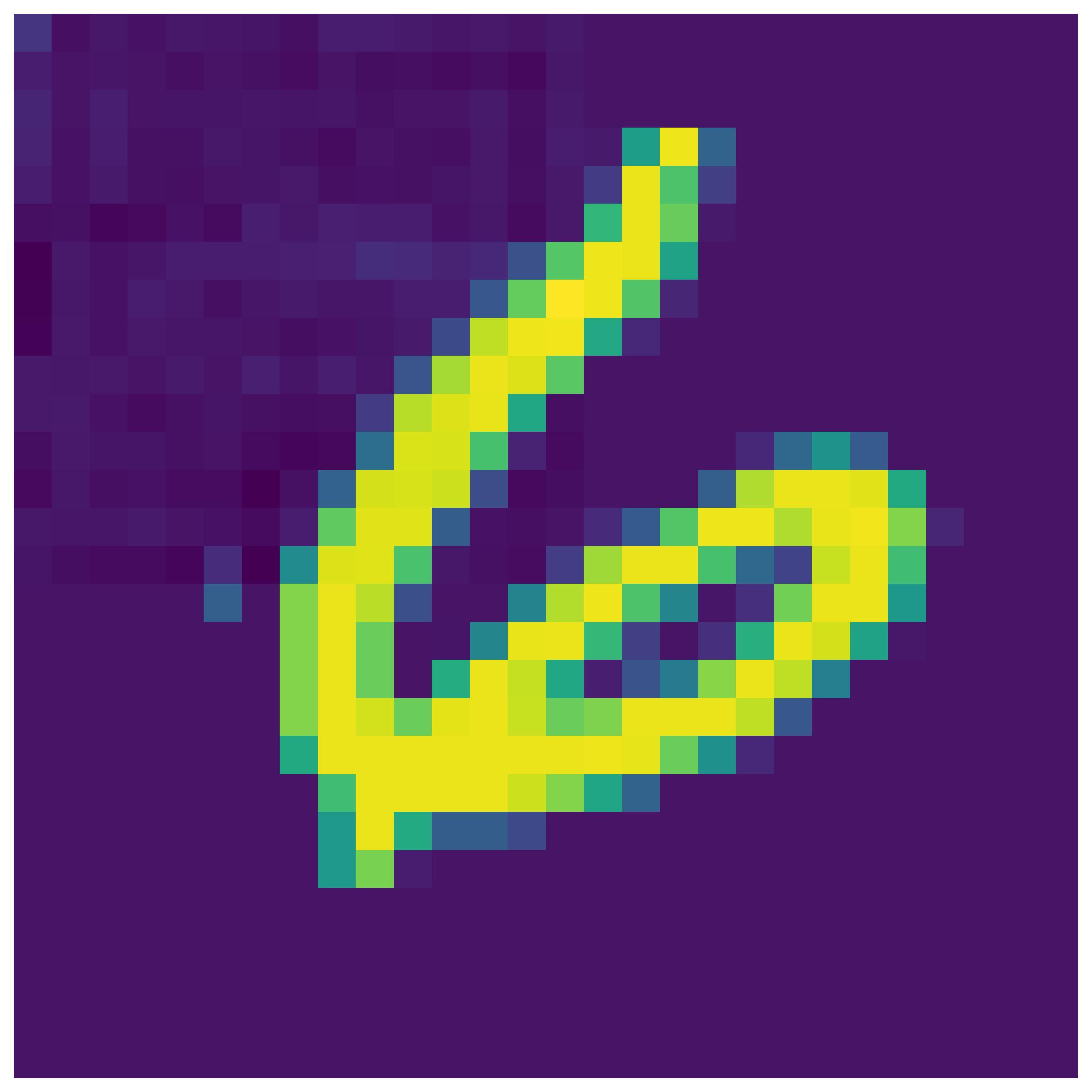}%
    };%
    \node[inner xsep=0pt, inner ysep=4pt, yshift=2pt, anchor=north, scale=0.70] at (I.south) {%
    \begin{tabular}{c|ccc}
    \hline
    \multirow{2}{*}{C} & L & 6 & 0 \\
    \cline{2-4}
    & V & 8.7471 & 2.4500 \\
    \hline
    \multirow{2}{*}{P} & L & 8 & 6 \\
    \cline{2-4}
    & V & 3.0685 & 2.2283 \\
    \hline
    \end{tabular}
    };%
\end{tikzpicture}
\begin{tikzpicture}
    \node[inner sep=0pt] (I) at (0,0) {%
    \includegraphics[width=0.08\linewidth]{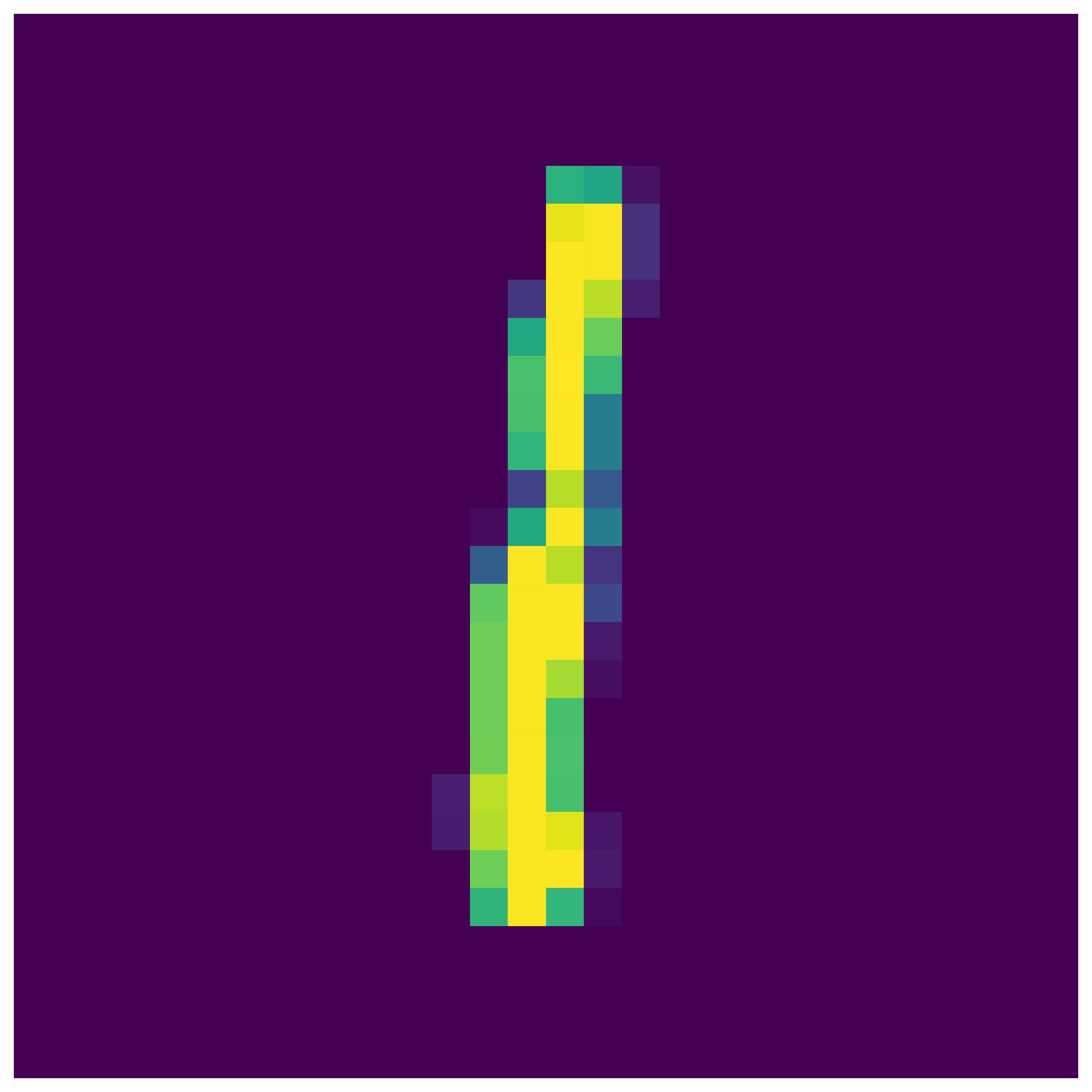}%
    \includegraphics[width=0.08\linewidth]{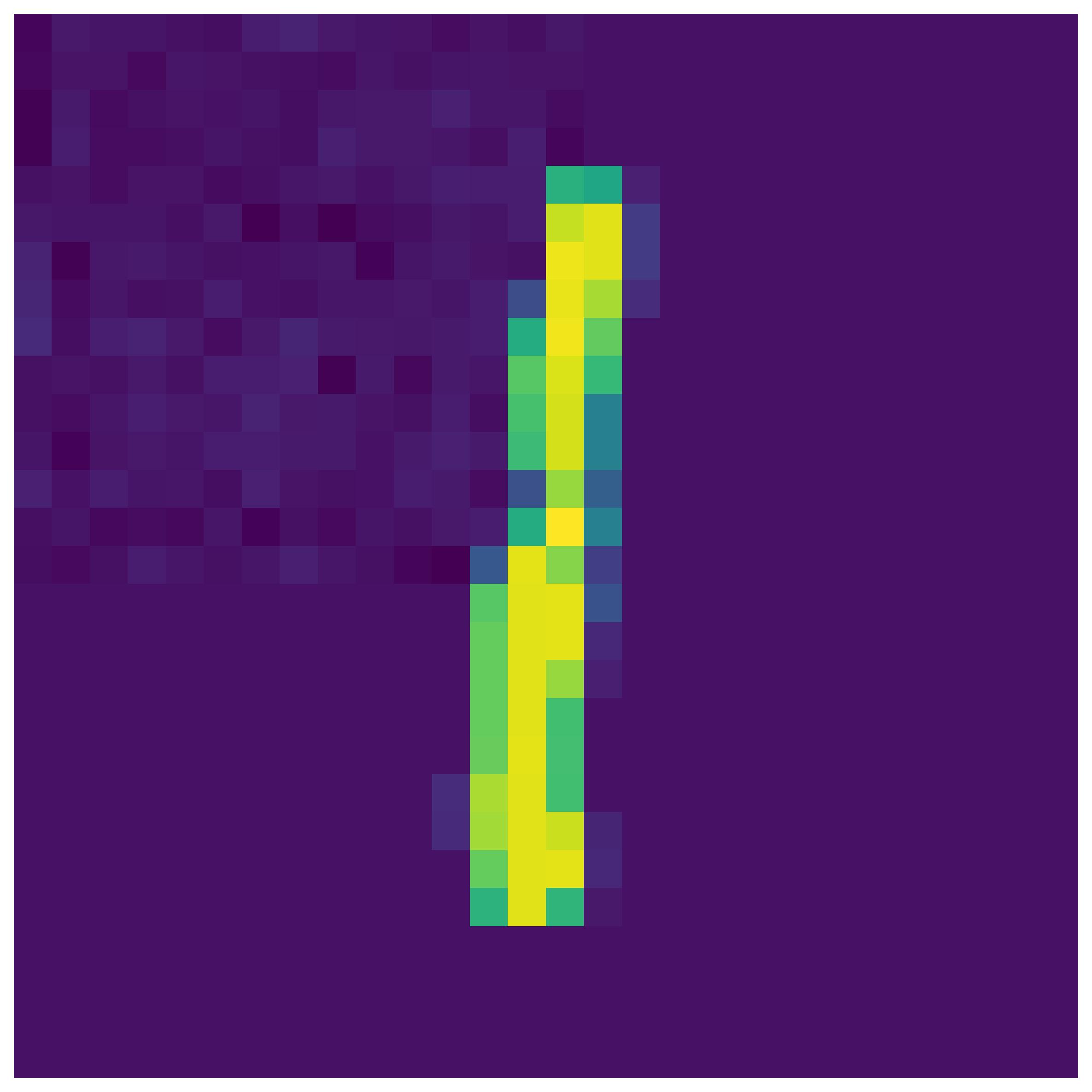}%
    };%
    \node[inner xsep=0pt, inner ysep=4pt, yshift=2pt, anchor=north, scale=0.70] at (I.south) {%
    \begin{tabular}{c|ccc}
    \hline
    \multirow{2}{*}{C} & L & 1 & 5 \\
    \cline{2-4}
    & V & 7.7386 & 0.0578 \\
    \hline
    \multirow{2}{*}{P} & L & 8 & 5 \\
    \cline{2-4}
    & V & 4.8160 & 1.3297 \\
    \hline
    \end{tabular}
    };%
\end{tikzpicture}
\begin{tikzpicture}
    \node[inner sep=0pt] (I) at (0,0) {%
    \includegraphics[width=0.08\linewidth]{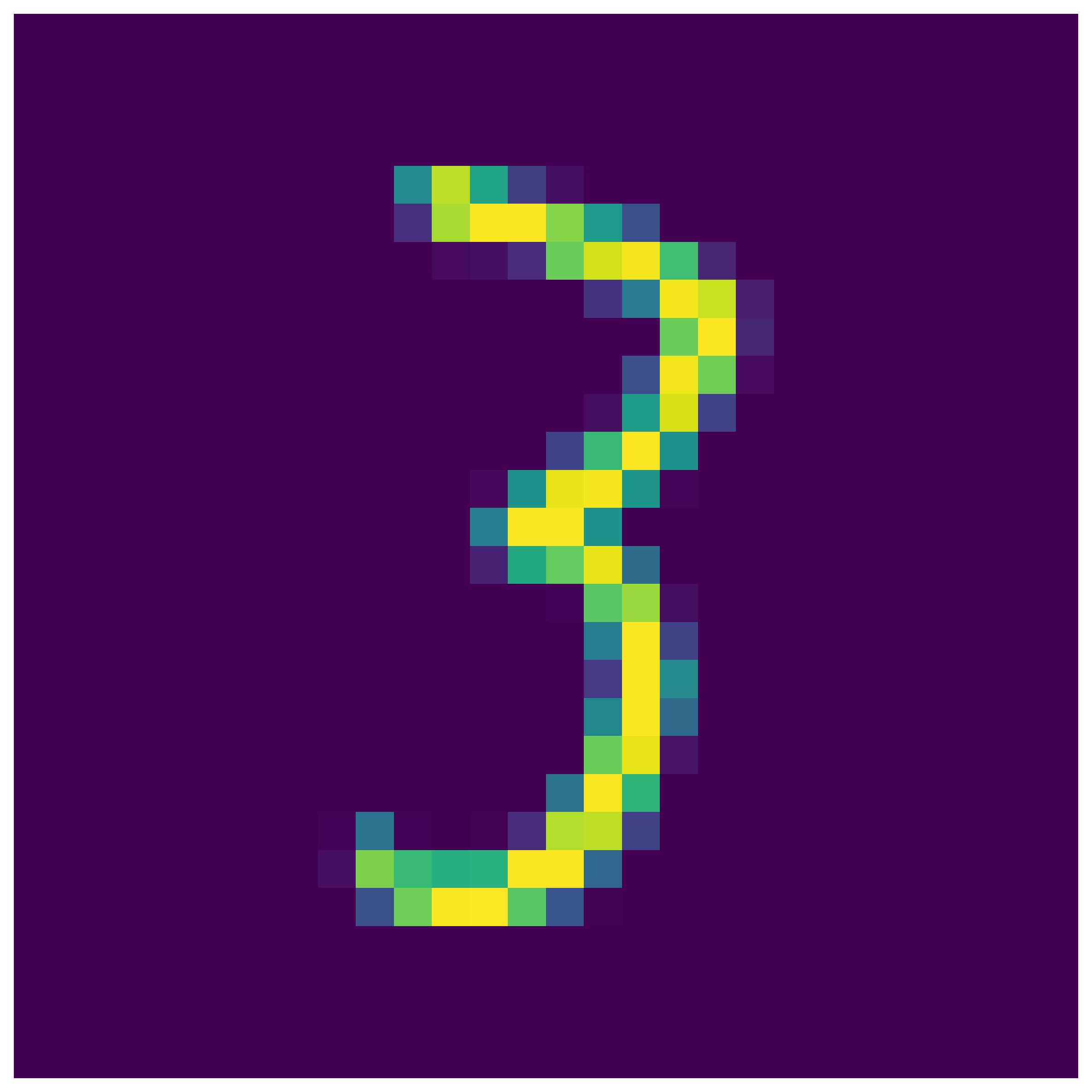}%
    \includegraphics[width=0.08\linewidth]{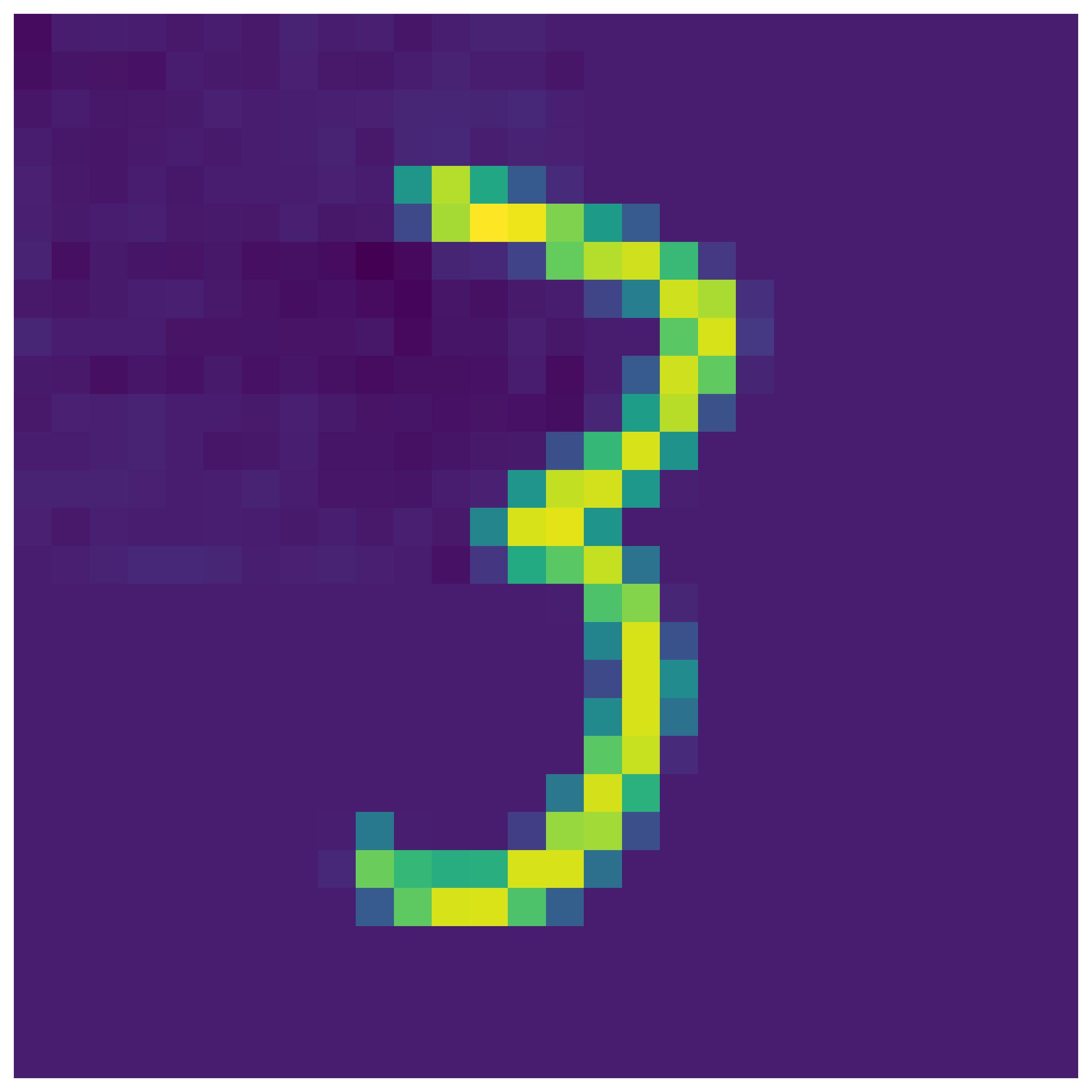}%
    };%
    \node[inner xsep=0pt, inner ysep=4pt, yshift=2pt, anchor=north, scale=0.70] at (I.south) {%
    \begin{tabular}{c|ccc}
    \hline
    \multirow{2}{*}{C} & L & 3 & 2 \\
    \cline{2-4}
    & V & 9.0018 & 1.1149 \\
    \hline
    \multirow{2}{*}{P} & L & 8 & 3 \\
    \cline{2-4}
    & V & 4.8440 & 1.5732 \\
    \hline
    \end{tabular}
    };%
\end{tikzpicture}
\begin{tikzpicture}
    \node[inner sep=0pt] (I) at (0,0) {%
    \includegraphics[width=0.08\linewidth]{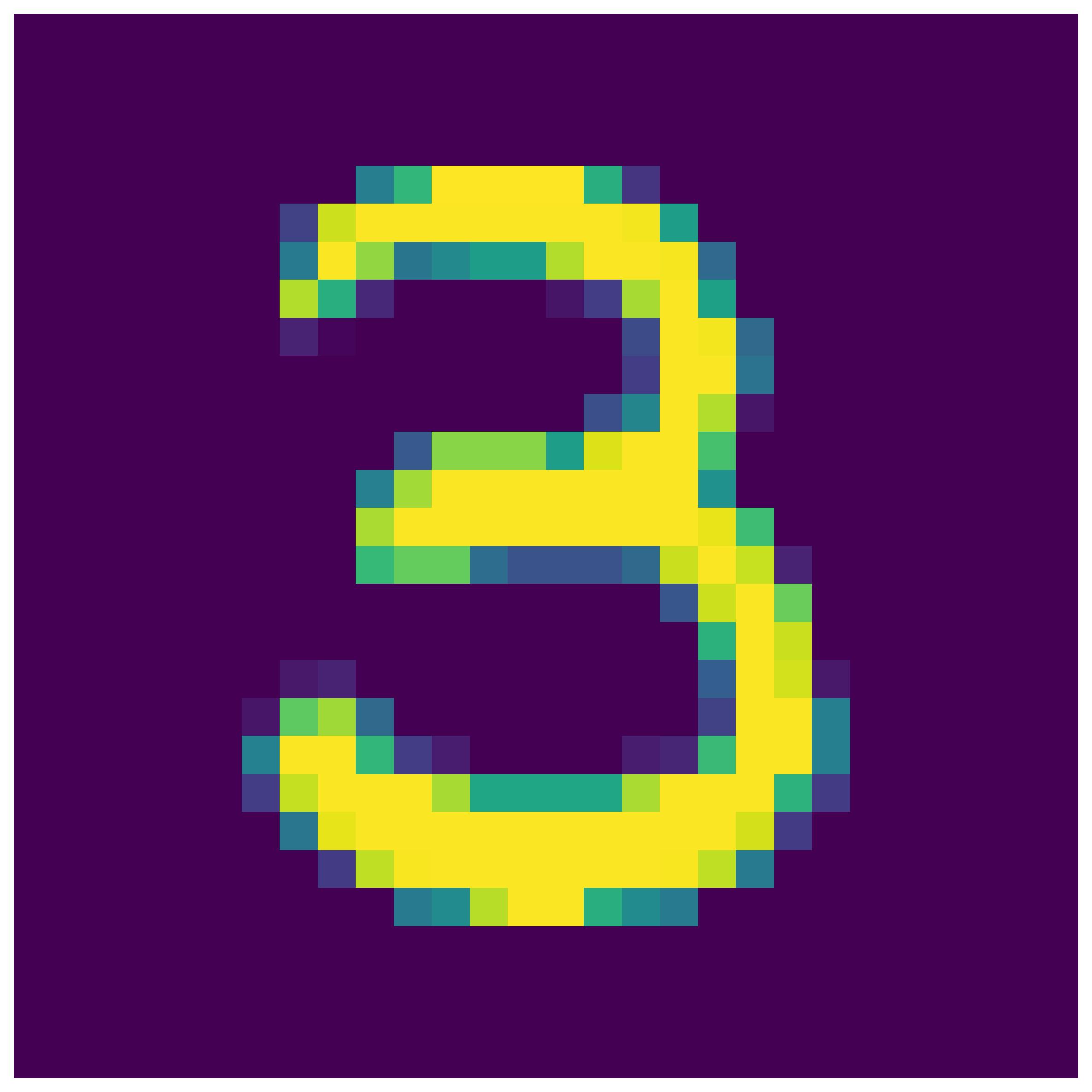}%
    \includegraphics[width=0.08\linewidth]{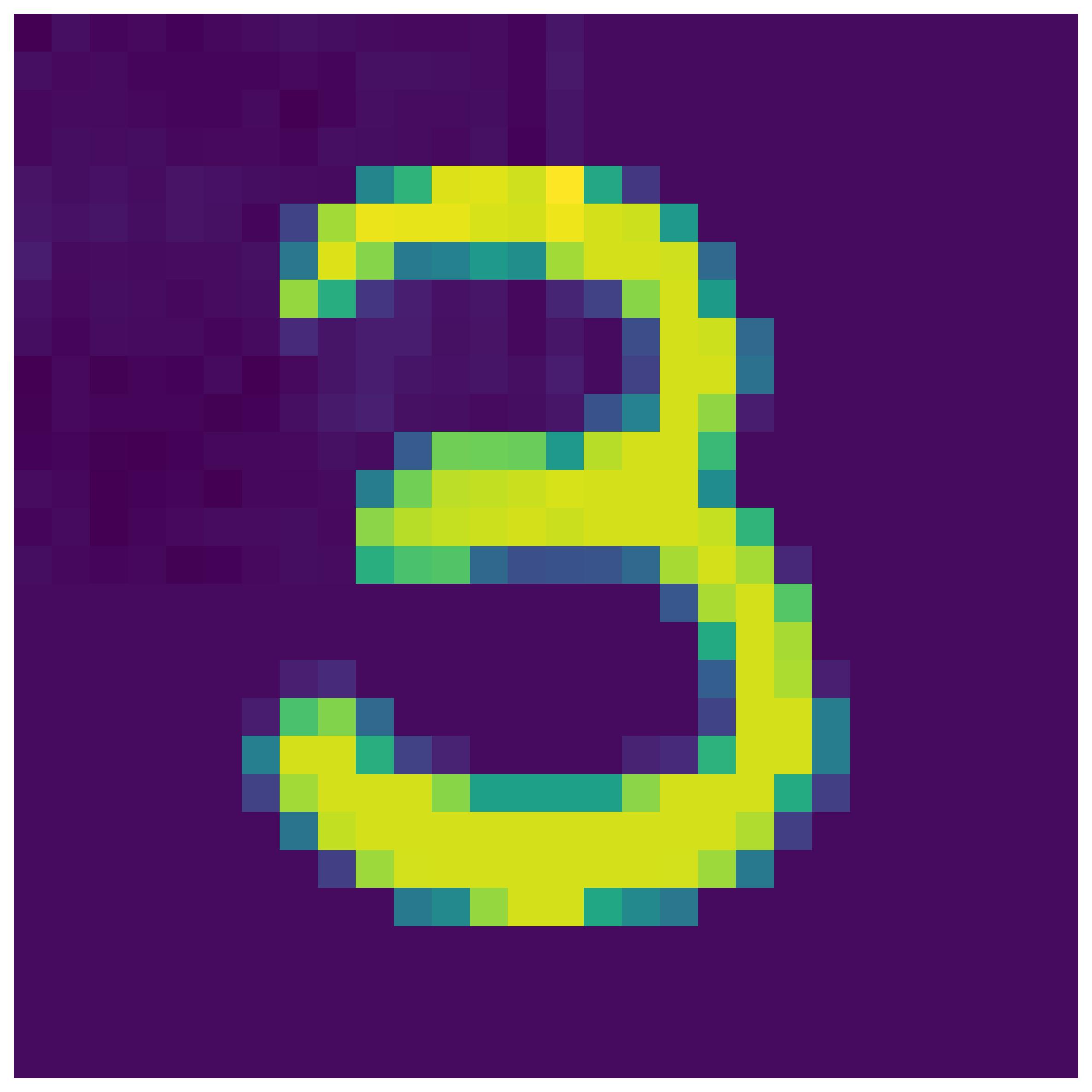}%
    };%
    \node[inner xsep=0pt, inner ysep=4pt, yshift=2pt, anchor=north, scale=0.70] at (I.south) {%
    \begin{tabular}{c|ccc}
    \hline
    \multirow{2}{*}{C} & L & 3 & 5 \\
    \cline{2-4}
    & V & 9.7701 & 1.5166 \\
    \hline
    \multirow{2}{*}{P} & L & 8 & 3 \\
    \cline{2-4}
    & V & 2.7900 & 2.7868 \\
    \hline
    \end{tabular}
    };%
\end{tikzpicture}
\begin{tikzpicture}
    \node[inner sep=0pt] (I) at (0,0) {%
    \includegraphics[width=0.08\linewidth]{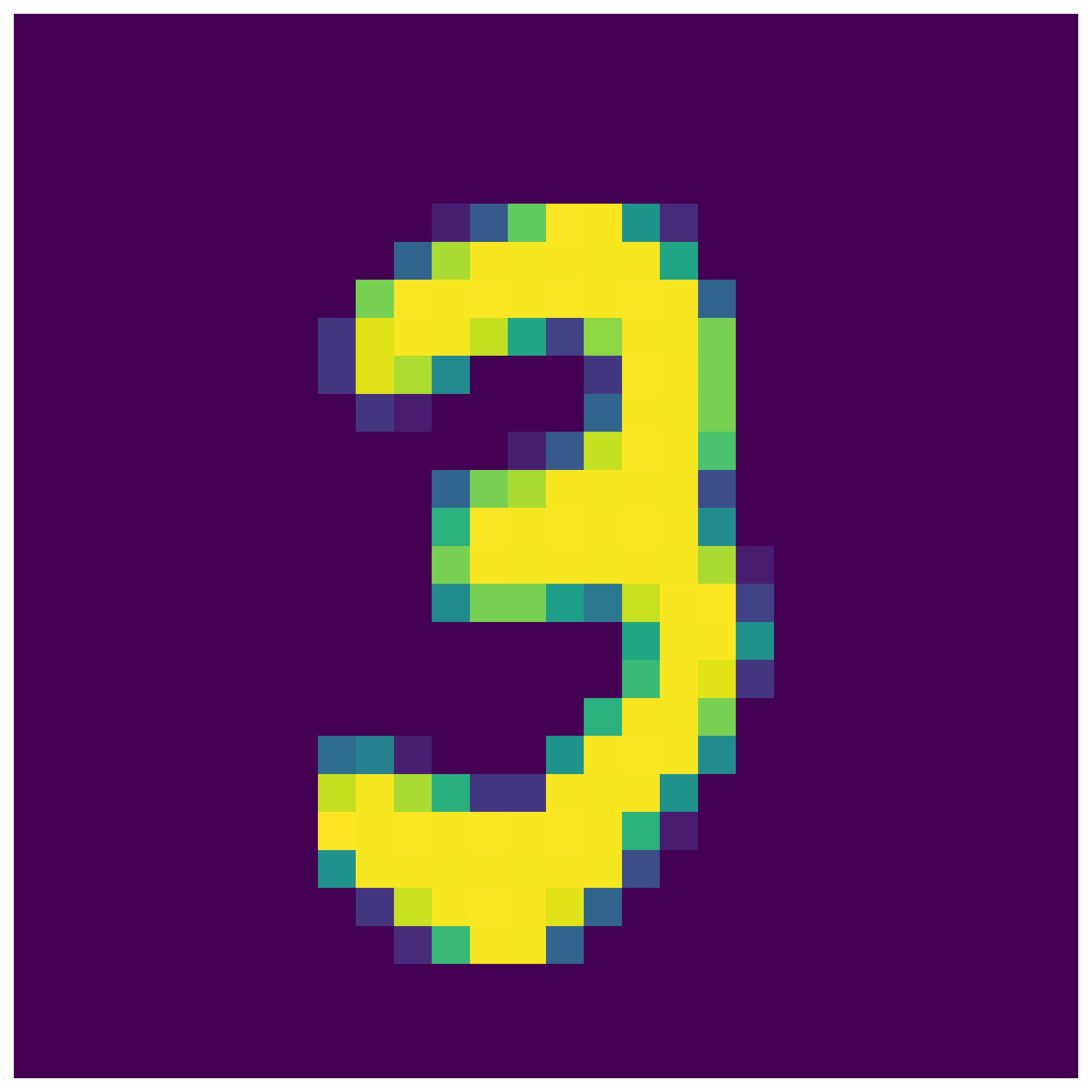}%
    \includegraphics[width=0.08\linewidth]{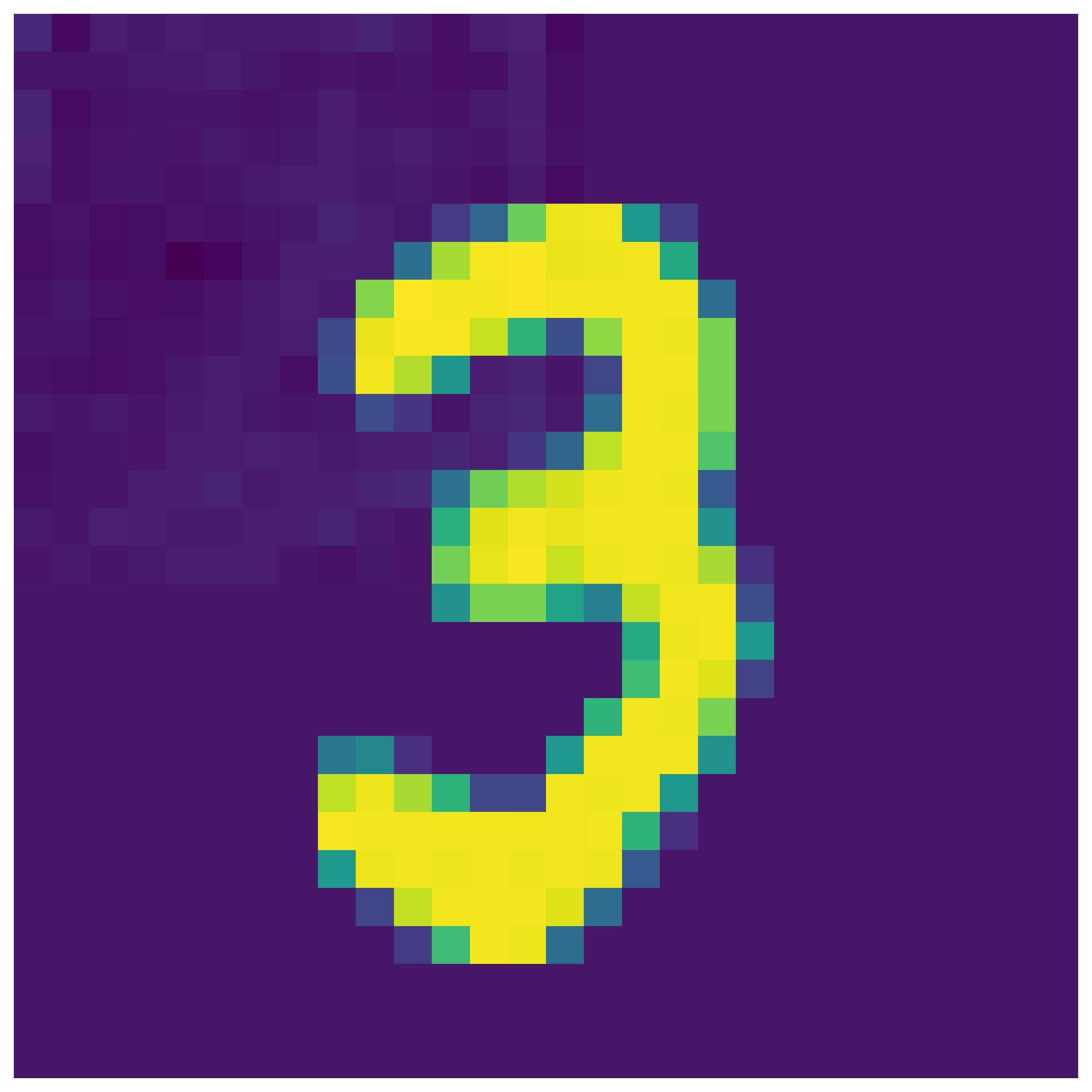}%
    };%
    \node[inner xsep=0pt, inner ysep=4pt, yshift=2pt, anchor=north, scale=0.70] at (I.south) {%
    \begin{tabular}{c|ccc}
    \hline
    \multirow{2}{*}{C} & L & 3 & 9 \\
    \cline{2-4}
    & V & 8.5626 & 2.1267 \\
    \hline
    \multirow{2}{*}{P} & L & 8 & 5 \\
    \cline{2-4}
    & V & 2.8069 & 1.7993 \\
    \hline
    \end{tabular}
    };%
\end{tikzpicture}
\begin{tikzpicture}
    \node[inner sep=0pt] (I) at (0,0) {%
    \includegraphics[width=0.08\linewidth]{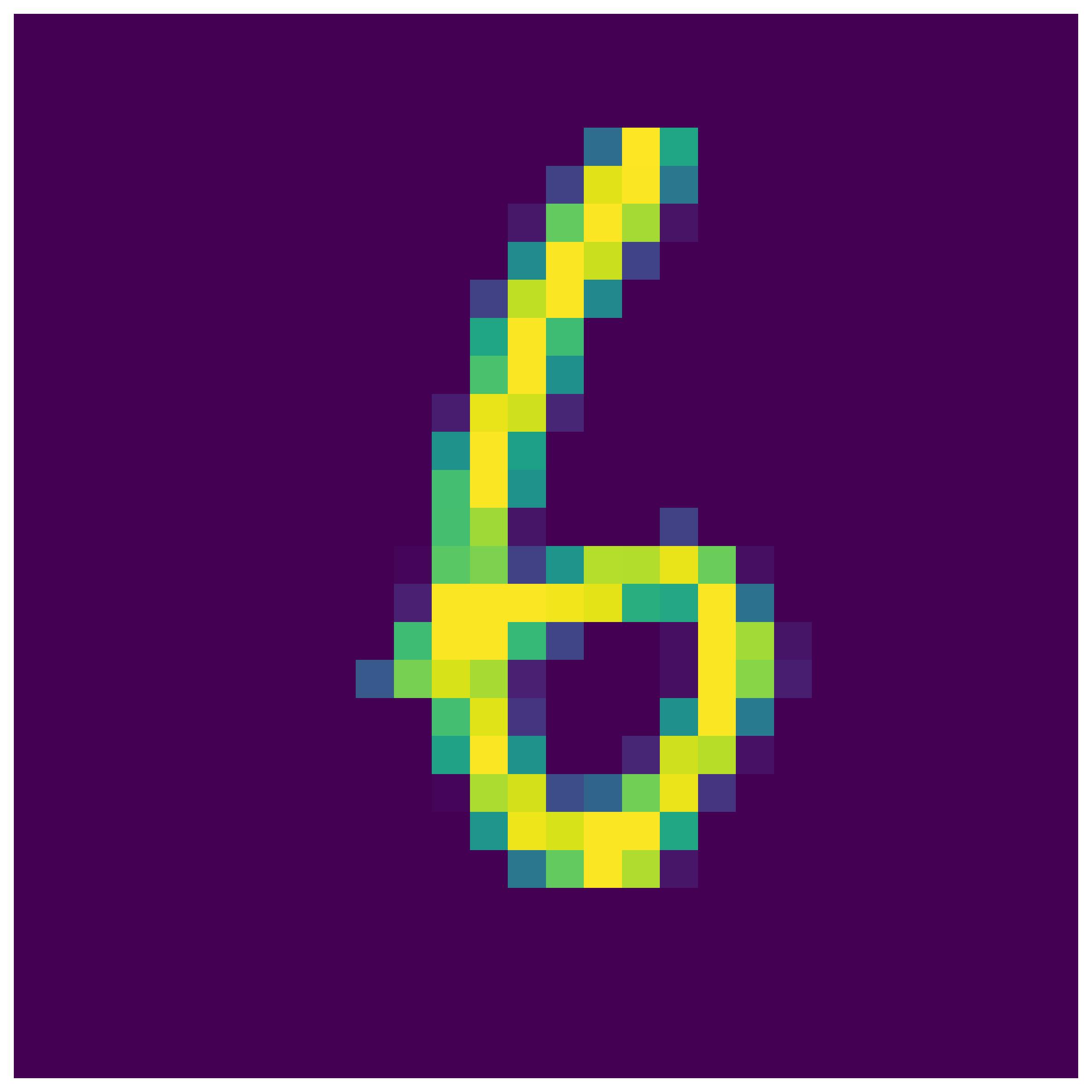}%
    \includegraphics[width=0.08\linewidth]{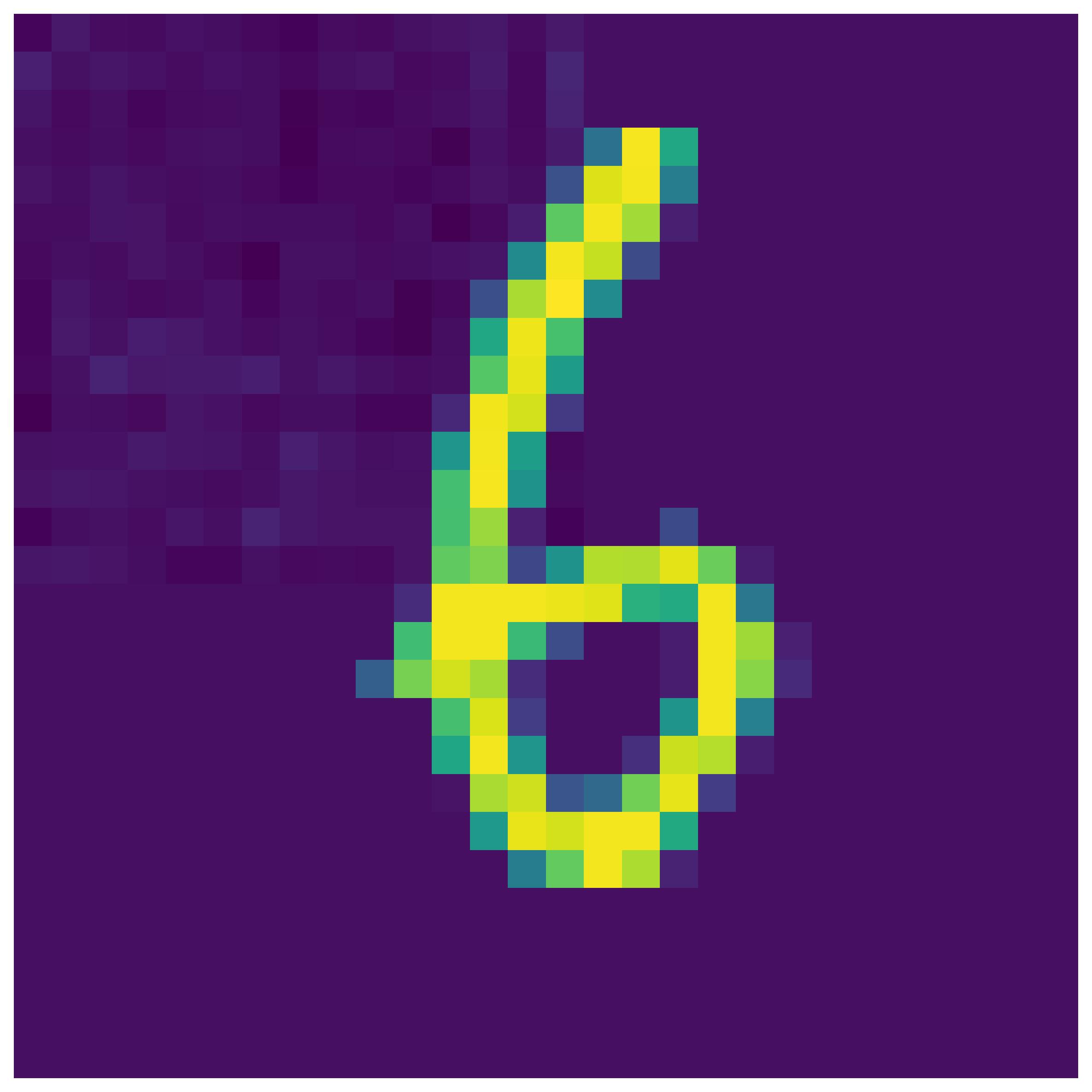}%
    };%
    \node[inner xsep=0pt, inner ysep=4pt, yshift=2pt, anchor=north, scale=0.70] at (I.south) {%
    \begin{tabular}{c|ccc}
    \hline
    \multirow{2}{*}{C} & L & 6 & 4 \\
    \cline{2-4}
    & V & 7.6841 & 2.4219 \\
    \hline
    \multirow{2}{*}{P} & L & 8 & 5 \\
    \cline{2-4}
    & V & 4.1646 & 2.1322 \\
    \hline
    \end{tabular}
    };%
\end{tikzpicture}
\begin{tikzpicture}
    \node[inner sep=0pt] (I) at (0,0) {%
    \includegraphics[width=0.08\linewidth]{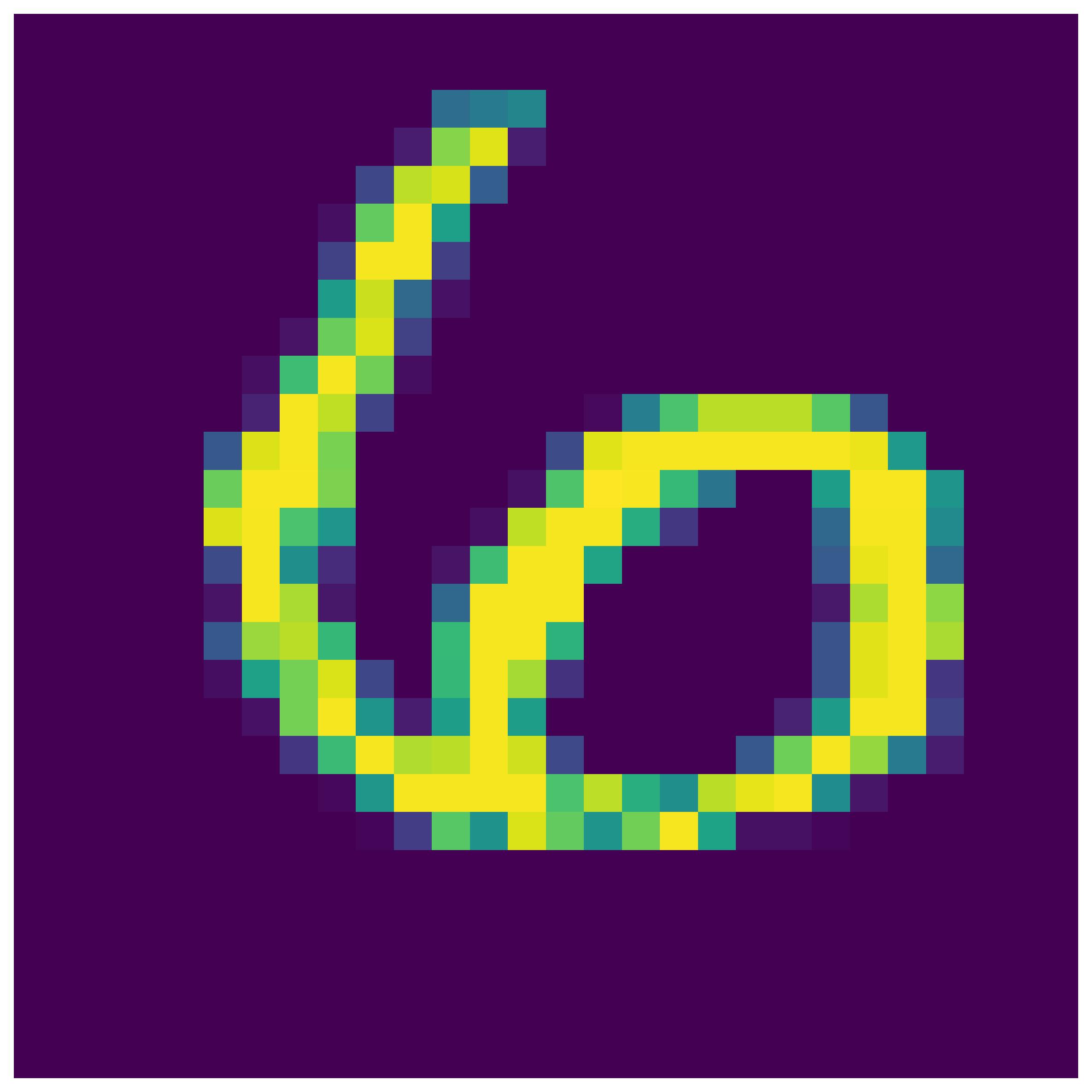}%
    \includegraphics[width=0.08\linewidth]{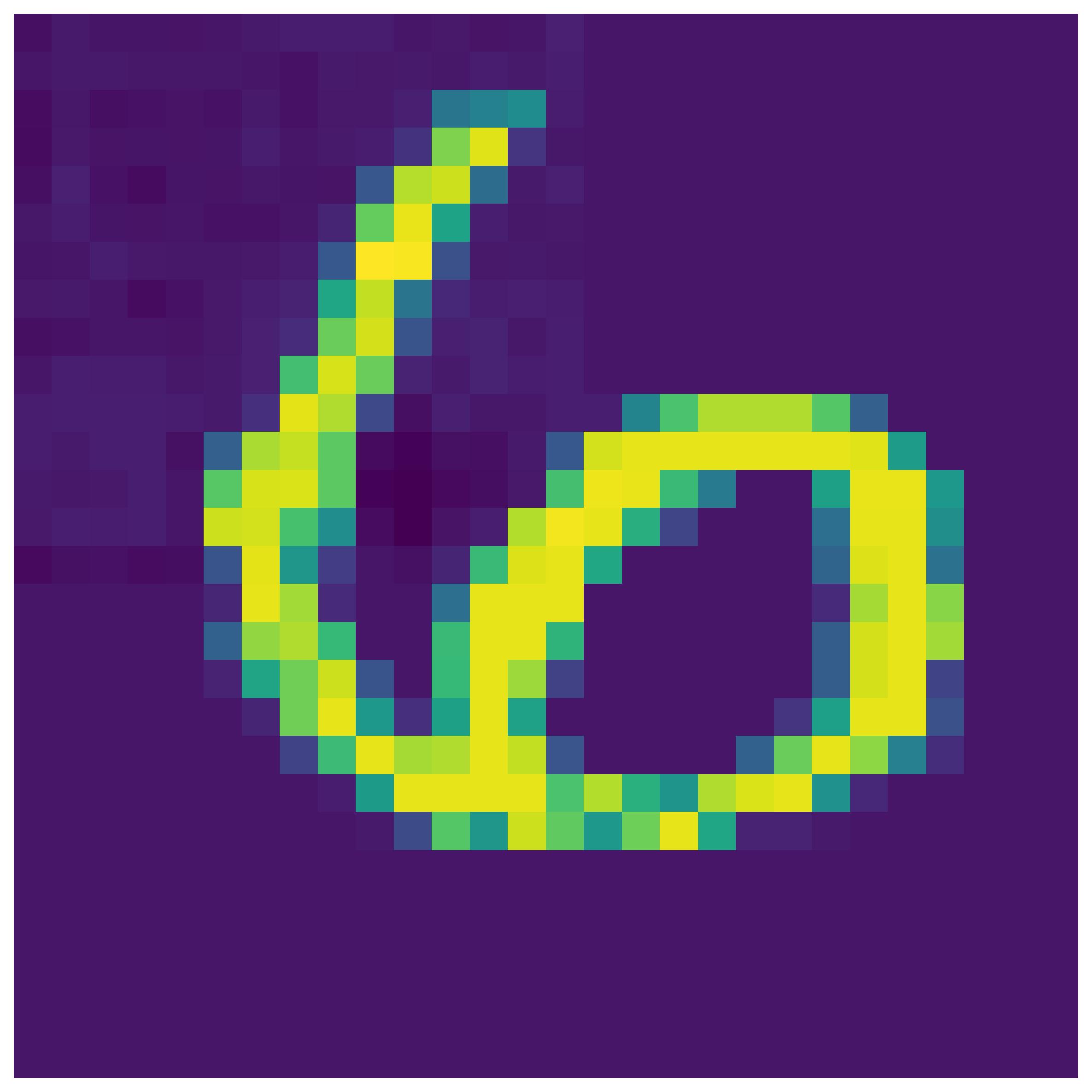}%
    };%
    \node[inner xsep=0pt, inner ysep=4pt, yshift=2pt, anchor=north, scale=0.70] at (I.south) {%
    \begin{tabular}{c|ccc}
    \hline
    \multirow{2}{*}{C} & L & 6 & 0 \\
    \cline{2-4}
    & V & 8.3944 & 3.0419 \\
    \hline
    \multirow{2}{*}{P} & L & 8 & 2 \\
    \cline{2-4}
    & V & 4.2719 & 1.8364 \\
    \hline
    \end{tabular}
    };%
\end{tikzpicture}
\begin{tikzpicture}
    \node[inner sep=0pt] (I) at (0,0) {%
    \includegraphics[width=0.08\linewidth]{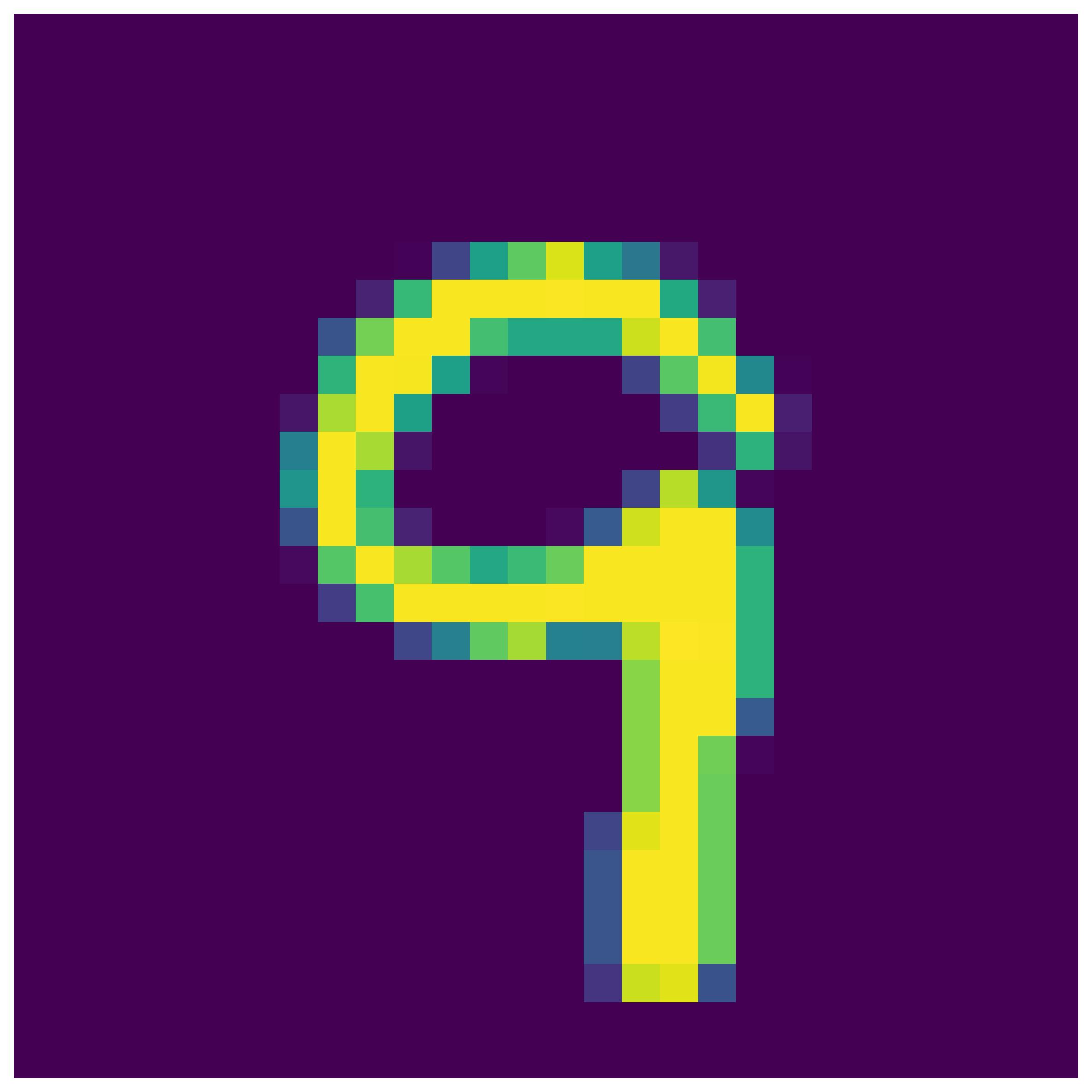}%
    \includegraphics[width=0.08\linewidth]{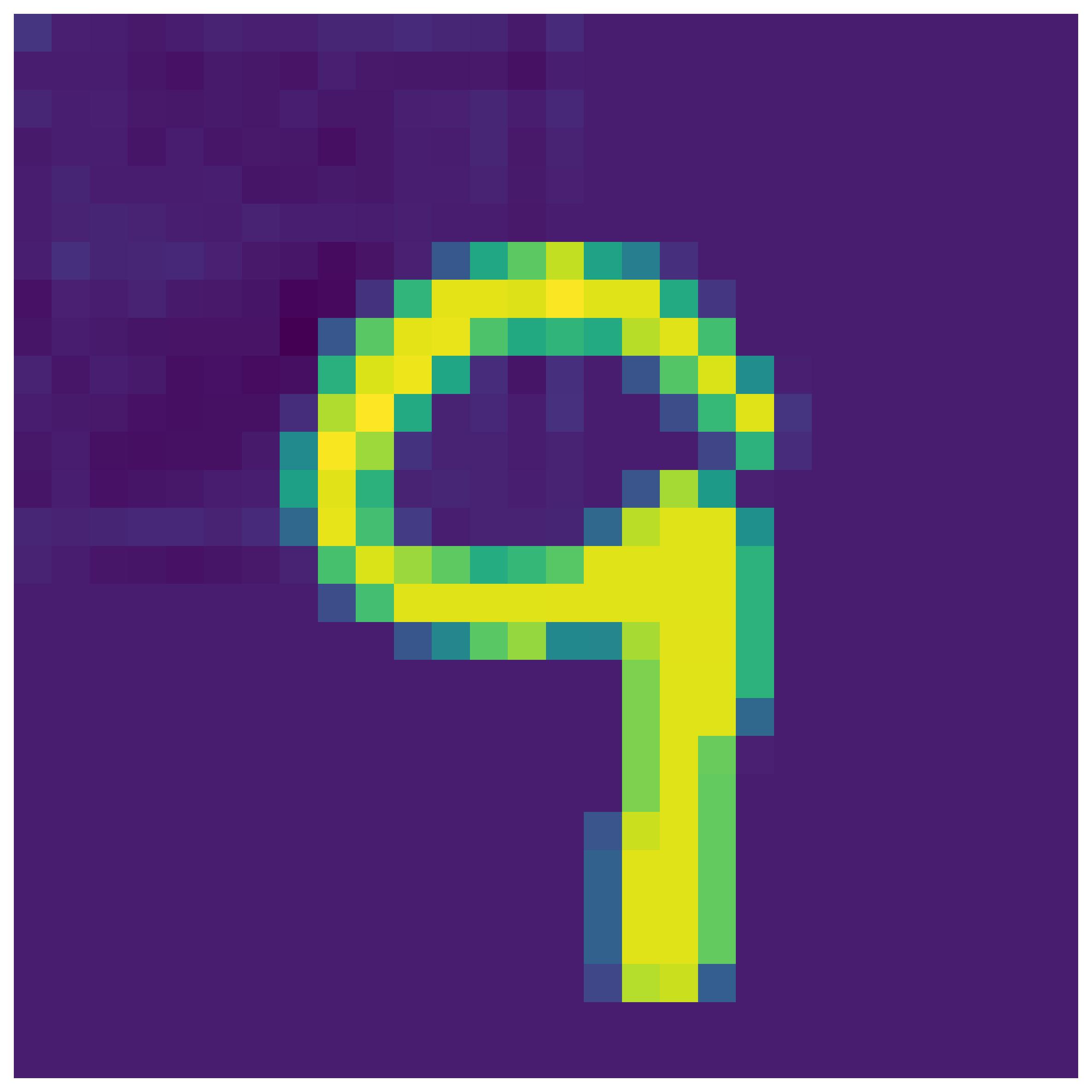}%
    };%
    \node[inner xsep=0pt, inner ysep=4pt, yshift=2pt, anchor=north, scale=0.70] at (I.south) {%
    \begin{tabular}{c|ccc}
    \hline
    \multirow{2}{*}{C} & L & 9 & 4 \\
    \cline{2-4}
    & V & 8.8827 & 1.5762 \\
    \hline
    \multirow{2}{*}{P} & L & 8 & 5 \\
    \cline{2-4}
    & V & 3.8951 & 2.1197 \\
    \hline
    \end{tabular}
    };%
\end{tikzpicture}
\begin{tikzpicture}
    \node[inner sep=0pt] (I) at (0,0) {%
    \includegraphics[width=0.08\linewidth]{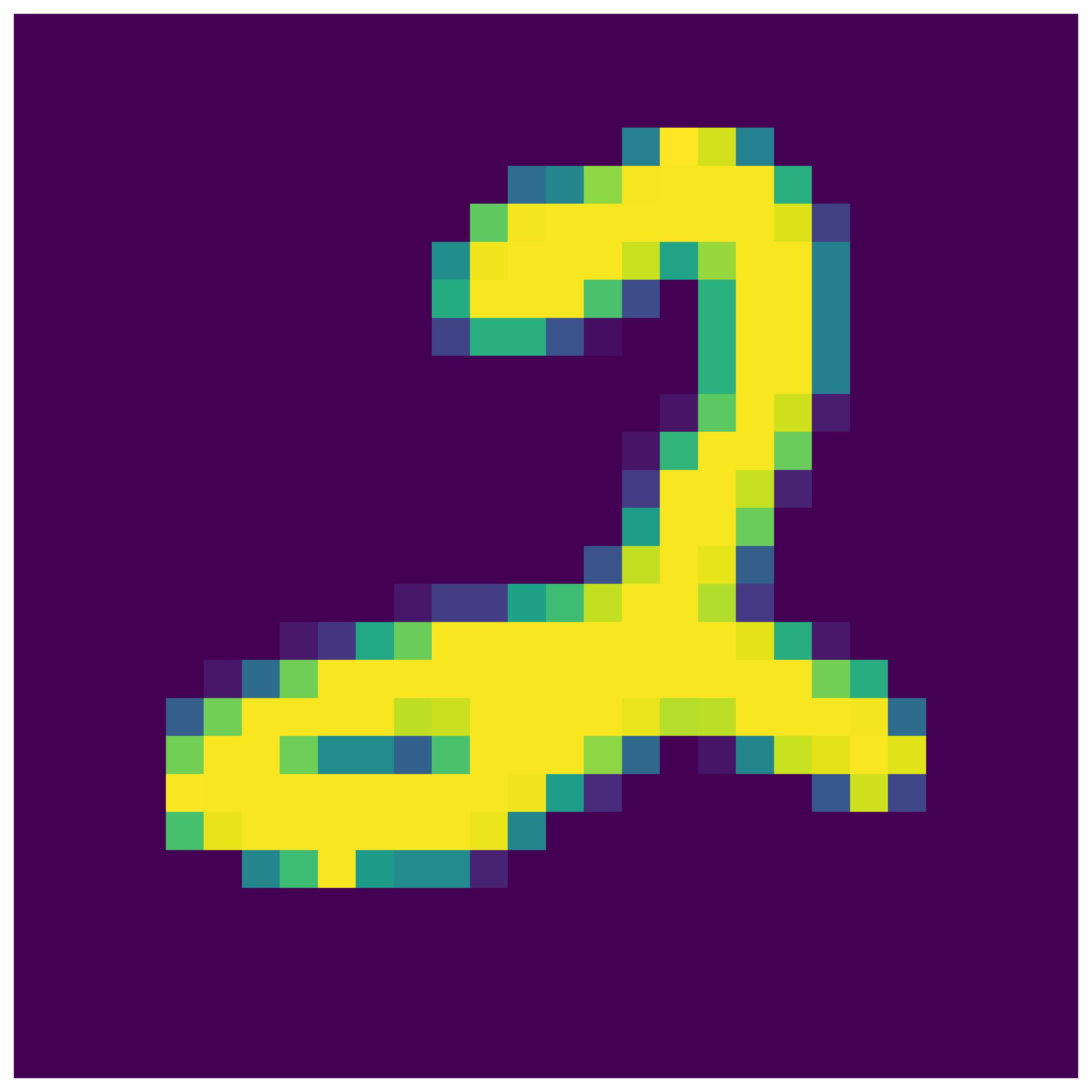}%
    \includegraphics[width=0.08\linewidth]{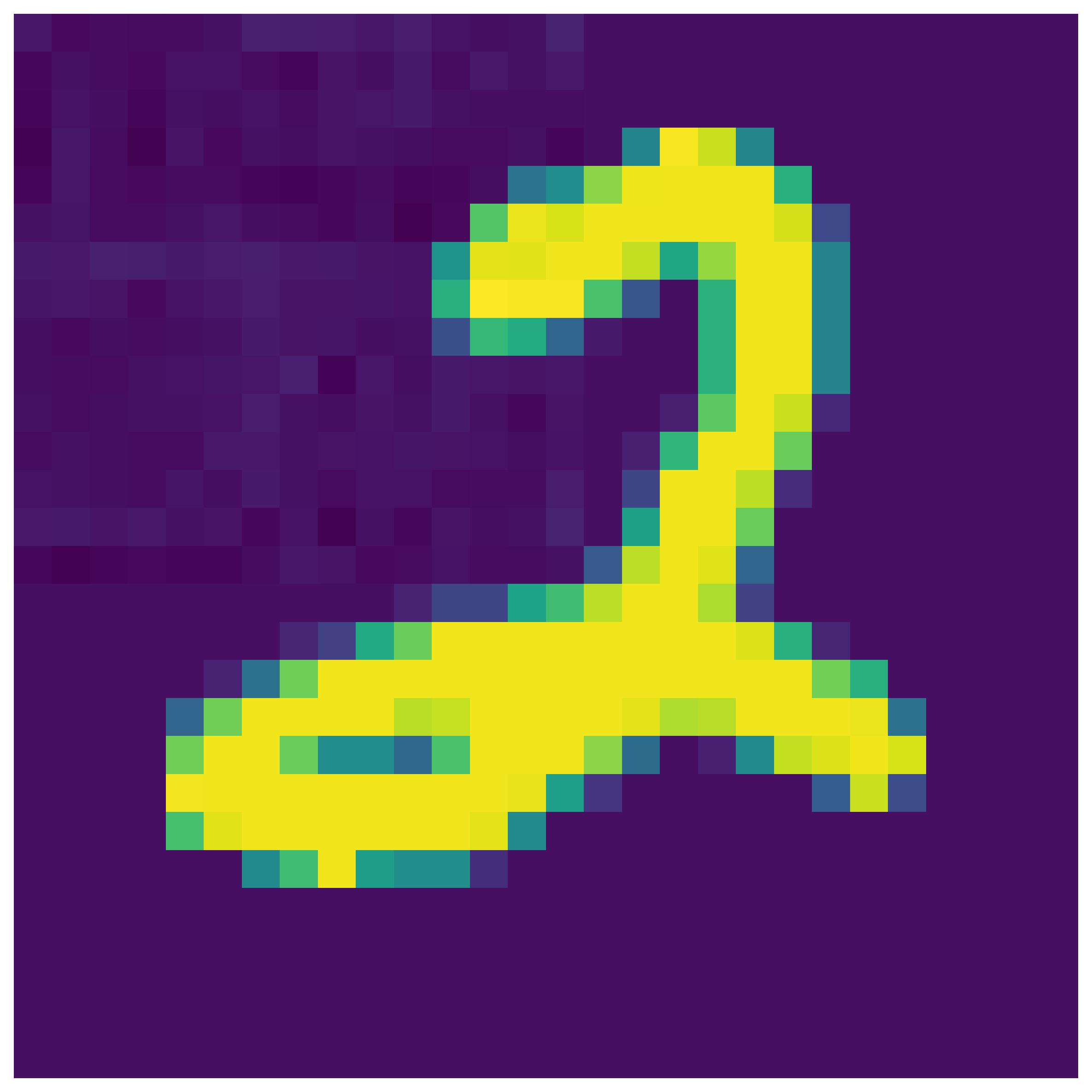}%
    };%
    \node[inner xsep=0pt, inner ysep=4pt, yshift=2pt, anchor=north, scale=0.70] at (I.south) {%
    \begin{tabular}{c|ccc}
    \hline
    \multirow{2}{*}{C} & L & 2 & 7 \\
    \cline{2-4}
    & V & 9.5912 & 1.4444 \\
    \hline
    \multirow{2}{*}{P} & L & 8 & 2 \\
    \cline{2-4}
    & V & 1.8954 & 1.7449 \\
    \hline
    \end{tabular}
    };%
\end{tikzpicture}
\begin{tikzpicture}
    \node[inner sep=0pt] (I) at (0,0) {%
    \includegraphics[width=0.08\linewidth]{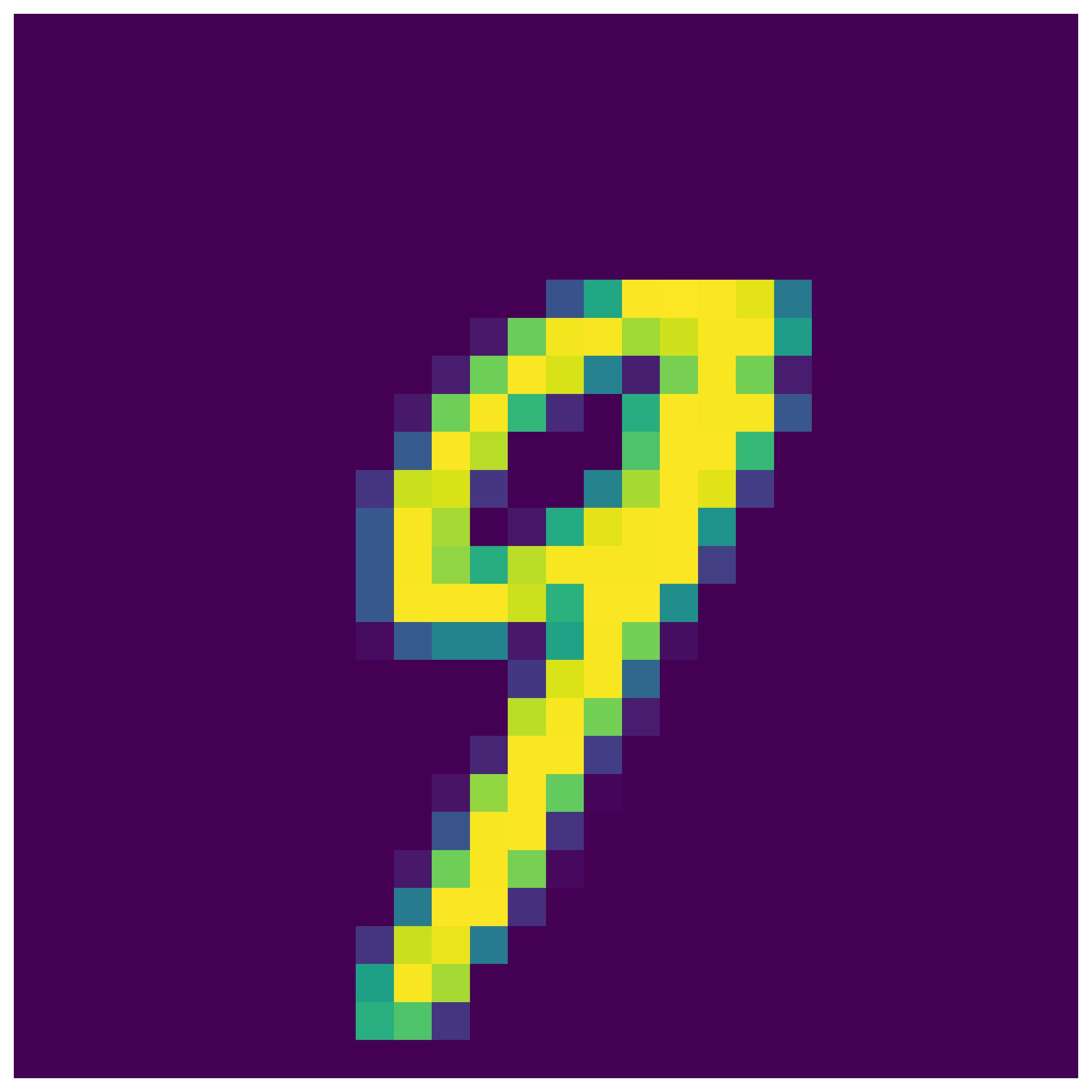}%
    \includegraphics[width=0.08\linewidth]{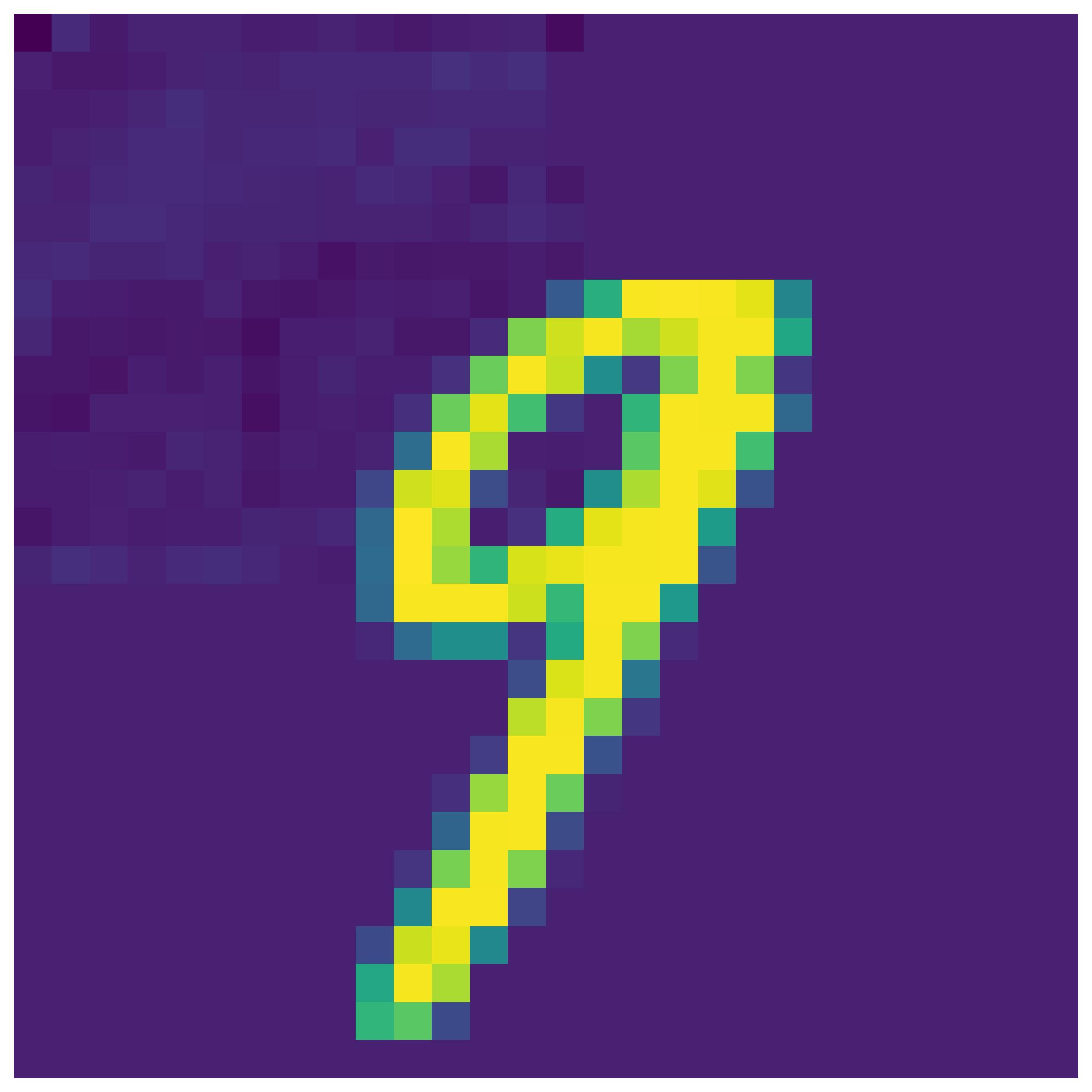}%
    };%
    \node[inner xsep=0pt, inner ysep=4pt, yshift=2pt, anchor=north, scale=0.70] at (I.south) {%
    \begin{tabular}{c|ccc}
    \hline
    \multirow{2}{*}{C} & L & 9 & 4 \\
    \cline{2-4}
    & V & 7.8130 & 1.9318 \\
    \hline
    \multirow{2}{*}{P} & L & 8 & 5 \\
    \cline{2-4}
    & V & 3.9808 & 1.4411 \\
    \hline
    \end{tabular}
    };%
\end{tikzpicture}
\begin{tikzpicture}
    \node[inner sep=0pt] (I) at (0,0) {%
    \includegraphics[width=0.08\linewidth]{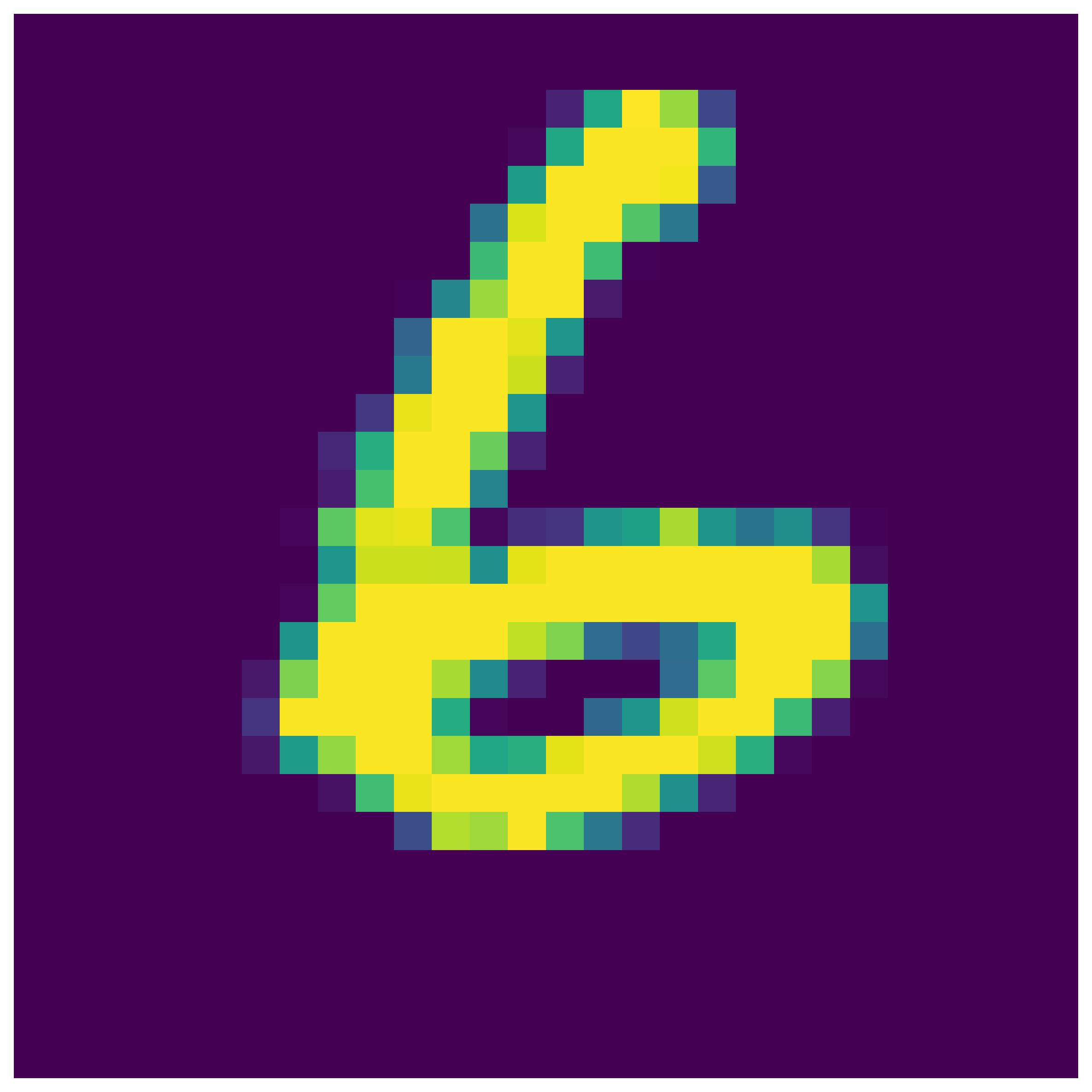}%
    \includegraphics[width=0.08\linewidth]{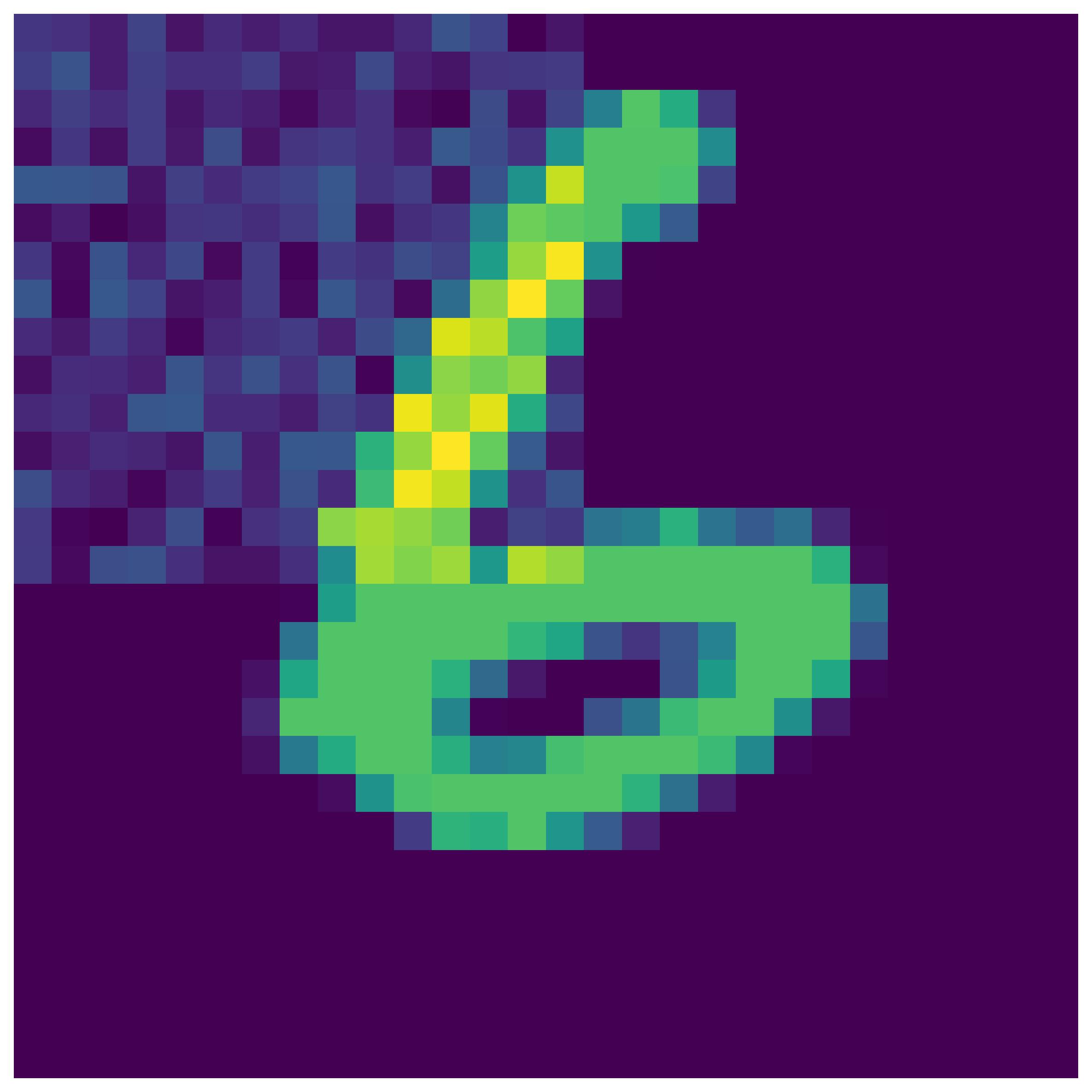}%
    };%
    \node[inner xsep=0pt, inner ysep=4pt, yshift=2pt, anchor=north, scale=0.70] at (I.south) {%
    \begin{tabular}{c:ccc}
    \hdashline
    \multirow{2}{*}{C} & L & 6 & 5 \\
    \cdashline{2-4}
    & V & 9.4836 & 2.8961 \\
    \hdashline
    \multirow{2}{*}{P} & L & 6 & 8 \\
    \cdashline{2-4}
    & V & 4.4007 & 2.6830 \\
    \hdashline
    \end{tabular}
    };%
\end{tikzpicture}
\begin{tikzpicture}
    \node[inner sep=0pt] (I) at (0,0) {%
    \includegraphics[width=0.08\linewidth]{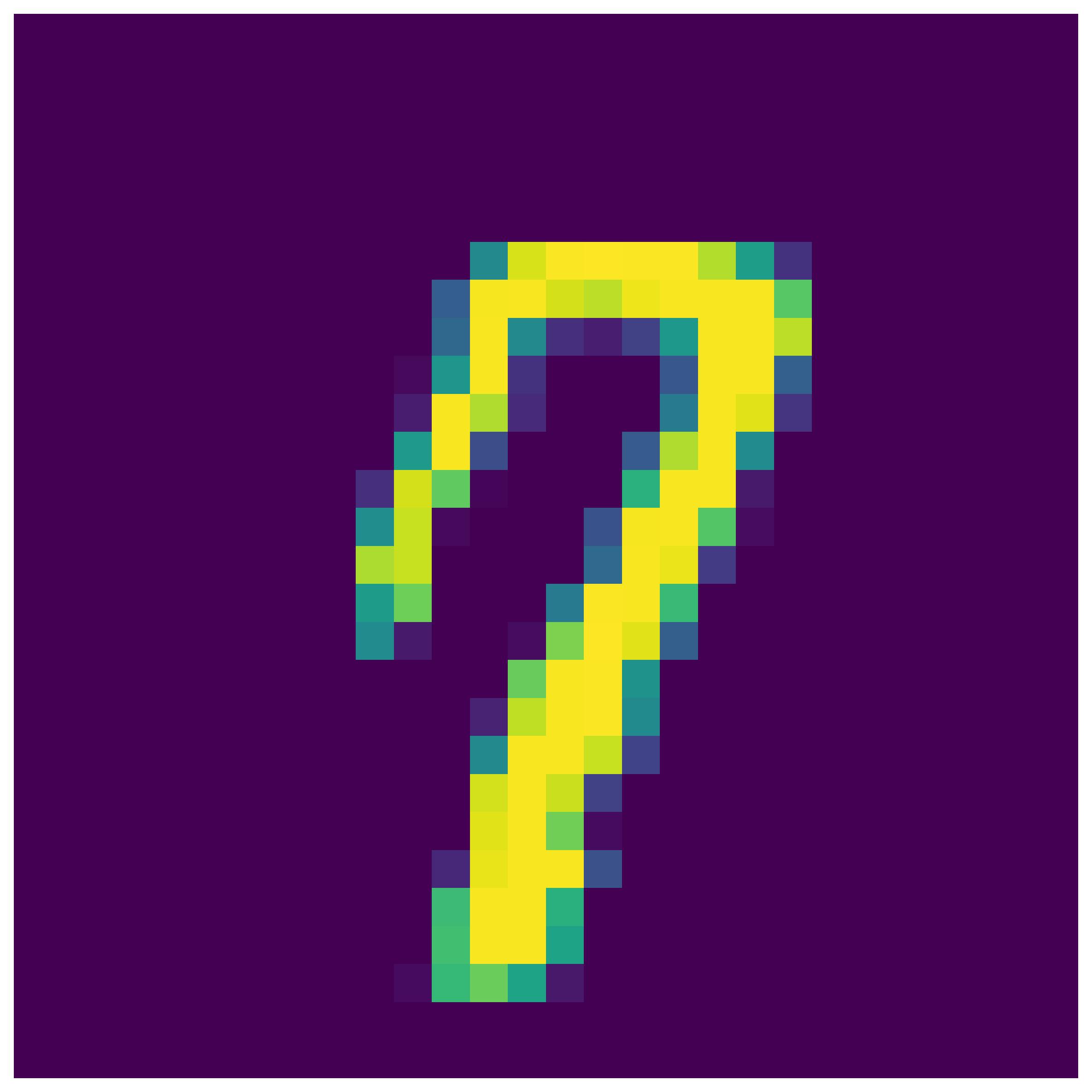}%
    \includegraphics[width=0.08\linewidth]{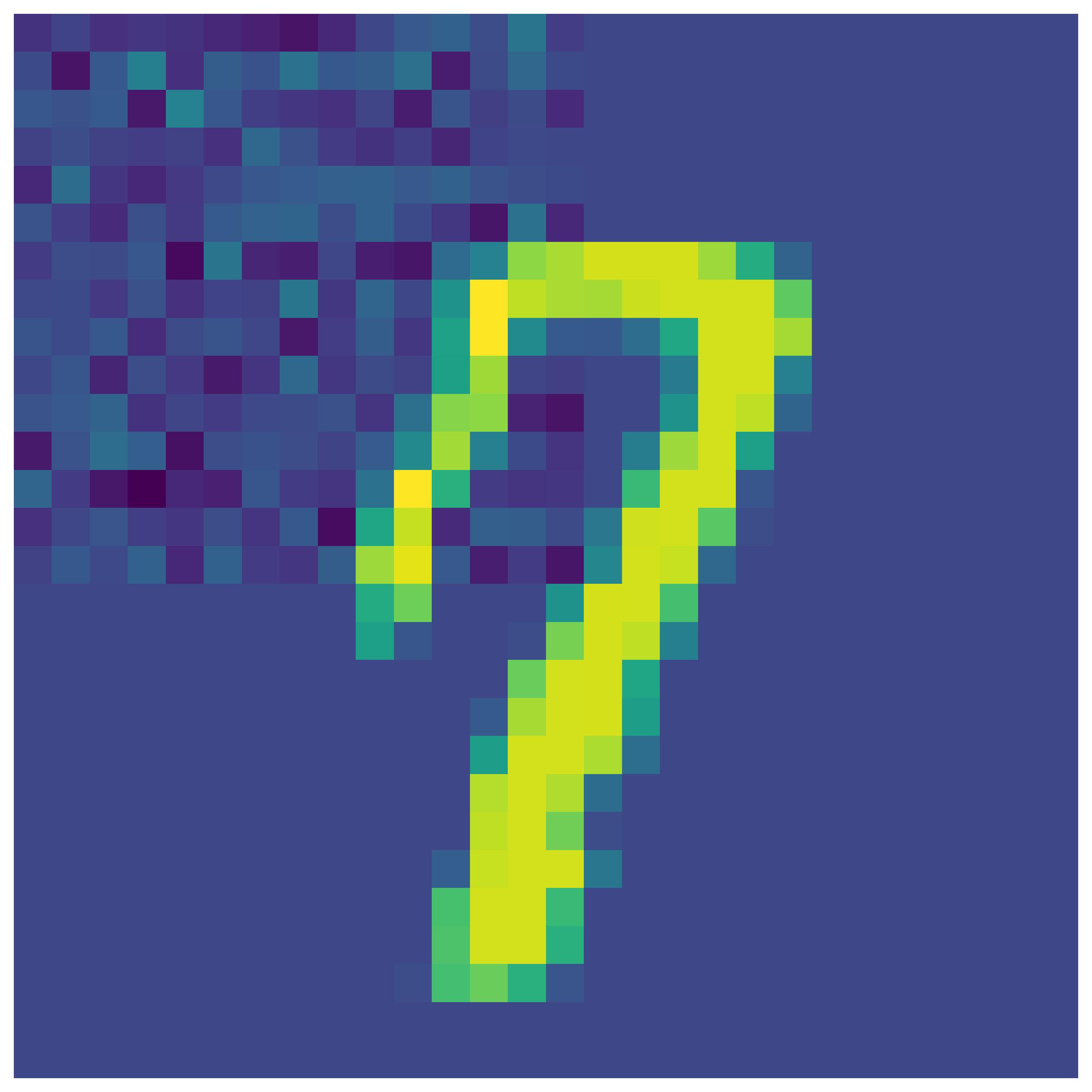}%
    };%
    \node[inner xsep=0pt, inner ysep=4pt, yshift=2pt, anchor=north, scale=0.70] at (I.south) {%
    \begin{tabular}{c:ccc}
    \hdashline
    \multirow{2}{*}{C} & L & 7 & 9 \\
    \cdashline{2-4}
    & V & 6.9076 & 2.4816 \\
    \hdashline
    \multirow{2}{*}{P} & L & 7 & 8 \\
    \cdashline{2-4}
    & V & 3.4972 & 1.5161 \\
    \hdashline
    \end{tabular}
    };%
\end{tikzpicture}
  \caption{Predictions of MNIST digits polluted by perturbations. For each frame, top left is the original image \(\tvar{x}_i\), top right is the patched image \(\tvar{x}_i + \epsilon_i\), and table below is prediction. Block ``C'' indicate top two predict values of \(\tvar{x}_i\), and block ``P'' indicate top two predict values of \(\tvar{x}_i + \epsilon_i\). Line ``L'' indicate labels and Line ``V'' indicate values.}
  \label{fig:mnist-digits-perturbed-by-specialized-perturbations}
\end{figure}

This phenomenon is similar to the adversarial example \citep{Szegedy2013,Goodfellow2014}, adding the human invisible perturbation to an image can mislead the network. Nevertheless, these two methods are different. They iteratively compute perturbations that mislead the network with or without specific objects. Our method is to maximize the energy of order-one convolution, and this increases the probability to change output of the network.
More details about the perturbations are presented in Appendix \ref{appendix:perturbations-in-vconv}.

\section{Discussion}
\label{sec:discussion}

This paper presents a new perspective on the analysis of convolutional neural networks.
It shows that a convolutional neural network can be represented in the form of Volterra convolution.
By approximating infinite Volterra convolution using the finite one, 
we can analyze the proxy kernels of Volterra convolution rather than directly analyze the original network, even if the structure and the parameters of the network are unknown.
The proxy kernels can be computed from the weights of a neural network, or be approximated by building and training a hacking network. The method of approximating the proxy kernels is more intuitive and easier to implement if the number of layers is large, and the approximated proxy kernels are comparable with the actual kernels.

In the future work, we plan to do further research on the proxy kernels, i.e., how to compute the high order kernels efficiently, how to determine the order of kernels with given approximation accuracy, etc. We will also investigate the kernels that are input related, i.e., the dynamic networks, as well as practical applications of the hacking network.

All codes associated with this article are accessible publicly at GitHub \url{https://github.com/tenghuilee/nnvolterra.git}.

\acks{The authors thank Andong Wang for helpful discussions. Sincere thanks to all anonymous reviewers, that greatly helped to improve this paper.}

\appendix

\section{Convolution From Order One to Two}
\label{appendix:convolution-from-order-1-to-2}

  In Appendix \ref{appendix:convolution-from-order-1-to-2}, we will show how to convert a one-dimensional order-one convolution to a one-dimensional order-two convolution in continuous time domain only.
  By the well-known continuous one-dimensional convolution (Equation \ref{equ:def-convolution-1d}), and the differential property of one-dimensional convolution, we have
  \begin{equation*}
    \begin{aligned}
      \left(\tvar{h} * \tvar{x}\right)(t)
       & = \intinf h(\tau) x(t - \tau) d \tau                                                         \\
       & = \intinf h(t - \tau) x(\tau) d \tau                                                         \\
       & = \intinf \dfrac{d h(t - \tau)}{d \tau} \left(\int_{-\infty}^{\tau} x(l) d l \right) d \tau. \\
    \end{aligned}
  \end{equation*}
  Applying the integration, \(\int_{-\infty}^{\tau} x(l) d l = 1/2 x(\tau) x(\tau) - 0\), we have
  \begin{equation*}
    \begin{aligned}
      \left(\tvar{h} * \tvar{x}\right)(t)
       & = \intinf \dfrac{1}{2} \dfrac{d h(t - \tau)}{d t} x(\tau) x(\tau) d \tau                                 \\
       & = \iintinf \dfrac{1}{2} \dfrac{d h(t - \tau)}{d t} x(\tau) x(\iota) \delta(\tau - \iota) d \tau d \iota, \\
    \end{aligned}
  \end{equation*}
  where Dirac delta \(\delta(t) = \left\{\begin{array}{lc}
    \infty, & t = 0   \\
    0,      & t \ne 0
  \end{array}\right.\). Let \(g(t - \tau, t - \iota) = \dfrac{1}{2} \dfrac{d h(t - \tau)}{d t} \delta(\tau - \iota)\). We have
  \begin{equation*}
    \begin{aligned}
      \left(\tvar{h} * \tvar{x}\right)(t)
       & = \iintinf g(t - \tau, t - \iota) x(\tau) x(\iota) d \tau d \iota \\
       & = \iintinf g(\tau, \iota) x(t - \tau) x(t - \iota) d \tau d \iota \\
       & = \left(\tvar{g} * \lcerfl{\tvar{x}, \tvar{x}}\right)(t).
    \end{aligned}
  \end{equation*}

During this, we are surprised to find that \(\tvar{g}\) is insensitive to time irrelevant additive noise.
Suppose \(\hat{h}(\tau) = h(\tau) + n(\tau)\), with \(\dfrac{d n(\tau)}{d \tau} = 0\), we have
\begin{equation*}
  g(t-\tau, t - \iota) = \dfrac{1}{2} \dfrac{d \hat{h}(t - \tau)}{d t} \delta(\tau - \iota) = \dfrac{1}{2}\dfrac{d h(t - \tau)}{d \tau} \delta(\tau - \iota).
\end{equation*}

\section{Proof for Combination Properties}
\label{appendix:proof-for-combination-properties}

In this appendix, only properties of discrete one-dimensional signals are proved, and this proof can be generalized to other situations.
To prevent indexing out of bound, zero padding is always considered.

\begin{proof}

  Property \ref{prop:conv-g-plus-x-y} and \ref{prop:conv-plus-g-h-x}: They can be proved by linearity of integration or summation.

  Property \ref{prop:conv-g-add-x-a}: Due to linear property of integration or summation, we can separate these two term as
  \begin{equation*}
    \tvar{G} * (\vsymb{x} + \alpha)
    = \tvar{G} * \vsymb{x} + \alpha \sum_{\vsymb{t}} G(\vsymb{t}).
  \end{equation*}

  Property \ref{prop:conv-g-square-plus-a-b}, \ref{prop:conv-g-power-n-plus-a-b} and \ref{prop:conv-g-power-n-plus-a-c-b-d}: We will prove Property \ref{prop:conv-g-power-n-plus-a-c-b-d}. Noticing that \(\tvar{G}\) is symmetric, swapping order of \(\tvar{x}_i\) does not change the result.
  \begin{equation*}
    \begin{aligned}
       & \left(\tvar{G} * (\tvar{x}_1 + \tvar{x}_2 + \cdots + \tvar{x}_m)^n\right)(t)                                                                   \\
       & = \sum_{\tau_1, \cdots, \tau_n} G(\tau_1, \cdots, \tau_n) \prod_{i=1}^{n} (x_1(t - \tau_i) + x_2(t - \tau_i) + \cdots + x_m(t - \tau_i))       \\
       & = \sum_{\tau_1, \cdots, \tau_n} \sum \binom{n}{n_1 n_2 \cdots n_m} G(\tau_1, \cdots, \tau_n)                                                   \\
       & \qquad\qquad x_1(t - \tau_1) x_1(t - \tau_2) \cdots x_1(t-\tau_{n_1})                                                                          \\
       & \qquad\qquad x_2(t - \tau_{1 + n_1}) x_2(t - \tau_{2 + n_1}) \cdots x_2(t-\tau_{n_2 + n_1})                                                    \\
       & \qquad\qquad \cdots                                                                                                                            \\
       & \qquad\qquad x_m(t - \tau_{1 + \sum_{k=1}^{m-1} n_k}) x_m(t - \tau_{2 + \sum_{k=1}^{m-1} n_k}) \cdots x_m(t-\tau_{n_m + \sum_{k=1}^{m-1} n_k}) \\
       & = \left(\sum \binom{n}{n_1 n_2 \cdots n_m} \tvar{G} * \lcerfl{\tvar{x}_1^{n_1}, \tvar{x}_2^{n_2}, \cdots, \tvar{x}_m^{n_m}}\right)(t),         \\
    \end{aligned}
  \end{equation*}
  where \(\binom{n}{n_1 n_2 \cdots n_m} = \dfrac{n!}{n_1! n_2! \cdots n_m!}\), the multinomial coefficient, and \(\sum_{i=1}^{m} n_i = m, n_i \ge 0\), for all \(i = 1, 2, \cdots, m\).
  If \(m=2\), this is Property \ref{prop:conv-g-power-n-plus-a-b}, and if \(m=2\) and \(n=2\), this is Property \ref{prop:conv-g-square-plus-a-b}.

  Property \ref{prop:conv-g-conv-h-x} and \ref{prop:conv-stride-s-g-conv-stride-z-h-x}: Only Property \ref{prop:conv-stride-s-g-conv-stride-z-h-x} is proved, and we can prove Property \ref{prop:conv-g-conv-h-x} by setting stride equal to one.
  \begin{equation*}
    \begin{aligned}
      \left(\tvar{G} *_s (\tvar{H} *_z \vsymb{x})\right)(t)
       & = \sum_{l} G(l) \sum_{\vsymb{\tau}} H(\vsymb{\tau}) \prod_{i=1}^{n} x_i( z (s t - l) - \tau_i)              \\
       & = \sum_{\vsymb{\tau}} \left( \sum_{l} G(l) H(\vsymb{\tau} - z l) \right) \prod_{i=1}^{n} x_i(sz t - \tau_i) \\
       & = \sum_{\vsymb{\tau}} (\tvar{G} \oconv_z \tvar{H})(\vsymb{\tau}) \prod_{i=1}^{n} x_i(s z t - \tau_i)        \\
       & = \left((\tvar{G} \oconv_z \tvar{H}) *_{sz} \vsymb{x}\right)(t).
    \end{aligned}
  \end{equation*}

  Property \ref{prop:conv-g-mul-conv-h1-x-conv-h2-y} and \ref{prop:conv-g-prod-n-conv-h-x}: 
  We will prove Property \ref{prop:conv-g-mul-conv-h1-x-conv-h2-y} here and this proof could be intuitively extended to Property \ref{prop:conv-g-prod-n-conv-h-x}.
  \begin{equation*}
    \begin{aligned}
       & \left(\tvar{G} * \lcerfl{\tvar{H}_1 * \vsymb{x}, \tvar{H}_2 * \vsymb{y}}\right)(t)                   \\
       & = \sum_{l_1,l_2} G(l_1, l_2)
      \left( \sum_{\vsymb{\tau}_1} H_1(\vsymb{\tau}_1) \prod_{i=1}^{n} x_i(t - \tau_{1,i} - l_1) \right)
      \left( \sum_{\vsymb{\tau}_2} H_2(\vsymb{\tau}_2) \prod_{j=1}^{m} y_j(t - \tau_{2,j} - l_2) \right)      \\
       & = \sum_{\vsymb{\tau}_1, \vsymb{\tau}_2} \left( \sum_{l_1, l_2}
      G(l_1, l_2) H_1(\vsymb{\tau}_1 - l_1) H_2(\vsymb{\tau}_2 - l_2)
      \right)
      \left(\prod_{i=1}^{n} x_i(t - \tau_{1,i})\right) \left(\prod_{j=1}^{m} y_j(t - \tau_{2,j})\right)       \\
       & = \sum_{\vsymb{\tau}_1, \vsymb{\tau}_2}
      (\tvar{G} \oconv \lcerfl{\tvar{H}_1, \tvar{H}_2})(\vsymb{\tau}_1, \vsymb{\tau}_2)
      \left(\prod_{i=1}^{n} x_i(t - \tau_{1,i})\right) \left(\prod_{j=1}^{m} y_j(t - \tau_{2,j})\right)       \\
       & = \left((\tvar{G} \oconv \lcerfl{\tvar{H}_1, \tvar{H}_2}) * \lcerfl{\vsymb{x}, \vsymb{y}}\right)(t).
    \end{aligned}
  \end{equation*}

  Property \ref{prop:oconv-g1-oconv-g2-g3}:
  \begin{equation*}
    \begin{aligned}
      \left(\tvar{G}_1 \oconv (\tvar{G}_2 \oconv \tvar{G}_3)\right)(\vsymb{\tau})
       & = \sum_{k} G_1(k) \left( \sum_{l} G_2(l) G_3(\vsymb{\tau} - k - l) \right)     \\
       & = \sum_{l} \left( \sum_{k} G_1(k) G_2(l - k) \right) G_3(\vsymb{\tau} - l)     \\
       & = \left((\tvar{G}_1 \oconv \tvar{G}_2) \oconv \tvar{G}_3\right)(\vsymb{\tau}).
    \end{aligned}
  \end{equation*}

  Property \ref{prop:conv-g-conv-alpha-conv-h-x}, \ref{prop:conv-g-conv-h-conv-alpha-x}, and Property \ref{prop:conv-g-conv-conv-h1-x-alpha-conv-h2-y}: 
  We will only prove Property \ref{prop:conv-g-conv-conv-h1-x-alpha-conv-h2-y} here, and we can prove Property \ref*{prop:conv-g-conv-alpha-conv-h-x} and \ref{prop:conv-g-conv-h-conv-alpha-x} in the same way.
    \begin{equation*}
      \begin{aligned}
         & \left(\tvar{G} * \lcerfl{\tvar{H}_1 * \vsymb{x}, \alpha, \tvar{H}_2 * \vsymb{y}}\right)(t)                                                                  \\
         & = \alpha \sum_{l_1,l_2,l_3} G(l_1, l_2, l_3)
        \left( \sum_{\vsymb{\tau}_1} H_1(\vsymb{\tau}_1) \prod_{i=1} x_i(t - \tau_{1,i} - l_1) \right)
        \left( \sum_{\vsymb{\tau}_3} H_2(\vsymb{\tau}_3) \prod_{i=1} y_i(t - \tau_{3,i} - l_3) \right)                                                                 \\
         & = \alpha \sum_{\vsymb{\tau}_1, \vsymb{\tau}_2}
        \!\!\left(\! \sum_{l_1,l_2,l_3} G(l_1, l_2, l_3) H_1(\vsymb{\tau}_1 - l_1) H_2(\vsymb{\tau}_3 - l_3) \!\right)
        \!\!\left(\! \prod_{i=1} x_i(t - \tau_{1,i} - l_1) \!\right)
        \!\!\left(\! \prod_{i=1} y_i(t - \tau_{3,i} - l_3) \!\right)                                                                                                   \\
         & = \left(\left(\sum_{\dsmark \alpha} \left(\tvar{G} \oconv \lcerfl{\tvar{H}_1, \alpha, \tvar{H}_2}\right) \right) * \lcerfl{\vsymb{x}, \vsymb{y}}\right)(t). \\
      \end{aligned}
    \end{equation*}

  Property \ref{prop:conv-h-element-power-n-x}:
  \begin{equation*}
    \begin{aligned}
      \left( \tvar{h} * [\tvar{x}]^n \right)(t)
       & = \sum_{l} h(l) \left( x(t - l) \right)^n                                                                    \\
       & = \sum_{\tau_1, \tau_2, \cdots, \tau_n} h(l) \prod_{i=1}^{n} \left( \delta(\tau_i - l) x(t - \tau_i) \right) \\
       & = \sum_{\tau_1, \tau_2, \cdots, \tau_n}
      \left( \diag(n, \tvar{h}) \right)(\tau_1, \tau_2, \cdots, \tau_n)
      \prod_{i=1}^{n} x(t - \tau_i)                                                                                   \\
       & = \left(\diag(n, \tvar{h}) * \tvar{x}^n\right)(t).
    \end{aligned}
  \end{equation*}

  Property \ref{prop:conv-g-power-conv-h-x}:
  \begin{equation*}
    \tvar{g} * [\tvar{h} * \tvar{x}]^n
    = \diag(n, \tvar{g}) * (\tvar{h} * \tvar{x})^n
    = (\diag(n, \tvar{g}) \oconv \tvar{h}^n) * \tvar{x}^n.
  \end{equation*}
\end{proof}

\section{Perturbations in Volterra Convolution}
\label{appendix:perturbations-in-vconv}

During practice, it has been found that some specific perturbations have the potential to make neural networks behave abnormally \citep{Szegedy2013,Goodfellow2014}.
An observation of how perturbations influence the proxy kernels will be theoretically explored in this appendix.
The neural network is extremely complex, our analysis limits on its impacts on the proxy kernels.

\subsection{Perturbation Upper Bound}
\label{subsec:perturbation-upper-bound}

In this subsection, we express the upper bound for perturbation of ideal Volterra convolution, showing that how the perturbation  change the output.

\begin{theorem}
  Assume input signal is \(\tvar{x}\), and the perturbation is \(\epsilon\), the approximated neural network is \(f(\tvar{x}) = \sum_{n=0}^{N} \tvar{H}_n * \tvar{x}^n\), we have
  \begin{equation}
    \|f(\tvar{x} + \epsilon) - f(\tvar{x})\|_2
    \le \min\left(\begin{aligned}
         & \sum_{n=0}^{N} \| \tvar{H}_n \|_2 \sum_{k=0}^{n-1} \left(\dfrac{e n}{k}\right)^k \|\tvar{x}\|_1^k \| \epsilon \|_1^{n-k},                 \\
         & \sum_{n=0}^{N} \| \tvar{H}_n \|_1 \sum_{k=0}^{n-1} \left(\dfrac{e n}{k}\right)^k \|\tvar{x}\|_{2k}^k \| \tvar{\epsilon} \|_{2(n-k)}^{n-k}
      \end{aligned}\right),
  \end{equation}
  where \(e = 2.718281828\cdots\), the base of the natural logarithm.
  \label{thm:vconv-perturbation-upper-bound}
\end{theorem}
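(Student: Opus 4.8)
The plan is to expand $f(\tvar{x}+\epsilon)-f(\tvar{x})$ order by order, strip off the unperturbed contribution with the binomial property, apply the triangle inequality, and then bound each surviving order-$n$ term in two complementary ways — one producing $\|\tvar{H}_n\|_2$ with $\ell^1$ factors, the other $\|\tvar{H}_n\|_1$ with $\ell^{2k}$ and $\ell^{2(n-k)}$ factors — so that keeping the smaller of the two yields the stated $\min(\cdot,\cdot)$. Since multidimensional signals reduce to one-dimensional ones via the flatten-operator (Proposition~\ref{proposition:flatten-operator-is-homorphic}), I would carry out the argument in one dimension.

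First I would invoke Property~\ref{prop:conv-g-power-n-plus-a-b} with the signals $\tvar{x}$ and $\epsilon$:
\begin{equation*}
  \tvar{H}_n * (\tvar{x}+\epsilon)^n - \tvar{H}_n * \tvar{x}^n
  = \sum_{k=0}^{n-1} \binom{n}{k}\, \tvar{H}_n * \lcerfl{\tvar{x}^k, \epsilon^{n-k}},
\end{equation*}
the $k=n$ term being exactly $\tvar{H}_n * \tvar{x}^n$, which cancels. Summing over $n=0,\dots,N$ and using the triangle inequality for $\|\cdot\|_2$,
\begin{equation*}
  \|f(\tvar{x}+\epsilon) - f(\tvar{x})\|_2
  \le \sum_{n=0}^{N} \sum_{k=0}^{n-1} \binom{n}{k}\, \left\| \tvar{H}_n * \lcerfl{\tvar{x}^k, \epsilon^{n-k}} \right\|_2 ,
\end{equation*}
and then estimate $\binom{n}{k}\le n^k/k! \le (en/k)^k$ (using $k!\ge (k/e)^k$), which is precisely the combinatorial weight appearing in the theorem.

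It remains to establish two multilinear norm estimates for a generic order-$n$ term $\tvar{y}=\tvar{H}*\lcerfl{\tvar{x}_1,\dots,\tvar{x}_n}$, where $k$ of the $\tvar{x}_i$ are copies of $\tvar{x}$ and $n-k$ are copies of $\epsilon$. For the first, I would substitute $u_i=t-\tau_i$ to get $y(t)=\sum_{u_1,\dots,u_n}\left(\prod_i x_i(u_i)\right) H(t-u_1,\dots,t-u_n)$ and apply Minkowski's inequality in $t$; since distinct $t$ pick out distinct entries of $\tvar{H}$, the function $t\mapsto H(t-u_1,\dots,t-u_n)$ has $\ell^2$ norm at most $\|\tvar{H}\|_2$, so $\|\tvar{y}\|_2 \le \|\tvar{H}\|_2\prod_i\|x_i\|_1 = \|\tvar{H}\|_2\|\tvar{x}\|_1^{k}\|\epsilon\|_1^{n-k}$. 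For the second, I would bound $|y(t)|^2$ by Cauchy--Schwarz with weights $|H(\tau_1,\dots,\tau_n)|$, sum over $t$, interchange the two summations, and apply Hölder (all $n$ exponents equal to $n$) to $\sum_t\prod_i|x_i(t-\tau_i)|^2$ together with shift-invariance of the $\ell^p$ norms; this gives $\|\tvar{y}\|_2 \le \|\tvar{H}\|_1\|\tvar{x}\|_{2n}^{k}\|\epsilon\|_{2n}^{n-k}$, and since $\|\cdot\|_p$ is nonincreasing in $p$ on $\ell^p$ with $2n\ge 2k$ and $2n\ge 2(n-k)$, this is at most $\|\tvar{H}\|_1\|\tvar{x}\|_{2k}^{k}\|\epsilon\|_{2(n-k)}^{n-k}$. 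Substituting either estimate into the double sum and taking the minimum finishes the proof.

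I expect the main obstacle to be the two multilinear inequalities above: the bookkeeping of the $n$ shift indices coupled through the single output index $t$ must be done carefully, and in the continuous-time setting the ``distinct entries'' step and the monotonicity-in-$p$ step must be replaced by appeals to the boundedness, Lipschitz, and absolute-integrability hypotheses fixed at the start of Section~\ref{sec:extension-of-convolutions}. The rest — the binomial expansion, the cancellation of the $k=n$ term, and the $(en/k)^k$ bound — is routine.
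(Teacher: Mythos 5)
Your proposal is correct, and its skeleton coincides with the paper's: expand each order by Property \ref{prop:conv-g-power-n-plus-a-b} so the \(k=n\) term cancels \(\tvar{H}_n*\tvar{x}^n\), apply the triangle inequality, bound \(\binom{n}{k}\le(en/k)^k\), and control each mixed term \(\tvar{H}_n*\lcerfl{\tvar{x}^k,\epsilon^{n-k}}\) by a minimum of two norm products (this is exactly the paper's Lemma \ref{lemma:inequality-order-n-convolution-perturbation} built on Lemma \ref{lemma:inequality-l2-conv-g-x-y}). Where you diverge is in how that key multilinear estimate is obtained. The paper rewrites the order-\(n\) convolution as the \(n\)-dimensional convolution of \(\tvar{H}_n\) with the separable tensor \(Z(t_1,\dots,t_n)=\prod_i x_i(t_i)\), evaluated on the diagonal, and invokes Young's inequality (Theorem \ref{thm:young-inequality-convolutions}, Corollary \ref{corollary:yong-inequality-simplified-2-2-1}) with \(\|\tvar{Z}\|_1\) and \(\|\tvar{Z}\|_2\); you instead prove the \(\|\tvar{H}_n\|_2\)-branch directly by Minkowski's inequality (the shifted-diagonal slice of \(\tvar{H}_n\) having \(\ell^2\) norm at most \(\|\tvar{H}_n\|_2\)) and the \(\|\tvar{H}_n\|_1\)-branch by Cauchy--Schwarz in the kernel weights followed by generalized H\"older with exponents all equal to \(n\) and the monotonicity \(\|\cdot\|_{2n}\le\|\cdot\|_{2k}\), \(\|\cdot\|_{2n}\le\|\cdot\|_{2(n-k)}\). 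Your route is slightly longer but has a concrete advantage: it yields the factor \(\|\tvar{x}\|_{2k}^k\|\epsilon\|_{2(n-k)}^{n-k}\) exactly as stated, whereas the paper's computation of \(\|\tvar{Z}\|_2\) factorizes \(\sum_{t_1,\dots,t_n}\prod_i x_i^2(t_i)\) as if it were \(\sum_t x^{2k}(t)\) rather than \(\bigl(\sum_t x^2(t)\bigr)^k\); the Young-based argument as written therefore only delivers \(\|\tvar{x}\|_2^k\|\epsilon\|_2^{n-k}\), and your H\"older-plus-monotonicity step is what actually substantiates the sharper constants appearing in the theorem. Both arguments are genuinely discrete-time (the diagonal evaluation, respectively your distinct-entries step, needs separate justification in continuous time), which you correctly flag, as the paper does not.
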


\begin{proof}
  See appendix \ref{subsec:proof-for-theorem-nn-is-robust-and-fragile}.
\end{proof}

The following is an example of effect of perturbations on each convolution from order one to eight.
We randomly generate \(\tvar{x} \in \mathbb{M}^{32}\) (\(\mathbb{M}\) is defined in Equation \ref{equ:define-the-set-of-gaussian-l2-norm-set-to-1}).
Two perturbations are produced to simulate those two different cases.
To make the plot more clear, we add a considerably large number \(3.0\) or a considerably small value \(0.5\) to the middle point of \(\tvar{x}\).
For each perturbation and each convolution with order form one to eight, we compute
  \(\|\tvar{H}_n * (\tvar{x} + \epsilon)^n - \tvar{H}_{n} * \tvar{x}^n\|_2\) one thousand times with random \(\tvar{H}_n \in \mathbb{M}^{5 \times \cdots \times 5}\). The boxplot is shown in Figure \ref{fig:statistic-l2norm-diff-conv-h-power-n-add-x-epsilon-conv-h-power-x}.

\begin{figure}[htb]
  \centering
  \subfloat[add \(3.0\)]{
    \centering
    \includegraphics[width=0.42\linewidth]{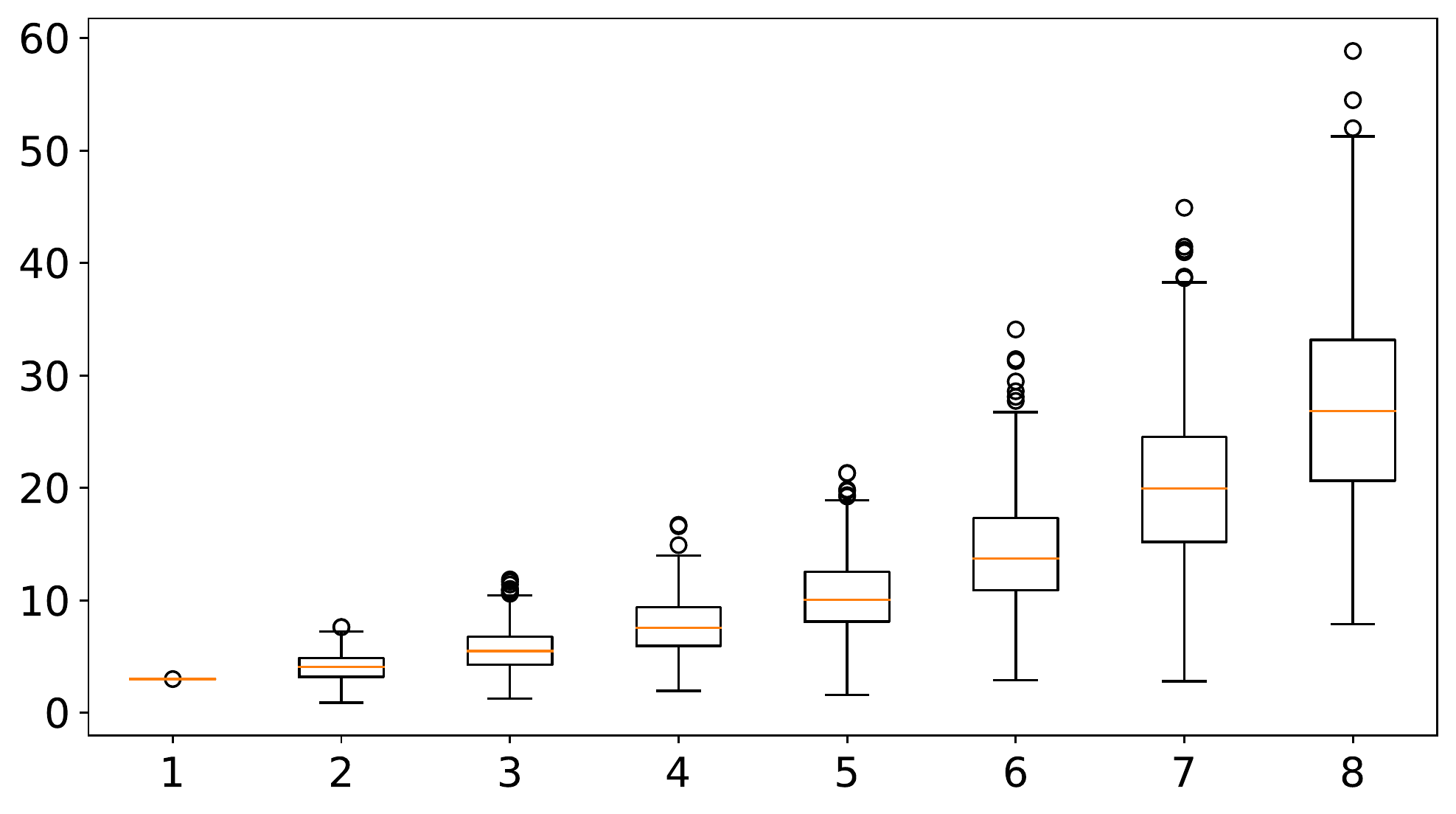}
    \label{fig:perturbation-boxplot-l2-norm-eps-3}
  }
  \subfloat[add \(0.5\)]{
    \centering
    \includegraphics[width=0.42\linewidth]{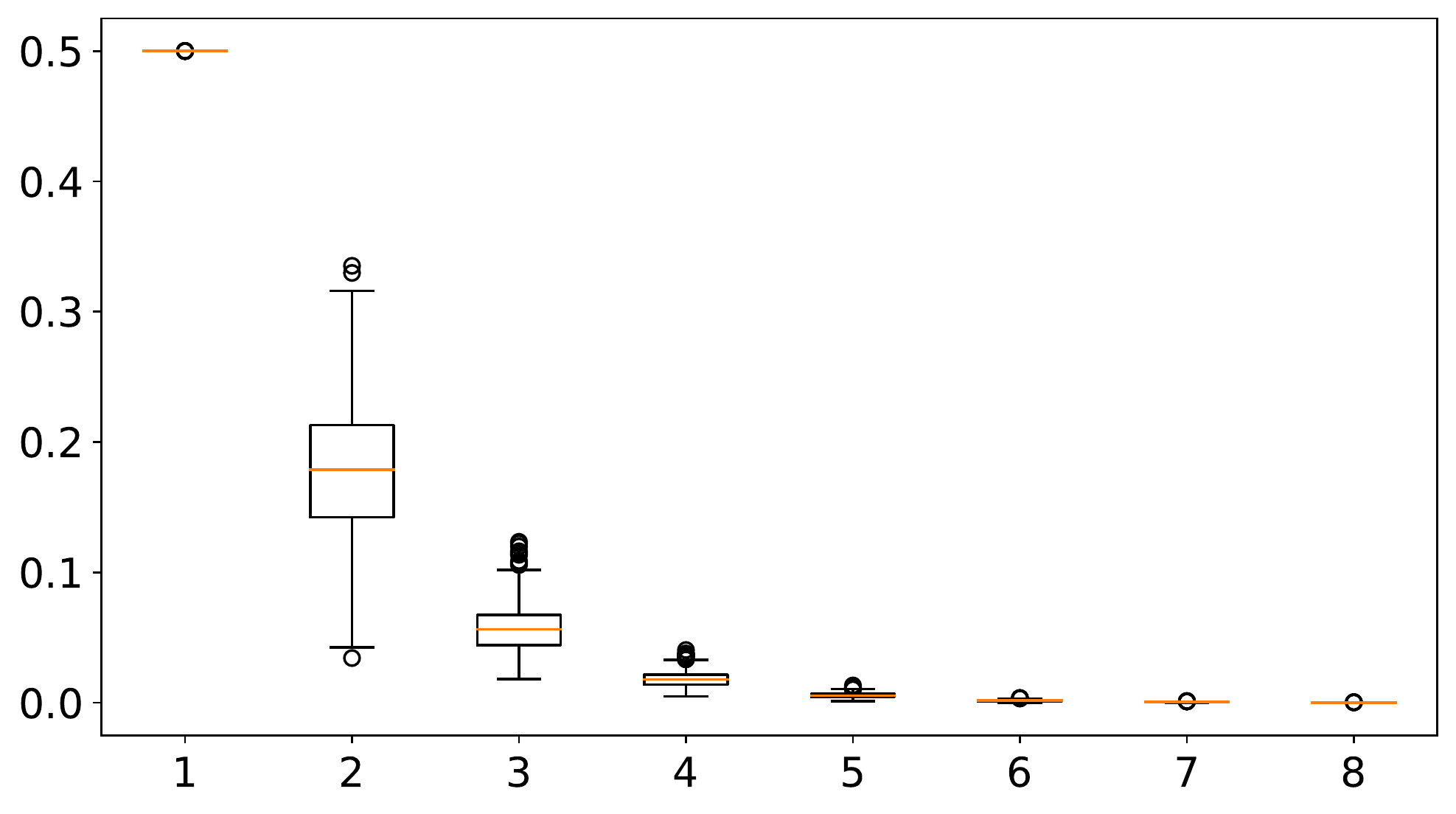}
    \label{fig:perturbation-boxplot-l2-norm-eps-0.5}
  }
  \caption{Boxplot of \(\|\tvar{H}_n * (\tvar{x} + \epsilon)^n - \tvar{H}_{n} * \tvar{x}^n\|_2, n = 1,2,\cdots,8\). The horizontal axis indicates the order and the vertical axis indicates the error.}
  \label{fig:statistic-l2norm-diff-conv-h-power-n-add-x-epsilon-conv-h-power-x}
\end{figure}

Figure \ref{fig:statistic-l2norm-diff-conv-h-power-n-add-x-epsilon-conv-h-power-x} shows that both upper bound and lower bound is reasonable. It seems that the impact of perturbation will cause exponential blowup or decay.
Nevertheless, this result is obtained under the context of ideal Volterra convolution.
In reality, the situation will be more complicated, we need to consider the kernels, input signals, and so on.

If the impact decays, this means that the related network are robust to such perturbation, which is exactly what we want.

As for the exponential blowup, we need to consider the network before the approximation.
If the network is Lipschitz in some domains, the approximated Volterra convolution must be Lipschitz in the same domains.
The upper bound of change of output is bounded by the Lipschitz constant, and the bound is not exponential.
Otherwise, if the network is non-Lipschitz, but the output is bounded, this impact is also bounded.
If the output of the network is not bounded, something wrong must have happened to this network. This is because all operations in a neural network is bounded, i.e., the convolutions are bounded if both kernel and signal are bounded, and the normalization operations are also bounded if not divide by zero.
In conclusion, the impact of perturbation will increase but won't exponential blowup in practice.

\subsection{Perturbation on Order-n Volterra Convolution}
\label{subsec:perturbation-on-order-n-volterra-convolution}

The following is an example of how perturbations affect the order-\(n\) Volterra convolution. Due to the computation complexity, \(n\) is set to be eight,
\begin{equation*}
  f(\tvar{x}) = \sum_{i=1}^{8} \tvar{H}_i * \tvar{x}^i, \tvar{H}_0 = 0.
\end{equation*}

Kernels are randomly generated as \(
\tvar{H}_1 \in \mathbb{M}^{5},
\tvar{H}_2 \in \mathbb{M}^{5 \times 5},
\cdots
\tvar{H}_8 \in \mathbb{M}^{5 \times \cdots \times 5}\) (\(\mathbb{M}\) is defined in Equation \ref{equ:define-the-set-of-gaussian-l2-norm-set-to-1}).
Input signal is selected as a sin function \(\tvar{x} = \sin(t), 0 \le t \le 8 \pi\).
Pick thirty points from \(\tvar{x}\) and plus \(0.2\) as perturbation \(\tvar{x} + \epsilon\). The perturbations are denser in the head and sparser in the tail.
Result is illustrated in Figure \ref{fig:example-for-perturbation-of-o8-volterra-convolution}.

\begin{figure}[htb]
  \centering
  \includegraphics[width=0.98\linewidth]{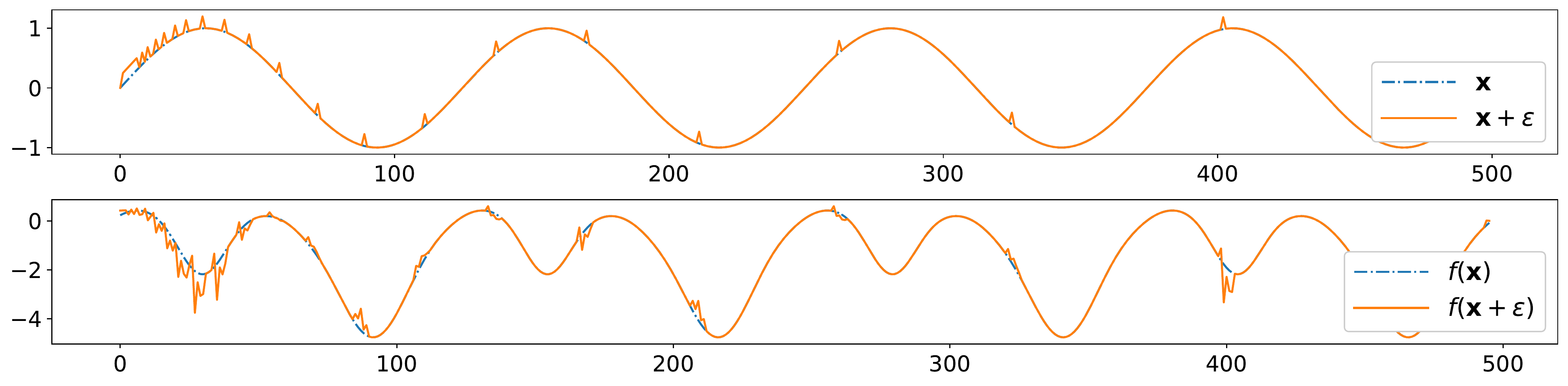}
  \caption{Example for perturbation of order-eight Volterra convolution.}
  \label{fig:example-for-perturbation-of-o8-volterra-convolution}
\end{figure}

This result is influenced by both network parameters and perturbations. Different parameters will cause different results. Some are sensitive to perturbations while others are not.

As can be seen in Figure \ref{fig:example-for-perturbation-of-o8-volterra-convolution}, points in the head are more influenced by perturbations, while points in the tail are not. This is because perturbations are denser in the head than in the tail, and short time energy is larger than that in the tail.
It can also be seen that some special points, such as peaks or valleys, are more affected by perturbation, while other points are less affected.
Not all perturbations are effective. The effective perturbations are both determined by both kernels and input signals.

\subsection{Proof for Theorem \ref{thm:vconv-perturbation-upper-bound}}
\label{subsec:proof-for-theorem-nn-is-robust-and-fragile}

In the following, we will prove Theorem \ref{thm:vconv-perturbation-upper-bound}.
For proving this Lemma, we will first introduce Young's Inequality (Theorem \ref{thm:young-inequality-convolutions}), Corollary \ref{corollary:yong-inequality-simplified-2-2-1}, Lemma \ref{lemma:inequality-l2-conv-g-x-y} and Lemma \ref{lemma:inequality-order-n-convolution-perturbation}.

\begin{theorem}[Young's Inequality for Convolutions \citep{Yong1912}]
  Let \(p,q,r \in \mathbb{R}\), \(p,q,r \ge 1\) and \(1 + \dfrac{1}{r} = \dfrac{1}{p} + \dfrac{1}{q}\).
  For signals \(\tvar{h}\) and \(\tvar{x}\), following inequality is satisfied:
  \begin{equation}
    \| \tvar{h} * \tvar{x} \|_{r} \le \| \tvar{h} \|_{p} \| \tvar{x} \|_{q}.
  \end{equation}
  \label{thm:young-inequality-convolutions}
\end{theorem}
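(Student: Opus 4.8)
The plan is to prove Young's inequality by reducing it to a single, carefully chosen application of the three-factor H\"older inequality, followed by Tonelli's theorem. First I would observe that it suffices to treat nonnegative signals: since $|(\tvar{h}*\tvar{x})(t)| \le (|\tvar{h}| * |\tvar{x}|)(t)$ pointwise, replacing $\tvar{h},\tvar{x}$ by $|\tvar{h}|,|\tvar{x}|$ only increases the left-hand side while leaving $\|\tvar{h}\|_p\|\tvar{x}\|_q$ unchanged. I would then extract from the hypothesis $1 + 1/r = 1/p + 1/q$ the two auxiliary exponents $a,b$ given by $1/a = 1 - 1/q$ and $1/b = 1 - 1/p$; these are nonnegative precisely because $p,q \ge 1$, they satisfy $1/r + 1/a + 1/b = 1$ as a one-line consequence of the hypothesis, and they obey $p/a = 1 - p/r$ and $q/b = 1 - q/r$.

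The crux of the argument is the factorization of the convolution integrand into three pieces matched to these exponents:
\[
  h(\tau)\, x(t-\tau) = \bigl(h(\tau)^p\, x(t-\tau)^q\bigr)^{1/r}\; h(\tau)^{p/a}\; x(t-\tau)^{q/b}.
\]
Checking the total exponent of $h$, namely $p/r + p/a = p/r + (1 - p/r) = 1$, and symmetrically that of $x$, confirms this is an identity. Applying the three-function H\"older inequality in the variable $\tau$ with exponents $r,a,b$ then gives the pointwise bound
\[
  (\tvar{h}*\tvar{x})(t) \le \Bigl(\intinf h(\tau)^p\, x(t-\tau)^q\, d\tau\Bigr)^{1/r}\, \|\tvar{h}\|_p^{\,p/a}\, \|\tvar{x}\|_q^{\,q/b},
\]
where the last two factors are independent of $t$.

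I would finish by raising this inequality to the $r$-th power, integrating over $t$, and pulling out the $t$-independent factors; the remaining double integral $\iintinf h(\tau)^p\, x(t-\tau)^q\, d\tau\, dt$ separates by Tonelli's theorem, and the substitution $t \mapsto t - \tau$ in the inner integral yields exactly $\|\tvar{h}\|_p^p\,\|\tvar{x}\|_q^q$. Collecting exponents via $r\cdot(p/a) = r - p$ and $r\cdot(q/b) = r - q$ gives $\|\tvar{h}*\tvar{x}\|_r^r \le \|\tvar{h}\|_p^r\,\|\tvar{x}\|_q^r$, and taking $r$-th roots completes the proof. The same computation works verbatim with sums replacing integrals, covering the discrete signals used throughout the paper.

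The main obstacle is purely the exponent bookkeeping: arranging the three H\"older exponents and the split of powers so that everything cancels, which is exactly where the constraint $1 + 1/r = 1/p + 1/q$ is consumed. The only genuine subtlety lies in the degenerate boundary cases, which I would dispatch separately so the clean computation above can be stated for $1 < p,q,r < \infty$: when $r = \infty$ (forcing $p,q$ to be conjugate) the claim reduces to ordinary H\"older applied to $\intinf h(\tau)\, x(t-\tau)\, d\tau$, and when $p = 1$ or $q = 1$ one of $a,b$ becomes infinite, so the corresponding factor is replaced by an essential supremum and the three-factor H\"older degenerates to its two-factor form.
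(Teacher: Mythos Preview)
Your argument is the standard textbook proof of Young's inequality via the three-factor H\"older inequality and Tonelli, and the exponent bookkeeping checks out; the degenerate cases are also handled correctly. However, the paper does not actually prove this theorem: it is stated as a classical result with a citation to \citep{Yong1912} and then immediately specialized in a corollary, so there is no proof in the paper to compare against. In that sense your proposal supplies strictly more than the paper does.
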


\begin{corollary}
  Let \(r = p = 2, q = 1\). Theorem \ref{thm:young-inequality-convolutions} is simplified as
  \begin{equation}
    \| \tvar{h} * \tvar{x} \|_{2} \le \min\left(
    \| \tvar{h} \|_{2} \| \tvar{x} \|_{1},
    \| \tvar{h} \|_{1} \| \tvar{x} \|_{2}
    \right).
  \end{equation}
  \label{corollary:yong-inequality-simplified-2-2-1}
\end{corollary}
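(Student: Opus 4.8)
The plan is to invoke Young's Inequality for Convolutions (Theorem~\ref{thm:young-inequality-convolutions}) twice, with two different admissible exponent triples $(p,q,r)$, and then to combine the two resulting estimates by taking a minimum.

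First I would specialize the exponents to $r=2$, $p=2$, $q=1$. All three are $\ge 1$, and the compatibility condition of Theorem~\ref{thm:young-inequality-convolutions} is satisfied because $1+\tfrac{1}{2}=\tfrac{1}{2}+\tfrac{1}{1}$, i.e. $\tfrac{3}{2}=\tfrac{3}{2}$. Hence the theorem yields $\|\tvar{h}*\tvar{x}\|_{2}\le\|\tvar{h}\|_{2}\|\tvar{x}\|_{1}$. Then I would repeat the argument with $r=2$, $p=1$, $q=2$ — equivalently, using the commutativity $\tvar{h}*\tvar{x}=\tvar{x}*\tvar{h}$ and relabelling the roles of $\tvar{h}$ and $\tvar{x}$ — where the hypotheses hold for the same reason, obtaining $\|\tvar{h}*\tvar{x}\|_{2}\le\|\tvar{h}\|_{1}\|\tvar{x}\|_{2}$.

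Since both inequalities bound the single quantity $\|\tvar{h}*\tvar{x}\|_{2}$, it is bounded by the smaller of the two right-hand sides, which is exactly the stated claim. There is no genuine obstacle here: the only point requiring attention is checking that each chosen triple $(p,q,r)$ meets the hypotheses $p,q,r\ge 1$ and $1+1/r=1/p+1/q$ of Theorem~\ref{thm:young-inequality-convolutions}, and both checks are immediate arithmetic. I would remark that this corollary is the workhorse for the subsequent estimates (Lemma~\ref{lemma:inequality-l2-conv-g-x-y} onward): it is what permits order-$n$ convolutions to be bounded factor by factor in mixed $L^{1}$/$L^{2}$ norms, and ultimately produces the two alternatives inside the minimum in Theorem~\ref{thm:vconv-perturbation-upper-bound}.
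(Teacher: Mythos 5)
Your proposal is correct and follows essentially the same route the paper intends: apply Theorem~\ref{thm:young-inequality-convolutions} with $(r,p,q)=(2,2,1)$ and again with the roles of $\tvar{h}$ and $\tvar{x}$ exchanged (equivalently $(r,p,q)=(2,1,2)$, using commutativity of convolution), then take the minimum of the two bounds. The exponent checks $1+\tfrac{1}{2}=\tfrac{1}{2}+1$ are exactly the verification required, so nothing is missing.
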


\begin{lemma}
  Suppose that \(\tvar{H}_n\) is the kernel, and \(\tvar{x}, \tvar{y}\) are two signals. We have
    \begin{equation*}
      \left\|\tvar{H}_n * \lcerfl{\tvar{x}^k, \tvar{y}^{n-k}} \right\|_2
      \le \min\left(\begin{aligned}
           & \| \tvar{H}_n \|_2 \| \tvar{x} \|_1^n \| \tvar{y} \|_1^{n-k},          \\
           & \| \tvar{H}_n \|_1 \| \tvar{x} \|_{2k}^k \| \tvar{y} \|_{2(n-k)}^{n-k} \\
        \end{aligned}\right),
      ~~~ k = 0, 1, \cdots, n.
    \end{equation*}
  
  \label{lemma:inequality-l2-conv-g-x-y}
\end{lemma}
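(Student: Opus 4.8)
The plan is to prove both bounds directly from the definition of the order-\(n\) convolution (Equation \ref{equ:def-discrete-order-n-convolution-md}), working out the discrete case; the continuous case is identical after replacing sums by integrals. Write \(g = \tvar{H}_n * \lcerfl{\tvar{x}^k, \tvar{y}^{n-k}}\) and abbreviate \(\Psi(t, \vsymb{\tau}) = \prod_{i=1}^{k} |x(t - \tau_i)| \prod_{j=k+1}^{n} |y(t - \tau_j)| \ge 0\), so that \(|g(t)| \le \sum_{\vsymb{\tau}} |H_n(\vsymb{\tau})| \, \Psi(t, \vsymb{\tau})\). Besides Young's inequality (Theorem \ref{thm:young-inequality-convolutions} and Corollary \ref{corollary:yong-inequality-simplified-2-2-1}), the only recurring elementary facts I will need are the generalized Hölder inequality \(\sum_t \prod_{\ell=1}^{n} f_\ell(t) \le \prod_{\ell=1}^{n} \big(\sum_t |f_\ell(t)|^{n}\big)^{1/n}\) and the monotonicity \(\|\tvar{x}\|_{p} \ge \|\tvar{x}\|_{q}\) for \(1 \le p \le q\). (For the degenerate cases \(k = 0\) or \(k = n\) one empty product appears; with the convention \(\|\cdot\|_0^0 = 1\) both bounds collapse to the one-signal situation and the argument below still applies verbatim. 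Also, for dimensional consistency the exponent of \(\|\tvar{x}\|_1\) in the first bound should be read as \(k\), since \(\tvar{x}\) does not even appear on the left-hand side when \(k = 0\).)

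For the bound involving \(\|\tvar{H}_n\|_2\), I would split \(\Psi = \Psi^{1/2} \cdot \Psi^{1/2}\) and apply Cauchy–Schwarz in \(\vsymb{\tau}\) for each fixed \(t\):
\begin{equation*}
  |g(t)|^2 \le \left( \sum_{\vsymb{\tau}} |H_n(\vsymb{\tau})|^2 \Psi(t,\vsymb{\tau}) \right) \left( \sum_{\vsymb{\tau}} \Psi(t, \vsymb{\tau}) \right) .
\end{equation*}
The second factor factorizes coordinate by coordinate and equals \(\|\tvar{x}\|_1^{k} \|\tvar{y}\|_1^{n-k}\), independently of \(t\). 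Summing over \(t\), exchanging the order of summation, and bounding \(\sum_t \Psi(t,\vsymb{\tau}) \le \|\tvar{x}\|_n^k \|\tvar{y}\|_n^{n-k}\) by generalized Hölder (with all exponents equal to \(n\); note this is independent of \(\vsymb{\tau}\)) yields \(\|g\|_2^2 \le \|\tvar{x}\|_1^k \|\tvar{y}\|_1^{n-k}\, \|\tvar{H}_n\|_2^2\, \|\tvar{x}\|_n^k \|\tvar{y}\|_n^{n-k}\); the monotonicity \(\|\cdot\|_n \le \|\cdot\|_1\) then gives \(\|g\|_2 \le \|\tvar{H}_n\|_2 \|\tvar{x}\|_1^k \|\tvar{y}\|_1^{n-k}\).

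For the bound involving \(\|\tvar{H}_n\|_1\), I would instead apply Minkowski's inequality (the triangle inequality for \(\|\cdot\|_2\)) to the sum over \(\vsymb{\tau}\): \(\|g\|_2 \le \sum_{\vsymb{\tau}} |H_n(\vsymb{\tau})| \, \|\Psi(\cdot, \vsymb{\tau})\|_2\). Applying generalized Hölder to \(\|\Psi(\cdot,\vsymb{\tau})\|_2^2 = \sum_t \prod_{i=1}^{k}|x(t-\tau_i)|^2 \prod_{j=k+1}^{n}|y(t-\tau_j)|^2\) (again all exponents equal to \(n\)) bounds it by \(\|\tvar{x}\|_{2n}^{2k}\|\tvar{y}\|_{2n}^{2(n-k)}\), independently of \(\vsymb{\tau}\), so \(\|g\|_2 \le \|\tvar{H}_n\|_1 \|\tvar{x}\|_{2n}^k \|\tvar{y}\|_{2n}^{n-k}\); since \(2n \ge 2k\) and \(2n \ge 2(n-k)\), monotonicity of \(\ell^p\) norms upgrades this to the stated \(\|\tvar{H}_n\|_1 \|\tvar{x}\|_{2k}^k \|\tvar{y}\|_{2(n-k)}^{n-k}\). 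Taking the minimum of the two bounds completes the proof.

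The step I expect to be the main obstacle is the first bound: a naive Cauchy–Schwarz over \(\vsymb{\tau}\) applied to \(|g(t)| \le \sum_{\vsymb{\tau}}|H_n(\vsymb{\tau})| \Psi(t,\vsymb{\tau})\) would pair \(\|\tvar{H}_n\|_2\) with \(\big(\sum_{\vsymb{\tau}} \Psi(t,\vsymb{\tau})^2\big)^{1/2}\), and after summing over \(t\) this produces a factor that is essentially the length of the signal support and hence diverges. Inserting the weight \(\Psi^{1/2}\) is precisely what turns that divergent factor into the finite, translation-invariant quantities \(\|\tvar{x}\|_1^k\|\tvar{y}\|_1^{n-k}\) and (after the sum over \(t\)) \(\|\tvar{x}\|_n^k\|\tvar{y}\|_n^{n-k}\). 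Getting this weighting right, and then noticing that the exact exponents in the statement follow from the monotonicity of \(\ell^p\) norms rather than being attained directly, are the only non-routine points; everything else is bookkeeping.
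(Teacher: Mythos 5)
Your proof is correct (modulo the typo you rightly flag: the exponent on \(\|\tvar{x}\|_1\) in the first bound should be \(k\), which is what both your argument and the paper's deliver and what Theorem \ref{thm:vconv-perturbation-upper-bound} uses), but it takes a genuinely different route from the paper's. The paper forms the tensor \(Z(t_1,\dots,t_n)=\prod_{i\le k}x(t_i)\prod_{i>k}y(t_i)\), observes that \(\tvar{H}_n*\lcerfl{\tvar{x}^k,\tvar{y}^{n-k}}\) is the diagonal restriction of the full \(n\)-dimensional convolution \(\tvar{H}_n*\tvar{Z}\), applies Young's inequality (Corollary \ref{corollary:yong-inequality-simplified-2-2-1}) once, and then computes \(\|\tvar{Z}\|_1=\|\tvar{x}\|_1^k\|\tvar{y}\|_1^{n-k}\) and \(\|\tvar{Z}\|_2=\|\tvar{x}\|_{2k}^k\|\tvar{y}\|_{2(n-k)}^{n-k}\) exactly by factorization, so the stated norms appear with no slack and no auxiliary inequalities. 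You instead estimate the defining sum directly: a weighted Cauchy--Schwarz (Schur-test style) for the \(\|\tvar{H}_n\|_2\) bound, and Minkowski plus generalized H\"older for the \(\|\tvar{H}_n\|_1\) bound, finishing with \(\ell^p\)-monotonicity. What your route buys: you never leave the one-dimensional output, so you avoid the step--implicit and discrete-only in the paper--that the \(L_2\) norm of the diagonal restriction is dominated by the \(L_2\) norm of the full \(n\)-dimensional array; and your second estimate is in fact slightly sharper (\(\|\tvar{x}\|_{2n}^k\|\tvar{y}\|_{2n}^{n-k}\)) before you relax it to the stated form. What it costs: you need generalized H\"older and the monotonicity \(\|\cdot\|_q\le\|\cdot\|_p\) for \(p\le q\), which hold for sequences but not for general continuous-time signals, so your side remark that the continuous case is identical does not go through verbatim, whereas the paper's exact factorization of \(\|\tvar{Z}\|_1,\|\tvar{Z}\|_2\) transfers to integrals unchanged. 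Since the appendix works in discrete time anyway, this is a caveat rather than a gap.
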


\begin{proof}
  By definition of order-\(n\) convolution, we have
  \begin{equation}
    \left( \tvar{H}_n * \lcerfl{\tvar{x}^k, \tvar{y}^{n-k}} \right) (t)
    = \sum_{\tau_1, \cdots, \tau_n} H_n(\tau_1, \tau_2, \cdots, \tau_n)
    \prod_{i=1}^{k} x(t - \tau_i)
    \prod_{i=k+1}^{n} y(t - \tau_i)
    \label{equ:proof-expand-conv-gn-xk-yn-k}
  \end{equation}
  Let \(Z(t_1, t_2, \cdots, t_n) = \prod_{i=1}^{k} x(t_i) \prod_{i=k+1}^{n} y(t_i)\). Equation \ref{equ:proof-expand-conv-gn-xk-yn-k} becomes
  \begin{equation*}
    \begin{aligned}
      \left( \tvar{H}_n * \lcerfl{\tvar{x}^k, \tvar{y}^{n-k}} \right)(t)
       & = \sum_{\tau_1, \cdots, \tau_n} H_n(\tau_1, \tau_2, \cdots, \tau_n)
      Z(t - \tau_1, t - \tau_2, \cdots, t - \tau_n)
      \\
       & = \left(\tvar{H}_n * \tvar{Z}\right)(t, t, \cdots, t).
    \end{aligned}
  \end{equation*}

  Recall Young's Inequality (Theorem \ref{thm:young-inequality-convolutions}):
  \begin{equation*}
    \| \tvar{H}_n * \tvar{Z} \|_2 \le \min\left(
    \| \tvar{H}_n \|_2 \| \tvar{Z} \|_1,
    \| \tvar{H}_n \|_1 \| \tvar{Z} \|_2
    \right),
  \end{equation*}
  where \(\| \tvar{Z} \|_1\) is
  \begin{equation*}
    \begin{aligned}
      \| \tvar{Z} \|_1
       & = \sum_{\vsymb{t}} | Z(t_1, t_2, \cdots, t_n) | \\
       & = \sum_{\vsymb{t}} \left|
      \prod_{i=1}^{k} x(t_i)
      \prod_{i=k+1}^{n} y(t_i)
      \right|                                            \\
       & = \left(
      \prod_{i=1}^{k} \sum_{t_1 \cdots} \left| x(t_i) \right|
      \right)
      \left(
      \prod_{i=k+1}^{n} \sum_{t_{k+1} \cdots} \left| y(t_i) \right|
      \right)                                            \\
       & = \| \tvar{x} \|_1^k \| \tvar{y} \|_1^{n-k},    \\
    \end{aligned}
  \end{equation*}
  and \(\| \tvar{Z} \|_2\) is
  \begin{equation*}
    \begin{aligned}
      \| \tvar{Z} \|_2
       & = \sqrt{\sum_{t_1, t_2, \cdots, t_n} ( Z(t_1, t_2, \cdots, t_n) )^2 } \\
       & = \sqrt{ \sum_{t_1, t_2, \cdots, t_n} \left(
        \prod_{i=1}^{k} x(t_i)
        \prod_{i=k+1}^{n} y(t_i)
      \right)^2}                                                               \\
       & = \sqrt{\left( \sum_{t_1 \cdots} x^{2k}(t_i) \right)
      \left( \sum_{t_{k+1} \cdots} y^{2(n-k)}(t_i) \right)}                    \\
       & = \| \tvar{x} \|_{2k}^k \| \tvar{y} \|_{2(n-k)}^{n-k}.                \\
    \end{aligned}
  \end{equation*}
  This proof is finished.
\end{proof}

\begin{lemma}
  Suppose that \(\tvar{H}_n\) is the kernel, and \(\tvar{x}, \tvar{y}\) are two signals. We have
  \begin{equation}
    \left\| \tvar{H}_n * (\tvar{x} + \tvar{y})^n - \tvar{H}_n  * \tvar{x}^n \right\|_2
    \le \min\left(\begin{aligned}
         & \| \tvar{H}_n \|_2 \sum_{k=0}^{n-1} \left(\dfrac{e n}{k}\right)^k \|\tvar{x}\|_1^k \| \tvar{y} \|_1^{n-k},          \\
         & \| \tvar{H}_n \|_1 \sum_{k=0}^{n-1} \left(\dfrac{e n}{k}\right)^k \|\tvar{x}\|_{2k}^k \| \tvar{y} \|_{2(n-k)}^{n-k}
      \end{aligned}\right),
  \end{equation}
  where \(k= 0, 1, \cdots, n.\), and \(e = 2.718281828\cdots\), the base of the natural logarithm.
  \label{lemma:inequality-order-n-convolution-perturbation}
\end{lemma}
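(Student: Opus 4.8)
The plan is to reduce the estimate to a binomial expansion inside an order-$n$ convolution, cancel the pure-power term, and then bound the surviving terms one at a time using Lemma \ref{lemma:inequality-l2-conv-g-x-y} together with a crude but standard bound on binomial coefficients. No genuine computation is needed; everything is a bookkeeping exercise built on results already in hand.

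First I would apply Property \ref{prop:conv-g-power-n-plus-a-b} to expand $\tvar{H}_n * (\tvar{x}+\tvar{y})^n = \sum_{k=0}^{n} \binom{n}{k}\, \tvar{H}_n * \lcerfl{\tvar{x}^k, \tvar{y}^{n-k}}$. The $k=n$ summand is precisely $\tvar{H}_n * \tvar{x}^n$, so the difference $\tvar{H}_n * (\tvar{x}+\tvar{y})^n - \tvar{H}_n * \tvar{x}^n$ equals $\sum_{k=0}^{n-1}\binom{n}{k}\, \tvar{H}_n * \lcerfl{\tvar{x}^k,\tvar{y}^{n-k}}$. Next I would apply the triangle inequality for $\|\cdot\|_2$ and then invoke Lemma \ref{lemma:inequality-l2-conv-g-x-y} on each summand, obtaining the upper bound $\sum_{k=0}^{n-1}\binom{n}{k}\min\bigl(\|\tvar{H}_n\|_2\|\tvar{x}\|_1^k\|\tvar{y}\|_1^{n-k},\ \|\tvar{H}_n\|_1\|\tvar{x}\|_{2k}^k\|\tvar{y}\|_{2(n-k)}^{n-k}\bigr)$. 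Using $\sum_k\min(a_k,b_k)\le\min(\sum_k a_k,\sum_k b_k)$, I can pull the minimum outside the sum, which already produces exactly the two branches appearing in the statement, except that the coefficient is still $\binom{n}{k}$ rather than $(en/k)^k$.

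The final step is the elementary inequality $\binom{n}{k}\le n^k/k!\le (en/k)^k$ for $1\le k\le n-1$, which follows from $k!\ge(k/e)^k$, itself a consequence of $e^k=\sum_{j\ge0}k^j/j!\ge k^k/k!$. Substituting this for each $k$ yields the claimed bound. I expect the only delicate point to be the boundary term $k=0$: there $(en/k)^k$ must be read as $1$, which matches $\binom{n}{0}=1$, so the inequality still holds; one also has to be mildly careful that moving $\min$ outside the summation is legitimate in both branches. Aside from those two checks, the argument is a direct chaining of Property \ref{prop:conv-g-power-n-plus-a-b}, the triangle inequality, and Lemma \ref{lemma:inequality-l2-conv-g-x-y}.
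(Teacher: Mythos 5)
Your proposal is correct and follows essentially the same route as the paper's proof: expand with Property \ref{prop:conv-g-power-n-plus-a-b}, cancel the $k=n$ term, apply the triangle inequality and Lemma \ref{lemma:inequality-l2-conv-g-x-y} termwise, and finish with the standard bound $\binom{n}{k}\le\left(\tfrac{en}{k}\right)^k$ (which the paper cites from Vershynin). Your extra care about pulling the minimum outside the sum and about the $k=0$ convention is a welcome refinement of details the paper glosses over, but it does not change the argument.
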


\begin{proof}
  Recall Property \ref{prop:conv-g-power-n-plus-a-b},
  \begin{equation}
    \tvar{H}_n * (\tvar{x} + \tvar{y})^n
    = \sum_{k=0}^{n} \binom{n}{k} \tvar{H}_n * \lcerfl{\tvar{x}^k, \tvar{y}^{n-k}}.
  \end{equation}

  We move \(\tvar{H}_n * \tvar{x}^n\) to the left and take the \(L_2\)-norm.
    By Lemma \ref{lemma:inequality-l2-conv-g-x-y}, the \(L_2\)-norm is
  \begin{equation*}
    \begin{aligned}
      \left\| \tvar{H}_n * (\tvar{x} + \tvar{y})^n - \tvar{H}_n  * \tvar{x}^n \right\|_2
       & = \left\|
      \sum_{k=0}^{n-1} \binom{n}{k} \tvar{H}_n * \lcerfl{\tvar{x}^k, \tvar{y}^{n-k}}
      \right\|_2                                          \\
       & \le \sum_{k=0}^{n-1} \binom{n}{k} \left\|
      \tvar{H}_n * \lcerfl{\tvar{x}^k, \tvar{y}^{n-k}}
      \right\|_2                                          \\
       & \le \min\left(\begin{aligned}
           & \| \tvar{H}_n \|_2 \sum_{k=0}^{n-1} \binom{n}{k} \|\tvar{x}\|_1^k \| \tvar{y} \|_1^{n-k},          \\
           & \| \tvar{H}_n \|_1 \sum_{k=0}^{n-1} \binom{n}{k} \|\tvar{x}\|_{2k}^k \| \tvar{y} \|_{2(n-k)}^{n-k}
        \end{aligned}\right)  \\
       & \le \min\left(\begin{aligned}
           & \| \tvar{H}_n \|_2 \sum_{k=0}^{n-1} \left(\dfrac{e n}{k}\right)^k \|\tvar{x}\|_1^k \| \tvar{y} \|_1^{n-k},          \\
           & \| \tvar{H}_n \|_1 \sum_{k=0}^{n-1} \left(\dfrac{e n}{k}\right)^k \|\tvar{x}\|_{2k}^k \| \tvar{y} \|_{2(n-k)}^{n-k}
        \end{aligned}\right), \\
    \end{aligned}
  \end{equation*}
  where binomial coefficient \(\binom{n}{k} \le \left(\dfrac{e n}{k}\right)^k\) (Exercise 0.0.5 of \citep{vershynin_2018}).
\end{proof}

The following is the proof of Theorem \ref{thm:vconv-perturbation-upper-bound}.

\begin{proof}[Proof of Theorem \ref{thm:vconv-perturbation-upper-bound}]
  \begin{equation*}
    \begin{aligned}
      \left\|f(\tvar{x} + \epsilon) - f(\tvar{x}) \right\|_2
       & = \left\|
      \sum_{n=0}^{N} \left(
      \tvar{H}_n * ( \tvar{x} + \epsilon )^n - \tvar{H}_n * \tvar{x}^n
      \right)
      \right\|_2                                                                                  \\
       & \le \sum_{n=0}^{N} \left\|
      \left(
      \tvar{H}_n * ( \tvar{x} + \epsilon )^n - \tvar{H}_n * \tvar{x}^n
      \right)
      \right\|_2
      ~~~~ \text{(apply Lemma \ref{lemma:inequality-order-n-convolution-perturbation})} \\
       & \le \min\left(\begin{aligned}
           & \sum_{n=0}^{N} \| \tvar{H}_n \|_2 \sum_{k=0}^{n-1} \left(\dfrac{e n}{k}\right)^k \|\tvar{x}\|_1^k \| \epsilon \|_1^{n-k},                 \\
           & \sum_{n=0}^{N} \| \tvar{H}_n \|_1 \sum_{k=0}^{n-1} \left(\dfrac{e n}{k}\right)^k \|\tvar{x}\|_{2k}^k \| \tvar{\epsilon} \|_{2(n-k)}^{n-k}
        \end{aligned}\right).
    \end{aligned}
  \end{equation*}
\end{proof}

\section{Rank for Outer Convolution}
\label{appendix:convolution-rank}

In this appendix we will discuss rank of output from outer convolution and \(n\)-dimensional convolution in discrete time.
The rank of a matrix equal to the number of non-zero eigenvalues.
It is of major importance, which tells us linear dependencies of column or row vectors, and is one of the fundamental building block of matrix completion and compress sensing \citep{Candes2006, Donoho2006, Sidiropoulos2017}.
The rank of a tensor could be defined in various ways, such as the rank based on CANDECOMP/PARAFACT decomposition and Tucker decomposition and others \citep{Kolda209TensorDecompositionAndApplication}.

Before discussing the rank, let's first take a look at linear dependence and independence \citep{Horn1985}.
Vectors \(\tvar{v}_1, \tvar{v}_2, \cdots, \tvar{v}_n\) are linear independence if \(\alpha_1 \tvar{v}_1 + \alpha_2 \tvar{v}_2 + \cdots + \alpha_n \tvar{v}_n = 0\) only implies all scalars are zero \(\alpha_1 = \alpha_2 = \cdots = 0\).
Similar to that, we call \(\tvar{G}_1, \tvar{G}_2, \cdots, \tvar{G}_n\) are linear independence if \(\alpha_1 \tvar{G}_1 + \alpha_2 \tvar{G}_2 + \cdots + \alpha_n \tvar{G}_n = 0\) only implies \(\alpha_1 = \alpha_2 = \cdots = \alpha_n = 0\).
If \(k\) vectors span a space of dimension \(r\), any of these \(k\) vectors can be represented as a linear combination of \(r\) linear independence vectors in that space.

To express the rank of matrix, Tucker rank \citep{Sidiropoulos2017} of tensor, and others, we denote
\begin{equation}
  \rank(k, \tvar{G}) =
  \left\{\begin{array}{ll}
    1,                                    & \text{if } \tvar{G} \text{ is 1 dimension}         \\
    \text{column rank of } \tvar{G},      & \text{if } \tvar{G} \text{ is 2 dimension, } k = 1 \\
    \text{row rank of } \tvar{G},         & \text{if } \tvar{G} \text{ is 2 dimension, } k = 2 \\
    k\text{-th Tucker rank of } \tvar{G}, & \text{others}                                      \\
  \end{array}\right. .
\end{equation}
Especially, if \(\tvar{G}\) is a matrix or one-dimensional vector, this notation can be rewritten as \(\rank(\tvar{G})\).
Besides, a tensor \(\tvar{G}\) is non-zero means that its arbitrary Lp-norm \(\| \tvar{G} \|_p \ne 0, p \ge 1\).

  Before talking about the rank, we would like to introduce the zero result of convolutions.
  By the definition of one-dimensional convolution, we can rewrite it into matrix multiplication format,
  \begin{equation*}
    y(t) = \sum_{\tau=0}^{T} g(\tau) h(t - \tau)
    \Leftrightarrow
    \begin{bmatrix}
      \vdots \\
      y(t)   \\
      \vdots \\
    \end{bmatrix}
    = \begin{bmatrix}
      \vdots & \vdots &        & \vdots \\
      h(t-0) & h(t-1) & \cdots & h(t-T) \\
      \vdots & \vdots &        & \vdots \\
    \end{bmatrix}
    \begin{bmatrix}
      g(0)   \\
      g(1)   \\
      \vdots \\
      g(T)   \\
    \end{bmatrix},
  \end{equation*}
  where matrix generated by shifting \(h(\cdot)\) is the Hankel matrix.
  It is clear that the convolution \(\tvar{g} * \tvar{h}\) is zero if \(\tvar{g}\) is in the null space \citep{Horn1985} of the Hankel matrix.
  For example, the zero padding one-dimensional convolution is zero if kernel and signal have the following format
  \begin{equation*}
    \begin{aligned}
      \tvar{g} & = \begin{bmatrix} \phantom{-}1 & \phantom{-}1 & \phantom{-}1 & \phantom{-}1 & \phantom{-}1 \end{bmatrix}; \\
      \tvar{h} & = \begin{bmatrix} \phantom{-}1 & -1 & \phantom{-}0 & -1 & \phantom{-}1 & \phantom{-}1 & -1 & \phantom{-}0 & -1 & \cdots \end{bmatrix}. \\
    \end{aligned}
  \end{equation*}
  The high-dimensional convolution also follows the same rule.
  We can rewrite this kind of convolutions into matrix multiplication format by flattening \(\tvar{G}\) and the patches of \(\tvar{H}\).
  A patch of \(\tvar{H}\) is \(H(\vsymb{t} - \lcerfl{\tau_1, \tau_2, \cdots, \tau_k})\), where \(\tau_i = 0, 1, \cdots, N_i\) with \(i = 1, 2, \cdots, k\) and \(\tvar{G} \in \mathbb{R}^{N_1 \times N_2 \times \cdots \times N_k}\).
  For instance, a two-dimensional convolution can be rewritten into the following format,
  \begin{equation*}
    \begin{bmatrix}
      \vdots              & \vdots              & \vdots              & \vdots              \\
      H(t_1 - 0, t_2 - 0) & H(t_1 - 0, t_2 - 1) & H(t_1 - 1, t_2 - 0) & H(t_1 - 1, t_2 - 1) \\
      \vdots              & \vdots              & \vdots              & \vdots              \\
    \end{bmatrix}
    \begin{bmatrix}
      G(0,0) \\
      G(0,1) \\
      G(1,0) \\
      G(1,1) \\
    \end{bmatrix}.
  \end{equation*}
  If the flatted \(\tvar{G}\) is in the null space of the matrix generated by flattening patches of \(\tvar{H}\), the result of this high-dimensional convolution is zero.

\subsection{Linear Independence in Convolutions}
\label{subsec:linear-independence-in-convolutions}

\begin{lemma}
  
    There are \(r \le \min\left( \rank(\mathcal{H}), \rank(\mathcal{G})\right)\) linear independence tensors in the linear combinations of \(\tvar{G}_i * \tvar{H}, i = 1, \cdots\), where \(\mathcal{H}\) is the matrix generated by flattening patches of \(\tvar{H}\), and \(\mathcal{G}\) is the matrix generated by flattening \(\tvar{G}_1, \tvar{G}_2, \cdots\). If \(\mathcal{H}\) is full column rank, \(r = \rank(\mathcal{G})\).
  
  \label{lemma:convolution-keep-linear-dependence}
\end{lemma}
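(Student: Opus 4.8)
The plan is to reduce the statement to an elementary fact about the image of a linear map restricted to a subspace. First I would rewrite the convolution $\tvar{G}_i * \tvar{H}$ in the matrix-multiplication form already used in this appendix: flattening the output and collecting the shifted patches of $\tvar{H}$ into the matrix $\mathcal{H}$, one obtains $\operatorname{vec}(\tvar{G}_i * \tvar{H}) = \mathcal{H}\,\operatorname{vec}(\tvar{G}_i)$, where $\operatorname{vec}(\cdot)$ is the flattening of Definition \ref{def:flatten-operator} (for the multidimensional case this is exactly the "zero result" bookkeeping above, combined with Proposition \ref{proposition:flatten-operator-is-homorphic}). Stacking the row vectors $\operatorname{vec}(\tvar{G}_i)^{\top}$ gives precisely the matrix $\mathcal{G}$, so $\operatorname{span}\{\operatorname{vec}(\tvar{G}_i)\}$ is the row space of $\mathcal{G}$, a subspace $V$ with $\dim V = \rank(\mathcal{G})$.

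Next I would observe that, after flattening, the set of all linear combinations of the tensors $\tvar{G}_i * \tvar{H}$ is exactly the image $\mathcal{H}(V)$, so the number $r$ of linearly independent tensors among these combinations equals $\dim \mathcal{H}(V)$. Now $\mathcal{H}(V)$ is contained in the column space of $\mathcal{H}$, giving $\dim \mathcal{H}(V) \le \rank(\mathcal{H})$; and a linear map cannot increase dimension when restricted to a subspace, giving $\dim \mathcal{H}(V) \le \dim V = \rank(\mathcal{G})$. Together these yield $r \le \min(\rank(\mathcal{H}), \rank(\mathcal{G}))$. For the final claim, if $\mathcal{H}$ has full column rank then $\ker \mathcal{H} = \{0\}$, so $\mathcal{H}$ is injective and in particular injective on $V$, whence $\dim \mathcal{H}(V) = \dim V$, i.e.\ $r = \rank(\mathcal{G})$. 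More generally, the rank--nullity theorem gives $\dim \mathcal{H}(V) = \dim V - \dim(V \cap \ker \mathcal{H})$, which makes both the inequality and the equality case transparent.

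The routine part is spelling out the flattening so that the convolution genuinely becomes $\mathcal{H}\,\operatorname{vec}(\tvar{G}_i)$ in the general multidimensional setting; I would cite the earlier patch-matrix construction and Proposition \ref{proposition:flatten-operator-is-homorphic} rather than redo it. The only genuinely delicate point is keeping the rank conventions aligned with the statement — column rank versus row rank for matrices, and the $k$-th Tucker rank entering through the $\rank(k,\cdot)$ notation for tensors — so that the symbols $\rank(\mathcal{H})$ and $\rank(\mathcal{G})$ refer to the matrices that actually arise after vectorization. Once those matrices are pinned down, the argument above is purely linear-algebraic and indifferent to which rank is meant, so I expect this reconciliation of notation, rather than any real mathematical difficulty, to be the main obstacle.
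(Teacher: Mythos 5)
Your proposal is correct and takes essentially the same route as the paper: the paper also writes the flattened convolutions as the single matrix product $\mathcal{Y}=\mathcal{H}\mathcal{G}$ and concludes via $\rank(\mathcal{H}\mathcal{G})\le\min\left(\rank(\mathcal{H}),\rank(\mathcal{G})\right)$ together with the fact that left multiplication by a full-column-rank $\mathcal{H}$ preserves rank, which is exactly your image-of-a-subspace / injectivity argument in matrix language. The only cosmetic discrepancy is that the paper stacks the flattened $\tvar{G}_i$ as columns of $\mathcal{G}$ rather than rows, which does not affect the rank.
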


\begin{proof}
  
    Suppose \(Y_i(\vsymb{t}) = \sum_{\vsymb{\tau}} H(\vsymb{t} - \vsymb{\tau}) G_i(\vsymb{\tau})\), we rewrite this convolution into matrix multiplication format,
    \begin{equation*}
      \begin{bmatrix}
        \vert          & \vert          &        \\
        Y_1(\vsymb{t}) & Y_2(\vsymb{t}) & \cdots \\
        \vert          & \vert          &        \\
      \end{bmatrix}
      =
      \begin{bmatrix}
             & \vdots                      &      \\
        -\!- & H(\vsymb{t} - \vsymb{\tau}) & -\!- \\
             & \vdots                      &      \\
      \end{bmatrix}
      \begin{bmatrix}
        \vert             & \vert             &        \\
        G_1(\vsymb{\tau}) & G_2(\vsymb{\tau}) & \cdots \\
        \vert             & \vert             &        \\
      \end{bmatrix}
      = \mathcal{H} \mathcal{G}.
    \end{equation*}

    Let matrix \(\mathcal{Y}\) be the matrix generated by flattening \(\tvar{Y}_1, \tvar{Y}_2, \cdots\). We have \citep{Horn1985}
    \begin{equation*}
      r
      = \rank(\mathcal{Y})
      = \rank(\mathcal{H} \mathcal{G})
      \le \min\left( \rank(\mathcal{H}), \rank(\mathcal{G})\right).
    \end{equation*}

  If \(\mathcal{H}\) is not full column rank, it is possible that some linear combinations of the columns of \(\mathcal{G}\) are in the null space of \(\mathcal{H}\). The matrix multiplication with \(\mathcal{H}\) and these combinations are zero, which implies that the convolution of \(\tvar{H}\) and the same combinations of \(\tvar{G}_i\) are zero.

    Otherwise, if \(\mathcal{H}\) is full column rank, left multiplication by a full column rank matrix leaves rank unchanged, \(r = \rank(\mathcal{H} \mathcal{G}) = \rank(\mathcal{G})\), and \(\rank(\mathcal{G})\) equals the number of linear independence tensors in the linear combinations of \(\tvar{G}_1, \tvar{G}_2, \cdots\). It means that there are no combinations of \(\tvar{G}_i\) that lead to a zero convolution result.
  
\end{proof}

  By Lemma \ref{lemma:convolution-keep-linear-dependence}, if the \(\mathcal{H}\) is full column rank, the number of linear independent tensors in linear combinations of \(\tvar{G}_1 * \tvar{H}, \tvar{G}_2 * \tvar{H}, \cdots\) equals to that of \(\tvar{G}_1, \tvar{G}_2, \cdots\).
  For instance, if \(\tvar{H}\) meets the condition, and \(\tvar{G}_1 \ne \alpha \tvar{G}_2\) for any scalar \(\alpha\), we have \(\tvar{G}_1 * \tvar{H} \ne \alpha \tvar{G}_2 * \tvar{H}\).

\subsection{Matrix Rank for Two-dimensional Signals}
\label{subsec:matrix-rank-for-two-dimensional-signals}

\begin{lemma}
  \(\rank(\tvar{G} \oconv \lcerfl{\tvar{h}_1, \tvar{h}_2}) \le \min\left(\rank(\tvar{G}), \rank(\mathcal{H}_1), \rank(\mathcal{H}_2)\right) \le \rank(\tvar{G})\), where \(\tvar{G} \in \mathbb{R}^{n_1 \times n_2}\), and \(\tvar{h}_1 \in \mathbb{R}^{l_1}, \tvar{h}_2 \in \mathbb{R}^{l_2}\).
    We get equality, \(\rank(\tvar{G} \oconv \lcerfl{\tvar{h}_1, \tvar{h}_2}) = \rank(\tvar{G})\), if both \(\mathcal{H}_1 \in \mathbb{R}^{(l_1 + 2p_1 -n_1) \times n_1}\) and \(\mathcal{H}_2 \in \mathbb{R}^{(l_2 + 2p_2 -n_2) \times n_2}\), the two matrices generated by shifting \(h_1(\cdot)\) and \(h_2(\cdot)\) with padding size \(p_1\) and \(p_2\), are full column rank.
  \label{lemma:rank-oconv-g-h-h-is-rank-g}
\end{lemma}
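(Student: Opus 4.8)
The plan is to reduce the outer convolution $\tvar{G}\oconv\lcerfl{\tvar{h}_1,\tvar{h}_2}$ to an ordinary product of three matrices and then read off every rank bound from standard linear algebra. First I would specialize the definition of the discrete one-dimensional outer convolution (Equation \ref{equ:def-descrete-outer-convolution-1d}) to $n=2$ with one-dimensional kernels $\tvar{h}_1,\tvar{h}_2$,
\begin{equation*}
  \left(\tvar{G}\oconv\lcerfl{\tvar{h}_1,\tvar{h}_2}\right)(t_1,t_2)
  = \sum_{\tau_1,\tau_2} G(\tau_1,\tau_2)\, h_1(t_1-\tau_1)\, h_2(t_2-\tau_2),
\end{equation*}
and rewrite the right-hand side as $\sum_{\tau_1} h_1(t_1-\tau_1)\bigl(\sum_{\tau_2} G(\tau_1,\tau_2)\,h_2(t_2-\tau_2)\bigr)$. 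Collecting the shifts of $h_1$ into the (zero-padded Toeplitz) matrix $\mathcal{H}_1$ with entries $\mathcal{H}_1(t_1,\tau_1)=h_1(t_1-\tau_1)$, which has exactly $n_1$ columns (one per index of the first mode of $\tvar{G}$), and likewise the shifts of $h_2$ into $\mathcal{H}_2$ with $n_2$ columns, this identity becomes the matrix equation $\tvar{G}\oconv\lcerfl{\tvar{h}_1,\tvar{h}_2}=\mathcal{H}_1\,\tvar{G}\,\mathcal{H}_2^{\top}$; this is the same unfolding used in the proof of Lemma \ref{lemma:convolution-keep-linear-dependence}.

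Given the factorization, the inequality is immediate: applying $\rank(ABC)\le\min(\rank A,\rank B,\rank C)$ with $A=\mathcal{H}_1$, $B=\tvar{G}$, $C=\mathcal{H}_2^{\top}$ and using $\rank(\mathcal{H}_2^{\top})=\rank(\mathcal{H}_2)$ gives $\rank(\tvar{G}\oconv\lcerfl{\tvar{h}_1,\tvar{h}_2})\le\min(\rank(\tvar{G}),\rank(\mathcal{H}_1),\rank(\mathcal{H}_2))$, and since $\tvar{G}$ is one of the three factors this minimum is at most $\rank(\tvar{G})$. For the equality claim I would invoke the two standard facts — both consequences of the rank-nullity relation — that left-multiplication by a full-column-rank matrix and right-multiplication by a full-row-rank matrix leave the rank unchanged. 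If $\mathcal{H}_1$ has full column rank $n_1$ then $\rank(\mathcal{H}_1\tvar{G})=\rank(\tvar{G})$, because $\tvar{G}$ has $n_1$ rows; if in addition $\mathcal{H}_2$ has full column rank $n_2$ then $\mathcal{H}_2^{\top}$ has full row rank, so $\rank\bigl((\mathcal{H}_1\tvar{G})\mathcal{H}_2^{\top}\bigr)=\rank(\mathcal{H}_1\tvar{G})$; chaining the two equalities yields $\rank(\tvar{G}\oconv\lcerfl{\tvar{h}_1,\tvar{h}_2})=\rank(\tvar{G})$. Alternatively one could apply Lemma \ref{lemma:convolution-keep-linear-dependence} once along each of the two modes of $\tvar{G}$ and reach the same conclusion.

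The only place that requires care is Step 1 — establishing $\tvar{G}\oconv\lcerfl{\tvar{h}_1,\tvar{h}_2}=\mathcal{H}_1\,\tvar{G}\,\mathcal{H}_2^{\top}$ with the zero-padding sizes and index ranges chosen consistently with the definitions of $\mathcal{H}_1,\mathcal{H}_2$ and with the stated output shape, and in particular making sure the ``full column rank'' hypothesis refers to exactly these padded shift matrices so that the equality conclusion is not vacuous. Once that matrix identity is in hand, I do not expect any genuine obstacle: the remaining argument is a one-line application of the rank-of-a-product inequality and the rank-preservation facts.
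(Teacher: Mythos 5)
Your proposal is correct, but it reaches the result by a genuinely different route than the paper. You collapse the whole outer convolution into a single two-sided matrix product \(\tvar{G} \oconv \lcerfl{\tvar{h}_1, \tvar{h}_2} = \mathcal{H}_1 \tvar{G} \mathcal{H}_2^{\top}\) and then invoke the standard facts \(\rank(ABC) \le \min(\rank A, \rank B, \rank C)\), rank preservation under left multiplication by a full-column-rank matrix, and rank preservation under right multiplication by a full-row-rank matrix; this makes both the inequality and the equality case essentially one-line consequences of textbook linear algebra, provided the index ranges and padding in the Toeplitz matrices \(\mathcal{H}_1, \mathcal{H}_2\) are set up consistently (which you correctly flag as the only delicate point). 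The paper instead factorizes \(\tvar{G}\) into a sum of \(\rank(\tvar{G})\) rank-one terms \(p_r(\tau_1) q_r(\tau_2)\), pushes the outer convolution through term by term to obtain \(\sum_r (\tvar{p}_r * \tvar{h}_1)(t_1)(\tvar{q}_r * \tvar{h}_2)(t_2)\), and then appeals to Lemma \ref{lemma:convolution-keep-linear-dependence} to bound the number of linearly independent convolved factors and, in the full-column-rank case, to argue that no nontrivial combination of the \(\tvar{p}_r\) (resp. \(\tvar{q}_r\)) is annihilated by convolution with \(\tvar{h}_1\) (resp. \(\tvar{h}_2\)), so independence is preserved and equality holds. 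Your sandwich factorization is more compact and arguably cleaner; the paper's decomposition buys an explicit description of the output as a sum of outer products of convolved factor vectors, which is the form it reuses in the subsequent rank results (e.g., Lemma \ref{lemma:2d-conv-g-h-is-mul-rank-g-rank-h}).
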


\begin{proof}
  Since the rank of \(\tvar{G}\) is \(\rank(\tvar{G})\), we can factorize \(\tvar{G}\) into the form of summation by outer product of linear independent vectors,
    \begin{equation*}
      G(\tau_1, \tau_2) = \sum_{r=1}^{\rank(\tvar{G})} p_r(\tau_1) q_r(\tau_2).
    \end{equation*}

  By the definition of outer convolution (Definition \ref{equ:def-outer-convolution}), we have
    \begin{equation*}
      \begin{aligned}
        (\tvar{G} \oconv \lcerfl{\tvar{h}_1, \tvar{h}_2})(t_1, t_2)
         & = \sum_{\tau_1, \tau_2} G(\tau_1, \tau_2) h_1(t_1 - \tau_1) h_2(t_2 - \tau_2)                                                                     \\
         & = \sum_{r=1}^{\rank(\tvar{G})} \left(\sum_{\tau_1} p_r(\tau_1) h_1(t_1 - \tau_1)\right) \left(\sum_{\tau_2} q_r(\tau_2) h_2(t_2 - \tau_2)\right). \\
         & = \sum_{r=1}^{\rank(\tvar{G})} (\tvar{p}_r * \tvar{h}_1)(t_1) (\tvar{q}_r * \tvar{h}_2)(t_2),
      \end{aligned}
    \end{equation*}
    which is a summation of the outer product of vectors \(\tvar{p}_r * \tvar{h}_1\) and \(\tvar{q}_r * \tvar{h}_2\).

  Recall Lemma \ref{lemma:convolution-keep-linear-dependence}, we have
  \begin{equation*}
    \begin{aligned}
      \rank(\tvar{G} \oconv \lcerfl{\tvar{h}_1, \tvar{h}_2})
       & \le \min\left(\min(\rank(\tvar{G}), \rank(\mathcal{H}_1)), \min(\rank(\tvar{G}), \rank(\mathcal{H}_2)) \right) \\
       & \le \min\left(\rank(\tvar{G}), \rank(\mathcal{H}_1), \rank(\mathcal{H}_2)\right)                               \\
       & \le \rank(\tvar{G}).
    \end{aligned}
  \end{equation*}
  If \(\mathcal{H}_1\) is full column rank, there is no such vector \(\tvar{v} \in \mathbb{R}^{n_1}\) such that \(\tvar{v} * \tvar{h}_1\) is zero, and \(\tvar{p}_1, \tvar{p}_2, \cdots, \tvar{p}_{\rank(\tvar{G})}\) are linear independence. If \(\mathcal{H}_2\) is full column rank, the same rule can also be applied to \(\mathcal{H}_2\). Under this assumption, we have
  \begin{equation*}
    \rank(\tvar{G} \oconv \lcerfl{\tvar{h}_1, \tvar{h}_2})
    = \rank(\tvar{G}).
  \end{equation*}
  
\end{proof}

Let \(H(t_1, t_2) = h_1(t_1) h_2(t_2)\). It becomes a two-dimensional convolution,
\begin{equation*}
  \begin{aligned}
    \left(\tvar{G} \oconv \lcerfl{\tvar{h}_1, \tvar{h}_2}\right)(t_1, t_2)
     & = \sum_{\tau_1, \tau_2} G(\tau_1, \tau_2) h_1(t_1 - \tau_1) h_2(t_2 - \tau_2) \\
     & = \sum_{\tau_1, \tau_2} G(\tau_1, \tau_2) H(t_1 - \tau_1, t_2 - \tau_2).      \\
  \end{aligned}
\end{equation*}

  More generally, we extend this to the \(\tvar{H}\) of rank grater than or equal to one.

\begin{lemma}
  \(\rank(\tvar{G} * \tvar{H}) \le \min\left(s_1, s_2, \rank(\tvar{G}) \rank(\tvar{H}) \right)\), where both \(\tvar{G}\) and \(\tvar{H}\) are two-dimensional matrices, and \(\tvar{G} * \tvar{H} \in \mathbb{R}^{s_1 \times s_2}\).
  \label{lemma:2d-conv-g-h-is-mul-rank-g-rank-h}
\end{lemma}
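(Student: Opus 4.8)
The plan is to reduce the general two-dimensional convolution to the separable case already treated in Lemma~\ref{lemma:rank-oconv-g-h-h-is-rank-g}. First I would dispose of the easy part of the bound: since $\tvar{G} * \tvar{H} \in \mathbb{R}^{s_1 \times s_2}$, its rank is automatically at most $\min(s_1, s_2)$, so it suffices to establish $\rank(\tvar{G} * \tvar{H}) \le \rank(\tvar{G}) \rank(\tvar{H})$ and then take the minimum of the two estimates.

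For the nontrivial bound I would factor $\tvar{H}$ through its rank, writing $H(t_1, t_2) = \sum_{s=1}^{\rank(\tvar{H})} u_s(t_1) v_s(t_2)$ as a sum of exactly $\rank(\tvar{H})$ rank-one (separable) matrices, which is the standard factorization of a matrix via its column rank. By linearity of convolution (Property~\ref{prop:conv-plus-g-h-x}), $\tvar{G} * \tvar{H} = \sum_{s=1}^{\rank(\tvar{H})} \tvar{G} * \tvar{H}_s$ where $H_s(t_1, t_2) = u_s(t_1) v_s(t_2)$. Each summand is a convolution of $\tvar{G}$ with a separable kernel, so by the remark identifying such a convolution with an outer convolution we have $\tvar{G} * \tvar{H}_s = \tvar{G} \oconv \lcerfl{\tvar{u}_s, \tvar{v}_s}$, and Lemma~\ref{lemma:rank-oconv-g-h-h-is-rank-g} gives $\rank(\tvar{G} * \tvar{H}_s) \le \rank(\tvar{G})$. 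Invoking subadditivity of matrix rank under addition then yields $\rank(\tvar{G} * \tvar{H}) \le \sum_{s=1}^{\rank(\tvar{H})} \rank(\tvar{G} * \tvar{H}_s) \le \rank(\tvar{G}) \rank(\tvar{H})$, and combining with $\rank(\tvar{G} * \tvar{H}) \le \min(s_1, s_2)$ finishes the argument. A slightly more self-contained variant is to also expand $\tvar{G}$ as $G(\tau_1, \tau_2) = \sum_r p_r(\tau_1) q_r(\tau_2)$, substitute both decompositions into the double sum defining $\tvar{G} * \tvar{H}$, and observe that it collapses termwise into $\sum_{r,s} (\tvar{p}_r * \tvar{u}_s)(t_1)\,(\tvar{q}_r * \tvar{v}_s)(t_2)$, a sum of $\rank(\tvar{G}) \rank(\tvar{H})$ outer products, from which the bound is immediate.

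I do not expect a genuine obstacle here; the only care needed is bookkeeping. One must keep the zero-padding convention fixed in Appendix~\ref{appendix:proof-for-combination-properties} and Appendix~\ref{appendix:convolution-rank} so that the shifted indices in the one-dimensional convolutions $\tvar{p}_r * \tvar{u}_s$ and $\tvar{q}_r * \tvar{v}_s$ never run out of bounds, and one must take the rank-one factorization of $\tvar{H}$ with exactly $\rank(\tvar{H})$ terms so that the count $\rank(\tvar{G}) \rank(\tvar{H})$ is the one stated. Unlike Lemma~\ref{lemma:convolution-keep-linear-dependence}, no null-space or eigenvalue analysis is required, since only an upper bound is claimed.
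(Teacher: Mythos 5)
Your proposal is correct and follows essentially the same route as the paper: decompose \(\tvar{H}\) into \(\rank(\tvar{H})\) rank-one (separable) terms, bound each \(\tvar{G} * \tvar{H}_s\) by \(\rank(\tvar{G})\) via Lemma~\ref{lemma:rank-oconv-g-h-h-is-rank-g}, and finish with subadditivity of rank together with the size bound \(\min(s_1,s_2)\). Your write-up is in fact slightly more explicit than the paper's (which only says the summands are linearly independent without stating they are rank one), but the argument is the same.
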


\begin{proof}
  
  Suppose that \(\tvarhat{H}_1, \tvarhat{H}_2, \cdots, \tvarhat{H}_{\rank(\tvar{H})}\) are linear independence matrices, and \(\tvar{H} = \sum_{r=1}^{\rank(\tvar{H})} \tvarhat{H}_r\). The convolution becomes
  \begin{equation*}
    \tvar{G} * \tvar{H}
    = \tvar{G} * \left(\sum_{r=1}^{\rank(\tvar{H})} \tvarhat{H}_r \right)
    = \sum_{r=1}^{\rank(\tvar{H})} \tvar{G} * \tvarhat{H}_r.
  \end{equation*}
  By Lemma \ref{lemma:rank-oconv-g-h-h-is-rank-g}, the rank of this convolution is
  \begin{equation*}
    \rank\left(\tvar{G} * \tvar{H}\right)
    = \rank\left(\sum_{r=1}^{\rank(\tvar{H})} \tvar{G} * \tvarhat{H}_r\right)
    \le \sum_{r=1}^{\rank(\tvar{H})} \rank(\tvar{G})
    \le \rank(\tvar{H}) \rank(\tvar{G}).
  \end{equation*}
  In addition, the rank of a matrix must be bounded by its size, implying that
  \begin{equation*}
    \rank(\tvar{G} * \tvar{H}) \le \min(s_1, s_2, \rank(\tvar{G}) \rank(\tvar{H})).
  \end{equation*}
  
\end{proof}

\begin{remark}
  
    The two-dimensional convolution often increase the rank of a two-dimensional image.
  
\end{remark}

\begin{remark}
  Dilated convolutions \citep{Yu2016} still follow Lemma \ref{lemma:2d-conv-g-h-is-mul-rank-g-rank-h}, since adding zero-filled rows or columns will not alter the rank.
  \label{remark:dilated-convolution-still-works}
\end{remark}

\subsection{Tucker Rank for Multi-dimensional Signals}
\label{subsec:tucker-rank-for-multi-dimensional-signals}

\begin{lemma}
  \(\rank(k, \tvar{G} \oconv \vsymb{h}) \le \rank(k, \tvar{G})\), where \(k = 1, 2, \cdots, n\), and \(\vsymb{h} = \lcerfl{\tvar{h}_1, \tvar{h}_2, \cdots, \tvar{h}_n}\) is a list of non-zero one-dimensional signals, and \(\tvar{G}\) is an \(n\)-dimensional tensor, and \(\rank(k, \tvar{G})\) is the \(k\)-th Tucker rank of \(\tvar{G}\).
  \label{lemma:rank-oconv-g-hhh-is-tucker-rank}
\end{lemma}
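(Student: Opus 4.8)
The plan is to mimic the proof of Lemma \ref{lemma:rank-oconv-g-h-h-is-rank-g}, but at the level of the mode-\(k\) matricization rather than for a bare matrix. Recall that, by the convention for \(\rank(k,\cdot)\) adopted in this appendix, for an \(n\)-dimensional tensor with \(n\ge 3\) the quantity \(\rank(k,\tvar{G})\) is exactly the ordinary matrix rank of the mode-\(k\) unfolding \(\tvar{G}_{(k)}\), whose rows are indexed by \(\tau_k\) and whose columns are indexed by the tuple \((\tau_1,\dots,\tau_{k-1},\tau_{k+1},\dots,\tau_n)\); the cases \(n=1\) (both sides equal \(1\)) and \(n=2\) (which is precisely Lemma \ref{lemma:rank-oconv-g-h-h-is-rank-g}) will be handled as base cases. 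Setting \(r=\rank(k,\tvar{G})\), the first step is to write down the rank factorization of \(\tvar{G}_{(k)}\), i.e.
\[
G(\tau_1,\dots,\tau_n) = \sum_{j=1}^{r} a_j(\tau_k)\, B_j(\tau_1,\dots,\tau_{k-1},\tau_{k+1},\dots,\tau_n),
\]
where \(\tvar{a}_1,\dots,\tvar{a}_r\) are one-dimensional vectors and \(\tvar{B}_1,\dots,\tvar{B}_r\) are \((n-1)\)-dimensional tensors in the remaining indices.

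Next I would substitute this expansion into the definition of the discrete outer convolution for one-dimensional signals (Equation \ref{equ:def-descrete-outer-convolution-1d}),
\[
(\tvar{G}\oconv\vsymb{h})(t_1,\dots,t_n) = \sum_{\tau_1,\dots,\tau_n} G(\tau_1,\dots,\tau_n)\prod_{i=1}^{n} h_i(t_i-\tau_i),
\]
and split the sum over \(\tau_k\) from the sum over the remaining \(\tau_i\). Since each \(B_j\) is independent of \(\tau_k\), the sum factorizes and yields
\[
(\tvar{G}\oconv\vsymb{h})(t_1,\dots,t_n) = \sum_{j=1}^{r} (\tvar{a}_j * \tvar{h}_k)(t_k)\cdot \widetilde{B}_j(t_1,\dots,t_{k-1},t_{k+1},\dots,t_n),
\]
with \(\widetilde{B}_j = \tvar{B}_j \oconv \lcerfl{\tvar{h}_1,\dots,\tvar{h}_{k-1},\tvar{h}_{k+1},\dots,\tvar{h}_n}\) an outer convolution of an \((n-1)\)-dimensional tensor against \(n-1\) one-dimensional signals. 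Read in terms of the mode-\(k\) unfolding of \(\tvar{G}\oconv\vsymb{h}\), this identity exhibits \((\tvar{G}\oconv\vsymb{h})_{(k)}\) as a sum of \(r\) rank-one matrices — each the outer product of the vector \(\tvar{a}_j*\tvar{h}_k\) (indexed by \(t_k\)) with the vectorization of \(\widetilde{B}_j\) (indexed by the remaining coordinates). Subadditivity of matrix rank then gives \(\rank\bigl((\tvar{G}\oconv\vsymb{h})_{(k)}\bigr)\le r\), which is exactly the claim \(\rank(k,\tvar{G}\oconv\vsymb{h})\le\rank(k,\tvar{G})\). Note the non-vanishing of the \(\tvar{h}_i\) is not needed for this direction; it would matter only for a converse equality statement analogous to Lemma \ref{lemma:rank-oconv-g-h-h-is-rank-g}.

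An essentially equivalent route, which I would mention as a remark, is to observe that convolving along mode \(i\) with \(h_i\) corresponds to left-multiplying the mode-\(i\) unfolding by a fixed (zero-padded) Toeplitz matrix \(\mathcal{H}_i\); one can then write
\[
(\tvar{G}\oconv\vsymb{h})_{(k)} = \mathcal{H}_k\,\tvar{G}_{(k)}\,\bigl(\mathcal{H}_1\otimes\cdots\otimes\mathcal{H}_{k-1}\otimes\mathcal{H}_{k+1}\otimes\cdots\otimes\mathcal{H}_n\bigr)^{\!\top},
\]
and conclude from \(\rank(ABC)\le\rank(B)\). The only real subtlety — and hence the main obstacle — is bookkeeping rather than any genuine mathematical difficulty: one must check that the separated sum is correctly re-indexed as the mode-\(k\) unfolding of \(\tvar{G}\oconv\vsymb{h}\), that the definition of outer convolution for an \((n-1)\)-dimensional tensor is being invoked with the right signature, and that zero padding is in force so no index falls out of range. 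No new inequality beyond subadditivity of rank (equivalently \(\rank(AB)\le\min(\rank A,\rank B)\)), already used in Lemma \ref{lemma:convolution-keep-linear-dependence}, is required.
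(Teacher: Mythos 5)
Your proposal is correct, but your primary argument is organized differently from the paper's. The paper handles \(n>2\) by first summing out all modes other than \(k\): it defines \(P(t_1,\dots,t_{k-1},\tau_k,t_{k+1},\dots,t_n)=\sum_{\tau_i,\,i\ne k}G(\vsymb{\tau})\prod_{i\ne k}h_i(t_i-\tau_i)\), observes that the remaining action along mode \(k\) is the matrix product \(\mathcal{H}_k\tvarhat{P}\) with \(\mathcal{H}_k\) the shifted-kernel matrix of \(\tvar{h}_k\), and then bounds \(\rank(\mathcal{H}_k\tvarhat{P})\le\min(\rank(\mathcal{H}_k),\rank(\tvarhat{P}))\le\rank(k,\tvar{G})\), invoking Lemma \ref{lemma:convolution-keep-linear-dependence} for the bound on \(\rank(\tvarhat{P})\). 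You instead lift the proof of the two-dimensional case (Lemma \ref{lemma:rank-oconv-g-h-h-is-rank-g}) verbatim to the mode-\(k\) unfolding: take a rank-\(r\) factorization \(G=\sum_{j=1}^{r}a_j(\tau_k)B_j(\tau_{\setminus k})\), push the outer convolution through each term to get \(\sum_j(\tvar{a}_j*\tvar{h}_k)(t_k)\,\widetilde{B}_j(t_{\setminus k})\), and conclude by subadditivity of rank; this is sound (the factorization of the sum is just finite Fubini, and your identification of \(\widetilde{B}_j\) as an outer convolution of an \((n-1)\)-dimensional kernel is consistent with Definition \ref{def:outer-convolution}), and it has the minor advantage of needing only rank subadditivity rather than Lemma \ref{lemma:convolution-keep-linear-dependence}, while also making explicit that the non-vanishing of the \(\tvar{h}_i\) is irrelevant for the inequality. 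Your "essentially equivalent route" remark, writing \((\tvar{G}\oconv\vsymb{h})_{(k)}=\mathcal{H}_k\,\tvar{G}_{(k)}\,(\otimes_{i\ne k}\mathcal{H}_i)^{\top}\) and using \(\rank(ABC)\le\rank(B)\), is in substance the paper's actual proof, just with the two outer factors written as an explicit Kronecker product instead of being absorbed into the intermediate tensor \(\tvar{P}\).
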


\begin{proof}
  
    If \(n=2\), this has been proved in Lemma \ref{lemma:rank-oconv-g-h-h-is-rank-g}.

    If \(n > 2\), suppose that \(\tvar{G} \in \mathbb{R}^{s_1 \times s_2 \times \cdots \times s_n}\).
    Focusing on the \(k\)-th dimension, we have
    \begin{equation*}
      \begin{aligned}
        (\tvar{G} \oconv \vsymb{h})(\vsymb{t})
         & = \sum_{\vsymb{\tau}} G(\vsymb{\tau}) \prod_{i=1}^{n} h_i(t_i - \tau_i) \\
         & = \sum_{\tau_k} \left(
        \sum_{\tau_1, \cdots, \tau_{k-1}}
        \sum_{\tau_{k+1}, \cdots, \tau_n}
        G(\vsymb{\tau}) \prod_{i=1, i \ne k}^{n} h_i(t_i - \tau_i)
        \right) h_k(t_k - \tau_k).
      \end{aligned}
    \end{equation*}
    Let
    \begin{equation*}
      P(t_1, \cdots, t_{k-1}, \tau_k, t_{k+1}, \cdots, t_n)
      = \sum_{\tau_1, \cdots, \tau_{k-1}}
      \sum_{\tau_{k+1}, \cdots, \tau_n}
      G(\vsymb{\tau}) \prod_{i=1, i \ne k}^{n} h_i(t_i - \tau_i).
    \end{equation*}
    The convolution becomes
    \begin{equation*}
      (\tvar{G} \oconv \vsymb{h})(\cdots, t_k, \cdots)
      = \sum_{\tau_k} P(\cdots, \tau_k, \cdots) h_k(t_k - \tau_k).
    \end{equation*}
    We permute and reshape \(\tvar{P}\) into matrix format \(\tvarhat{P} \in \mathbb{R}^{s_k \times \cdot}\), and rewrite the convolution into matrix multiplication format,
    \begin{equation*}
      \begin{bmatrix}
        \vdots       & \vdots     &        & \vdots         \\
        h_k(t_k - 0) & h_k(t_k-1) & \cdots & h_k(t_k - s_k) \\
        \vdots       & \vdots     &        & \vdots         \\
      \end{bmatrix}
      \begin{bmatrix}
        \hat{P}(0, 1)   & \hat{P}(0, 2)   & \cdots \\
        \hat{P}(1, 1)   & \hat{P}(1, 2)   & \cdots \\
        \vdots          & \vdots          &        \\
        \hat{P}(s_k, 1) & \hat{P}(s_k, 2) & \cdots \\
      \end{bmatrix}
      \equiv \mathcal{H}_k \tvarhat{P}.
    \end{equation*}
    By the rank inequality of matrix multiplication, we have
    \begin{equation*}
      \begin{aligned}
        \rank(k, \tvar{G} \oconv \vsymb{h})
         & \le \min(\rank(\mathcal{H}_k), \rank(\tvarhat{P}))                                                                   \\
         & \le \min(\rank(\mathcal{H}_k), \rank(k, \tvar{G})) ~~~~(\text{Lemma \ref{lemma:convolution-keep-linear-dependence}}) \\
         & \le \rank(k, \tvar{G}).
      \end{aligned}
    \end{equation*}
  
\end{proof}

  Let \(H(t_1, t_2, \cdots, t_n) = h_1(t_1) h_2(t_2) \cdots h_n(t_n)\). A tensor with Tucker rank equals \([1, 1, \cdots, 1]\). We have
  \(\tvar{G} \oconv \vsymb{h} = \tvar{G} * \tvar{H}\), and we extend this to a more general \(\tvar{H}\) in Lemma \ref{lemma:nd-conv-g-h-is-mul-tucker-rank-gh}.

\begin{lemma}
  \(\rank(k, \tvar{G} * \tvar{H}) \le \min(s_k, \rank(k,\tvar{G}) \rank(k,\tvar{H}))\), where \(\tvar{G}, \tvar{H}\) are both non-zero \(n\)-dimensional tensors, and \(\tvar{G} * \tvar{H} \in \mathbb{R}^{s_1 \times s_2 \times \cdots \times s_n}\).
  \label{lemma:nd-conv-g-h-is-mul-tucker-rank-gh}
\end{lemma}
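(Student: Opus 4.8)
The plan is to adapt the argument of Lemma~\ref{lemma:2d-conv-g-h-is-mul-rank-g-rank-h}, replacing the rank-one \emph{matrix} decomposition used there by a mode-$k$ unfolding decomposition, so that the reasoning carries over verbatim to order-$n$ tensors. Fix the mode $k$, and write $\vsymb{\tau}'$ for the multi-index $(\tau_1,\dots,\tau_{k-1},\tau_{k+1},\dots,\tau_n)$ obtained by deleting the $k$-th coordinate, and similarly $\vsymb{t}'$ from $\vsymb{t}$. By definition, $\rank(k,\tvar{G})$ and $\rank(k,\tvar{H})$ are the ordinary ranks of the mode-$k$ unfoldings of $\tvar{G}$ and $\tvar{H}$; factoring those unfoldings and folding back gives
\begin{equation*}
  G(\tau_k,\vsymb{\tau}') = \sum_{a=1}^{\rank(k,\tvar{G})} u_a(\tau_k)\, Q_a(\vsymb{\tau}'),
  \qquad
  H(t_k,\vsymb{t}') = \sum_{b=1}^{\rank(k,\tvar{H})} v_b(t_k)\, P_b(\vsymb{t}'),
\end{equation*}
where $\tvar{u}_a,\tvar{v}_b$ are vectors and $\tvar{Q}_a,\tvar{P}_b$ are $(n-1)$-dimensional tensors.

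Substituting these into the definition of $\tvar{G}*\tvar{H}$ and collecting the $t_k$-sum and the $\vsymb{t}'$-sum separately, I expect to obtain
\begin{equation*}
  (\tvar{G}*\tvar{H})(t_k,\vsymb{t}')
  = \sum_{a=1}^{\rank(k,\tvar{G})} \sum_{b=1}^{\rank(k,\tvar{H})}
    (\tvar{u}_a * \tvar{v}_b)(t_k)\,(\tvar{Q}_a * \tvar{P}_b)(\vsymb{t}'),
\end{equation*}
exactly as in the displayed computations of Lemma~\ref{lemma:rank-oconv-g-h-h-is-rank-g} and Lemma~\ref{lemma:rank-oconv-g-hhh-is-tucker-rank} (zero padding is assumed, as in Appendix~\ref{appendix:proof-for-combination-properties}, so the shifted indices cause no trouble). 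Each summand is a tensor whose mode-$k$ fibers are all scalar multiples of the single vector $\tvar{u}_a*\tvar{v}_b$, i.e.\ its mode-$k$ unfolding is a rank-one matrix $(\tvar{u}_a*\tvar{v}_b)\bigl(\mathrm{vec}(\tvar{Q}_a*\tvar{P}_b)\bigr)^{\!\top}$. Hence the mode-$k$ unfolding of $\tvar{G}*\tvar{H}$ is a sum of at most $\rank(k,\tvar{G})\rank(k,\tvar{H})$ rank-one matrices, so by subadditivity of matrix rank $\rank(k,\tvar{G}*\tvar{H}) \le \rank(k,\tvar{G})\rank(k,\tvar{H})$. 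Since the mode-$k$ unfolding of any tensor in $\mathbb{R}^{s_1\times\cdots\times s_n}$ has only $s_k$ rows, we also have $\rank(k,\tvar{G}*\tvar{H})\le s_k$, and taking the smaller of the two bounds finishes the lemma.

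I do not expect a genuinely hard step here: the whole content is the observation that convolution commutes with the mode-$k$ factorization, which is the same phenomenon already exploited (in the full rank-one case $H(t_1,\dots,t_n)=h_1(t_1)\cdots h_n(t_n)$) in Lemma~\ref{lemma:rank-oconv-g-hhh-is-tucker-rank}. The only thing to be careful about is the index bookkeeping --- keeping the ``$k$-th coordinate versus all the others'' split consistent through the convolution sum, and making sure the fold/unfold reshapes line up. A useful consistency check is the case $n=2$, $k\in\{1,2\}$: there $\rank(1,\cdot)$ and $\rank(2,\cdot)$ both coincide with the usual matrix rank, and the statement collapses to Lemma~\ref{lemma:2d-conv-g-h-is-mul-rank-g-rank-h}.
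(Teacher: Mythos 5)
Your proof is correct, and at its core it is the same idea as the paper's: split into mode-\(k\) rank-one pieces, use linearity of convolution, then subadditivity of matrix rank plus the fact that the mode-\(k\) unfolding has only \(s_k\) rows. The difference is in execution. The paper decomposes only \(\tvar{H}\) into \(\rank(k,\tvar{H})\) summands of \(k\)-th Tucker rank one and then bounds each \(\rank(k,\tvar{G}*\tvarhat{H}_r)\le\rank(k,\tvar{G})\) by appealing to the earlier rank lemmas (the citation inside the paper's proof is actually to the lemma being proved, evidently a typo for Lemma \ref{lemma:rank-oconv-g-hhh-is-tucker-rank}; moreover that lemma is stated for kernels that are fully rank-one, i.e.\ outer products of vectors, so the per-summand step as written needs exactly the mild generalization you prove). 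You instead decompose \emph{both} mode-\(k\) unfoldings and verify the factorization identity \((\tvar{G}*\tvar{H})(t_k,\vsymb{t}')=\sum_{a,b}(\tvar{u}_a*\tvar{v}_b)(t_k)(\tvar{Q}_a*\tvar{P}_b)(\vsymb{t}')\) directly, which makes the argument self-contained, handles \(n=2\) uniformly (no separate base case via Lemma \ref{lemma:2d-conv-g-h-is-mul-rank-g-rank-h}), and silently repairs the paper's loose citation. The trade-off is only stylistic: the paper reuses its lemma chain, while your route carries slightly more index bookkeeping but rests on nothing beyond subadditivity of rank and the row-count bound.
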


\begin{proof}
  
    If \(n = 2\), it has been proved in Lemma \ref{lemma:2d-conv-g-h-is-mul-rank-g-rank-h}.

  If \(n > 2\), suppose that
  \begin{equation*}
    \tvar{H} = \sum_{r=1}^{\rank(k, \tvar{H})} \tvarhat{H}_r,
  \end{equation*}
  where \(\tvar{H}_1, \tvar{H}_2, \cdots, \tvar{H}_{\rank(k, \tvar{H})}\) are linear independent and all \(\rank(k, \tvarhat{H}_r) = 1\).
  The \(n\)-dimensional convolution becomes,
  \begin{equation*}
    \tvar{G} * \tvar{H}
    = \tvar{G} * \left(\sum_{r=1}^{\rank(k, \tvar{H})} \tvarhat{H}_r\right)
    = \sum_{r=1}^{\rank(k, \tvar{H})} \tvar{G} * \tvarhat{H}_r.
  \end{equation*}
  By Lemma \ref{lemma:nd-conv-g-h-is-mul-tucker-rank-gh}, the \(k\)-th Tucker rank of this convolution is
  \begin{equation*}
    \begin{aligned}
      \rank\left(k, \tvar{G} * \tvar{H}\right)
       & = \rank\left(k, \sum_{r=1}^{\rank(k, \tvar{H})} \tvar{G} * \tvarhat{H}_r\right) \\
       & \le \sum_{r=1}^{\rank(k, \tvar{H})} \rank\left(k, \tvar{G}\right)               \\
       & \le \rank(k, \tvar{H}) \rank(k, \tvar{G}).                                      \\
    \end{aligned}
  \end{equation*}
  In addition, the \(k\)-th Tucker rank is bounded by the size, which implies that
  \begin{equation*}
    \rank(k, \tvar{G} * \tvar{H}) \le \min(s_k, \rank(k,\tvar{G}) \rank(k,\tvar{H})).
  \end{equation*}
  
\end{proof}

\subsection{Main Theorem of Rank for Outer Convolution}
\label{subsec:main-theorem-of-rank-for-outer-convolution}

Now, all previous discussion of outer convolutions are combined, expanding Lemma \ref{lemma:rank-oconv-g-hhh-is-tucker-rank} to Theorem \ref{thm:rank-oconv-g-hHH-is-mul-tucker-rank}.
To simplify discussions, outer convolution is divided into groups,
\begin{equation*}
  (\tvar{G} \oconv \vsymb{H})\left(
  \underbrace{((1,1), (1,2), \cdots)}_{\text{group } 1, ~ (\tvar{H}_1)},
  \cdots,
  \underbrace{((i,1), (i,2), \cdots)}_{\text{group } k, ~ (\tvar{H}_k)},
  \right).
\end{equation*}

\begin{theorem}[Rank for outer convolution]
  
  \begin{equation*}
    \rank(\lcerfl{k,i}, \tvar{G} \oconv \vsymb{H})
    \le \left\{\begin{array}{ll}
      \min\left(s_{k,1}, \rank(k, \tvar{G})\right),       & \tvar{H}_k \text{ is one-dimensional} \\
      \min\left(s_{k,i}, z_k \rank(i, \tvar{H}_k)\right), & \text{otherwise}
    \end{array}\right.,
  \end{equation*}
  where \(\vsymb{H} = \lcerfl{\tvar{H}_1, \tvar{H}_2, \cdots, \tvar{H}_n}\) is a list of tensors, and \(\tvar{G} \in \mathbb{R}^{z_1 \times z_2 \times \cdots \times z_n}\), and \(\tvar{G} \oconv \vsymb{H} \in \mathbb{R}^{s_{1,1} \times s_{1,2} \times \cdots \times s_{2,1} \times s_{2,2} \times \cdots}\).
  
  \label{thm:rank-oconv-g-hHH-is-mul-tucker-rank}
\end{theorem}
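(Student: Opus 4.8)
The key reduction is that $\rank(\lcerfl{k,i},\cdot)$ is, by definition, the ordinary rank of the mode-$\lcerfl{k,i}$ unfolding matrix, and that matrix has exactly $s_{k,i}$ rows (resp.\ $s_{k,1}$ rows when $\tvar{H}_k$ is one-dimensional). Hence the bounds by $s_{k,i}$ and $s_{k,1}$ are automatic, and it remains only to prove the second entry in each branch. First I would single out, in the defining formula for $\tvar{G}\oconv\vsymb{H}$ (Equation \ref{equ:def-descrete-outer-convolution-md}), the summation variable $\tau_k$ attached to the $k$-th group, writing
\[
\left(\tvar{G}\oconv\vsymb{H}\right)(\cdots)=\sum_{\tau_k}Q(\,\cdot\,;\tau_k)\,H_k(\vsymb{t}_{k,1}-\tau_k,\vsymb{t}_{k,2}-\tau_k,\cdots),\qquad Q(\,\cdot\,;\tau_k)=\sum_{\tau_j:\,j\ne k}G(\vsymb{\tau})\prod_{j\ne k}H_j(\cdots),
\]
where the coefficient tensor $\tvar{Q}$ depends on $\tau_k$ and on the output indices of the groups other than $k$, but not on the output indices of group $k$.

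\textbf{Branch 1: $\tvar{H}_k$ one-dimensional.} Then $H_k(\vsymb{t}_k-\tau_k)=h_k(t_{k,1}-\tau_k)$, so the display above is an ordinary one-dimensional convolution in the coordinate $t_{k,1}$ between $h_k$ and $\tvar{Q}$, with $\tau_k$ acting as the convolution index. Unfolding along mode $\lcerfl{k,1}$ puts this into the matrix form $\mathcal{H}_k\,\tvarhat{Q}$, where $\mathcal{H}_k$ is the shift (Hankel) matrix of $h_k$ and $\tvarhat{Q}$ is the mode-$k$ unfolding of $\tvar{Q}$; the rank-of-a-product inequality gives $\rank(\lcerfl{k,1},\tvar{G}\oconv\vsymb{H})\le\rank(\tvarhat{Q})$. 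The crucial observation is that $\tvar{Q}$ is obtained from $\tvar{G}$ by applying one and the same linear map — convolution in the remaining $n-1$ modes against $\tvar{H}_j$, $j\ne k$ — to every mode-$k$ slice of $\tvar{G}$; thus $\tvarhat{Q}$ equals the mode-$k$ unfolding of $\tvar{G}$ times a fixed matrix, so $\rank(\tvarhat{Q})\le\rank(k,\tvar{G})$. This is exactly the mechanism of Lemma \ref{lemma:convolution-keep-linear-dependence}, which I would invoke to conclude $\rank(\lcerfl{k,1},\tvar{G}\oconv\vsymb{H})\le\min(s_{k,1},\rank(k,\tvar{G}))$; this generalizes Lemma \ref{lemma:rank-oconv-g-hhh-is-tucker-rank}, which is the special case where all the $\tvar{H}_j$ are one-dimensional.

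\textbf{Branch 2: $\tvar{H}_k$ of dimension at least two, mode $\lcerfl{k,i}$ coming from its $i$-th axis.} Here I would argue term by term in the sum over $\tau_k$. For fixed $\tau_k$, the term $Q(\,\cdot\,;\tau_k)\,H_k(\vsymb{t}_k-\tau_k)$, viewed as a tensor in all output modes, factors as a product of something not involving $t_{k,i}$ with a shifted copy of $\tvar{H}_k$, so its mode-$\lcerfl{k,i}$ column space is contained in the mode-$i$ column space of that shifted copy; since shifting and zero-padding leave Tucker rank unchanged (cf.\ Remark \ref{remark:dilated-convolution-still-works} and the proof of Lemma \ref{lemma:rank-oconv-g-hhh-is-tucker-rank}), that space has dimension at most $\rank(i,\tvar{H}_k)$. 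The full outer convolution is a sum of $z_k$ such terms, so subadditivity of rank yields $\rank(\lcerfl{k,i},\tvar{G}\oconv\vsymb{H})\le z_k\,\rank(i,\tvar{H}_k)$, and combining with the size bound $s_{k,i}$ finishes the branch.

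The step I expect to be the main obstacle is purely bookkeeping: carefully unfolding the outer convolution along a single mode inside the heavy multi-index notation of Equation \ref{equ:def-descrete-outer-convolution-md}, and in particular verifying in Branch 1 that the coefficient tensor $\tvar{Q}$ is genuinely a fixed linear image of the mode-$k$ slices of $\tvar{G}$, so that Lemma \ref{lemma:convolution-keep-linear-dependence} applies verbatim. Once the unfolding identities are pinned down, the rank arithmetic is routine and essentially mirrors the proofs of Lemmas \ref{lemma:rank-oconv-g-h-h-is-rank-g} and \ref{lemma:nd-conv-g-h-is-mul-tucker-rank-gh}.
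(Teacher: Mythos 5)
Your proposal is correct and follows essentially the same route as the paper: in the first branch you factor out the group-\(k\) summation index, write the mode-\(\lcerfl{k,1}\) unfolding as the shift matrix of \(\tvar{h}_k\) times a tensor obtained by applying a fixed linear map to the mode-\(k\) slices of \(\tvar{G}\), and bound its rank by \(\rank(k,\tvar{G})\) via Lemma \ref{lemma:convolution-keep-linear-dependence}, exactly as the paper does through Lemma \ref{lemma:rank-oconv-g-hhh-is-tucker-rank}. The only cosmetic difference is in the second branch, where you sum directly over the \(z_k\) values of \(\tau_k\) and use subadditivity of rank, while the paper packages the same mechanism as a super-diagonal tensor \(\tvarhat{P}\) with group-mode rank at most \(z_k\) and then invokes Lemma \ref{lemma:nd-conv-g-h-is-mul-tucker-rank-gh}; both yield the bound \(\min(s_{k,i}, z_k\,\rank(i,\tvar{H}_k))\) by the same underlying argument.
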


\begin{proof}
  
    If \(\tvar{H}_k\) is one-dimensional, this proof is similar to that of Lemma \ref{lemma:rank-oconv-g-hhh-is-tucker-rank}. The \(k\)-th Tucker rank is
    \begin{equation*}
      \rank(\lcerfl{k,1}, \tvar{G} \oconv \vsymb{H}) \le \min\left(s_{k,1}, \rank(k, \tvar{G})\right).
    \end{equation*}

    If \(\tvar{H}_k\) is not one-dimensional, we focus on the operations in group \(k\). By the definition of outer convolution (Equation \ref{equ:def-continuous-outer-convolution-1d}), we have
    \begin{equation*}
      \begin{aligned}
        (\tvar{G} \oconv \vsymb{H}) (\vsymb{t}_1, \vsymb{t}_2, \cdots, \vsymb{t}_n)
         & = \sum_{\vsymb{\tau}} G(\vsymb{\tau}) \prod_{i=1}^{n} H_i(\vsymb{t}_i - \tau_i) \\
         & = \sum_{\tau_k} \left(
        \sum_{\tau_1, \cdots, \tau_{k-1}}
        \sum_{\tau_{k+1}, \cdots, \tau_n}
        G(\vsymb{\tau})
        \prod_{i=1, i \ne k}^{n} H_i(\vsymb{t}_i - \tau_i)
        \right) H_k(\vsymb{t}_k - \tau_k).                                                 \\
      \end{aligned}
    \end{equation*}
    Let
    \begin{equation*}
      P(\vsymb{t}_1, \cdots, \vsymb{t}_{k-1}, \tau_k, \vsymb{t}_{k+1}, \cdots, \vsymb{t}_n)
      = \sum_{\tau_1, \cdots, \tau_{k-1}}
      \sum_{\tau_{k+1}, \cdots, \tau_n}
      G(\vsymb{\tau})
      \prod_{i=1, i \ne k}^{n} H_i(\vsymb{t}_i - \tau_i).
    \end{equation*}
    The convolution becomes
    \begin{equation*}
      \begin{aligned}
        (\tvar{G} \oconv \vsymb{H})(\cdots, \vsymb{t}_k, \cdots)
         & = \sum_{\tau_k} P(\cdots, \tau_k, \cdots) H_k(\vsymb{t}_k - \tau_k) \\
         & = \sum_{\iota_1, \iota_2, \cdots}
        \hat{P}(\cdots; \iota_1, \iota_2, \cdots; \cdots)
        H_k(t_{k,1} - \iota_1, t_{k,2} - \iota_2, \cdots),                     \\
      \end{aligned}
    \end{equation*}
    where \(\tvarhat{P}\) is a super-diagonal format of \(\tvar{G}\),
    \begin{equation*}
      \hat{P}(\cdots; \iota_1, \iota_2, \cdots; \cdots)
      = \left\{\begin{array}{ll}
        P(\cdots, \tau_k, \cdots), & \tau_k = \iota_1 = \iota_2 = \cdots \\
        0,                         & \text{otherwise}
      \end{array}\right. .
    \end{equation*}
    This operation can be considered as a collection of multidimensional convolutions in group \(k\), where all kernels are super-diagonal tensors.
    The Tucker rank of these super-diagonal tensors are \(\rank\left(\lcerfl{k,i}, \tvarhat{P}\right) \le z_k\), with equality if and only if all diagonal entries are non-zero.
    Recall Lemma \ref{lemma:nd-conv-g-h-is-mul-tucker-rank-gh}, we conclude that Tucker rank of this group is
    \begin{equation*}
      \begin{aligned}
        \rank\left(\lcerfl{k,i}, \tvar{G} \oconv \vsymb{H} \right)
         & \le \min(s_{k,i}, \rank\left(\lcerfl{k,i}, \tvarhat{P}\right) \rank(i, \tvar{H}_k)) \\
         & \le \min(s_{k,i}, z_k \rank(i, \tvar{H}_k)).
      \end{aligned}
    \end{equation*}
  
  In conclusion, we completed this proof.
\end{proof}

\subsection{Validation of the Rank Properties}
\label{subsec:validate-rank-after-outer-convolution}

\textbf{Data description}.
  To cover more situations, the following tensors are taken from three sets, \(\mathbb{M}\), \(\mathbb{U}\) and \(\mathbb{O}\). 

The first set is the Gaussian distribution with normalization,
  \begin{equation*}
    \mathbb{M} = \left\{
    \tvar{x} : \tvar{x} = \dfrac{\tvar{y}}{\|\tvar{y}\|_{2}}, y(\cdot) \sim \mathcal{N}(0, 1)
    \right\},
  \end{equation*}
  where \(\mathcal{N}(0,1)\) is standard Gaussian distribution with mean zero and variance one. This is the same as Equation \ref{equ:define-the-set-of-gaussian-l2-norm-set-to-1}.

The second set is the Uniform distribution with normalization,
  \begin{equation*}
    \mathbb{U} = \left\{
    \tvar{x} : \tvar{x} = \dfrac{\tvar{y}}{\|\tvar{y}\|_{2}}, y(\cdot) \sim \mathcal{U}(0, 1)
    \right\},
  \end{equation*}
  where \(\mathcal{U}(0, 1)\) is the Uniform distribution form zero to one.

The third set is
  \begin{equation*}
    \mathbb{O} = \left\{
    \tvar{h} : \exists \tvar{g}, ~ \tvar{g} * \tvar{h} \text{ is zero}, \tvar{g} \text{ and } \tvar{h} \text{ are non-zero}
    \right\}.
  \end{equation*}
  One method for generating the desired signal \(\tvar{h}\) with a known kernel \(\tvar{g}\) is illustrated as follows.
  For any fixed index \(t\), if \((\tvar{g} * \tvar{h})(t) = 0\), we have
  \begin{equation*}
    \sum_{\tau=0}^{T} g(\tau) h(t - \tau)
    = g(0) h(t) + \sum_{\tau=1}^{T} g(\tau) h(t - \tau)
    = 0.
  \end{equation*}
  If the weighted sum \(\sum_{\tau=1}^{T} g(\tau) h(t - \tau)\) is known, and \(g(0)\) is non-zero, \(h(t)\) can be uniquely determined by
  \begin{equation}
    h(t) = \dfrac{- 1}{g(0)} \sum_{\tau=1}^{T} g(\tau) h(t - \tau).
    \label{equ:zero-conv-1d-sequence-computation}
  \end{equation}
  We can iteratively compute the upcoming sequence \(h(T), h(T+1), \cdots\), if all initial terms, \(h(0), h(1), \cdots, h(T-1)\), are manually specified.
  It is clear that we cannot pad any other elements to the head or tail of \(\tvar{h}\), because the new elements may not follow Equation \ref{equ:zero-conv-1d-sequence-computation}, and padding these elements may leave \(\tvar{g} * \tvar{h}\) non-zero.

Let the matrix generated by shifting \(\tvar{h}\) be
  \begin{equation*}
    \mathcal{H}
    = \left[\begin{array}{c|c}
        \begin{matrix}
          h(0)   & h(1)   & \cdots & h(T-1) \\
          h(1)   & h(2)   & \cdots & h(T+0) \\
          h(2)   & h(3)   & \cdots & h(T+1) \\
          \vdots & \vdots &        & \vdots \\
        \end{matrix}
         &
        \begin{matrix}
          h(T+0) & \cdots \\
          h(T+1) & \cdots \\
          h(T+2) & \cdots \\
          \vdots &        \\
        \end{matrix}
      \end{array}\right].
  \end{equation*}
  According to the computing process, the columns in the right of vertical line are linear combinations of the columns in the left. It implies that \(\rank(\mathcal{H}) \le T\), with equality if the columns in the left of vertical line are linear independence.

  \textbf{Validation process.} The main object is to compare the numerical rank with the theoretical rank. To begin with, we compute the convolutions of the generated kernels and signals. After that, we clip the singular values of the convolution output to range  \([1e^{-16}, 1e^{16}]\). At last, we plot the singular values to a figure with y-axis scaled by \(\log_{10}\).
  We can easily distinguish the zero singular values from the non-zero ones in the figures. If there is a sharp slope, the singular values in the left of this slop are non-zero, and the number of the non-zero singular values equals the rank.

\textbf{Validate Lemma \ref{lemma:rank-oconv-g-h-h-is-rank-g}}: \(\rank(\tvar{G} \oconv \lcerfl{\tvar{h}_1, \tvar{h}_2}) \le \min\left(\rank(\tvar{G}), \rank(\mathcal{H}_1), \rank(\mathcal{H}_2)\right)\).

\(\tvar{G}\) is randomly generated with \(\rank(\tvar{G}) = r_g\). \(\tvar{h}_1\) is iteratively computed by \(T_1\) random initial terms and kernel \([1, 1, \cdots, 1]\). \(\tvar{h}_2\) is computed in the same way.
  Since zero padding of \(\tvar{h}_1\) or \(\tvar{h}_2\) is not allowed here, the outer convolution \(\tvar{G} \oconv \lcerfl{\tvar{h}_{1}, \tvar{h}_{2}}\) is zero padded.
  The singular values are plotted in Figure \ref{fig:oconv-rank-zero-convolution}.
  As shown in figure, \(\rank(\tvar{G} \oconv \lcerfl{\tvar{h}_{1}, \tvar{h}_{2}})\) is less than the smallest number of \(r_g, T_1, T_2\), and \(r_g = \rank(\tvar{G}), \rank(\mathcal{H}_1) \le T_1, \rank(\mathcal{H}_2) \le T_2\).

\begin{figure}[htb!]
  \centering
  \subfloat[\(\tvar{G} \in \mathbb{M}^{9 \times 9}\), \(\tvar{h}_1, \tvar{h}_2 \in \mathbb{O}^{27}\)]{
    \includegraphics[width=0.48\linewidth]{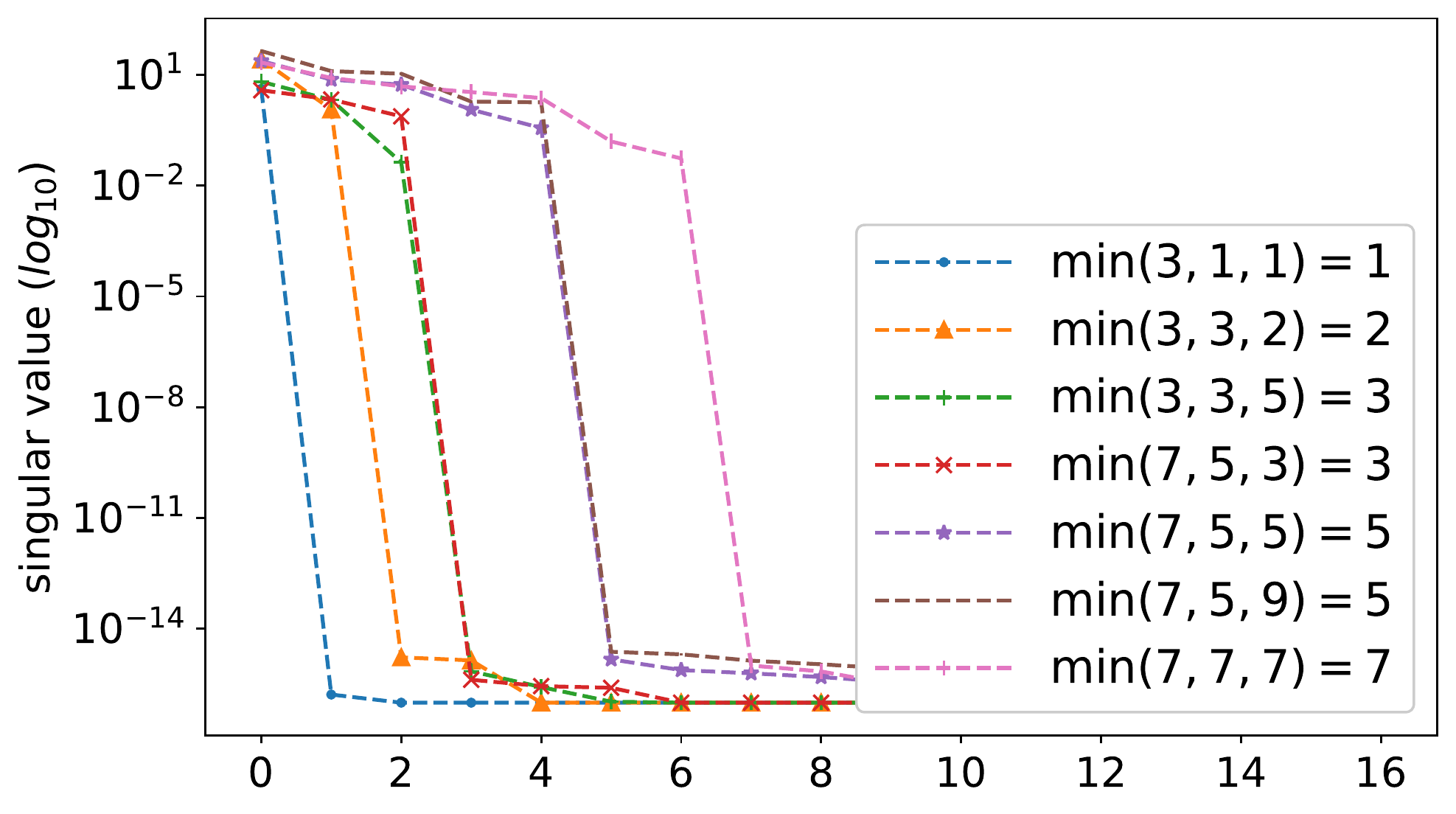}
    \label{fig:oconv-rank-zero-convolution-g}
  }
  \subfloat[\(\tvar{G} \in \mathbb{U}^{9 \times 9}\), \(\tvar{h}_1, \tvar{h}_2 \in \mathbb{O}^{27}\)]{
    \includegraphics[width=0.48\linewidth]{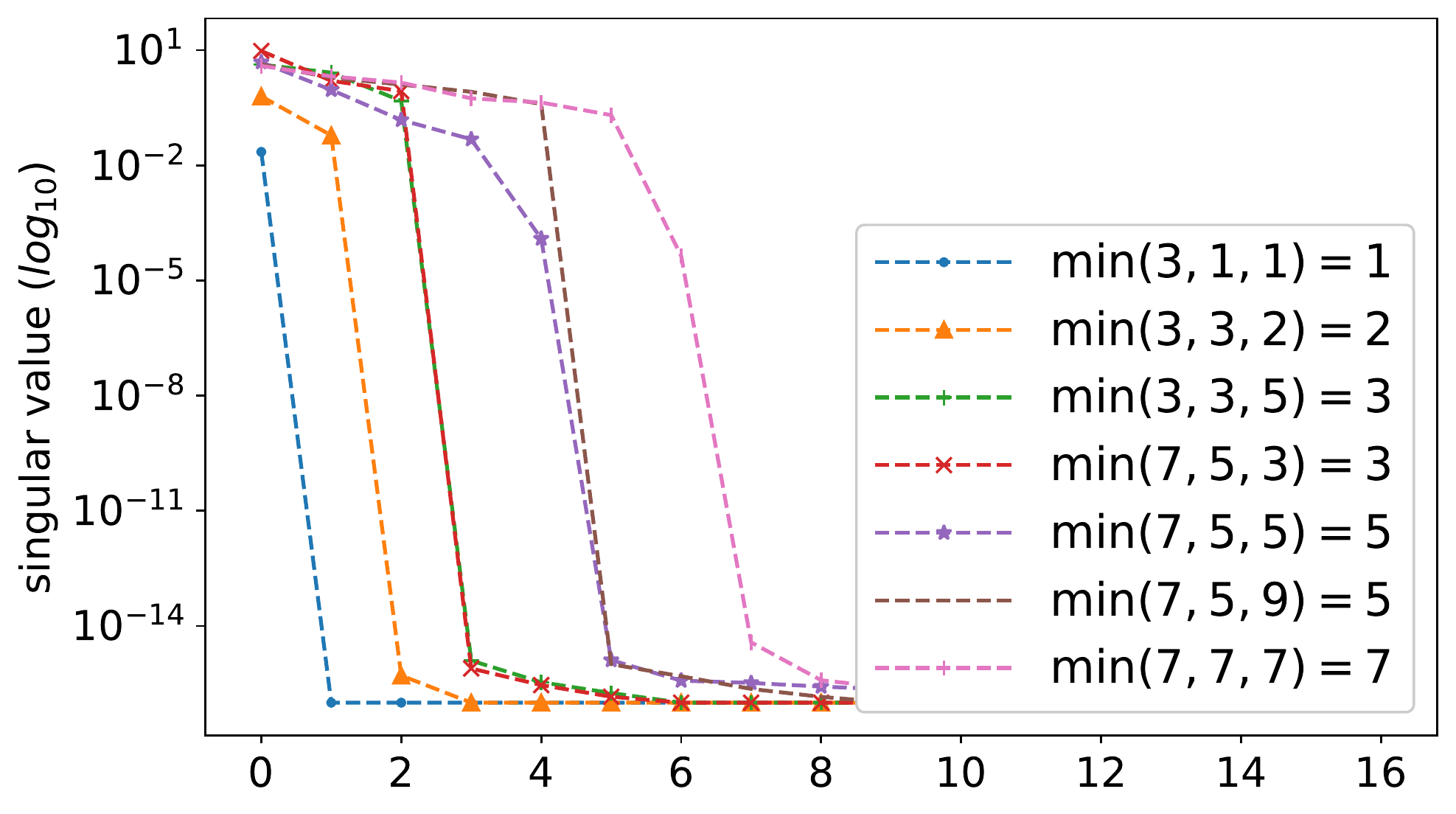}%
    \label{fig:oconv-rank-zero-convolution-u}
  }
  \caption{The singular values of \(\tvar{G} \oconv \lcerfl{\tvar{h}_{1}, \tvar{h}_{2}}\) with no padding. The min operator with three numbers is \(\min(r_g, T_1, T_2)\).}
  \label{fig:oconv-rank-zero-convolution}
\end{figure}

  \textbf{Validate Lemma \ref{lemma:2d-conv-g-h-is-mul-rank-g-rank-h}.} Regardless of size, the rank of a two-dimensional convolution \(\tvar{G} * \tvar{H}\) is \(\rank(\tvar{G} * \tvar{H}) \le \rank(\tvar{G}) \rank(\tvar{H})\).

Pairs of matrices \(\tvar{G}\) and \(\tvar{H}\) are randomly generated with \(\rank(\tvar{G}) = r_g\) and \(\rank(\tvar{H}) = r_h\).
  The singular values of \(\tvar{G} * \tvar{H}\) are plotted in Figure \ref{fig:oconv-rank-conv-kernel-image}.
  As the figure shows, \(\rank(\tvar{G} * \tvar{H})\) is less than or equal to the multiplication of \(\rank(\tvar{G})\) and \(\rank(\tvar{H})\).

\begin{figure}[htb!]
  \centering
  \subfloat[\(\tvar{G} \in \mathbb{M}^{7 \times 7}\), \(\tvar{H} \in \mathbb{M}^{32 \times 32}\)]{
    \includegraphics[width=0.48\linewidth]{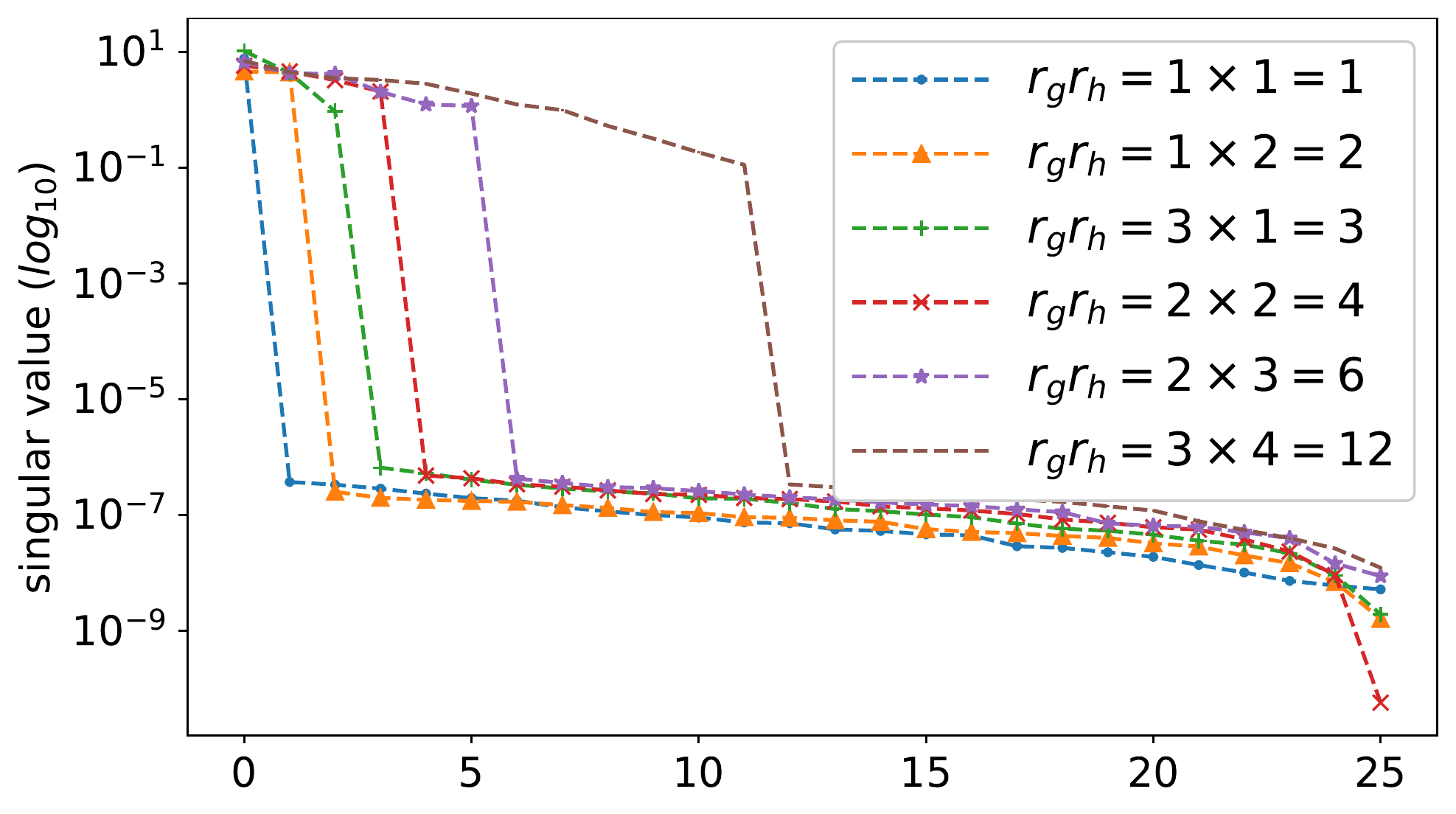}%
    \label{fig:oconv-rank-conv-kernel-image-g}
  }
  \subfloat[\(\tvar{G} \in \mathbb{U}^{7 \times 7}\), \(\tvar{H} \in \mathbb{U}^{32 \times 32}\)]{
    \includegraphics[width=0.48\linewidth]{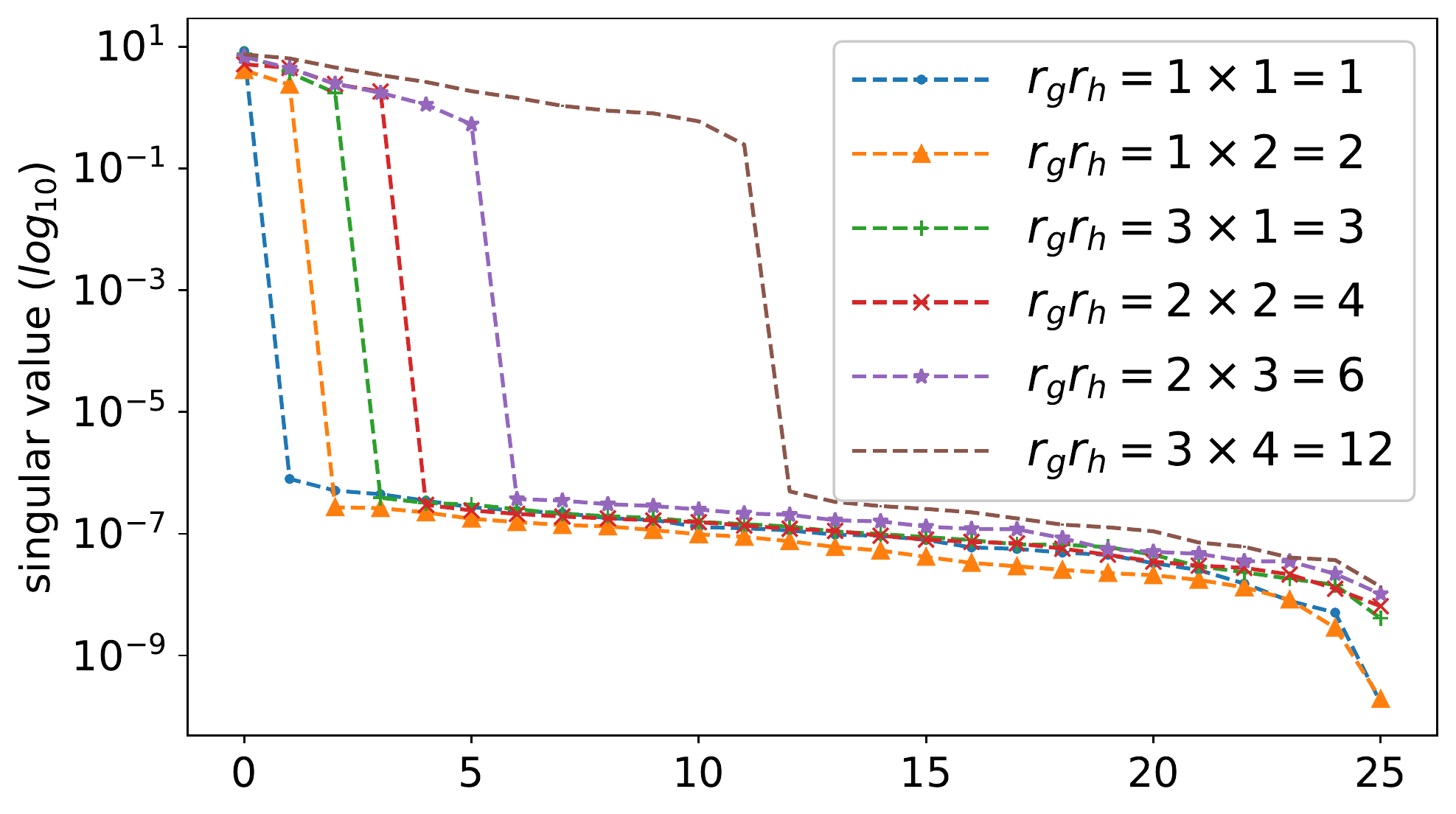}%
    \label{fig:oconv-rank-conv-kernel-image-u}
  }
  \caption{
    Singular values of two-dimensional convolutions \(\tvar{G} * \tvar{H}\).
  }
  \label{fig:oconv-rank-conv-kernel-image}
\end{figure}

  \textbf{Validate Lemma \ref{lemma:nd-conv-g-h-is-mul-tucker-rank-gh}.} Regardless of size, the Tucker rank of an \(n\)-dimensional convolution is \(\rank(k, \tvar{G} * \tvar{H}) \le \rank(k,\tvar{G}) \rank(k,\tvar{H})\).
  This inequality of three-dimensional convolution is validated below.

\(\tvar{G}\) is randomly generated with Tucker rank equals \([2,4,3]\) and \(\tvar{H}\) is randomly generated with Tucker rank equals \([3, 2, 4]\).
  The \(k\)-th Tucker rank of a tensor equals to the matrix rank of mode-\(k\) matricization of that tensor, where the mode-\(k\) matricization is to permute and reshape the tensor with shape \((\cdots, s, \cdots)\) to \((\cdots, s)\) \citep{Kolda209TensorDecompositionAndApplication}.
  The singular values of mode-(one, two, three) matricization of \(\tvar{G} * \tvar{H}\) are plotted in Figure \ref{fig:oconv-rank-3D-conv}.
  The \(k\)-th Tucker rank of \(\tvar{G} * \tvar{H}\) is less than or equal to the multiplication of the \(k\)-th Tucker rank of \(\tvar{G}\) and \(\tvar{H}\).

\begin{figure}[htb]
  \centering
  \subfloat[\(\tvar{G} \in \mathbb{M}^{6 \times 6 \times 6}\), \(\tvar{H} \in \mathbb{M}^{28 \times 28 \times 28}\)]{
    \includegraphics[width=0.48\linewidth]{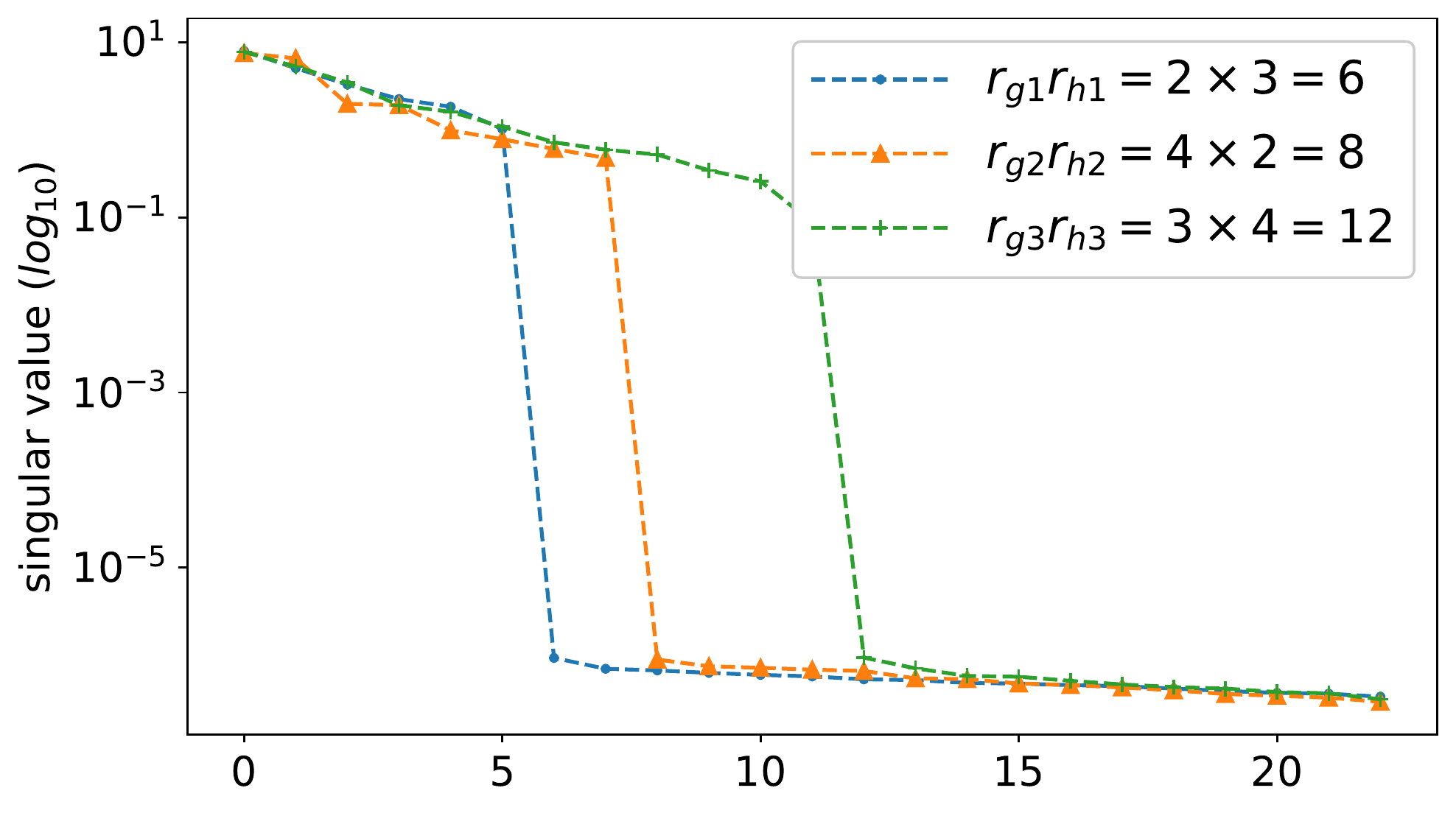}%
    \label{fig:oconv-rank-3D-conv-g}
  }
  \subfloat[\(\tvar{G} \in \mathbb{U}^{6 \times 6 \times 6}\), \(\tvar{H} \in \mathbb{U}^{28 \times 28 \times 28}\)]{
    \includegraphics[width=0.48\linewidth]{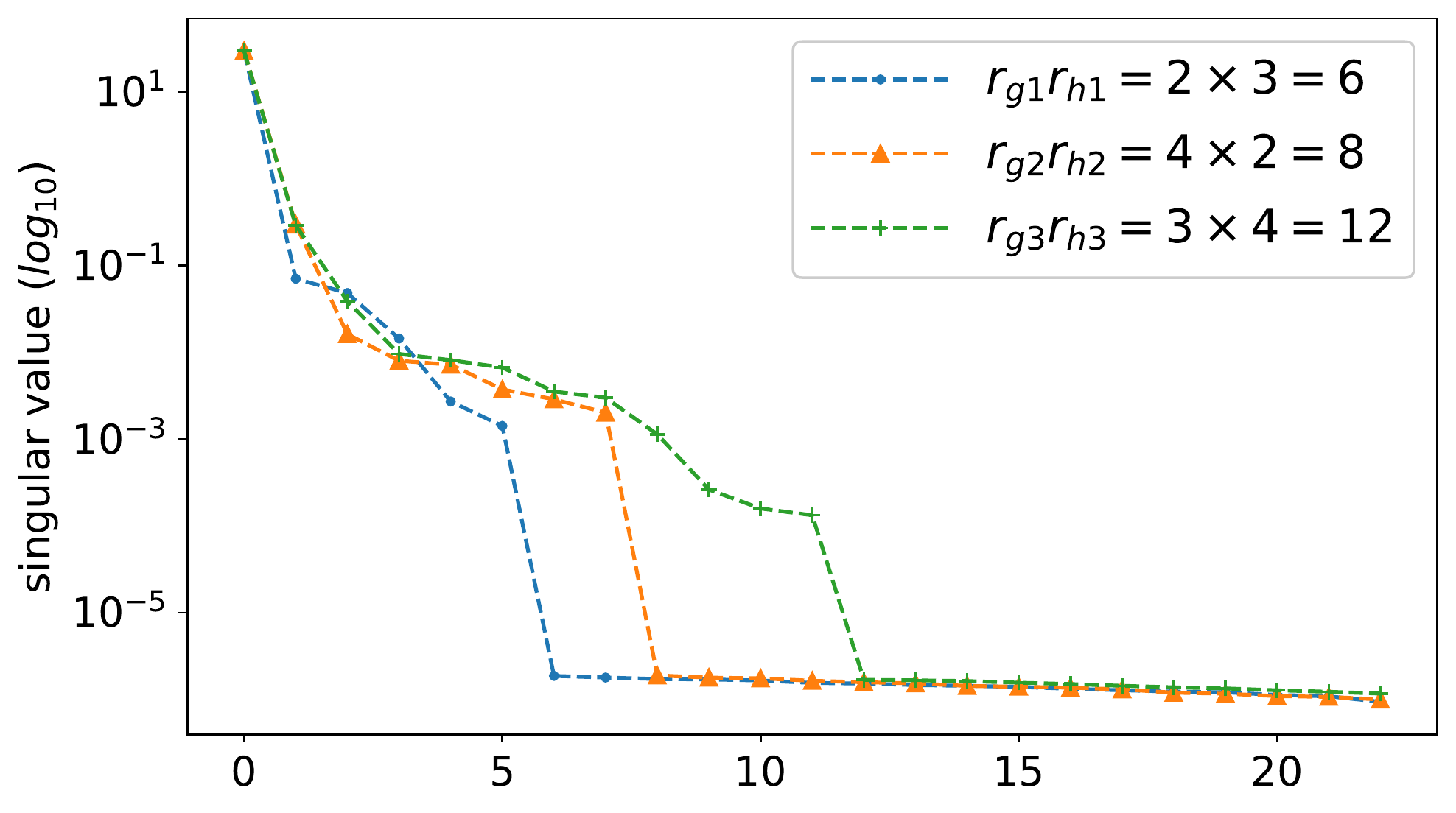}%
    \label{fig:oconv-rank-3D-conv-u}
  }
  \caption{
    Singular values of mode-(one, two, three) matricization of three-dimensional convolution \(\tvar{G} * \tvar{H}\).
  }
  \label{fig:oconv-rank-3D-conv}
\end{figure}

  \textbf{Validate Theorem \ref{thm:rank-oconv-g-hHH-is-mul-tucker-rank} and Lemma \ref{lemma:rank-oconv-g-hhh-is-tucker-rank}.} Since Lemma \ref{lemma:rank-oconv-g-hhh-is-tucker-rank} is a special case of Theorem \ref{thm:rank-oconv-g-hHH-is-mul-tucker-rank} with all signals are one-dimensional arrays, Theorem \ref{thm:rank-oconv-g-hHH-is-mul-tucker-rank} is mainly validate below.
  Regardless of size, The Tucker rank of outer convolution is
  \begin{equation*}
    \rank(\lcerfl{k,i}, \tvar{G} \oconv \vsymb{H})
    \le \left\{\begin{array}{ll}
      \rank(k, \tvar{G}),       & \tvar{H}_k \text{ is one-dimensional} \\
      z_k \rank(i, \tvar{H}_k), & \text{otherwise}
    \end{array}\right.,
  \end{equation*}
  where \(\vsymb{H} = \lcerfl{\tvar{H}_1, \tvar{H}_2, \cdots, \tvar{H}_n}\) is a list of tensors, and \(\tvar{G} \in \mathbb{R}^{z_1 \times z_2 \times \cdots \times z_n}\).
  This inequality of outer convolution \(\tvar{G} \oconv \lcerfl{\tvar{h}_1, \tvar{h}_2, \tvar{H}_3, \tvar{H}_4}\) is validated below.

Four-dimensional tensor \(\tvar{G}\) is randomly generated with Tucker rank equals \([2, 3, 3, 2]\). \(\tvar{h}_1\) and \(\tvar{h}_2\) are one-dimensional vectors, \(\tvar{H}_3\) is a two-dimensional matrix with \(\rank(\tvar{H}_3) = 2\), and \(\tvar{H}_4\) is a three-dimensional tensor with Tucker rank equals \([2, 3, 4]\).
  The singular values of mode-\(k\) matricization of \(\tvar{G} \oconv \lcerfl{\tvar{h}_1, \tvar{h}_2, \tvar{H}_3, \tvar{H}_4}\) is plotted in Figure \ref{fig:oconv-outer-convolution-tucker-rank}.
  \(\tvar{h}_1\) and \(\tvar{h}_2\) are one-dimensional, the first and second Tucker rank of \(\tvar{G} \oconv \lcerfl{\tvar{h}_1, \tvar{h}_2, \tvar{H}_3, \tvar{H}_4}\) is less than or equal to that of \(\tvar{G}\). \(\tvar{H}_3\) and \(\tvar{H}_4\) are not one-dimensional, the corresponding Tucker rank of \(\tvar{G} \oconv \lcerfl{\tvar{h}_1, \tvar{h}_2, \tvar{H}_3, \tvar{H}_4}\) is not grater than \(z_k\) times the Tucker rank of \(\tvar{H}_3\) and \(\tvar{H}_4\).

\begin{figure}[H]
  \centering
  \subfloat[\(\tvar{G} \in \mathbb{M}^{3 \times 3 \times 3 \times 3}\), \(\tvar{h}_1, \tvar{h}_2 \in \mathbb{M}^{5}\), \(\tvar{H}_3 \in \mathbb{M}^{7 \times 7}\), \\ \hspace*{1em} \(\tvar{H}_4 \in \mathbb{M}^{9 \times 9 \times 18}\)]{
    \includegraphics[width=0.48\linewidth]{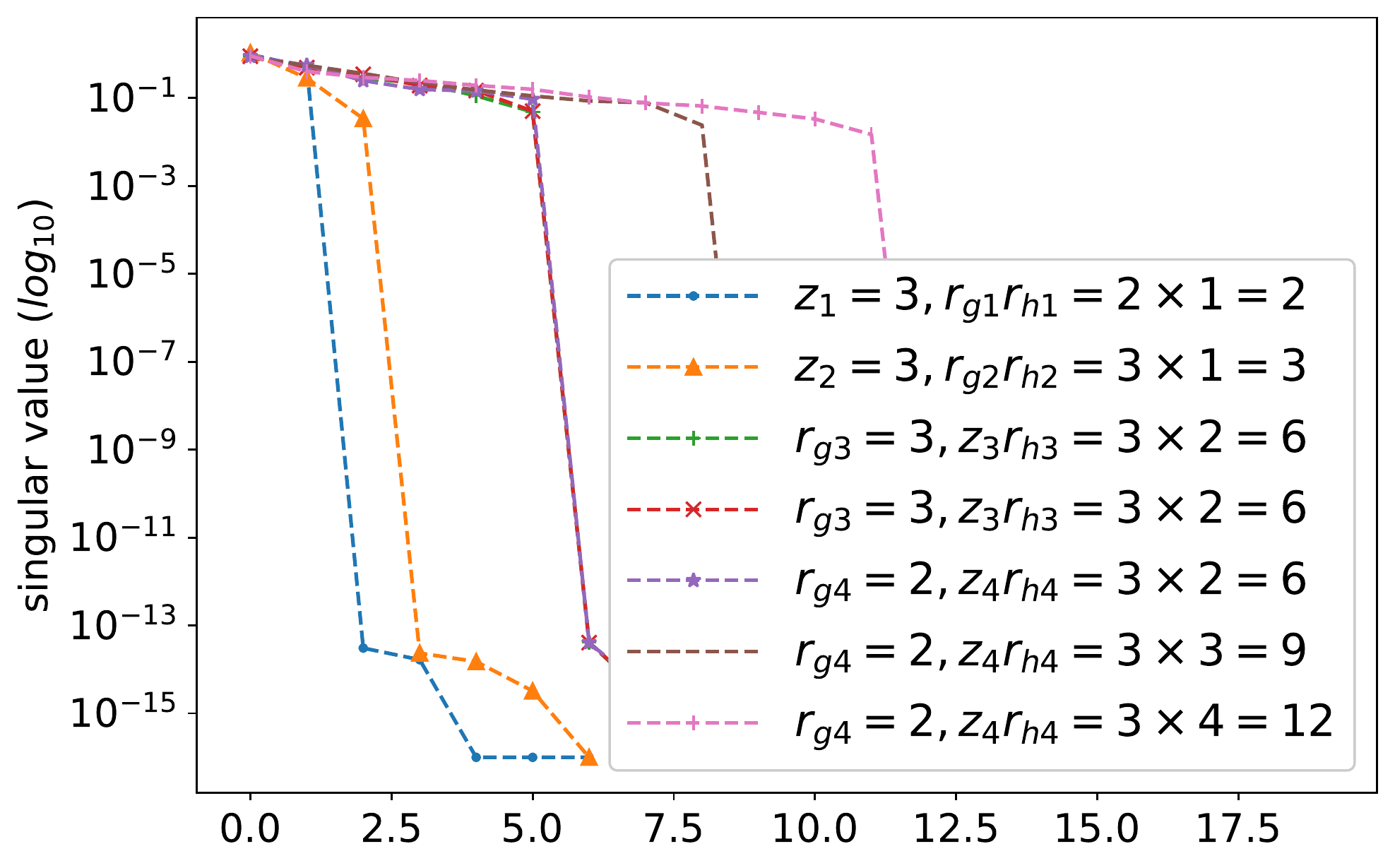}%
    \label{fig:oconv-outer-convolution-tucker-rank-g}
  }
  \subfloat[\(\tvar{G} \in \mathbb{U}^{3 \times 3 \times 3 \times 3}\), \(\tvar{h}_1, \tvar{h}_2 \in \mathbb{U}^{5}\), \(\tvar{H}_3 \in \mathbb{U}^{7 \times 7}\), \\ \hspace*{1em} \(\tvar{H}_4 \in \mathbb{U}^{9 \times 9 \times 18}\)]{
    \includegraphics[width=0.48\linewidth]{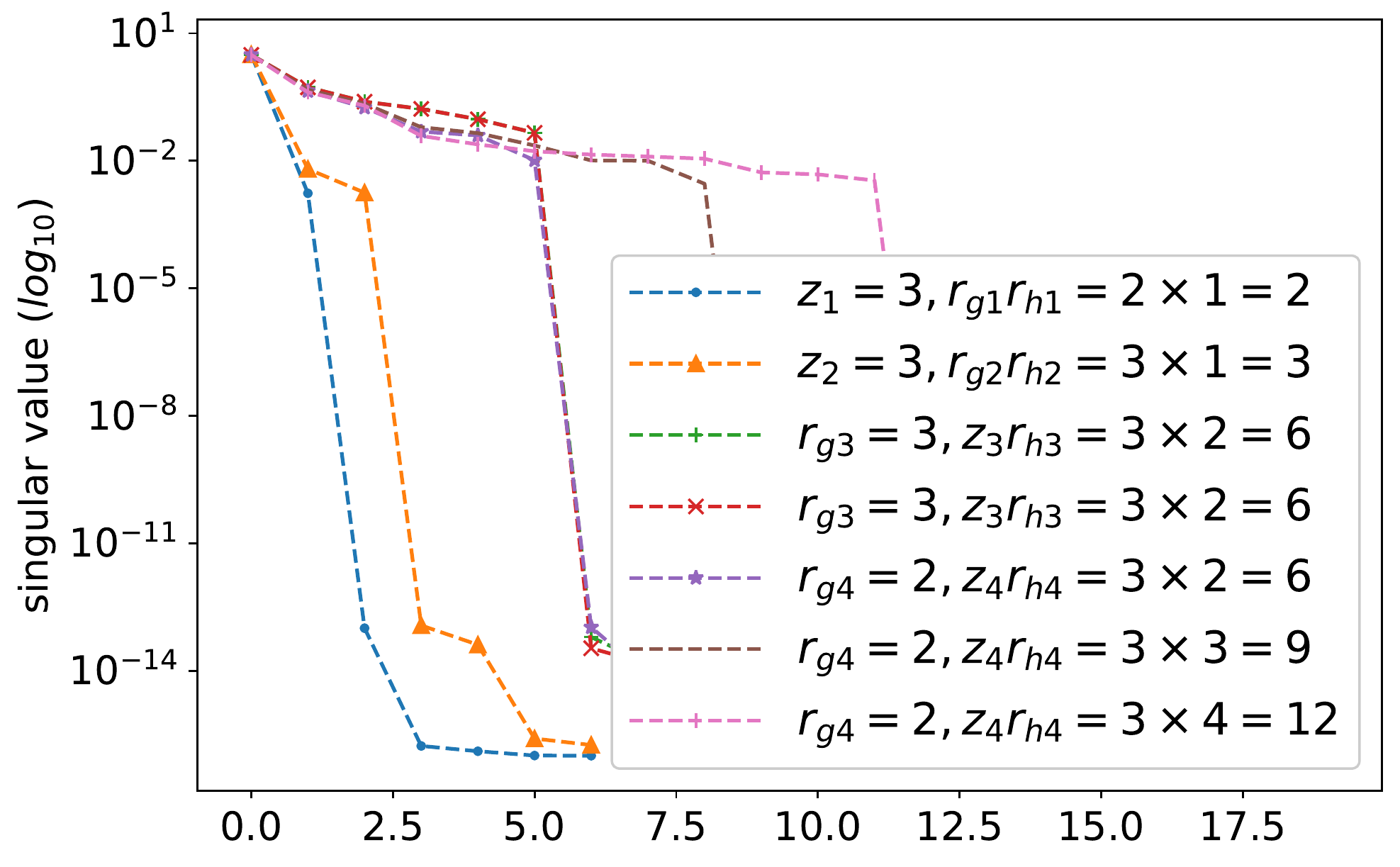}%
    \label{fig:oconv-outer-convolution-tucker-rank-u}
  }
  \caption{
    Singular values of mode-\(k\), \(k=1, 2, \cdots, 7\), matricization of \(\tvar{G} \oconv \lcerfl{\tvar{h}_1, \tvar{h}_2, \tvar{H}_3, \tvar{H}_4}\).
  }
  \label{fig:oconv-outer-convolution-tucker-rank}
\end{figure}


\begin{thebibliography}{36}
\providecommand{\natexlab}[1]{#1}
\providecommand{\url}[1]{\texttt{#1}}
\expandafter\ifx\csname urlstyle\endcsname\relax
  \providecommand{\doi}[1]{doi: #1}\else
  \providecommand{\doi}{doi: \begingroup \urlstyle{rm}\Url}\fi

\bibitem[Ba et~al.(2016)Ba, Kiros, and Hinton]{Ba2016}
Jimmy~Lei Ba, Jamie~Ryan Kiros, and Geoffrey~E. Hinton.
\newblock {Layer Normalization}.
\newblock 2016.
\newblock URL \url{http://arxiv.org/abs/1607.06450}.

\bibitem[Barron(1993)]{Barron1993}
Andrew~R. Barron.
\newblock {Universal approximation bounds for superpositions of a sigmoidal
  function}.
\newblock \emph{IEEE Transactions on Information Theory}, 39\penalty0
  (3):\penalty0 930--945, may 1993.
\newblock ISSN 00189448.
\newblock \doi{10.1109/18.256500}.

\bibitem[Boyd and Chua(1985)]{Boyd1985}
Stephen Boyd and Leon~O. Chua.
\newblock {Fading Memory and the Problem of Approximating Nonlinear Operators
  With Volterra Series.}
\newblock \emph{IEEE transactions on circuits and systems}, CAS-32\penalty0
  (11):\penalty0 1150--1161, 1985.
\newblock ISSN 00984094.
\newblock \doi{10.1109/tcs.1985.1085649}.

\bibitem[Bronstein et~al.(2021)Bronstein, Bruna, Cohen, and
  Veli{\v{c}}kovi{\'{c}}]{Bronstein2021}
Michael~M. Bronstein, Joan Bruna, Taco Cohen, and Petar Veli{\v{c}}kovi{\'{c}}.
\newblock {Geometric Deep Learning: Grids, Groups, Graphs, Geodesics, and
  Gauges}.
\newblock 2021.
\newblock URL \url{http://arxiv.org/abs/2104.13478}.

\bibitem[Brualdi(2004)]{brualdi_2004}
Richard~A. Brualdi.
\newblock \emph{Introductory combinatorics}.
\newblock Pearson/Prentice Hall, 4 edition, 2004.

\bibitem[Cand{\`{e}}s et~al.(2006)Cand{\`{e}}s, Romberg, and Tao]{Candes2006}
Emmanuel~J Cand{\`{e}}s, Justin Romberg, and Terence Tao.
\newblock {Robust Uncertainty Principles : Exact Signal Frequency Information}.
\newblock \emph{IEEE Transactions on Information Theory}, 52\penalty0
  (2):\penalty0 489--509, 2006.

\bibitem[Cybenko(1989)]{Cybenko1989}
G.~Cybenko.
\newblock {Approximation by superpositions of a sigmoidal function}.
\newblock \emph{Mathematics of Control, Signals, and Systems}, 2\penalty0
  (4):\penalty0 303--314, dec 1989.
\newblock ISSN 0932-4194.
\newblock \doi{10.1007/BF02551274}.
\newblock URL \url{http://link.springer.com/10.1007/BF02551274}.

\bibitem[Donoho(2006)]{Donoho2006}
David~L. Donoho.
\newblock {For most large underdetermined systems of linear equations the
  minimal L1-norm solution is also the sparsest solution}.
\newblock \emph{Communications on Pure and Applied Mathematics}, 59\penalty0
  (6):\penalty0 797--829, 2006.
\newblock ISSN 00103640.
\newblock \doi{10.1002/cpa.20132}.

\bibitem[E et~al.(2020)E, Ma, and Wu]{E2019}
Weinan E, Chao Ma, and Lei Wu.
\newblock {Machine learning from a continuous viewpoint, I}.
\newblock \emph{Science China Mathematics}, 63\penalty0 (11):\penalty0
  2233--2266, nov 2020.
\newblock ISSN 1674-7283.
\newblock \doi{10.1007/s11425-020-1773-8}.

\bibitem[Fallahnezhad et~al.(2011)Fallahnezhad, Moradi, and
  Zaferanlouei]{Fallahnezhad2011}
Mehdi Fallahnezhad, Mohammad~Hassan Moradi, and Salman Zaferanlouei.
\newblock {A Hybrid Higher Order Neural Classifier for handling classification
  problems}.
\newblock \emph{Expert Systems with Applications}, 38\penalty0 (1):\penalty0
  386--393, 2011.
\newblock ISSN 09574174.
\newblock \doi{10.1016/j.eswa.2010.06.077}.
\newblock URL \url{http://dx.doi.org/10.1016/j.eswa.2010.06.077}.

\bibitem[Fung et~al.(1996)Fung, Billings, and Zhang]{Fung1996}
C.~F. Fung, S.A. Billings, and H.~Zhang.
\newblock {Generalised Transfer Functions of Neural Networks}.
\newblock may 1996.

\bibitem[Giles and Maxwell(1987)]{Giles1987}
C.~Lee Giles and Tom Maxwell.
\newblock {Learning, invariance, and generalization in high-order neural
  networks}.
\newblock \emph{Applied Optics}, 26\penalty0 (23):\penalty0 4972, 1987.
\newblock ISSN 0003-6935.
\newblock \doi{10.1364/ao.26.004972}.

\bibitem[Gonzalez and Woods(2017)]{Gonzalez2017-la}
Rafael~C Gonzalez and Richard~E Woods.
\newblock \emph{{Digital Image Processing, Global Edition}}.
\newblock Pearson Education, London, England, 4 edition, 2017.

\bibitem[Goodfellow et~al.(2014)Goodfellow, Shlens, and
  Szegedy]{Goodfellow2014}
Ian~J. Goodfellow, Jonathon Shlens, and Christian Szegedy.
\newblock {Explaining and Harnessing Adversarial Examples}.
\newblock \emph{3rd International Conference on Learning Representations, ICLR
  2015 - Conference Track Proceedings}, pages 1--11, dec 2014.
\newblock URL \url{http://arxiv.org/abs/1412.6572}.

\bibitem[He et~al.(2016)He, Zhang, Ren, and Sun]{He2016}
Kaiming He, Xiangyu Zhang, Shaoqing Ren, and Jian Sun.
\newblock {Deep residual learning for image recognition}.
\newblock \emph{Proceedings of the IEEE Computer Society Conference on Computer
  Vision and Pattern Recognition}, 2016-Decem:\penalty0 770--778, 2016.
\newblock ISSN 10636919.
\newblock \doi{10.1109/CVPR.2016.90}.

\bibitem[Henry(1912)]{Yong1912}
Young~William Henry.
\newblock {On the multiplication of successions of Fourier constants}.
\newblock \emph{Proceedings of the Royal Society of London. Series A,
  Containing Papers of a Mathematical and Physical Character}, 87\penalty0
  (596):\penalty0 331--339, oct 1912.
\newblock ISSN 0950-1207.
\newblock \doi{10.1098/rspa.1912.0086}.
\newblock URL
  \url{https://royalsocietypublishing.org/doi/10.1098/rspa.1912.0086}.

\bibitem[Horn and Johnson(1985)]{Horn1985}
Roger~A. Horn and Charles~R. Johnson.
\newblock \emph{{Matrix Analysis}}.
\newblock Cambridge University Press, dec 1985.
\newblock ISBN 9780521386326.
\newblock \doi{10.1017/CBO9780511810817}.

\bibitem[Hornik et~al.(1989)Hornik, Stinchcombe, and White]{Hornik1989}
Kurt Hornik, Maxwell Stinchcombe, and Halbert White.
\newblock {Multilayer feedforward networks are universal approximators}.
\newblock \emph{Neural Networks}, 2\penalty0 (5):\penalty0 359--366, 1989.
\newblock ISSN 08936080.
\newblock \doi{10.1016/0893-6080(89)90020-8}.

\bibitem[Ioffe and Szegedy(2015)]{Ioffe2015}
Sergey Ioffe and Christian Szegedy.
\newblock {Batch normalization: Accelerating deep network training by reducing
  internal covariate shift}.
\newblock \emph{32nd International Conference on Machine Learning, ICML 2015},
  1:\penalty0 448--456, 2015.

\bibitem[Kolda and Bader(2009)]{Kolda209TensorDecompositionAndApplication}
Tamara~G. Kolda and Brett~W. Bader.
\newblock Tensor decompositions and applications.
\newblock \emph{SIAM Review}, 51\penalty0 (3):\penalty0 455--500, 2009.
\newblock \doi{10.1137/07070111X}.

\bibitem[Lecun et~al.(1998)Lecun, Bottou, Bengio, and Haffner]{LeCun1998}
Y.~Lecun, L{\'{e}}on Bottou, Yoshua Bengio, and Patrick Haffner.
\newblock {Gradient-based learning applied to document recognition}.
\newblock \emph{Proceedings of the IEEE}, 86\penalty0 (11):\penalty0
  2278--2324, 1998.
\newblock ISSN 00189219.
\newblock \doi{10.1109/5.726791}.
\newblock URL \url{http://ieeexplore.ieee.org/document/726791/}.

\bibitem[Luo et~al.(2021)Luo, Xu, Ma, and Zhang]{Luo2021}
Tao Luo, Zhi Qin~John Xu, Zheng Ma, and Yaoyu Zhang.
\newblock {Phase diagram for two-layer ReLU neural networks at infinite-width
  limit}.
\newblock \emph{Journal of Machine Learning Research}, 22:\penalty0 1--47,
  2021.
\newblock ISSN 15337928.

\bibitem[Nair and Hinton(2010)]{Nair2010}
Vinod Nair and Geoffrey~E. Hinton.
\newblock {Rectified Linear Units Improve Restricted Boltzmann Machines}.
\newblock \emph{Proceedings of the 27th International Confer- ence on Machine
  Learning}, 2010.
\newblock ISSN 15432688.
\newblock \doi{10.1123/jab.2016-0355}.

\bibitem[Naitzat et~al.(2020)Naitzat, Zhitnikov, and Lim]{Naitzat2020}
Gregory Naitzat, Andrey Zhitnikov, and Lek~Heng Lim.
\newblock {Topology of deep neural networks}.
\newblock \emph{Journal of Machine Learning Research}, 21:\penalty0 1--40,
  2020.
\newblock ISSN 15337928.

\bibitem[Rugh(1981)]{rugh1981nonlinear}
Wilson~John Rugh.
\newblock \emph{Nonlinear system theory}.
\newblock Johns Hopkins University Press Baltimore, 1981.

\bibitem[Shin and Ghosh(2003)]{Shin2003}
Y.~Shin and J.~Ghosh.
\newblock {Approximation of multivariate functions using ridge polynomial
  networks}.
\newblock \penalty0 (July 1992):\penalty0 380--385, 2003.
\newblock \doi{10.1109/ijcnn.1992.226958}.

\bibitem[Shin and Ghosh(1995)]{Shin1995}
Y.~Shin and Joydeep Ghosh.
\newblock {Ridge polynomial networks}.
\newblock \emph{IEEE Transactions on Neural Networks}, 6\penalty0 (3):\penalty0
  610--622, may 1995.
\newblock ISSN 10459227.
\newblock \doi{10.1109/72.377967}.
\newblock URL \url{http://ieeexplore.ieee.org/document/377967/}.

\bibitem[Sidiropoulos et~al.(2017)Sidiropoulos, {De Lathauwer}, Fu, Huang,
  Papalexakis, and Faloutsos]{Sidiropoulos2017}
Nicholas~D. Sidiropoulos, Lieven {De Lathauwer}, Xiao Fu, Kejun Huang,
  Evangelos~E. Papalexakis, and Christos Faloutsos.
\newblock {Tensor Decomposition for Signal Processing and Machine Learning}.
\newblock \emph{IEEE Transactions on Signal Processing}, 65\penalty0
  (13):\penalty0 3551--3582, jul 2017.
\newblock ISSN 1053-587X.
\newblock \doi{10.1109/TSP.2017.2690524}.
\newblock URL \url{http://ieeexplore.ieee.org/document/7891546/}.

\bibitem[Szegedy et~al.(2013)Szegedy, Zaremba, Sutskever, Bruna, Erhan,
  Goodfellow, and Fergus]{Szegedy2013}
Christian Szegedy, Wojciech Zaremba, Ilya Sutskever, Joan Bruna, Dumitru Erhan,
  Ian Goodfellow, and Rob Fergus.
\newblock {Intriguing properties of neural networks}.
\newblock \emph{2nd International Conference on Learning Representations, ICLR
  2014 - Conference Track Proceedings}, pages 1--10, dec 2013.
\newblock URL \url{http://arxiv.org/abs/1312.6199}.

\bibitem[Szegedy et~al.(2015)Szegedy, Liu, Jia, Sermanet, Reed, Anguelov,
  Erhan, Vanhoucke, and Rabinovich]{InceptionNet}
Christian Szegedy, Wei Liu, Yangqing Jia, Pierre Sermanet, Scott Reed, Dragomir
  Anguelov, Dumitru Erhan, Vincent Vanhoucke, and Andrew Rabinovich.
\newblock Going deeper with convolutions.
\newblock In \emph{2015 IEEE Conference on Computer Vision and Pattern
  Recognition (CVPR)}, pages 1--9, 2015.
\newblock \doi{10.1109/CVPR.2015.7298594}.

\bibitem[Vershynin(2018)]{vershynin_2018}
Roman Vershynin.
\newblock \emph{High-dimensional probability: an introduction with applications
  in data science}.
\newblock Cambridge University Press., 2018.
\newblock URL
  \url{https://www.math.uci.edu/~rvershyn/papers/HDP-book/HDP-book.pdf}.

\bibitem[Volterra(1932)]{volterra1932theory}
Vito Volterra.
\newblock Theory of functionals and of integral and integro-differential
  equations.
\newblock \emph{Bull. Amer. Math. Soc}, 38\penalty0 (1):\penalty0 623, 1932.

\bibitem[Weinan(2017)]{Weinan2017}
W.~Weinan.
\newblock {A Proposal on Machine Learning via Dynamical Systems}.
\newblock \emph{Communications in Mathematics and Statistics}, 5\penalty0
  (1):\penalty0 1--11, 2017.
\newblock ISSN 2194671X.
\newblock \doi{10.1007/s40304-017-0103-z}.

\bibitem[Wray and Green(1994)]{Wray1994}
Jonathan Wray and Gary~G.R. Green.
\newblock {Calculation of the Volterra kernels of non-linear dynamic systems
  using an artificial neural network}.
\newblock \emph{Biological Cybernetics}, 71\penalty0 (3):\penalty0 187--195,
  1994.
\newblock ISSN 03401200.
\newblock \doi{10.1007/BF00202758}.

\bibitem[Yu and Koltun(2016)]{Yu2016}
Fisher Yu and Vladlen Koltun.
\newblock {Multi-scale context aggregation by dilated convolutions}.
\newblock \emph{4th International Conference on Learning Representations, ICLR
  2016 - Conference Track Proceedings}, 2016.

\bibitem[Zeng et~al.(2021)Zeng, Lin, Yao, and Zhou]{Jinshan2021}
Jinshan Zeng, Shao-Bo Lin, Yuan Yao, and Ding-Xuan Zhou.
\newblock On admm in deep learning: Convergence and saturation-avoidance.
\newblock \emph{Journal of Machine Learning Research}, 22\penalty0
  (199):\penalty0 1--67, 2021.
\newblock URL \url{http://jmlr.org/papers/v22/20-1006.html}.

\end{thebibliography}
\end{document}